\documentclass{article}[12pt]

\usepackage[margin=1.1in]{geometry}

\usepackage[utf8]{inputenc} %
\usepackage[T1]{fontenc}    %
\usepackage[hidelinks,colorlinks,linkcolor=black,citecolor=orange]{hyperref}  
 \hypersetup{
     colorlinks=true,
     linkcolor=blue,
     filecolor=blue,    
     urlcolor=cyan,
     }
\usepackage{placeins}
\usepackage{url}            %

\usepackage{booktabs}       %
\usepackage{amsfonts}       %
\usepackage{nicefrac}       %
\usepackage{microtype}      %
\usepackage{xcolor}         %
\usepackage[sortcites=true,citestyle=numeric,natbib=true,bibstyle=numeric,maxbibnames=99]{biblatex}

\usepackage{tikz}
\usetikzlibrary{decorations.pathreplacing}
\usetikzlibrary{positioning}
\usepackage{makecell}
\usepackage{graphicx}
\usepackage{caption}
\usepackage{subcaption}
\usepackage{amsmath,amssymb,amsthm}
\usepackage{bbm}
\usepackage{enumitem}   
\usepackage{capt-of}
\usepackage{wrapfig}
\usepackage{algorithm}
\usepackage{algpseudocode}
\newcommand{\upperR}[1]{\uppercase\expandafter{\romannumeral#1}}
\usepackage{xcolor}                %
\usepackage[most]{tcolorbox}             %
\usepackage{titletoc}  
\usepackage{relsize}
\usepackage{empheq}

\usepackage{varwidth}
\newtcbox{\mybox}{colback=blue!5,
	colframe=blue!30!black, center, enhanced, varwidth upper}
\newtcbox{\mymath}[1][]{%
	nobeforeafter, math upper, tcbox raise base,
	enhanced, colframe=blue!30!black,
	colback=blue!5, boxrule=0.5pt, top=1mm,bottom=1mm,
	#1}

\usepackage{amsmath,amsfonts,bm,amssymb}
\usepackage{amsthm,bbm}

\def\ceil#1{\lceil #1 \rceil}

\def\1{\bm{1}}

\DeclareMathAlphabet{\mathsfit}{\encodingdefault}{\sfdefault}{m}{sl}
\SetMathAlphabet{\mathsfit}{bold}{\encodingdefault}{\sfdefault}{bx}{n}

\def\gB{{\mathcal{B}}}

\def\gD{{\mathcal{D}}}

\def\gH{{\mathcal{H}}}
\def\gI{{\mathcal{I}}}
\def\gJ{{\mathcal{J}}}

\def\gM{{\mathcal{M}}}
\def\gN{{\mathcal{N}}}

\def\gU{{\mathcal{U}}}

\newcommand{\E}{\mathbb{E}}

\renewcommand{\H}{\mathbb{H}}
\newcommand{\R}{\mathbb{R}}

\renewcommand\S{{\mathcal{S}}}

\newcommand\F{{\mathcal{F}}}

\renewcommand\R{{\mathbb{R}}}
\renewcommand\P{{\mathbb{P}}}

\newcommand\Diag{{\mathsf{Diag}}}
\newcommand\offd{{\mathsf{off}}}

\newcommand\diag{{\mathsf{diag}}}

\usepackage{physics}

\newcommand\J{{\mathcal{J}}}

\newcommand\Q{\mathcal{Q}}

\newcommand\T{{\mathcal{T}}}

\newcommand\N{{\mathbb{N}}}
\newcommand{\inner}[1]{\left<#1\right>}
\renewcommand{\norm}[1]{\left\|#1\right\|}
\newcommand{\snorm}[1]{\left\|#1\right\|_{\mathrm{op}}}

\newcommand{\round}[1]{\left(#1\right)}
\newcommand{\brac}[1]{\left[#1\right]}
\renewcommand{\abs}[1]{\left|#1\right|}

\newtheorem{theorem}{Theorem}
\newtheorem{lemma}[theorem]{Lemma}
\newtheorem{corollary}[theorem]{Corollary}
\newtheorem{claim}{Claim}
\newtheorem{proposition}[theorem]{Proposition}

\newtheorem{definition}{Definition}

\newtheorem{assumption}{Assumption}

\newcommand{\tran}{^\top}
\newcommand{\inv}{^{-1}}

\newcommand{\vepsilon}{{\bm{\varepsilon}}}

\newcommand{\He}{{\rm{He}}}
\renewcommand{\op}{{\rm{op}}}
\usepackage[normalem]{ulem}

\makeatletter
\@removefromreset{theorem}{section}
\makeatother
\title{Average Gradient Outer Product in kernel regression provably recovers the central subspace for multi-index models
}
\author{
Libin Zhu\thanks{Department of Mathematics, University of Washington, Seattle, WA 98195; \texttt{https://libinzhu.github.io/}. Research of Zhu was supported by NSF TRIPODS II DMS-2023166.}
\and
Damek Davis\thanks{Wharton Department of Statistics and Data Science, University of Pennsylvania,
		Philadelphia, PA 19104, USA;
		\texttt{www.damekdavis.com}. Research of Davis supported by an Alfred P. Sloan research fellowship and NSF DMS award 2047637}
\and
Dmitriy Drusvyatskiy\thanks{Department of Mathematics, U. Washington, Seattle, WA 98195; \texttt{https://sites.google.com/view/dmitriy-drusvyatskiy}.
		Research of Drusvyatskiy was supported by NSF DMS-2306322, NSF DMS-2023166, and AFOSR FA9550-24-1-0092 awards.}
\and 
Maryam Fazel\thanks{Department of Electrical \& Computer Engineering, University of Washington, Seattle, WA 98195, and Amazon, Inc. \texttt{https://people.ece.uw.edu/fazel\_maryam/}. Research supported by NSF TRIPODS II DMS-2023166, CCF 2007036, CCF 2212261, CCF 2312775 and the Moorthy Family professorship at UW.}
}
\begin{document}
\maketitle

\begin{abstract}

We study a prototypical situation when a learned
predictor can discover useful low-dimensional structure in data, while using fewer samples than are needed for accurate prediction. Specifically,  we consider the problem of recovering a multi-index polynomial $f^*(x)=h(Ux)$, with $U\in\R^{r\times d}$ and $r\ll d$, from finitely many data/label pairs. Importantly, the target function depends on input $x$ only through the projection onto an unknown $r$-dimensional central subspace. The algorithm we analyze is appealingly simple: fit kernel ridge regression (KRR) to the data and compute the Average Gradient Outer Product (AGOP) from the fitted predictor.  Our main results show that under reasonable assumptions the top $r$-dimensional eigenspace of AGOP provably recovers the central subspace, even in regimes when the prediction error remains large. 
Specifically, if the target function $f^*$ has degree $p^*$, it is known that $n\asymp d^{p^*}$ samples are necessary for KRR to achieve accurate prediction. In contrast, we show that if a low degree $p$ component of $f^*$ already carries all relevant directions for prediction, subspace recovery occurs in the much lower sample regime $n\asymp d^{p+\delta}$ for any $\delta\in(0,1)$.
Our results thus demonstrate a separation between prediction and representation, and provide an explanation for why iterative kernel methods such as Recursive Feature Machines (RFM) can be sample-efficient in practice.
\end{abstract}

\section{Introduction}
Modern machine learning systems often succeed not only by fitting input-output relationships, but also by learning useful intermediate representations. This theme is central to contemporary representation learning: models are trained on high-dimensional data, yet the task-relevant structure is often believed to depend on a much smaller number of latent factors. A basic theoretical question is whether a trained predictor can reveal such low-dimensional structure with far less data than it needs for accurate prediction.

In this paper, we study this question in a simple and tractable setting. We consider multi-index regression, in which the response $y$ depends on input data \(x\in\R^d\) only through a small number of linear measurements,
\begin{equation}\label{eq:intro_mim}
y \;=\; f^*(x) + \mathrm{noise},
\qquad
f^*(x) \;=\; h(Ux),
\end{equation}
where \(U\in\R^{r\times d}\) has orthonormal rows with \(r\ll d\) and $h\colon\R^r\rightarrow \R$ is an unknown link function.
The row space of \(U\), denoted \(\mathrm{row}(U)\), is called the central subspace. Recovering this subspace is valuable in its own right: it provides an interpretable representation of the data, enables dimension reduction, and reduces downstream learning to an \(r\)-dimensional problem.

Gradient information provides a direct population-level characterization of the central subspace. If \(f^*(x)=h(Ux)\) is differentiable, then the population Average Gradient Outer Product (AGOP) takes the form
\begin{equation}\label{eq:ambient_agop}
\E[\nabla f^*(x)\nabla f^*(x)^\top]
=
U^\top
\E[\nabla h(Ux)\nabla h(Ux)^\top]
U,
\end{equation}
and its range coincides with \(\mathrm{row}(U)\) in nondegenerate settings. This observation underlies average-derivative and outer-product-of-gradients methods in efficient dimension reduction.

At the same time, this is only a population statement: the regression function is unknown, and naively estimating its gradient field in high dimension is typically no easier than solving the original nonparametric regression problem. Recent work on Recursive Feature Machines (RFM) uses AGOP-based updates for adaptive kernel machines, and has shown strong empirical performance on multi-index models and related learning tasks~\cite{rfm_science,radhakrishnan2024linear,beaglehole2026xrfm}. One aim of the present paper is to explain the feature-learning mechanism underlying this empirical behavior. To this end, we study the AGOP formed from a fitted kernel ridge regression (KRR) predictor, and investigate whether the gradients of the fitted predictor can reveal the central subspace even when the predictor has not yet learned the full target function.

We show that the empirical AGOP constructed from a fitted KRR predictor provably recovers the central subspace in a high-dimensional regime. We focus on Boolean data drawn uniformly from \(\{-1,1\}^d\), fit KRR with an inner-product kernel, and form the empirical AGOP matrix
\begin{equation}\label{eq:intro_emp_agop}
\widehat M \;:=\; \frac{1}{n} \sum_{i=1}^n \!\big[\nabla \hat f(x^{(i)})\,(\nabla \hat f(x^{(i)}))^\top\big] \in \R^{d\times d},
\end{equation}
where \(\hat f\) is the KRR predictor and the gradients are ambient gradients evaluated at the training inputs \(x^{(i)}\). Our main results show that the empirical AGOP \(\widehat M\) concentrates around a low-degree population AGOP target (Theorem~\ref{thm:main_agop}), and under additional nondegeneracy and a weak subspace coherence condition~\cite{candes2012exact}, the top-\(r\) eigenspace of \(\widehat M\) consistently recovers the central subspace \(\mathrm{row}(U)\) (Corollary~\ref{cor:main_result}). 

\paragraph{Subspace recovery before accurate prediction.}
The central message of this work is that recovering the central subspace from a fitted predictor can be statistically easier than achieving low test error. Specifically, in the sample regime \(n\asymp d^{p+\delta}\) with \(\delta\in(0,1)\), KRR primarily captures the degree-\(\le p\) component of the target function. The prediction error can therefore remain large when a substantial portion of the signal lies in higher degree components. Our contribution is to show that this does not preclude accurate recovery of the central subspace: if the low-degree component already captures all relevant directions in the central subspace, then the gradients of the fitted predictor remain informative enough for the empirical AGOP to recover the subspace. Thus, the sample complexity of representation learning is governed by the first degree at which the low-degree gradient covariance contains all relevant directions, rather than by the degree needed to learn the full target function.

This viewpoint suggests a natural two-stage approach. First, fit KRR and recover the central subspace \(\mathrm{row}(U)\) via the empirical AGOP. Second, project the input data onto the estimated subspace and learn the full link function 
in dimension \(r\). Since \(r\) is small, the second stage is expected to learn higher-degree structure without incurring a \(d\)-dimensional sample-complexity cost. This viewpoint is also related to iterative kernel methods such as RFM~\cite{radhakrishnan2024linear} and IRKM~\cite{zhu2025iteratively}, which aim to refine feature discovery over successive iterations. See the discussion in Section~\ref{sec:fl}. Our experiments show that RFM identifies the central subspace well before
its prediction error becomes small, consistent with our subspace-recovery
results; subsequent iterations then improve prediction empirically. See Section~\ref{sec:exp} for numerical experiments.

\paragraph{Contributions.}
Our contributions are threefold. First, we establish a subspace recovery guarantee for the empirical AGOP of the fitted KRR predictor in a polynomial high-dimensional regime. Second, we show a prediction--representation separation: the central subspace can be recovered accurately even in low-sample regimes where the prediction test error remains large. Third, we develop analytical tools for random matrix analysis with hypercube data, which may be of independent interest.

\subsection{Notation}\label{sec:notation}

\paragraph{Indices and linear-algebra notation.}We denote by \(\mathbb N\) the set of natural numbers and write
\(\mathbb N^+ := \mathbb N\setminus\{0\}\). For \(n\in\mathbb N^+\), let
\([n] := \{1,\dots,n\}\). A vector \(\lambda\in\mathbb N^d\) is called a
multi-index, and we write \(|\lambda| := \lambda_1+\cdots+\lambda_d\). For
\(x\in\R^d\) and \(\lambda\in\mathbb N^d\), we use the standard monomial notation
\(x^\lambda := x_1^{\lambda_1}\cdots x_d^{\lambda_d}\), and we write
\(\He_\lambda(x) := \prod_{i=1}^d \He_{\lambda_i}(x_i)\), where \(\He_k\) is the
univariate probabilist's Hermite polynomial of degree \(k\). 

For \(u,v\in\R^d\),  write \(\inner{u,v} := \sum_{i=1}^d u_i v_i\) for the
Euclidean inner product and \(\norm{v}_p := (\sum_{i=1}^d |v_i|^p)^{1/p}\) for the
\(\ell_p\)-norm of \(v\). For a matrix \(M\), we denote its \((i,j)\)-entry by
\(M_{i,j}\), its \(i\)th row by \(M_{i,:}\), and its \(j\)th column by
\(M_{:,j}\). We write \(\diag(M)\) for the vector of diagonal entries of \(M\) and
\(\Diag(v)\) for the diagonal matrix with diagonal \(v\). For a matrix \(U\in\R^{r\times d}\),
the symbol \(\mathrm{row}(U)\subseteq\R^d\) denotes its row space, and
\(U_\perp\in\R^{(d-\mathrm{rank}(U))\times d}\) denotes the orthonormal complement of \(\mathrm{row}(U)\). The symbols \(\snorm{M}\) and \(\norm{M}_F\)
denote the spectral and Frobenius norms of \(M\), respectively.

We write \(\tau_d\) for the uniform probability measure on \(\H^d := \{-1,1\}^d\), and for a function
\(f\colon\H^d\to\R\) we define the usual square $L_2$-norm
\(\|f\|_{L_2(\tau_d)}^2 := \int_{\H^d}f^2\,d\tau_d\).

\paragraph{Differential notation.}
For a differentiable function \(f\colon\R^d\to\R\), we write
\(\nabla f(x) := (\partial_1 f(x),\dots,\partial_d f(x))\) for its Euclidean
gradient. For any point on the hypercube \(x\in\H^d\), we regard \(x\) as a point of \(\R^d\) and use the
same notation \(\nabla f(x)\) for the ambient gradient evaluated at \(x\). 
\paragraph{Asymptotic notation.}For
nonnegative functions \(f\) and \(g\), we write \(f\lesssim g\) if there exists a numerical constant
\(C>0\) such that \(f(x)\le Cg(x)\) for all \(x\); similarly, we write \(f\gtrsim g\) to  mean
\(g\lesssim f\), and \(f\asymp g\) means both \(f\lesssim g\) and \(f\gtrsim g\).
For functions \(f,g\colon\R^d\times\mathbb N\to\R\), the notation \(f=O_d(g)\) means
that there exist constants \(C>0\) and \(d_0\in\mathbb N\) such that
\(|f(x,d)|\le C|g(x,d)|\) for all \(x\) and all \(d\ge d_0\), while \(f=o_d(g)\)
means that for every \(\epsilon>0\) there exists \(d_\epsilon\in\mathbb N\) such
that \(|f(x,d)|\le \epsilon |g(x,d)|\) for all \(x\) and all \(d\ge d_\epsilon\).
In both cases, the comparison is uniform in \(x\). 
Finally, for sequences of random
variables \(f_d\) and \(g_d\), we write \(f_d=O_{d,\mathbb P}(g_d)\) if for every
\(\epsilon>0\) there exists \(C_\epsilon>0\) such that
\(\limsup_{d\to\infty}\mathbb P(|f_d|>C_\epsilon g_d)\le\epsilon\), and we write 
\(f_d=o_{d,\mathbb P}(g_d)\) if for every \(\epsilon>0\), we have
\(\lim_{d\to\infty}\mathbb P(|f_d|>\epsilon g_d)=0\).

\section{Related work}\label{sec:related_work}

\paragraph{Sufficient dimension reduction and multi-index models.}
Sufficient dimension reduction (SDR) studies regression settings in which the conditional distribution, or at least the conditional mean, of the response depends on a covariate \(x\in\mathbb R^d\) only through a low-dimensional linear projection. Classical inverse-regression methods include sliced inverse regression (SIR) and principal Hessian directions (pHd) \cite{Li1991SIR,li1992principal}; see also \cite{cook2007fisher,ma2013review}.A complementary gradient-based line estimates dimension-reduction directions using average-derivative estimators
\cite{hardle1989investigating,powell1989semiparametric},
outer products of gradients and MAVE-type local-linear procedures
\cite{xia2002adaptive,xia2007constructive},
and structure-adaptive estimators
\cite{hristache2001structure}. Closely related are kernel SDR methods: kernel dimension reduction (KDR) characterizes sufficient directions through RKHS conditional covariance operators \cite{fukumizu2009kernel}, whereas gradient-based KDR estimates directions from gradients of a kernel-estimated regression function \cite{fukumizu2014gradient}. Single- and multi-index models have also been studied extensively in semiparametric statistics and, more recently, under high-dimensional structural assumptions such as sparsity or smoothness
\cite{ichimura1993semiparametric,lin2018consistency,yuan2025efficient}. Our work differs from this literature in both estimator and regime: we analyze the empirical AGOP of a \emph{trained} KRR predictor and prove post-hoc subspace recovery in a polynomial high-dimensional regime.

\paragraph{High-dimensional KRR and spectral structure.}
High-dimensional kernel methods have been analyzed through random-matrix and spectral descriptions of empirical kernel matrices and KRR risk
\cite{elkaroui2010spectrum,pandit2025universality}. For inner-product kernels in polynomial high-dimensional regimes, analyses under spherical and hypercube covariate models show that KRR prediction is governed by low-degree polynomial components, leading to a degree-truncation phenomenon
\cite{ghorbani2021linearized,mei2022generalization}. We use this low-degree structure but study a different object: rather than characterizing the test error of the KRR predictor, we analyze the concentration and eigenspace structure of the empirical AGOP formed from its fitted gradients.

\paragraph{Feature learning in neural networks.}
Single- and multi-index models are standard testbeds for understanding feature learning in neural networks. For single-index targets, learnability under online SGD, gradient flow, and related shallow-network dynamics is often governed by Hermite coefficients, information exponents, or generative exponents
\cite{arous2021online,bietti2022learning,damian2023smoothing,mousavineural,lee2024neural,tsiolis2025from}. Related high-dimensional analyses show how first-layer adaptation can improve over fixed random features or kernel methods
\cite{ba2022high,moniri2024theory}. 
For multi-index and sparse low-dimensional targets, recent work studies representation recovery \cite{damian2022neural,mousavi2023neural,bietti2023learning,mousavi2025learning,simsek2025learning}; staircase and leap-complexity phenomena \cite{abbe2022merged,abbe2023sgd}; finite-step effects \cite{dandi2024how}; batch-size and batch-reuse effects \cite{arnaboldi2024online,dandi2024benefits,arnaboldi2025repetita}; and proportional-limit descriptions of feature learning \cite{montanari2026phase}.
The generative-leap framework of \cite{damiangenerative} establishes sharp thresholds for efficient hidden-subspace recovery in Gaussian multi-index models. In contrast, we study KRR with a fixed kernel---a setting closely related to the kernel/NTK regime of neural networks \cite{jacot2018neural}---and show that the fitted gradients can contain sufficient information for post-hoc representation recovery, without weight evolution.

\paragraph{Kernel-based feature learning and AGOP/EGOP methods.}

A growing literature shows that feature learning can also be realized through adaptive kernel methods. Recursive feature machines (RFM) \cite{radhakrishnan2024linear} and iteratively reweighted kernel machines (IRKM) \cite{zhu2025iteratively} alternate kernel regression with gradient- or AGOP-based metric updates, while related approaches optimize a linear map inside the kernel \cite{huang2025learning}, use coordinate-wise kernel reweighting \cite{ruan2025compositional}, or adapt local EGOP metrics, where EGOP is the population analogue of AGOP \cite{kokot2026local}. These works adapt the kernel or metric during training; in particular, Huang et al.~\cite{huang2025learning} prove finite-sample excess-risk guarantees for HKRR, a representation-learning variant of KRR that optimizes a linear map inside the kernel, while we study the AGOP of a fitted ordinary fixed-kernel KRR predictor in the polynomial high-dimensional regime. Our closest comparison is \cite{zhu2025iteratively}: both works use gradients of fitted kernel predictors under hypercube covariates, but \cite{zhu2025iteratively} studies sparse and hierarchical targets via iterative metric learning, whereas we study general multi-index models and the eigenspace structure of the resulting AGOP.

\section{Main Results}\label{sec:main}
Throughout this section,  we let the hypercube \(\mathbb H^d=\{-1,1\}^d\) be equipped with the uniform measure \(\tau_d\), and all \(L_2\)-norms on \(\mathbb H^d\) are taken with respect to \(\tau_d\). The main result has two steps. First, we show that the empirical AGOP of the fitted KRR predictor concentrates around a low-degree population target \(M_{\le p}\). Second, we show that if this low-degree target preserves the central subspace, then under a weak coherence condition, the top-\(r\) eigenspace of the empirical AGOP consistently recovers the central subspace \(\mathrm{row}(U)\).

\subsection{Model and low-degree population targets}
We consider the multi-index model
\begin{align}\label{eq:multi_index}
    f^*(x)=h(Ux), \qquad x\in\mathbb R^d,
\end{align}
where \(U\in\mathbb R^{r\times d}\) has orthonormal rows. We refer to \(\mathrm{row}(U)\) as the planted central subspace. 

We use the standard notion of subspace coherence, commonly used in
matrix completion analysis~\cite{candes2012exact}. 
\begin{definition}[Coherence of the central subspace~\cite{candes2012exact}]
\label{def:coherence}
Let \(U\in\mathbb R^{r\times d}\) satisfy \(UU^\top=I_r\). We define the
coherence of its row space by
\begin{align}  
\mu(U)
:=
\frac{d}{r}
\max_{i\in[d]}
\|U_{:,i}\|_2^2 .
\end{align}
\end{definition}
Equivalently, with \(P_{\operatorname{row}(U)}=U^\top U\) the orthogonal projection onto $\operatorname{row}(U)$, this is the usual
subspace coherence
\(
\mu(\operatorname{row}(U))
=
\frac{d}{r}
\max_{i\in[d]}
\|P_{\operatorname{row}(U)}e_i\|_2^2
\)
with respect to the standard basis
\((e_i)_{i=1}^d\). Note that small
coherence means that no single ambient coordinate carries too much mass from
\(\mathrm{row}(U)\).

\begin{assumption}[Finite degree]\label{assum:degree}
The link function \(h\) is a polynomial of total degree at most \(\ell\).
\end{assumption}

Throughout, we work with Boolean-hypercube data \(x\in\mathbb H^d=\{\pm1\}^d\). This setting is  natural for sharp KRR analysis since the required low-degree empirical-kernel-matrix approximations are available for uniform spherical/hypercube designs, whereas the corresponding optimal approximation theory for Gaussian designs remains only partially understood and substantially more delicate~\cite{kaushik2025general}.

On the hypercube, polynomials are naturally reduced modulo the relations \(x_i^2=1\). Thus, before taking low-degree truncations, we first replace each polynomial by its multilinear representative. For a polynomial \(P\), let \(\mathcal H(P)\) denote the unique multilinear polynomial agreeing with \(P\) on \(\mathbb H^d\). Equivalently, on monomials, we have
\[
\mathcal H\!\left[\prod_{i=1}^d x_i^{\alpha_i}\right]
:=
\prod_{i=1}^d x_i^{\alpha_i \bmod 2},
\]
and this relation extends linearly to all polynomials on the hypercube.
We use the standard convention that \(F_q\) denotes the projection of \(F\)
onto the degree-\(q\) Fourier--Walsh characters, and
\(F_{\le p}:=\sum_{q\le p}F_q\). Thus \(\gH(P)_q\) and
\(\gH(P)_{\le p}\) denote the corresponding degree-\(q\) component and
degree-\(\le p\) truncation of \(\gH(P)\). See Appendix~\ref{sec:fourier_exp} for the precise definition of Fourier--Walsh polynomials.

{On the hypercube, only the values of the ambient polynomial
\(f_U^*(x)=h(Ux)\) on \(\mathbb H^d\) are used. To simplify the notation, we 
write
\[
f^*:=\mathcal H(f_U^*)
\]
for its unique multilinear representative.  Thus \(f^*(x)=f_U^*(x)\) for all
\(x\in\mathbb H^d\). Throughout the hypercube
analysis, we let \(f_q^*\) and \(f_{\le p}^*\) denote the Fourier--Walsh components
of this reduced target. The latent link \(h\) itself is not reduced and is
used below for the Hermite comparison.} Next, we define the truncated population AGOP on the hypercube: 
\begin{align}\label{eq:egop_trunc}
M_{\le p}
=\sum_{q=0}^p M_q:=
\sum_{q=0}^p
\E_{x\sim\tau_d}
\big[
\nabla f^*_q(x)\nabla f^*_q(x)^\top
\big].
\end{align}
Since differentiation lowers Walsh degree by one and distinct Walsh degrees are orthogonal, cross terms between different \(q\)'s vanish. Hence we equivalently  have
\[
M_{\le p}
=
\E_{x\sim\tau_d}
\big[
\nabla  f^*_{\le p}(x)
\nabla  f^*_{\le p}(x)^\top
\big].
\]
{We will now show that \(M_{\le p}\) is closely related to the Gaussian latent AGOP of
the degree-\(\le p\) Hermite truncation of \(h\). See Appendix~\ref{proof:M_gaussian_latent} for the Hermite basis and
normalization.} 
Write the Hermite expansion of \(h\) under
\(z\sim\mathcal N(0,I_r)\) as
\[
h(z)
=
\sum_{\alpha\in\mathbb N^r:\,|\alpha|\le \ell}
a_\alpha \He_\alpha(z)
=
\sum_{q=0}^{\ell} h_q(z),
\qquad\textrm{where}\qquad
h_q(z)
:=
\sum_{\alpha\in\mathbb N^r:\,|\alpha|=q}
a_\alpha \He_\alpha(z).
\]
Letting $h_{\le p}(z):=\sum_{q=0}^{p}h_q(z)$, define the degree-\(\le p\) latent gradient covariance
\begin{align}\label{eq:hermite_truncation}
\Sigma_p
:=
\E_{z\sim\mathcal \gN(0,I_r)}
\big[
\nabla h_{\le p}(z)\nabla h_{\le p}(z)^\top
\big]~\in \R^{r\times r}.
\end{align}
The following lemma shows that \(M_{\le p}\) is a coherence-controlled perturbation of
\(U^\top\Sigma_pU\):
\begin{lemma}[Population hypercube-to-Gaussian AGOP comparison]
\label{lem:M_gaussian_latent}
Fix \(p\in\{0,\ldots,\ell\}\). Suppose Assumption~\ref{assum:degree} holds,
and \(r,\ell\) are fixed. Then
\begin{align}\label{eq:M_gaussian_latent}  
\left\|
M_{\le p}-U^\top\Sigma_pU
\right\|_{\op}
=
O_d\left(
\frac{\mu(U)}{d}\|f^*\|_{L_2}^2
\right).
\end{align}
\end{lemma}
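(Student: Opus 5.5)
The plan is to compute \(M_{\le p}\) explicitly through the Walsh expansion of \(f^*=\gH(h(Ux))\). I would compare it with the Gaussian object by expanding \(h\) in monomials of \(z=Ux\), and then track which terms survive the reduction \(x_i^2=1\).

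\textbf{Step 1: expand a latent monomial on the hypercube.} Write \((Ux)^\alpha=\prod_{k}(\sum_i U_{k,i}x_i)^{\alpha_k}\) and expand it as a sum over index tuples. After reduction modulo \(x_i^2=1\), a term of Walsh degree \(q\) comes from a tuple in which some indices coincide in pairs, which contract to \(1\), and the remaining \(q\) indices are distinct. The "generic" contributions are those where the only coincidences are exact pairings with all other indices distinct. Summing pairings \(\sum_i U_{k,i}U_{k',i}=\delta_{kk'}\) reproduces Wick/Hermite combinatorics, so the generic part of \(\gH((Ux)^\alpha)_q\) equals the multilinear part of the Hermite projection. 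Concretely, I expect to show
\[
f^*_q(x)=\sum_{|\beta|=q}\tilde a_\beta\,\mathrm{ML}\big[(Ux)^{\beta}\big]+R_q(x),
\]
where \(h_q(z)=\sum_{|\beta|=q}\tilde a_\beta z^\beta\) is the top-degree monomial form of the Hermite component, \(\mathrm{ML}\) keeps only products over distinct indices, and \(R_q\) collects the non-generic coincidences. These are triple or higher coincidences, or "pairings" restricted to equal indices. Each of them carries a factor \(\sum_i U_{k,i}^2U_{k',i}U_{k'',i}\) or similar, which is bounded by \(\max_i\|U_{:,i}\|^2\le \mu r/d\).

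\textbf{Step 2: the gradient covariance of the generic part.} For multilinear polynomials with coefficient tensors built from \(U\), we have \(\E_x[\partial_j \mathrm{ML}[(Ux)^\beta]\,\partial_{j'}\mathrm{ML}[(Ux)^{\beta'}]]\). This is a sum over distinct-index matchings, and it equals the Gaussian quantity \(\E_z[\partial (z^\beta)_{\text{Wick}}\,\partial(z^{\beta'})_{\text{Wick}}]\) pushed through \(U^\top(\cdot)U\). The error comes only from the distinctness constraint. Dropping that constraint changes the sum by diagonal terms, each of which again has a factor \(\|U_{:,i}\|^2\le \mu r/d\). So \(M_{\le p}=U^\top\Sigma_pU+E\), where \(E\) is a finite sum (since \(r,\ell\) are fixed, there are \(O(1)\) terms) of matrices of the form \(\sum_i w_i\,v_i v_i'^\top\) with \(\sum\) weights controlled.

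\textbf{Step 3: bound \(\|E\|_{\op}\).} Each error matrix is a contraction of \(U\)-tensors containing at least one extra factor \(U_{k,i}U_{k',i}\) at a single coordinate \(i\). I would bound its operator norm by testing against unit vectors \(v,w\), pulling out \(\max_i\|U_{:,i}\|^2\), and using orthonormality of the rows of \(U\) for the remaining sums. The coefficients enter quadratically, through \(\sum|a_\alpha|^2\), and this is \(\asymp\|h\|^2_{L_2(\gN)}\). I then need \(\|h\|_{L_2(\gN)}^2\lesssim\|f^*\|_{L_2}^2+o(1)\cdot\|h\|^2\). By Steps 1--2 applied to \(f^*\) itself rather than its gradient, \(\|f^*\|_{L_2(\tau_d)}^2=\|h\|_{L_2(\gN)}^2(1+O(\mu r/d))\); I would need \(\mu r/d=o(1)\) for this, and otherwise the statement is trivial since \(\|M_{\le p}\|\), \(\|\Sigma_p\|\lesssim\|h\|^2\).

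\textbf{Main obstacle.} The hardest part is the combinatorial bookkeeping in Steps 1--2. I must verify that the exact-pairing contributions reproduce precisely the Hermite coefficients of \(h_q\), including the lower-degree Hermite terms generated by contractions, and that every residual term really gains a full factor \(\mu/d\) in operator norm rather than only entrywise. For that I would organize terms as partitions of the index multiset and treat each partition block of size \(\ge3\), or each forced coincidence, as a separate \(O(1)\)-many error contraction.
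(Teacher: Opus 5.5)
Your main argument follows the paper's route, and your first three steps are sound. You reduce $f^*_q$ to $\sum_{|\beta|=q}a_\beta\,\mathrm{ML}[(Ux)^\beta]$, where $a_\beta$ are exactly the Hermite coefficients of $h$; this is the paper's surrogate $F_q$, built from the all-distinct layers $H_\beta$. You then charge every collision a factor $\max_i\|U_{:,i}\|_2^2$ or $\sum_i\|U_{:,i}\|_2^4$, compare gradient covariances through distinct-index matchings, and finally trade $\|h\|_{L_2(\gamma_r)}$ for $\|f^*\|_{L_2(\tau_d)}$.

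The bookkeeping differs in two places. First, the paper counts the exact restricted pair sums $C_\nu(S^c)$ and shows $C_\nu\approx 1/\nu!$ by a generating-function argument, where you use unrestricted sums $\sum_i U_{k,i}U_{k',i}=\delta_{kk'}$ and push the distinctness constraint into the error. Second, the operator-norm control you flag as the obstacle is obtained in the paper by working in an orthonormal basis adapted to $\mathrm{row}(U)\oplus\mathrm{row}(U)^\perp$. There $(u^{[s]})^\top\nabla H_\lambda=\lambda_sH_{\lambda-e_s}+R_{s,\lambda}$, and $\|v^\top\nabla H_\lambda\|_{L_2}\lesssim\mu(U)/d$ uniformly over unit $v\perp\mathrm{row}(U)$. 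Hence the mixed and orthogonal blocks are bounded uniformly, and the remaining block is only $r\times r$.

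The step that fails is your claim that the regime $\mu(U)r/d\not\to0$ is trivial. Your bound $\|M_{\le p}\|_{\mathrm{op}},\|\Sigma_p\|_{\mathrm{op}}\lesssim\|h\|_{L_2(\gamma_r)}^2$ is not a multiple of $\|f^*\|_{L_2(\tau_d)}^2$ there. In fact the lemma, with the hypercube norm on the right, is false in that regime. Take $r=1$, $U=e_1^\top$ (so $\mu(U)=d$), $h=\He_2$ and $p=\ell=2$. Then $f^*=\mathcal H(x_1^2-1)\equiv0$, so $M_{\le 2}=0$ and the right-hand side vanishes, while $U^\top\Sigma_2U=4e_1e_1^\top$.

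So the statement must be read in one of two ways. The first is with $\mu(U)/d$ below a small constant depending on $r,\ell$. This is what the paper's proof actually uses: its transfer from $F_q$ to $P_q$ and its Gaussian-to-hypercube $L_2$ comparison both assume $\Theta_U$ sufficiently small. It also holds under the coherence hypothesis of Corollary~\ref{cor:main_result}. The second is with $\|h\|_{L_2(\gamma_r)}^2$ on the right-hand side, in which case your trivial bound does dispose of the large-coherence case. As written, your proposal asserts a bound that no argument can deliver in that regime.
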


\paragraph{Learning setup.}

We observe training data \(\{(x^{(i)},y_i)\}_{i=1}^n\) with
\begin{align}\label{eq:data}
x^{(i)}\overset{\mathrm{i.i.d.}}{\sim}\tau_d,
\qquad
y_i=f^*(x^{(i)})+\varepsilon_i,
\qquad
\varepsilon_i\overset{\mathrm{i.i.d.}}{\sim}\mathcal \gN(0,\sigma_\vepsilon^2),
\end{align}
where the noise is independent of the covariates. Let \(X\in\mathbb R^{n\times d}\) denote the design matrix with rows \(x^{(1)},\dots,x^{(n)}\), and let \(y\in\mathbb R^n\) denote the response vector.

We fit kernel ridge regression with ridge parameter \(\lambda\geq 0\) and inner-product kernel
\begin{align}\label{eq:inner_product_kernel}
K(x,x')
=
g\!\left(\frac{\langle x,x'\rangle}{d}\right),
\end{align}
where $g : \R \rightarrow \R$ is a kernel profile. 
The fitted predictor can be written as
\[
\hat f(x)=K(x,X)\,(K(X,X)+\lambda I_n)^{-1}y,
\]
where
$
K(x,X):=\bigl(K(x,x^{(1)}),\dots,K(x,x^{(n)})\bigr)$ and $
K(X,X):=\bigl(K(x^{(i)},x^{(j)})\bigr)_{i,j\in[n]}
$. Throughout the paper, we take the ridge parameter $\lambda =O_d(1)$. 
We then form the empirical AGOP
\begin{equation}\label{eq:empirical-agop}
\widehat M
:=
\frac1n\sum_{i=1}^n
\nabla \hat f(x^{(i)})\,\nabla \hat f(x^{(i)})^\top.
\end{equation}
Our main results will invoke a combination of the following two regularity assumptions on $g$.
\begin{assumption}[Analyticity near \(0\)]\label{assump:g_0}
There exists \(\varepsilon_0\in(0,1)\) such that \(g\) is analytic on
\((-\varepsilon_0,\varepsilon_0)\) and satisfies \(g^{(k)}(0)\ge 0\) for all \(k\ge 0\).
\end{assumption}

\begin{assumption}[Regularity near \(1\)]\label{assump:g_1}
There exists \(\varepsilon_1\in(0,1)\) such that \(g'\) is Lipschitz on
\((1-\varepsilon_1,1+\varepsilon_1)\).
\end{assumption}

These assumptions cover standard inner-product kernels such as polynomial and exponential kernels, and, on fixed-norm domains, Gaussian kernels.

\subsection{Empirical AGOP approximation and subspace recovery}

The next theorem gives an operator-norm AGOP approximation with explicit
dependence on the subspace coherence \(\mu(U)\).  We assume that the rank \(r\) and largest degree \(\ell\)
are fixed as \(d\to\infty\). The dependence of the constants on \(r\) and
\(\ell\) is recorded in the proof that appears in
Appendix~\ref{proof:main_agop}.

\begin{theorem}[AGOP approximation]\label{thm:main_agop}
Suppose Assumptions~\ref{assum:degree}, \ref{assump:g_0} and \ref{assump:g_1} hold. Suppose further that
\(
\min_{0\le k\le p} g^{(k)}(0)\) and 
\(\lambda+\sum_{k=p+1}^\infty g^{(k)}(0)
\) are strictly positive.
Fix \(p\in\{0,\dots,\ell\}\), and let \(n=d^{p+\delta}\) with \(\delta\in (0,1)\). Then, for every sufficiently small \(\epsilon>0\), the following holds:
\begin{align}\label{eq:main_result_1}
\|\widehat M-M_{\le p}\|_{\mathrm{op}} =O_{d,\mathbb P}\bigl( R_d(U) \cdot \round{
    \|f^*\|_{L_2}^2+\sigma_{\vepsilon}^2}
\bigr),
\end{align}
where we set \begin{align}
    R_d(U)
:=
d^{-\tfrac{\delta}{2}+\epsilon}
+d^{-\tfrac{1-\delta}{2}+\epsilon}
+\mu(U)^{\tfrac{1}{2}}d^{-\tfrac{1+\delta}{4}+\epsilon} 
+\mu(U)d^{-\tfrac{2+\delta}{4}+\epsilon}
+\mu(U)^2d^{-\tfrac{2+\delta}{2}+\epsilon}.
\end{align}
In particular, if there exists a constant \(\gamma>0\) such that
\(
\mu(U)=O_d\!\left(d^{1/2+\delta/4-\gamma}\right),
\)
then, choosing \(\epsilon>0\) sufficiently small depending on
\(\delta\) and \(\gamma\) yields
$
\|\widehat M-M_{\le p}\|_{\op}
=
o_{d,\P}\bigl(
    \|f^*\|_{L_2}^2+\sigma_{\vepsilon}^2
\bigr).$ 
\end{theorem}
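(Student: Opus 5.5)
We write the gradient of the predictor explicitly and then approximate the kernel matrices by their low-degree parts. Set $\beta:=(K(X,X)+\lambda I_n)^{-1}y\in\R^n$. Since $\nabla_x K(x,x')=\tfrac1d g'(\langle x,x'\rangle/d)\,x'$, we have
\[
\nabla\hat f(x^{(i)})=\tfrac1d X^\top \Diag\!\big(g'(\langle x^{(i)},x^{(j)}\rangle/d)\big)_j\,\beta .
\]
Stacking the gradients gives
\[
\widehat M=\tfrac{1}{nd^2}\sum_i X^\top D_i\beta\beta^\top D_i X ,
\qquad (D_i)_{jj}=g'(\langle x^{(i)},x^{(j)}\rangle/d).
\]
We split $g'$ into its diagonal term and its off-diagonal term. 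When $j=i$ the argument equals $1$, and Assumption~\ref{assump:g_1} controls $g'(1)$. When $j\neq i$ the argument is $O(\sqrt{\log d/d})$, and Assumption~\ref{assump:g_0} lets us Taylor expand $g'$ around $0$. On the hypercube we expand the off-diagonal kernel $g'(\langle x,x'\rangle/d)$ in Walsh characters. Its degree-$k$ part is $\propto g^{(k+1)}(0)\,d^{-k}\sum_{|S|=k}x_S x'_S$, up to lower-order corrections that come from reducing $\langle x,x'\rangle^m$ modulo $x_i^2=1$.

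\textbf{Step 1 (KRR coefficients).} We first approximate $\beta$ using the known low-degree approximation of the empirical kernel matrix. In the regime $n=d^{p+\delta}$,
\[
K(X,X)+\lambda I\approx \textstyle\sum_{k\le p}\tfrac{g^{(k)}(0)}{k!\,d^k}\Phi_k\Phi_k^\top+\kappa I ,
\qquad \kappa=\lambda+\textstyle\sum_{k>p}g^{(k)}(0)/k!>0 ,
\]
where $\Phi_k$ is the $n\times\binom dk$ matrix of degree-$k$ characters. The error is $O(d^{-\delta/2+\epsilon}+d^{-(1-\delta)/2+\epsilon})$ in operator norm, as in Mei--Misiakiewicz--Montanari. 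Combining this with $\Phi_k^\top\Phi_k/n\approx I$ for $k\le p$, we get $\beta\approx \kappa^{-1}(y-\hat f_{\le p}(X))$ plus a component lying in the span of the $\Phi_k$ with $k\le p$. The consequence we need is that $\hat f=K(\cdot,X)\beta$ satisfies $\hat f\approx f^*_{\le p}$ plus a remainder. That remainder is the high-degree kernel acting on the residual $\kappa^{-1}(f^*_{>p}(X)+\varepsilon)$.

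\textbf{Step 2 (decomposing the gradient).} Accordingly we write
\[
\nabla\hat f(x^{(i)})=\nabla f^*_{\le p}(x^{(i)})+A_i+B_i+C_i .
\]
Here $A_i$ is the low-degree estimation error, coming from $\hat f_{\le p}-f^*_{\le p}$. $B_i$ is the self term $\tfrac1d g'(1)\beta_i x^{(i)}$. $C_i$ collects the off-diagonal contributions of degree greater than $p$ acting on the residual vector. The main term averages to $M_{\le p}$ by matrix concentration, since $\nabla f^*_{\le p}$ is a polynomial of bounded degree. Hypercontractivity then gives $\|\tfrac1n\sum\nabla f^*_{\le p}\nabla f^{*\top}_{\le p}-M_{\le p}\|_{\op}\lesssim d^{\epsilon}\sqrt{d^{p-1}/n}\,\|f^*\|^2$. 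The cross terms are handled by Cauchy--Schwarz, in the form $\|\tfrac1n\sum a_ib_i^\top\|_{\op}\le \|\tfrac1n\sum a_ia_i^\top\|^{1/2}_{\op}\|\tfrac1n\sum b_ib_i^\top\|^{1/2}_{\op}$. This reduces everything to bounding the operator norms of the AGOPs of $A$, $B$ and $C$ separately. For $B$, we have $\tfrac1n\sum B_iB_i^\top=\tfrac{g'(1)^2}{d^2 n}X^\top\Diag(\beta)^2X$, whose norm is $\lesssim \tfrac{n}{d^2}\max_i\beta_i^2\cdot\tfrac{\|X\|^2_{\op}}{n}$. Naively this is too large, so we must instead compute the Gram structure, i.e. use $\|\beta\|^2\lesssim n(\|f^*\|^2+\sigma^2)$. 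The coherence factors $\mu(U)$ enter through $\|\nabla f^*_q\|$ entrywise: a single coordinate of $\nabla f^*$ has $L_2$-mass at most $\mu(U)r/d$ times $\|f^*\|^2$. This is how bounds such as $\mu^{1/2}d^{-(1+\delta)/4}$ arise from Cauchy--Schwarz pairings with the main term, and $\mu d^{-(2+\delta)/4}$ and $\mu^2d^{-(2+\delta)/2}$ from the diagonal and cross-coordinate corrections.

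\textbf{Main obstacle.} The hardest step will be bounding $\|\tfrac1n\sum C_iC_i^\top\|_{\op}$. Each $C_i$ has the form $\tfrac1d X^\top\Diag(K'_{>p}(x^{(i)},X))r$, where $r$ is the residual vector, which is correlated with $X$. Here we would write $\tfrac1n\sum C_iC_i^\top=\tfrac{1}{nd^2}X^\top\big(\sum_i \Diag(k_i)rr^\top\Diag(k_i)\big)X$. The inner $n\times n$ matrix has entries $r_jr_l\sum_i k_{ij}k_{il}$, i.e. it equals $(rr^\top)\odot (K'^\top K')$. We would bound it via a Hadamard-product inequality together with a random-matrix estimate showing that $K'_{>p}$ behaves like $c\,I$ plus a matrix of small operator norm. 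To handle the dependence between $r$ and $X$, we would use leave-one-out decoupling for the noise part and the Walsh orthogonality of $f^*_{>p}$ for the signal part. For the ``in particular'' claim, we set $\mu=d^{1/2+\delta/4-\gamma}$ and check that each term of $R_d(U)$ is $d^{-c\gamma+\epsilon}$ or $d^{-\min(\delta,1-\delta)/2+\epsilon}$. Taking $\epsilon$ smaller than these exponents then gives $o_{d,\P}$.
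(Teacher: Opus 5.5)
\textbf{There is a genuine gap: the step that carries the theorem is never argued.} After your decomposition $\nabla\hat f(x^{(i)})=\nabla f^*_{\le p}(x^{(i)})+A_i+B_i+C_i$ and the Cauchy--Schwarz reduction, the rates $d^{-\delta/2}$ and $d^{-(1-\delta)/2}$ can only come from a bound $\|\tfrac1n\sum_iA_iA_i^\top\|_{\mathrm{op}}\lesssim(d^{-\delta}+d^{-(1-\delta)})d^{\epsilon}e_f$, where $e_f:=\|f^*\|_{L_2}^2+\sigma_\varepsilon^2$. Nothing in the proposal bounds $A$. Step~1 does not supply it, because ``$\hat f\approx f^*_{\le p}$'' is a rate-free prediction statement (cf.\ Theorem~\ref{thm:test_error_krr}). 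What you actually need is twofold. First, an explicit rate for the fitted low-degree coefficients, which depend on the design through $\beta=K_\lambda^{-1}y$. Second, a way to turn that rate into an operator-norm bound for a $d\times d$ average over the \emph{same} training points, uniformly over directions. The matrix concentration you use for the fixed polynomial $f^*_{\le p}$ does not apply to data-dependent coefficients. This is where essentially all of the paper's work goes. Theorem~\ref{thm:main_krr} reduces $(u^{[s]})^\top\widehat Mu^{[t]}$ to $h^{[s]\top}(\Phi_{\le p}^\top\Phi_{\le p}/n)h^{[t]}$ with $h^{[s]}=\tfrac1dD'\Phi_{\le p}^\top\Diag(Xu^{[s]})\beta$, and uses $\|\Phi_{\le p}^\top\Phi_{\le p}/n-I\|_{\mathrm{op}}\lesssim d^{-\delta/2+\epsilon}$. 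It then proves that $h^{[s]}$ matches the gradient coefficients of $f^*_{\le p}$, plus a $C_0d^{\delta-1}$ multiple of those of $f^*_{p+1}$. The tools are Sherman--Morrison--Woodbury, the truncated Neumann expansions of $H^{-1}$ and $G^{-1}$, and the Walsh-product moment bounds of Lemma~\ref{lemma:feature_product_2}.

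\textbf{Your ``main obstacle'' is easy, and your account of the coherence terms does not hold up.} Since $v^\top\widehat Mv=\tfrac{1}{nd^2}\|K'\Diag(Xv)\beta\|_2^2$ and $\|\Diag(Xv)\beta\|_2^2\le d\|\beta\|_2^2$ for unit $v$, consider the entire part of $K'$ outside $\Phi_{\le p}D'\Phi_{\le p}^\top$. It contains your $B$ and $C$ and has operator norm $O_{d,\mathbb P}(1)$, so its AGOP has operator norm $O(\|\beta\|_2^2/(nd))=O_{d,\mathbb P}(d^{-1+\epsilon})e_f$, however $\beta$ depends on $X$. No Hadamard-product or leave-one-out argument is needed (compare the paper's term $T_2$). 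Your story for the $\mu(U)$ terms is not a derivation, and nothing in your operator-norm framework produces them. In the paper they arise only because Theorem~\ref{thm:main_krr} is an \emph{entrywise} estimate in a basis aligned with $\mathrm{row}(U)$, with errors proportional to the diagonal entries $(\Gamma_q)_{ss}$. Theorem~\ref{thm:main_agop_nodeloc} converts it to operator norm through Frobenius sums over the blocks $[r]$ and $[d]\setminus[r]$ and the PSD inequality for the off-diagonal block. Coherence enters via $(\Gamma_{\le p})_{ss}\lesssim e_f\mu(U)^2/d^2$ for inactive $s$ (Theorem~\ref{thm:final-target-cov}).

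\textbf{How your route could be completed.} You could skip the coherence machinery entirely, since a bound $O\big((d^{-\delta/2+\epsilon}+d^{-(1-\delta)/2+\epsilon})e_f\big)$ valid for all unit $v$ already implies the theorem. The price is that $A$ must then be controlled over \emph{all} unit directions. One plausible way is the following.
\begin{itemize}
\item Up to $1+O(d^{-1})$ factors and $O(d^{-2})$ coordinate-collapse terms, $h^{[v]}$ is the coefficient vector of $v^\top\nabla\widehat F$. Here $\widehat F$ is multilinear with degree-$q$ coefficients $\approx g^{(q)}(0)d^{-q}(\Phi_{\le p+1}^\top\beta)_S$.
\item The deterministic inequality $\|v^\top\nabla e\|_{L_2}^2\le p\|v\|_2^2\|e\|_{L_2}^2$, valid for multilinear $e$ of degree $\le p$, reduces everything to the single scalar $\|D\Phi_{\le p}^\top\beta-\mathsf c_{\le p}\|_2^2$.
\item That scalar would have to be bounded by $(d^{-\delta}+d^{-(1-\delta)})d^{\epsilon}e_f$, using $D\Phi_{\le p}^\top K_\lambda^{-1}=\tfrac1nG^{-1}\Phi_{\le p}^\top H^{-1}$ and a second-moment computation.
\item The degree-$p$ block of $h^{[v]}$ has squared norm $O_{d,\mathbb P}(d^{\delta-1+\epsilon})e_f$ by the crude bound $\|\Phi_{\mathcal S_{p+1}}\|_{\mathrm{op}}^2\lesssim d^{p+1}$.
\end{itemize}
None of this appears in your proposal.
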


We next turn the approximation of AGOP into a subspace recovery statement.  For row-orthonormal \(U,\widehat U\in\mathbb R^{r\times d}\), we define the usual subspace sine matrix
\[
\sin_\Theta(\widehat U,U)
:=
\widehat U (U_\perp)^\top
\in \mathbb R^{r\times(d-r)}.
\]
Its singular values are the sines of the principal angles between
\(\operatorname{row}(\widehat U)\) and \(\operatorname{row}(U)\), and hence are independent of the choice of \(U_\perp\).
We further define $s_p$ and $\rho_p$ which measure the part of \(M_{\le p}\) inside and outside \(\mathrm{row}(U)\):
\begin{align}\label{eq:s_and_rho}
s_p
:=
\lambda_{\min}\!\big(U M_{\le p} U^\top\big),
\qquad
\rho_p
:=
\snorm{M_{\le p}-P_{\mathrm{row}(U)} M_{\le p}P_{\mathrm{row}(U)}},
\end{align}
where $ P_{\mathrm{row}(U)}  = U\tran U$. The next lemma states the subspace recovery result using $s_p$ and $\rho_p$.
The result follows from the Davis--Kahan sin-theta theorem and the proof appears in Appendix~\ref{proof:eigenspace-perturbation}.

\begin{lemma}[Eigenspace perturbation]\label{lem:eigenspace-perturbation}
Let
$\varepsilon_{\mathrm{agop}}
:=
\snorm{\widehat M-M_{\le p}}$ and suppose $s_p
>0$ holds. 
If \(\widehat U\in\mathbb R^{r\times d}\) has orthonormal rows spanning the top-\(r\) eigenspace of \(\widehat M\), then the estimate holds:
\begin{equation}\label{eq:eigenspace-perturbation}
    \snorm{\sin_\Theta(\widehat U,U)}
\le
\min\left\{
1,\,
4\,\frac{\rho_p+\varepsilon_{\mathrm{agop}}}{s_p}
\right\}.
\end{equation}
\end{lemma}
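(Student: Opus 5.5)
Since the right-hand side of \eqref{eq:eigenspace-perturbation} is a minimum, and $\snorm{\sin_\Theta(\widehat U,U)}\le 1$ always holds (it is a submatrix of a product of row-orthonormal matrices), it suffices to prove the bound $4(\rho_p+\varepsilon_{\mathrm{agop}})/s_p$. We may also assume $\rho_p+\varepsilon_{\mathrm{agop}}< s_p/4$, since otherwise that bound already exceeds $1$.

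The reference matrix will be $A:=P_{\mathrm{row}(U)}M_{\le p}P_{\mathrm{row}(U)}=U^\top (UM_{\le p}U^\top)U$. This matrix is positive semidefinite and has rank at most $r$. Its nonzero spectrum equals the spectrum of $UM_{\le p}U^\top$, so its top-$r$ eigenspace is exactly $\mathrm{row}(U)$. In that eigenspace $\lambda_r(A)\ge s_p$, and on $\mathrm{row}(U)^\perp$ the remaining eigenvalues are $0$. Write $\widehat M=A+E$ with $E:=(M_{\le p}-A)+(\widehat M-M_{\le p})$. The triangle inequality gives $\snorm{E}\le \rho_p+\varepsilon_{\mathrm{agop}}$.

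Next I apply the Davis--Kahan $\sin\Theta$ theorem in the form that uses only the eigengap of the unperturbed matrix (the Yu--Wang--Samworth variant). Compare the top-$r$ eigenspace of $\widehat M$ with the top-$r$ eigenspace of $A$, which is $\mathrm{row}(U)$. The gap separating $\lambda_r(A)\ge s_p$ from $\lambda_{r+1}(A)=0$ is at least $s_p$, so this variant gives $\snorm{\sin_\Theta(\widehat U,U)}\le 2\snorm{E}/s_p$, which is stronger than the claim.

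Alternatively, I can use the classical Davis--Kahan theorem with a separation condition. Weyl's inequality gives $\lambda_r(\widehat M)\ge s_p-\snorm{E}$. The part of $A$ on $\mathrm{row}(U)^\perp$ has eigenvalue $0$. The required separation is therefore at least $s_p-\snorm{E}\ge \tfrac34 s_p$, which yields $\snorm{\sin_\Theta}\le \snorm{E}/(3s_p/4)\le 4\snorm{E}/s_p$. I would write out this route explicitly because it makes the constant $4$ visible.

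The main point needing care is checking that the singular values of $\widehat U U_\perp^\top$ are the sines used in whichever Davis--Kahan form is applied, and that ties in the top-$r$ eigenspace of $\widehat M$ cause no trouble. Both issues are handled by the fact that any choice of $\widehat U$ spanning a top-$r$ eigenspace satisfies the residual-based $\sin\Theta$ bound. The rest of the argument is routine.
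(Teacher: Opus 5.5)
Your proposal is correct and is essentially the paper's proof. It uses the same reference matrix $A=P_{\mathrm{row}(U)}M_{\le p}P_{\mathrm{row}(U)}$ with top-$r$ eigenspace $\mathrm{row}(U)$, the same bound $\snorm{\widehat M-A}\le\rho_p+\varepsilon_{\mathrm{agop}}$, the same trivial case when this exceeds $s_p/4$, and Weyl plus Davis--Kahan otherwise. Your separation between $\lambda_r(\widehat M)\ge\tfrac34 s_p$ and the zero eigenvalues of $A$ on $\mathrm{row}(U)^\perp$ is in fact the precise hypothesis of the classical $\sin\Theta$ theorem, slightly cleaner than the paper's appeal to the internal gap $\lambda_r(\widehat M)-\lambda_{r+1}(\widehat M)\ge s_p/2$.

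One caveat on your first route: the published Yu--Wang--Samworth bound is in Frobenius norm and would only give $2\sqrt r\,\snorm{E}/s_p$ here. The operator-norm version $2\snorm{E}/s_p$ you quote is true, but it is proved by exactly the case split and classical Davis--Kahan argument of your second route, so present the latter.
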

Consequently, Lemma~\ref{lem:eigenspace-perturbation} reduces subspace recovery to
showing
\((\varepsilon_{\mathrm{agop}}+\rho_p)/s_p=o_{d,\mathbb P}(1)\).
The next corollary verifies this criterion under latent nondegeneracy and weak
coherence, by combining Lemma~\ref{lem:M_gaussian_latent},
Theorem~\ref{thm:main_agop}, and Lemma~\ref{lem:eigenspace-perturbation}.
The proof is deferred to Appendix~\ref{proof:cor:main_result}.

\begin{corollary}[Subspace recovery under a weak coherence condition]\label{cor:main_result}
Suppose the assumptions of Theorem~\ref{thm:main_agop} hold. 
Suppose further that there exist constants \(\kappa>0\) and \(\gamma>0\)
such that the following three conditions hold:
\[
\lambda_{\min}(\Sigma_p)\ge \kappa,
\qquad
\mu(U)=O_d\!\left(d^{1/2+\delta/4-\gamma}\right),
\qquad
\frac{\mu(U)}{d}\,\norm{f^*}_{L_2}^2=o_d(\kappa).
\] Let \(\widehat U\in\mathbb R^{r\times d}\) have orthonormal rows spanning the top-\(r\) eigenspace of \(\widehat M\). Then, 
\[
\snorm{\sin_\Theta(\widehat U,U)}
=
o_{d,\mathbb P}\!\left(
\kappa\inv 
\big(\|f^*\|_{L_2}^2+\sigma_\vepsilon^2\big)
\right).
\]
\end{corollary}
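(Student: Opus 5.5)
The plan is to combine Lemma~\ref{lem:M_gaussian_latent}, Theorem~\ref{thm:main_agop} and Lemma~\ref{lem:eigenspace-perturbation}. By Lemma~\ref{lem:eigenspace-perturbation}, it suffices to lower-bound $s_p$ by a constant multiple of $\kappa$ and to upper-bound $\rho_p+\varepsilon_{\mathrm{agop}}$ by something of order $o_{d,\mathbb P}(\|f^*\|_{L_2}^2+\sigma_\vepsilon^2)$.

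\emph{Step 1: population quantities.} Write $E:=M_{\le p}-U^\top\Sigma_pU$. Lemma~\ref{lem:M_gaussian_latent} gives $\snorm{E}=O_d(\mu(U)d^{-1}\|f^*\|_{L_2}^2)$, and the third hypothesis makes this $o_d(\kappa)$.

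For the in-subspace part, $UU^\top=I_r$ gives
\[
UM_{\le p}U^\top=\Sigma_p+UEU^\top .
\]
Weyl's inequality then yields $s_p\ge\lambda_{\min}(\Sigma_p)-\snorm{E}\ge\kappa-o_d(\kappa)\ge\kappa/2$ for all large $d$. In particular $s_p>0$, so Lemma~\ref{lem:eigenspace-perturbation} applies.

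For the out-of-subspace part, $U^\top\Sigma_pU$ is fixed by conjugation with $P:=P_{\mathrm{row}(U)}$, so
\[
M_{\le p}-PM_{\le p}P=E-PEP .
\]
Hence $\rho_p\le 2\snorm{E}=o_d(\kappa)$.

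\emph{Step 2: empirical error.} The coherence hypothesis $\mu(U)=O_d(d^{1/2+\delta/4-\gamma})$ is exactly the one in the "in particular" clause of Theorem~\ref{thm:main_agop}. That clause gives $\varepsilon_{\mathrm{agop}}=o_{d,\mathbb P}(\|f^*\|_{L_2}^2+\sigma_\vepsilon^2)$.

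\emph{Step 3: assemble.} On the event $s_p\ge\kappa/2$, which holds deterministically for large $d$, Lemma~\ref{lem:eigenspace-perturbation} gives
\[
\snorm{\sin_\Theta(\widehat U,U)}\le \frac{8(\rho_p+\varepsilon_{\mathrm{agop}})}{\kappa}.
\]
The term $\varepsilon_{\mathrm{agop}}/\kappa$ is $o_{d,\mathbb P}(\kappa^{-1}(\|f^*\|_{L_2}^2+\sigma_\vepsilon^2))$ by Step~2.

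The term $\rho_p/\kappa$ needs more care, and this is the main obstacle. The natural bound is $\rho_p\lesssim \mu(U)d^{-1}\|f^*\|_{L_2}^2$, which is smaller than $\|f^*\|_{L_2}^2$ only by the factor $\mu(U)/d$. We need this factor to be $o_d(1)$. The coherence bound gives it directly, since $\mu(U)/d=O(d^{-1/2+\delta/4-\gamma})\to0$ because $\delta<1$. Therefore $\rho_p=o_d(\|f^*\|_{L_2}^2)$.

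Dividing by $\kappa$ and adding the two contributions gives the claimed $o_{d,\mathbb P}\bigl(\kappa^{-1}(\|f^*\|_{L_2}^2+\sigma_\vepsilon^2)\bigr)$. If the rate is instead read as $o_{d,\mathbb P}(1)$, the $\rho_p$ term is already $o_d(1)$ by the third hypothesis.
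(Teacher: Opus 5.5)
Your proposal is correct and follows the paper's proof essentially step for step. The paper likewise bounds $e_M=\snorm{M_{\le p}-U^\top\Sigma_pU}$ by Lemma~\ref{lem:M_gaussian_latent}, gets $\rho_p\le 2e_M$ and $s_p\ge\kappa-e_M\ge\kappa/2$ via Weyl, and uses $\mu(U)/d=o_d(1)$ to get $\rho_p=o_d(\|f^*\|_{L_2}^2+\sigma_\vepsilon^2)$. It then concludes with the in-particular clause of Theorem~\ref{thm:main_agop} and Lemma~\ref{lem:eigenspace-perturbation}, yielding the factor $8\kappa^{-1}$.
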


The condition \(\lambda_{\min}(\Sigma_p)>0\) in Corollary~\ref{cor:main_result} is the AGOP analogue of the
nondegeneracy assumptions commonly used in feature-learning analyses of
multi-index models~\cite{damian2022neural,zhang2026neural,bietti2023learning}: it ensures that the
degree-\(\le p\) AGOP contains all relevant directions. The coherence condition is a sufficient
technical condition. It is substantially weaker than the bounded or
polylogarithmic incoherence assumptions common in matrix completion
analysis~\cite{candes2012exact}. 
The experiments in Section~\ref{sec:exp} suggest that AGOP-based subspace recovery
succeeds under broader conditions than those guaranteed by
Corollary~\ref{cor:main_result}.

\section{Implications for feature learning}\label{sec:fl}
Recursive Feature Machines (RFM) are iterative kernel methods proposed to
enable feature learning in kernel machines:  they alternate
between fitting a kernel predictor and updating the kernel metric using the
empirical AGOP of that predictor~\cite{radhakrishnan2024linear,rfm_science}. This section shows how one RFM step turns AGOP recovery into feature learning. Although the first KRR fit from the isotropic metric \(M_1=I_d\) mainly captures the low-degree component of the target, its empirical AGOP already recovers the central subspace \(\mathrm{row}(U)\). RFM then uses this AGOP to update the metric, producing an anisotropic rescaling: relevant directions are amplified, while orthogonal directions remain essentially unchanged. 

We consider inner-product kernels of the form
\begin{align}\label{eq:inner_product_kernel_M}
    K_M(x,x')
    \;=\;
    g\!\left(\frac{x^\top M x'}{d}\right) = g\!\left(\frac{\langle M^{1/2}x,\,M^{1/2}x'\rangle}{d}\right),
\end{align}
where \(M\in\mathbb R^{d\times d}\) is positive semidefinite. Therefore,  KRR with kernel \(K_M\) can be interpreted as ordinary inner-product-kernel KRR applied to the transformed inputs \(M^{1/2}x\).

Starting from \(M_1=I_d\), and fixing a safeguard parameter \(\eta>0\) and ridge parameter \(\lambda>0\), RFM alternates between
\begin{itemize}
    \item \textbf{Step 1 (KRR):}
    \[
    \hat f_t(x)
    \;=\;
    K_{M_t}(x,X)\,(K_{M_t}(X,X)+\lambda I_n)^{-1}y.
    \]
    \item \textbf{Step 2 (metric update):}
    \[
    M_{t+1}
    =
    \frac{d}{\operatorname{tr}(\widehat M_t+\eta I_d)}
    \,(\widehat M_t+\eta I_d),
    \qquad
    \widehat M_t
    :=
    \frac1n\sum_{i=1}^n
    \nabla \hat f_t(x^{(i)})\,\nabla \hat f_t(x^{(i)})^\top.
    \]
\end{itemize}
The trace normalization fixes the metric scale across iterations, while the
safeguard \(\eta I_d\), as in~\cite{zhu2025iteratively}, prevents small-AGOP directions from being driven to zero.

At the first iteration, \(\hat f_1\) is exactly the standard KRR predictor. The next theorem, adapted from Theorem~4 of~\cite{mei2022generalization}, records that in the polynomial-sample regime it recovers only the low-degree part of the target. The proof is deferred to Appendix~\ref{proof:test_error_krr}.

\begin{theorem}[Adapted from Theorem~4 of~\cite{mei2022generalization}]\label{thm:test_error_krr}
Under the assumptions of Theorem~\ref{thm:main_agop}, the first-step KRR predictor \(\hat f_1\) satisfies
\begin{align}
    \|\hat f_1- f^*_{\le p}\|_{L_2}^2
    =
    o_{d,\mathbb P}(1)\,
    \big(
        \|f^*\|_{L_2}^2+\sigma_\vepsilon^2
    \big).
\end{align}
\end{theorem}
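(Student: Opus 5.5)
The plan is to reduce the statement to Theorem~4 of~\cite{mei2022generalization}, which gives the test error of KRR with inner-product kernels on the hypercube in the regime \(d^{p+\delta_0}\le n\le d^{p+1-\delta_0}\). Concretely, that theorem states that for any target \(f^*\in L_2(\tau_d)\) and noise level \(\sigma_\vepsilon\), under their kernel conditions,
\[
\|\hat f_1-f^*\|_{L_2}^2=\|\mathsf P_{>p}f^*\|_{L_2}^2+o_{d,\P}(1)\big(\|f^*\|_{L_2}^2+\sigma_\vepsilon^2\big),
\]
where \(\mathsf P_{>p}\) is the projection onto Walsh degree \(>p\). Actually their proof gives the sharper decomposition \(\|\hat f_1-\mathsf P_{\le p}f^*\|_{L_2}^2=o_{d,\P}(1)(\|f^*\|_{L_2}^2+\sigma_\vepsilon^2)\) (the estimator's projection onto degrees \(>p\) is negligible and its low-degree part matches \(f^*_{\le p}\)). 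If only the first form is available as stated, I would derive the second from orthogonality of Walsh degrees. Write \(\hat f_1-f^*=(\hat f_1-f^*_{\le p})-f^*_{>p}\) and expand the square. The claim then amounts to showing the cross term \(\langle \hat f_1-f^*_{\le p},f^*_{>p}\rangle\) is also small. For this I would invoke the intermediate result of the Mei--Misiakiewicz--Montanari argument, namely \(\|\mathsf P_{>p}\hat f_1\|_{L_2}=o_{d,\P}(1)(\|f^*\|_{L_2}+\sigma_\vepsilon)\).

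The first step is verifying hypotheses. Our data model \eqref{eq:data} is exactly theirs with \(f^*\) the multilinear representative, which lies in \(L_2(\tau_d)\); Gaussian noise is allowed. The regime \(n=d^{p+\delta}\) with \(\delta\in(0,1)\) fits their window by taking \(\delta_0=\min(\delta,1-\delta)/2\). For the kernel, Assumption~\ref{assump:g_0} gives the Taylor expansion \(g(t)=\sum_k g^{(k)}(0)t^k/k!\) near \(0\) with nonnegative coefficients. This yields the hypercube Gegenbauer-type (Walsh-degree) coefficients \(\xi_{d,k}\) with \(\xi_{d,k}\asymp g^{(k)}(0)\,d^{-k}\) up to lower-order corrections, for \(k\le p\). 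The positivity of \(\min_{k\le p}g^{(k)}(0)\) gives the required lower bound on the low-degree eigenvalues. Positivity of \(\lambda+\sum_{k>p}g^{(k)}(0)\) gives the nondegenerate effective ridge \(\lambda+\kappa_{>p}\), where \(\kappa_{>p}\approx\sum_{k>p}\xi_{d,k}B(d,k)\to\sum_{k>p}g^{(k)}(0)/k!\) up to constants. Their assumption on \(g\) near \(1\) (bounded diagonal \(g(1)\) and controlled high-degree tail) follows from Assumption~\ref{assump:g_1}. Finally, \(\lambda=O_d(1)\) matches their ridge condition.

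I expect the main obstacle to be this translation of kernel hypotheses. Their statement is phrased via the hypercube eigen-decomposition \(K(x,x')=\sum_k\xi_{d,k}\sum_{|S|=k}\chi_S(x)\chi_S(x')\), together with conditions such as \(\sum_{k>p}\xi_{d,k}B(d,k)=\Theta_d(1)\) and \(\max_{k>p}\xi_{d,k}=O(d^{-p-1})\). To carry it out, I would compute \(\xi_{d,k}=\E[g(\langle x,x'\rangle/d)\chi_S(x)\chi_S(x')]\) by expanding \(g\) analytically on \(|t|<\varepsilon_0\), using that \(\langle x,x'\rangle/d\) concentrates at scale \(d^{-1/2}\). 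The contribution from \(|t|\ge\varepsilon_0\) is exponentially small by Hoeffding, except at \(t=\pm1\), whose mass \(2^{-d}\) is negligible. Once these coefficient estimates are in place, the conclusion follows directly from the cited theorem combined with the orthogonal decomposition above.
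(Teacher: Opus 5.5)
Your proposal is correct and follows essentially the paper's route. Both arguments reduce to Theorem~4 of~\cite{mei2022generalization} after checking that the leading kernel eigenspaces are the Walsh degrees $\le p$, that the degree-$k$ eigenvalues scale like $d^{-k}$, and that the effective ridge $\lambda+\operatorname{Tr}(H_{d,>m})$ is $\Theta(1)$. The paper uses the form of that theorem comparing $\hat f_1$ with the effective estimator $\sum_k s_{d,k}P_kg_d$, and then bounds that estimator's bias ($1-s_{d,k}\lesssim d^{-\delta}$ for $k\le p$, $s_{d,k}\lesssim d^{\delta-1}$ for $k>p$); this does the same job as your orthogonality argument with $\|P_{>p}\hat f_1\|_{L_2}=o_{d,\P}(1)$.

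One detail you dropped: the error term in that theorem also carries $\|P_{>p}f^*\|_{L^{2+\eta}}^2$. You need to absorb it into $\|f^*\|_{L_2}^2$ by Boolean hypercontractivity, using $\deg f^*\le\ell$, as the paper does.
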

Combining Theorem~\ref{thm:test_error_krr} with 
Corollary~\ref{cor:main_result} yields a prediction--representation separation. Indeed, if the target function $f^*$ has degree $p^*$ with $p^*>p$, then in the regime $n = d^{p+\delta}$, the empirical AGOP recovers the central
subspace, while the prediction error of the first KRR fit can remain large:
\begin{align}
    \|\hat f_1- f^*\|_{L_2}^2
    = \norm{ f^*_{> p}}_{L_2}^2 + 
    o_{d,\mathbb P}(1)\,
    \big(
        \|f^*\|_{L_2}^2+\sigma_\vepsilon^2
    \big).
\end{align}

In the remainder of this section, suppose that all assumptions of
Corollary~\ref{cor:main_result} hold. Then, more importantly for our purposes, Corollary~\ref{cor:main_result} shows that the AGOP \(\widehat M_1\) already identifies the central subspace. The next proposition shows that the first RFM metric update turns this recovered geometry into an anisotropic rescaling of the inputs. The proof is deferred to Appendix~\ref{proof:project_data}.

\begin{proposition}\label{prop:project_data}
Set the safeguard parameter
\(
\eta
=
d^\zeta\cdot
\max\{d^{-\delta/2},\,d^{-(1-\delta)/2}\}
\)
for a small constant \(\zeta>0\). For an independent sample $x\sim \tau_d$, define
\begin{align}\label{eq:widehat_x}
\widehat x
:=
U^\top \sqrt{\Sigma_p+\eta I_r}\,Ux
+
\sqrt{\eta}\,(U_\perp)^\top U_\perp x.
\end{align}
Let
\(
c_\eta:=\operatorname{tr}(\widehat M_1+\eta I_d).
\)
Then the following estimate holds
\begin{align}\label{eq:normalized_sqrt_approx}
\big\|
M_2^{1/2}x-\sqrt{d/c_\eta}\,\widehat x
\big\|_{L_2}
=
o_{d,\mathbb P}\!\left(
\sqrt{d/c_\eta}\,\|\widehat x\|_{L_2}
\right),
\end{align}
where
\(
c_\eta=(1+o_{d,\mathbb P}(1))\,\eta d.
\)
\end{proposition}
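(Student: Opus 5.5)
We would prove Proposition~\ref{prop:project_data} by comparing $M_2$ with the idealized matrix $\widetilde M := U^\top(\Sigma_p+\eta I_r)U + \eta U_\perp^\top U_\perp$. By construction, $\widetilde M^{1/2}x = \widehat x$. Since $M_2 = (d/c_\eta)(\widehat M_1+\eta I_d)$, we have $M_2^{1/2}x = \sqrt{d/c_\eta}\,(\widehat M_1+\eta I_d)^{1/2}x$. It therefore suffices to show
\[
\big\|(\widehat M_1+\eta I_d)^{1/2}x-\widetilde M^{1/2}x\big\|_{L_2} = o_{d,\mathbb P}\big(\|\widehat x\|_{L_2}\big).
\]
Because $x\sim\tau_d$ has identity covariance and is independent of the training data, for any fixed symmetric $A$ we have $\|Ax\|_{L_2}^2=\|A\|_F^2$. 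In particular, $\|\widehat x\|_{L_2}^2=\operatorname{tr}(\widetilde M)=\operatorname{tr}(\Sigma_p)+\eta d$, which is $(1+o_d(1))\eta d$ because $\eta d\to\infty$ while $\operatorname{tr}\Sigma_p=O_d(1)$. The target is thus $\|(\widehat M_1+\eta I)^{1/2}-\widetilde M^{1/2}\|_F = o_{d,\mathbb P}(\sqrt{\eta d})$.

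\textbf{Step 1: operator-norm closeness.} We combine Theorem~\ref{thm:main_agop} and Lemma~\ref{lem:M_gaussian_latent} to get
\[
\|\widehat M_1+\eta I - \widetilde M\|_{\op}\le \|\widehat M_1-M_{\le p}\|_{\op}+\|M_{\le p}-U^\top\Sigma_pU\|_{\op}.
\]
In the normalization where $\|f^*\|_{L_2}^2+\sigma_\vepsilon^2=O(1)$, the dominant terms of $R_d(U)$ are $d^{-\delta/2+\epsilon}$ and $d^{-(1-\delta)/2+\epsilon}$. The coherence terms are smaller under the weak coherence assumption. Hence, with $\epsilon<\zeta$, this error $\varepsilon$ satisfies $\varepsilon=o_{d,\mathbb P}(\eta)$; this is exactly why $\eta$ carries the factor $d^\zeta$. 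Since both matrices have all eigenvalues at least $\eta$, the operator-monotone square-root perturbation bound gives
\[
\|A^{1/2}-B^{1/2}\|_{\op}\le \frac{\|A-B\|_{\op}}{\lambda_{\min}(A)^{1/2}+\lambda_{\min}(B)^{1/2}}\le \frac{\varepsilon}{2\sqrt\eta}.
\]
Therefore $\|A^{1/2}-B^{1/2}\|_F\le\sqrt d\,\varepsilon/(2\sqrt\eta)=o(\sqrt{\eta d})$, as required.

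\textbf{Step 2: the trace.} We have $c_\eta=\operatorname{tr}(\widehat M_1)+\eta d$. Bounding $\operatorname{tr}(\widehat M_1)\le d\|\widehat M_1\|_{\op}$ is far too crude, so the trace needs separate control. We would write
\[
\operatorname{tr}(\widehat M_1)\le \operatorname{tr}(M_{\le p})+\operatorname{tr}(\widehat M_1-M_{\le p}).
\]
The first term equals $\sum_{q\le p}q\|f_q^*\|^2=O(1)$. For the second term, $\operatorname{tr}(\widehat M_1)=\frac1n\sum_i\|\nabla\hat f_1(x^{(i)})\|^2$. We would bound this using the structure of the gradient of the KRR predictor that appears in the proof of Theorem~\ref{thm:main_agop}: the kernel gradient decomposes into low-degree parts plus a remainder whose operator norm is $O(R_d)$. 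If the proof in the appendix in fact yields a Frobenius or trace bound on the remainder of order $d\cdot R_d$, then $\operatorname{tr}(\widehat M_1)=O(1)+o(\eta d)$, which gives $c_\eta=(1+o_{d,\mathbb P}(1))\eta d$.

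\textbf{Main obstacle and the remaining steps.} We expect Step~2 to be the main obstacle. An operator-norm bound of order $\varepsilon$ only yields a trace error of order $d\varepsilon$, which is $o(\eta d)$ precisely because $\varepsilon=o(\eta)$. So the naive bound $|\operatorname{tr}(\widehat M_1-M_{\le p})|\le d\,\|\widehat M_1-M_{\le p}\|_{\op}$ already suffices, and we would use this first. Step~2 then gives $\sqrt{d/c_\eta}=(1+o(1))/\sqrt\eta$. The main remaining care is bookkeeping: keeping the exponents $\epsilon<\zeta$ consistent, and checking that the coherence terms of $R_d(U)$ are dominated by $\eta$ under the corollary's coherence hypothesis. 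Finally, we multiply the Step~1 bound by $\sqrt{d/c_\eta}$ to conclude \eqref{eq:normalized_sqrt_approx}.
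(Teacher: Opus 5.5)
Your proposal follows essentially the same route as the paper's proof and is correct to the same extent: both compare $M_2^{1/2}x$ with $\sqrt{d/c_\eta}\,(U^\top\Sigma_pU+\eta I_d)^{1/2}x=\sqrt{d/c_\eta}\,\widehat x$, apply the $1/(2\sqrt{\eta})$ operator-Lipschitz bound for the square root on $[\eta,\infty)$, and control $c_\eta$ via the naive estimate $|\operatorname{tr}(\widehat M_1-U^\top\Sigma_pU)|\le d\,\snorm{\widehat M_1-U^\top\Sigma_pU}=o_{d,\mathbb P}(\eta d)$, which is the bound you eventually settle on (your Frobenius-norm bookkeeping is equivalent to the paper's $\snorm{\cdot}\,\|x\|_{L_2}$ bound). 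One caveat, which the paper's proof shares, is that $\snorm{\widehat M_1-U^\top\Sigma_pU}=o_{d,\mathbb P}(\eta)$ does not follow from the corollary's weak coherence hypothesis alone: with $\eta=d^{\zeta-\min\{\delta,1-\delta\}/2}$ and $\mu(U)$ as large as $d^{1/2+\delta/4-\gamma}$, the term $\mu(U)\,d^{-(2+\delta)/4+\epsilon}=d^{-\gamma+\epsilon}$ of $R_d(U)$ is much larger than $\eta$ once $\gamma+\zeta<\min\{\delta,1-\delta\}/2$, so your claim that the coherence terms are dominated actually needs $\mu(U)$ bounded (or growing slowly enough), or else $\eta$ should be chosen to dominate $R_d(U)$ itself.
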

Writing $z_0:=Ux$ and $z_1:=U_\perp x$,
Proposition~\ref{prop:project_data} indicates
\begin{align}
M_2^{1/2}x
&=
(1+o_{d,\mathbb P}(1))
\left(
U^\top \sqrt{\eta^{-1}\Sigma_p+I_r}\,z_0
+
(U_\perp)^\top z_1
\right).
\end{align}
Thus the second iterate applies an anisotropic linear map within the relevant subspace, with singular values at least
\(\sqrt{\eta^{-1}\Sigma_p+I_r}\), while leaving the orthogonal coordinates at unit scale. In particular, since \(\Sigma_p\succeq \kappa I_r\) then every relevant direction is amplified by at least \(\sqrt{1+\kappa/\eta}\). This provides a mechanism by which one RFM update performs feature learning.

\paragraph{Conjectural prediction for the second fit.}
The representation above suggests a heuristic analogy with the spiked-covariate model of~\cite{ghorbani2020neural}. Motivated by this analogy, we conjecture that the relevant sample complexity for the second KRR fit is governed by the effective dimension
\(
d_{\mathrm{eff}}:=d\sqrt{\eta}.
\)
Equivalently, one expects the polynomial threshold to shift from \(p\) to
\(
p_{\mathrm{eff}}
=
p\cdot
\frac{\log d}{\log(d_{\mathrm{eff}})}\geq p.
\)
In particular, this heuristic predicts that the second-step KRR predictor \(\hat f_2\) with independent draws of input data,  should satisfy
\begin{align}
\|\hat f_2-f^*_{\le \lfloor p_{\mathrm{eff}}\rfloor}\|_{L_2}^2
=
o_{d,\mathbb P}(1)\,
\big(
\|f^*\|_{L_2}^2+\sigma_\vepsilon^2
\big).
\end{align}
We leave this as a conjectural extension. The rigorous contribution of this section is the anisotropic rescaling result above, which explains how one RFM update can convert recovered subspace information into a more favorable geometry for the next KRR fit.

\section{Numerical results}\label{sec:exp}
We evaluate two phenomena highlighted by our analysis: recovery of the central subspace from the AGOP, and the empirical improvement produced by iterating RFM.

\paragraph{Target functions.}
Let \(z=Ux\) with \(U\in\R^{r\times d}\).  We consider the two link functions
\[
h(z)=\He_1(z_1)+\frac{1}{\sqrt{24}}\,\He_4(z_1)~~{(L1)},
\qquad
h(z)=\He_1(z_1)\He_1(z_2)+\frac{1}{2}\,\He_2(z_1)\He_2(z_2)~~{(L2)}.
\]
The first is a single-index model (\(r=1\)), and the second is an index-two model (\(r=2\)). In both cases, the low-degree component already contains the full subspace information, while the higher-degree term increases prediction difficulty.

\paragraph{Input data.}
We consider two input distributions:
\[
x\sim \mathrm{Unif}(\{-1,1\}^d),
\qquad
x\sim \mathcal N(0,I_d).
\]
The Boolean distribution matches the design studied in our theoretical analysis. For both input distributions, \(U\) is formed by taking the first \(r\) rows of a Haar-distributed orthogonal matrix.

We also consider a Boolean design, \(x\sim \mathrm{Unif}(\{-1,1\}^d)\), with a sparse matrix \(U\). This setting does not satisfy the incoherence condition in Corollary~\ref{cor:main_result} hence it is for testing how the method behaves when this condition fails. We construct \(U\) so that its rows are sparse and have pairwise disjoint supports. On each support, we draw independent random entries and then normalize the resulting vector to have unit Euclidean norm. The support size is set to $d^{3/10} \approx 4$ when $d=100$.

\paragraph{Experimental protocol.}
We fix the ambient dimension to \(d=100\) and generate the labels by
\[
y=h(Ux)+\varepsilon,
\qquad
\varepsilon\sim\mathcal N(0,0.01).
\]
The test loss, specifically mean squared error, is evaluated on $5,000$ test points. For each sample-size exponent $\alpha$, we set \(n=\lfloor d^\alpha\rfloor\).
For the Gaussian kernel we use bandwidth $d$, for the Laplace kernel $\sqrt{d},$ and
the ridge parameter is \(\lambda=10^{-6}\). The RFM safeguard is \(\eta=10^{-2}\cdot d\). We run RFM with Gaussian and Laplace kernels for five iterations and report averages over 10 independent trials.

We report the experimental results for the Gaussian kernel with hypercube data and dense \(U\) in Figs.~\ref{fig:gauss_hypercube_task1} and~\ref{fig:gauss_hypercube_task2}, corresponding to the target functions \({(L1)}\) and \({(L2)}\), respectively. The remaining plots are provided in Appendix~\ref{app:additional_experiments}.

The results show that, for KRR, corresponding to iteration \(0\), increasing the sample-size exponent leads to improved subspace recovery: the principal angle decreases and eventually becomes small, even though the test loss remains relatively large; see the first and second panels. After additional iterations of RFM, both the test loss and the principal angle improve substantially. These findings support our theoretical results on subspace recovery and are consistent with the feature-learning mechanism suggested by our RFM analysis.
\begin{figure}[H]
    \centering
\includegraphics[width=1.0\linewidth]{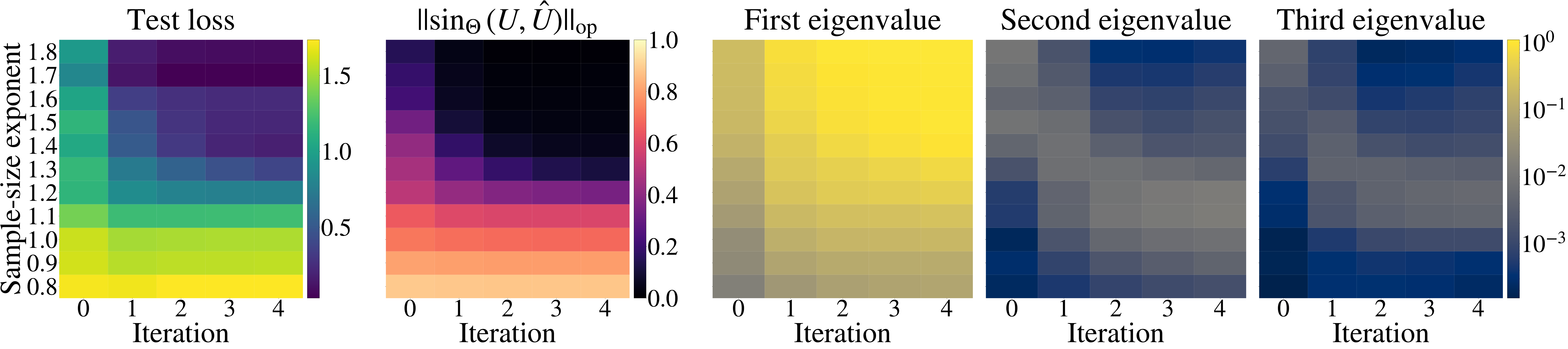}
    \caption{For target function ${(L1)}$, we train RFM with a Gaussian kernel on hypercube data for five iterations. From left to right, the panels display the test loss, the sine of the largest principal angle, and the largest, second-largest and third-largest eigenvalues of the AGOP. }
    \label{fig:gauss_hypercube_task1}
\end{figure}

\begin{figure}[H]
    \centering    \includegraphics[width=1.0\linewidth]{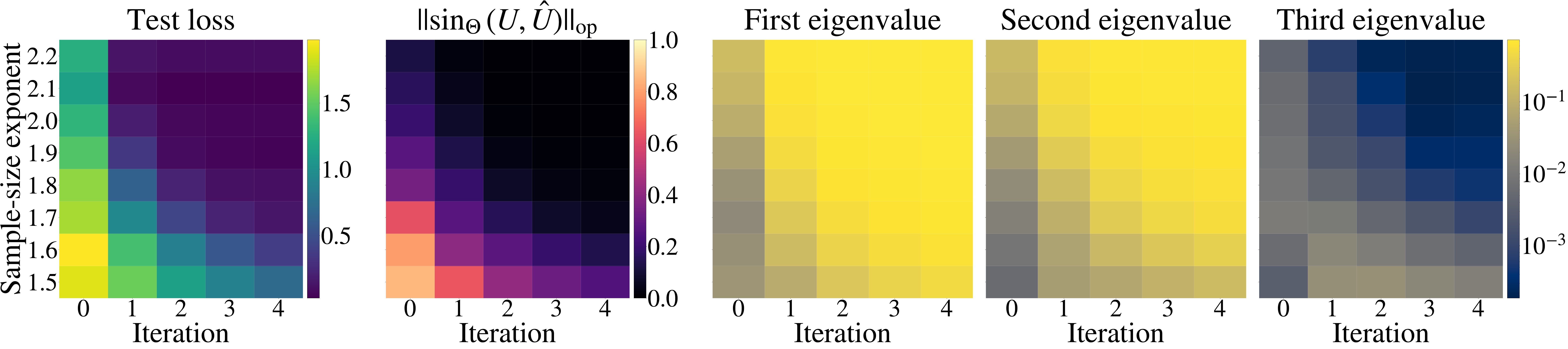}
    \caption{For target function ${(L2)}$, we train RFM with a Gaussian kernel on hypercube data for five iterations. From left to right, the panels display the test loss, the sine of the largest principal angle, and the largest, second-largest and third-largest eigenvalues of the AGOP. }
    \label{fig:gauss_hypercube_task2}
\end{figure}

\section{Conclusion}\label{sec:conclusion}

We studied whether a fitted predictor can reveal low-dimensional structure before it predicts accurately. For polynomial multi-index models on the Boolean hypercube, we showed that the empirical AGOP of a KRR predictor concentrates around the population AGOP of a low-degree truncation of the target. If this
population AGOP is nondegenerate on the planted subspace and the subspace
satisfies a weak coherence condition, then the leading eigenspace of the
empirical AGOP recovers the planted central subspace.

This gives a separation between prediction and representation. In the regime
\(n=d^{p+\delta}\), KRR may learn only the degree-\(\le p\) part of the target,
so its prediction error can remain large when  higher-degree
components are present.  Nevertheless, the AGOP of the fitted predictor can already identify the central
subspace once the degree-\(\le p\) population AGOP is nondegenerate on that subspace.  Thus, representation recovery is governed by the
first degree at which this nondegeneracy holds, rather than by the highest degree needed for
accurate prediction.

Finally, we showed that one RFM update uses the recovered AGOP to rescale the
input anisotropically: directions in the central subspace are amplified, while
orthogonal directions remain essentially unchanged.  This provides a mechanism by which iterative AGOP-based kernel methods can turn early representation recovery into improved subsequent fits in multi-index models.

\printbibliography

\newpage
\appendix
\section*{Appendix}
\startcontents[apx] 
\printcontents[apx]{l}{1}{\section*{Contents}}
\section{Additional numerical results}\label{app:additional_experiments}
In this section, we report the complementary experimental results for Section~\ref{sec:exp}. 
The experiments follow the same protocol as in the main text. In each plot, iteration \(0\) corresponds to KRR, while the later iterations correspond to RFM updates. Across the additional settings, we observe the same qualitative behavior: increasing the sample-size exponent improves the initial subspace estimate, and subsequent RFM iterations further reduce both the test loss and the sine of the largest principal angle. The evolution of the leading AGOP eigenvalues is also consistent with the emergence of the relevant low-dimensional structure.

\textbf{Gaussian kernel.}

\begin{figure}[H]
    \centering
    \includegraphics[width=1.0\linewidth]{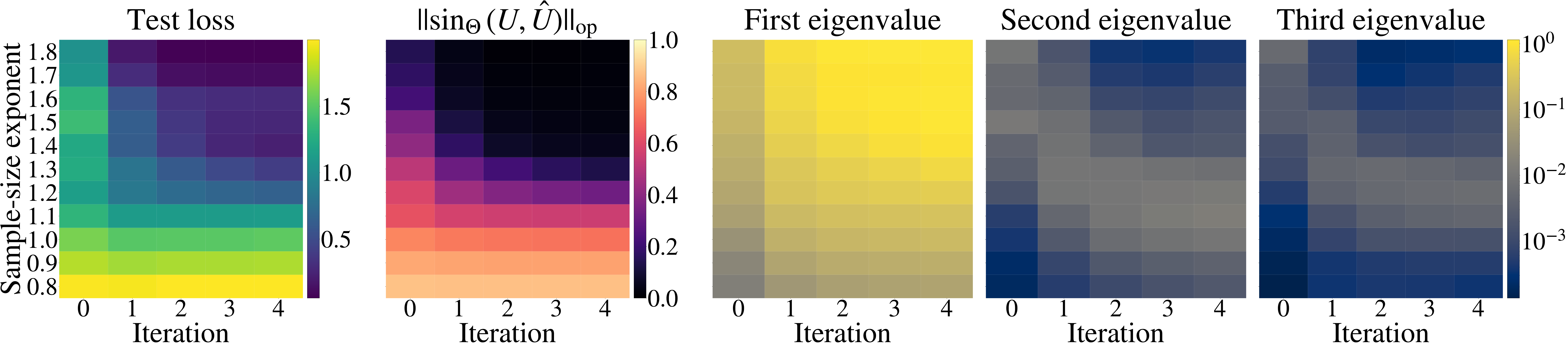}
    \caption{For target function ${(L1)}$, we train RFM with a Gaussian kernel on dense Gaussian data for five iterations. From left to right, the panels display the test loss, the sine of the largest principal angle, and the largest, second-largest and third-largest eigenvalues of the AGOP.}
    \label{fig:app:gaussian_dense_gaussian_task1}
\end{figure}

\begin{figure}[H]
    \centering
    \includegraphics[width=1.0\linewidth]{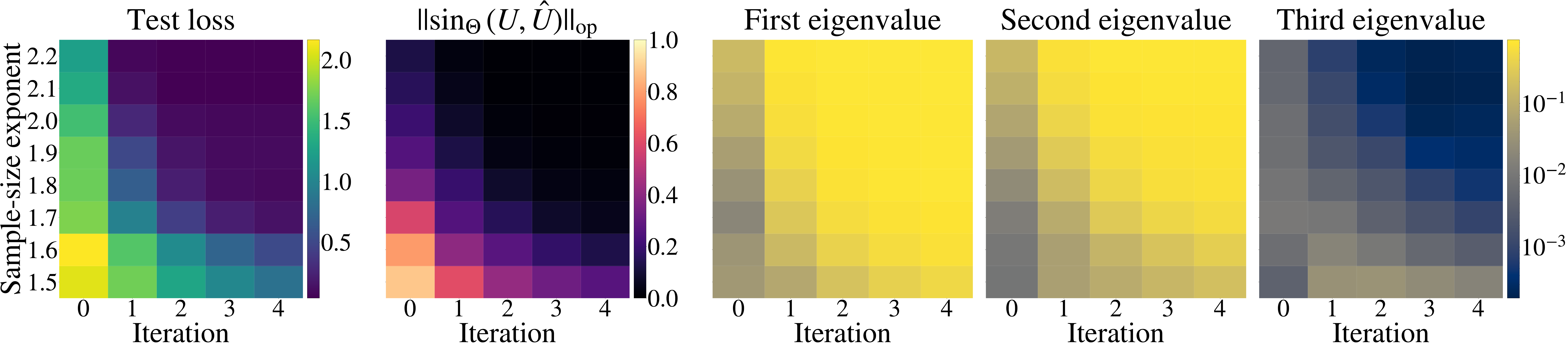}
    \caption{For target function ${(L2)}$, we train RFM with a Gaussian kernel on dense Gaussian data for five iterations. From left to right, the panels display the test loss, the sine of the largest principal angle, and the largest, second-largest and third-largest eigenvalues of the AGOP.}
    \label{fig:app:gaussian_dense_gaussian_task2}
\end{figure}

\begin{figure}[H]
    \centering
    \includegraphics[width=1.0\linewidth]{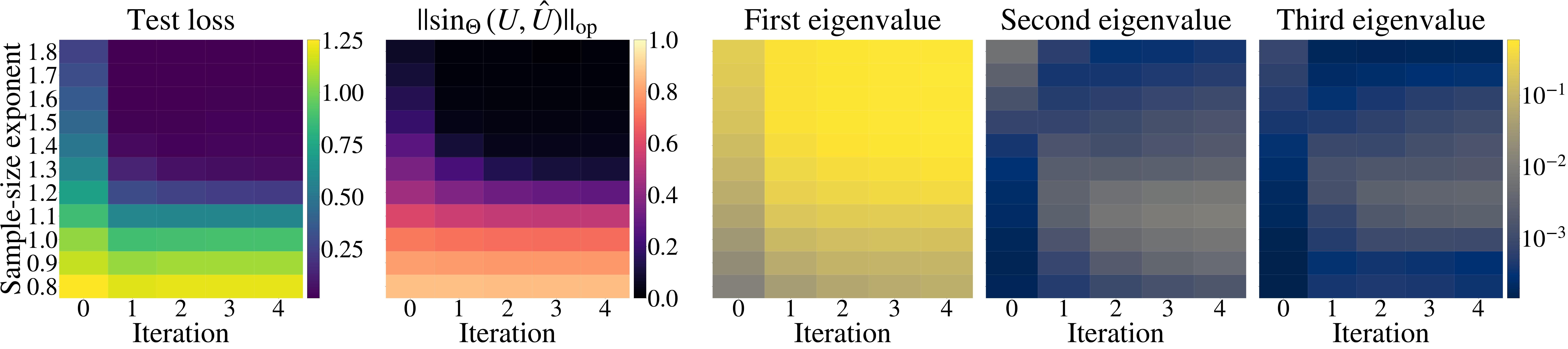}
    \caption{For target function ${(L1)}$, we train RFM with a Gaussian kernel on sparse hypercube data for five iterations. From left to right, the panels display the test loss, the sine of the largest principal angle, and the largest, second-largest and third-largest eigenvalues of the AGOP.}
    \label{fig:app:gaussian_sparseq_hypercube_task1}
\end{figure}

\begin{figure}[H]
    \centering
    \includegraphics[width=1.0\linewidth]{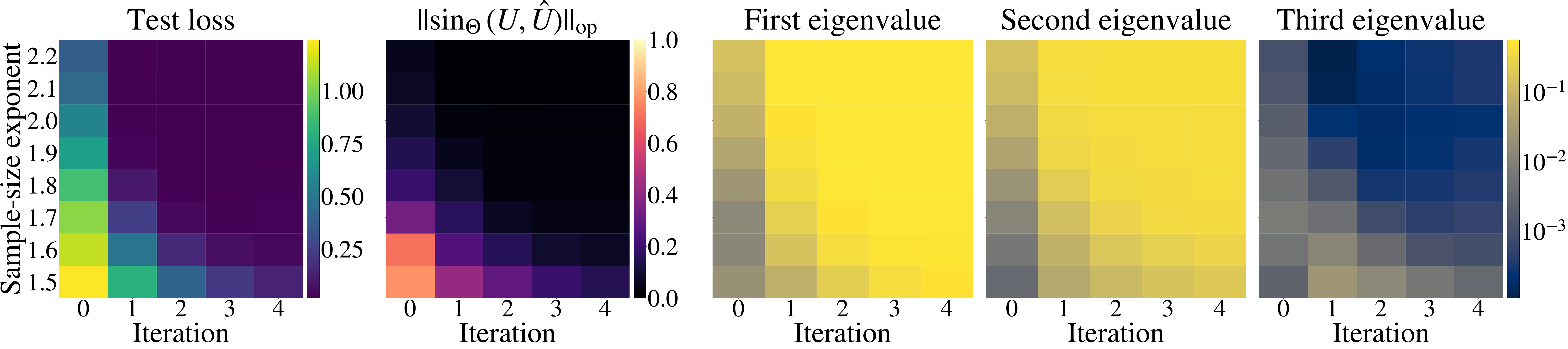}
    \caption{For target function ${(L2)}$, we train RFM with a Gaussian kernel on sparse hypercube data for five iterations. From left to right, the panels display the test loss, the sine of the largest principal angle, and the largest, second-largest and third-largest eigenvalues of the AGOP.}
    \label{fig:app:gaussian_sparseq_hypercube_task2}
\end{figure}

\textbf{Laplacian kernel.}

\begin{figure}[H]
    \centering
    \includegraphics[width=1.0\linewidth]{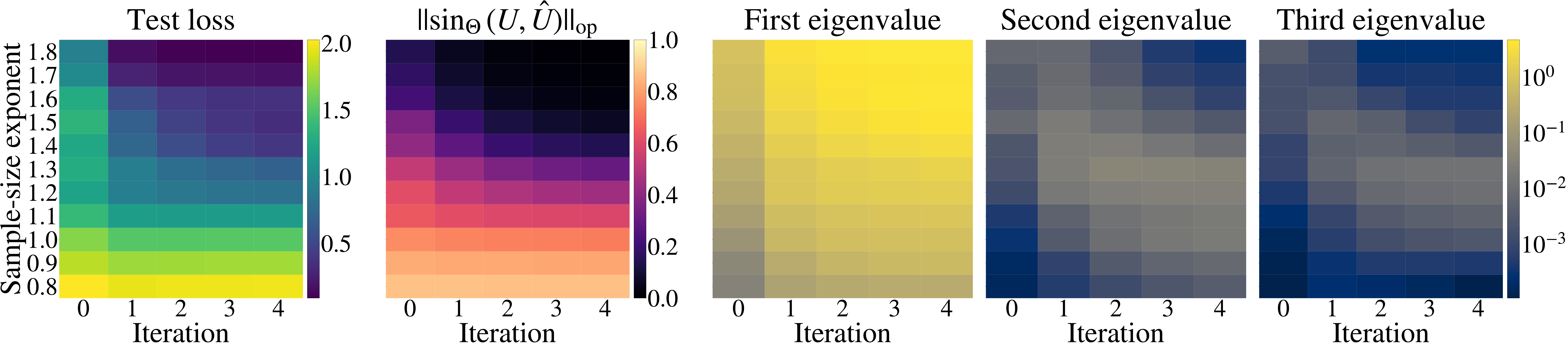}
    \caption{For target function ${(L1)}$, we train RFM with a Laplacian kernel on dense Gaussian data for five iterations. From left to right, the panels display the test loss, the sine of the largest principal angle, and the largest, second-largest and third-largest eigenvalues of the AGOP.}
    \label{fig:app:laplacian_dense_gaussian_task1}
\end{figure}

\begin{figure}[H]
    \centering
    \includegraphics[width=1.0\linewidth]{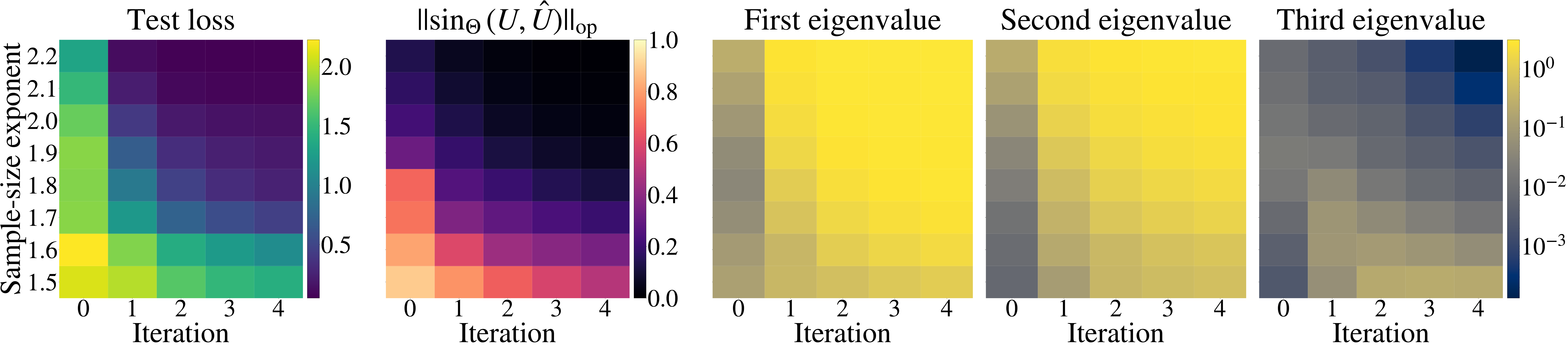}
    \caption{For target function ${(L2)}$, we train RFM with a Laplacian kernel on dense Gaussian data for five iterations. From left to right, the panels display the test loss, the sine of the largest principal angle, and the largest, second-largest and third-largest eigenvalues of the AGOP.}
    \label{fig:app:laplacian_dense_gaussian_task2}
\end{figure}

\begin{figure}[H]
    \centering
    \includegraphics[width=1.0\linewidth]{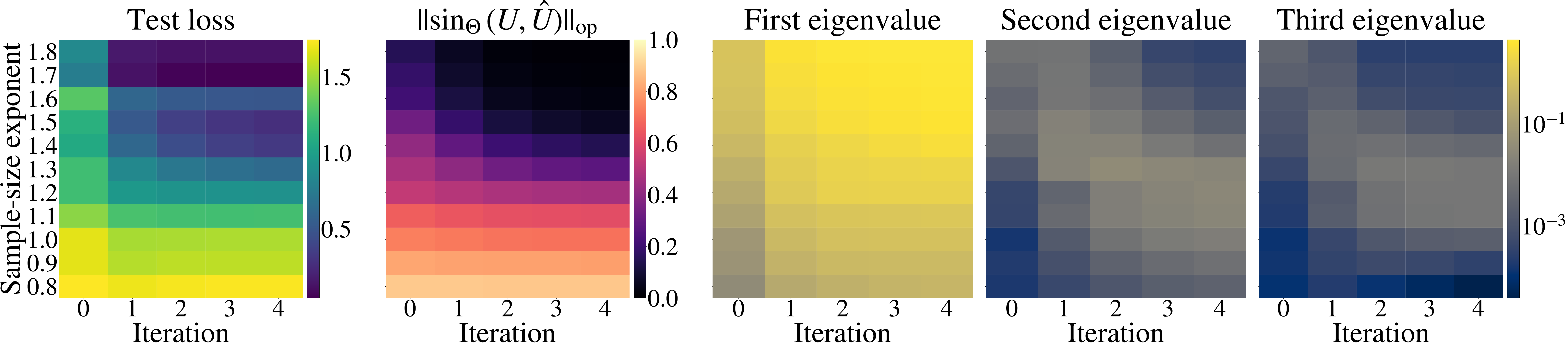}
    \caption{For target function ${(L1)}$, we train RFM with a Laplacian kernel on dense hypercube data for five iterations. From left to right, the panels display the test loss, the sine of the largest principal angle, and the largest, second-largest and third-largest eigenvalues of the AGOP.}
    \label{fig:app:laplacian_dense_hypercube_task1}
\end{figure}

\begin{figure}[H]
    \centering
    \includegraphics[width=1.0\linewidth]{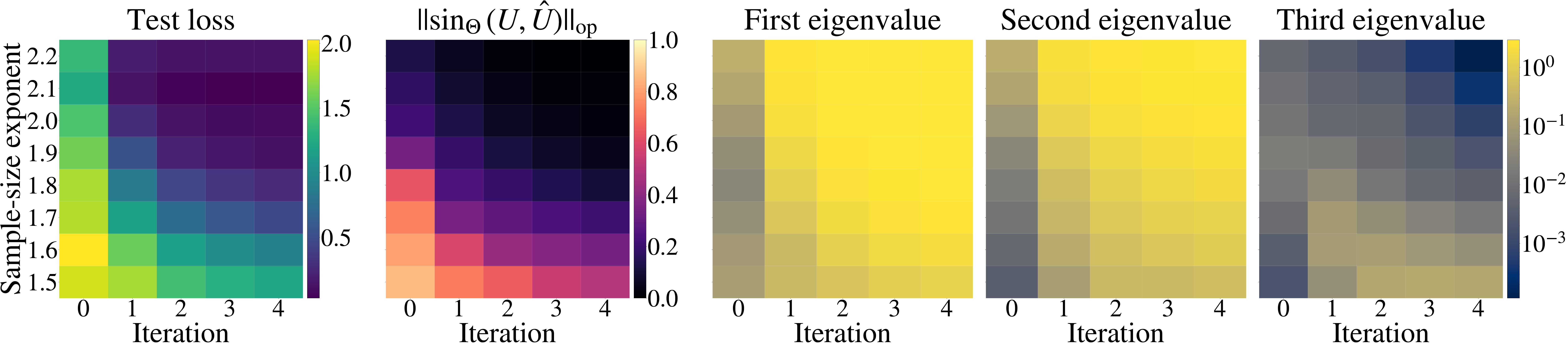}
    \caption{For target function ${(L2)}$, we train RFM with a Laplacian kernel on dense hypercube data for five iterations. From left to right, the panels display the test loss, the sine of the largest principal angle, and the largest, second-largest and third-largest eigenvalues of the AGOP.}
    \label{fig:app:laplacian_dense_hypercube_task2}
\end{figure}

\begin{figure}[H]
    \centering
    \includegraphics[width=1.0\linewidth]{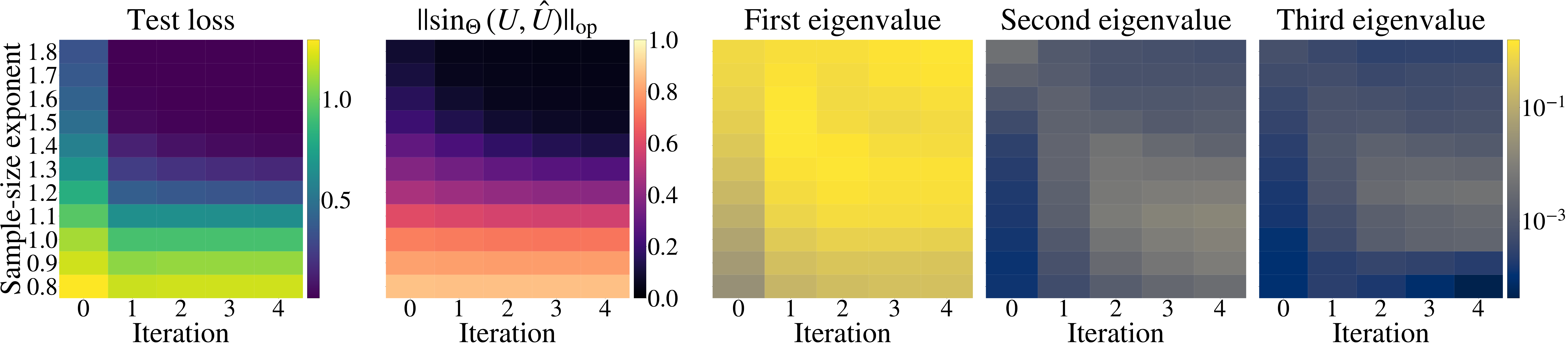}
    \caption{For target function ${(L1)}$, we train RFM with a Laplacian kernel on sparse hypercube data for five iterations. From left to right, the panels display the test loss, the sine of the largest principal angle, and the largest, second-largest and third-largest eigenvalues of the AGOP.}
    \label{fig:app:laplacian_sparseq_hypercube_task1}
\end{figure}

\begin{figure}[H]
    \centering
    \includegraphics[width=1.0\linewidth]{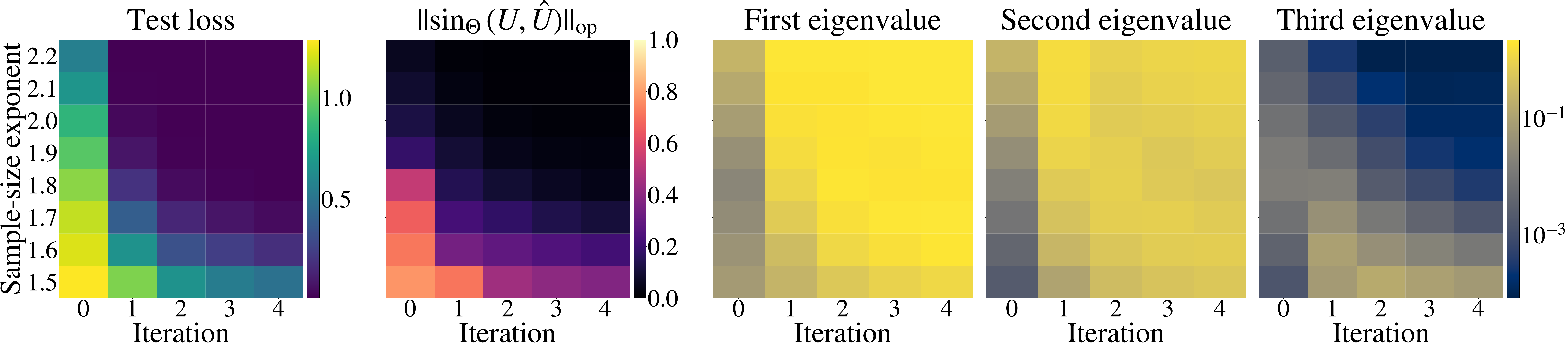}
    \caption{For target function ${(L2)}$, we train RFM with a Laplacian kernel on sparse hypercube data for five iterations. From left to right, the panels display the test loss, the sine of the largest principal angle, and the largest, second-largest and third-largest eigenvalues of the AGOP.}
    \label{fig:app:laplacian_sparseq_hypercube_task2}
\end{figure}
\section{Preliminaries}
\subsection{Preliminaries on orthogonal polynomials}\label{sec:prelin_ortho}
In this section, we record notation and preliminaries on orthogonal polynomials, following the standard monographs on the subject \cite{szeg1939orthogonal,
chihara2011introduction,
andrews2006classical}. Consider a
measure space $(D,\mathcal{A},\mu)$, where $D$ is a set, $\mathcal{A}$ is a $\sigma$-algebra, and $\mu$ is a measure on $(D,\mathcal{A})$. We let $L^2(D,\mu)$ denote the Hilbert space of $\mu$ square-integrable functions on $D$ equipped with the usual inner product and the induced norm
$$\langle f,g \rangle=\int fg\,d\mu\qquad \|f\|=\sqrt{\langle f,f\rangle }.$$
We will suppress the symbol $D$ from $L^2(D,\mu)$ when it is clear from context. Every Hilbert space $L^2(\mu)$ admits an orthonormal basis $\{\phi_j\}_{j\in J}$, meaning  $\phi_j$ are unit norm, pairwise orthogonal, and their linear span is dense in $L^2(\mu)$. A favorable situation occurs when $J$ is countable, in which case $L^2(\mu)$ is called separable. Any function $f\in L^2(\mu)$ in a separable Hilbert space can be expanded in the orthogonal basis:
$$\left\|f-\sum_{i=0}^{m} \langle f,\phi_i \rangle \phi_i\right\|\to 0\quad\textrm{as}\quad m\nearrow |J|.$$
Here, we identify $J$ with the contiguous subset of natural numbers $\mathbb{N}$ starting at zero. We will primarily focus on the following two examples of separable Hilbert spaces along with orthonormal bases.

\subsection{Fourier expansion on the boolean Hypercube}\label{sec:fourier_exp}
Let $\mathbb{H}^d=\{-1,1\}^d$ be the hypercube equipped with the uniform measure $\tau_d$. Then the inner product between any two functions $f,g\colon \mathbb{H}^d\to\R$ is simply 
$$\langle f,g\rangle=\frac{1}{2^d}\sum_{x\in \mathbb{H}^d} f(x)g(x).$$
An orthonormal basis on $L_2(\mathbb{H}^d)$ is furnished by the multi-linear monomials (called Fourier-Walsh)
$$x^{\alpha}:=\prod_{i=1}^d x_i^{\alpha_i}\qquad \textrm{for each}~ \alpha\in \{0,1\}^d.$$
We will denote the degree of the monomial  $x^{\alpha}$ by the symbol $|\alpha|:=\sum_{i=1}^d\alpha_i$. Notice that there is a one-to-one correspondence between binary vectors $\alpha\in\{0,1\}^d$ and subsets $S\subset [d]$ (their support). We will therefore often abuse notation and treat $\alpha$ as both a vector and a set whenever convenient. %

Thus any polynomial $f$ on the hypercube $\mathbb{H}^d$ can be expanded in the monomial basis:
$$f(x)=\sum_{\alpha\in \{0,1\}^d} b_{\alpha} x^{\alpha}\qquad \textrm{with}\qquad b_{\alpha}=\langle f,x^{\alpha}\rangle,$$
where $b_{\alpha}$ is called the Fourier coefficient of $f$ indexed by $\alpha$.
The $p$-truncation of $f$ is then defined to be the truncated series
\begin{equation}\label{eqn:trunc_fourier}
f_{\leq p}(x)=\sum_{\alpha\in \{0,1\}^d:\,|\alpha|\leq p} b_{\alpha} x^{\alpha}.
\end{equation}
Equivalently, $f_{\leq p}$ is the projection of $f$ in $L_2(\mathbb{H}^d)$ onto the span of all Fourier-Walsh monomials $x^{\alpha}$ with $|\alpha|\leq p$. The functions $f_{p}$ and $f_{>p}$ are defined in an obvious way.

The Fourier coefficients $b_{\alpha}$ have a convenient interpretation in terms of discrete derivative of $f$. Namely,  the discrete derivative of any function $g\colon\mathbb{H}^d\to\R$ in direction $x_i$ is defined to be 
\begin{equation}\label{eq:discrete_deri}
    \mathtt{D}_i g(x)=\frac{g(x)-g(x_1,\ldots, x_{i-1}, -x_i, x_{i+1},\ldots, x_d)}{2x_i}.
\end{equation}
The discrete derivative with respect to a set $S=\{i_1,\ldots, i_k\}\subset [d]$ is then defined by iterating:
$$\mathtt{D}_S f(x)=\mathtt{D}_{i_1}\mathtt{D}_{i_2}\ldots\mathtt{D}_{i_k} f(x).$$
If \(G := \gH(g)\colon\R^d\to\R\) denotes the multilinear extension of \(g\),
then this normalization agrees with the ambient derivatives on the hypercube:
\[
    \mathtt{D}_i g(x)=\partial_i G(x),
    \qquad x\in\mathbb H^d,\ i\in[d].
\]
More generally, for every \(S\subset[d]\),
\[
    \mathtt{D}_S g(x)=\partial_S G(x),
    \qquad x\in\mathbb H^d,
\]
where \(\partial_S:=\prod_{i\in S}\partial_i\).

Interestingly, Fourier coefficients correspond precisely to expectations of discrete derivatives:
\begin{equation}\label{eqn:herm_iterated_der}
b_S=\E[\mathtt{D}_S f].
\end{equation}
See for example \cite[Section 2.2]{o2014analysis} for details. Therefore,  Fourier coefficients measure the sensitivity of $f$ to coordinate perturbations.

\bigskip

Consider now $\R^d$ equipped with some probability measure $\mu_d$. We will often have to control the $L_q$-norm $\|f\|_{L_q(\mu_d)}=[\E |f|^q]^{1/q}$ of a polynomial $f$ on $\R^d$.
In general, one would expect that the ratio between the $L_q(\mu_d)$ and $L_2(\mu_d)$ norms depends strongly on the dimension $d$. Interestingly, for a broad class of measures $\mu$ and functions $f$ this is not the case. Indeed, the so-called hypercontractivity inequality ensures that any polynomial $f\colon\R^d\to\R$ of degree at most $\ell$ satisfies the inequality %
\begin{equation}\label{eqn:hypercontrac} 
\norm{f}_{L_q(\mu_d)} \leq (q-1)^{\ell/2}\norm{f}_{L_2(\mu_d)}\qquad \forall q\geq 2,
\end{equation}
where $\mu_d$ can be the standard Gaussian measure~\cite[Chapter 3.2]{ledoux2013probability} on $\R^d$ or the uniform  measure on the hypercube $\mathbb{H}^d$ \cite[Chapter 9]{o2014analysis}. The %
inequality \eqref{eqn:hypercontrac} is widely used in probability and theoretical computer science, and we will use it heavily here as well.

\subsection{Isometric properties of orthogonal polynomials}\label{sec:iso_poly}

In this subsection, we restate several isometric properties of covariance-like matrices induced by orthogonal polynomials. These results are established in \cite{zhu2025iteratively}; we include them here for convenience, with minor notational changes, and omit the proofs.

Fix a probability space $(D,\mathcal{A},\mu_d)$ with $D\subset\R^d$ and a set
$\{\phi_j\}_{j\in \bar{\mathcal{S}}}$
of orthonormal polynomials with respect to $\mu_d$, indexed by some set $\bar{\mathcal{S}}$. For any finite set $\S\subset \bar{\mathcal{S}}$ and a point $x\in D$, we define the concatenated vector
\[
\phi_{\S}(x):=(\phi_j(x))_{j\in \S}\in \R^{|\S|}.
\]
Given a sequence of points $X=(x^{(1)},\ldots, x^{(n)})$, we stack $\phi_{\S}(x^{(i)})$ as rows to form the matrix
\[
\Phi_{\S}(X)=[\phi_j(x^{(i)})]\in \R^{n\times |\S|}.
\]
The rows of $\Phi_{\S}(X)$ are indexed by the data points and the columns by the polynomials in $\S$. To simplify notation, we will often omit $X$ from $\Phi_{\S}(X)$ when it is clear from context. Throughout, we assume that the data points $x^{(1)},\ldots, x^{(n)}$ are sampled independently from $\mu_d$.

We impose the following hypercontractivity assumption throughout this subsection, paralleling \eqref{eqn:hypercontrac} in our two running examples. More precisely, we assume hypercontractivity of the product measure $\mu_d\times \mu_d$ on $\R^d\times \R^d$. Hypercontractivity of the original measure $\mu_d$ then follows immediately.

\begin{assumption}[Hypercontractivity]\label{ass:hypercontr}
{\rm
There exist constants $C_{\ell,q}>0$, indexed by integers $\ell,q\in \mathbb{N}$, such that any polynomial $f$ on $\R^d\times \R^d$ of degree at most $\ell$ satisfies
\[
\|f\|_{L_q(\mu_d\times \mu_d)}\leq C_{\ell,q}\cdot \|f\|_{L_2(\mu_d\times \mu_d)}
\qquad \forall q\geq 2.
\]}
\end{assumption}

In particular, Assumption~\ref{ass:hypercontr} holds in our running example---uniform on the hypercube---with $C_{\ell,q}=(q-1)^{\ell/2}$, since the product uniform measure uniform on the hypercube is again uniform on the hypercube.

We now record the relevant isometry properties of the matrix $\Phi_{\S}$ in the high-dimensional regime $n,d\to\infty$.

\begin{lemma}[Norm control]\label{lem:norm_control}
Suppose Assumption~\ref{ass:hypercontr} holds. Consider the regime $n = d^{p+\delta}$ where $\delta\in(0,1)$ is a constant. Fix a set $\S\subseteq\bar{\S}$ indexing polynomials of degree at most $\ell$. Then the following hold:
\begin{enumerate}
    \item\label{it:smallS} If $|\S| \leq Cd^{p+\delta_0}$ with $\delta_0\in (-\infty, \delta)$, then
    \[
    \|\Phi_{\S}\|_{\rm op}=O_{d,\mathbb{P}}(\sqrt{n}).
    \]
    \item\label{it:bigS} If $|\S| \geq Cd^{p+\delta_0}$ with $\delta_0\in (\delta,\infty)$, then
    \[
    \|\Phi_{\S}\|_{\rm op}=O_{d,\mathbb{P}}(\sqrt{|\S|}).
    \]
\end{enumerate}
\end{lemma}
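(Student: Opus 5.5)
The plan is to control the operator norm through the Gram matrix \(\Phi_{\S}^\top\Phi_{\S}=\sum_{i=1}^n\phi_{\S}(x^{(i)})\phi_{\S}(x^{(i)})^\top\), whose expectation is \(nI_{|\S|}\) by orthonormality. Write \(N:=|\S|\) and split the Gram matrix into its diagonal and off-diagonal parts. The diagonal of the \(n\times n\) kernel matrix \(\Phi_{\S}\Phi_{\S}^\top\) has entries \(\|\phi_{\S}(x^{(i)})\|_2^2\), each with mean \(N\). The off-diagonal entries \(\inner{\phi_{\S}(x^{(i)}),\phi_{\S}(x^{(j)})}\) are polynomials of degree at most \(2\ell\) on the product space \(\mu_d\times\mu_d\), with mean zero and second moment \(N\). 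Assumption~\ref{ass:hypercontr} is what lets us control their higher moments.

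For item~\ref{it:smallS}, where \(N\le Cd^{p+\delta_0}\ll n\), I would bound \(\snorm{\Phi_{\S}^\top\Phi_{\S}-nI_N}\) by a matrix concentration argument. One route is matrix Bernstein or Rudelson's inequality for sums of i.i.d. rank-one matrices \(\phi_{\S}(x)\phi_{\S}(x)^\top\). This needs a bound on \(\max_i\|\phi_{\S}(x^{(i)})\|_2^2\). By hypercontractivity applied to the degree-\(2\ell\) polynomial \(\|\phi_{\S}(x)\|_2^2\), whose \(L_2\) norm is \(O(N)\), together with a union bound over \(n\) points using high moments \(q\), this maximum is at most \(N d^{\epsilon'}\) with high probability, for any \(\epsilon'>0\). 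Rudelson's bound then gives a deviation of order \(\sqrt{nNd^{\epsilon'}\log N}+Nd^{\epsilon'}\log N=o(n)\), since \(N\le d^{p+\delta_0}\) with \(\delta_0<\delta\), after choosing \(\epsilon'\) small. Hence \(\snorm{\Phi_{\S}}^2\le n(1+o(1))\).

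For item~\ref{it:bigS}, where \(N\gg n\), I would work instead with the \(n\times n\) matrix \(\Phi_{\S}\Phi_{\S}^\top=D+E\). The diagonal part \(D\) satisfies \(\max_i D_{ii}\le N(1+o(1))\), using the same hypercontractive max bound. The sharper \((1+o(1))\) requires the variance of \(\|\phi_{\S}\|^2\) to be \(o(N^2)\), which I would not rely on; an \(O(Nd^{\epsilon'})\) bound is not quite enough either. For the off-diagonal part I would use a moment or trace method: \(\E\operatorname{tr}(E^{2k})\) expands over closed walks, and each entry has size about \(\sqrt N\). A crude Frobenius bound gives \(\|E\|_F\lesssim n\sqrt N\), which is at most \(N\) only when \(n\le\sqrt N\). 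So I expect to need either the trace method with hypercontractive moment bounds on products of entries, or a Gershgorin-type bound on a higher matrix power.

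The main obstacle is removing the \(d^{\epsilon'}\) and logarithmic losses so that the result is a clean \(O_{d,\P}\) bound, and carrying out the off-diagonal analysis in item~\ref{it:bigS}. Since \(\|\phi_{\S}(x)\|^2\) need not concentrate for general orthonormal families, I would first try to exploit the polynomial separation between \(N\) and \(n\), i.e.\ \(\delta_0\neq\delta\), to absorb the \(d^{\epsilon'}\) factors. For the diagonal I would fall back on the trivial bound \(\E D_{ii}=N\) combined with Markov's inequality applied to \(\operatorname{tr}\). These results are established in \cite{zhu2025iteratively}, and I would follow that argument where my estimates fall short.
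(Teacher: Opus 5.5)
Your argument for item~\ref{it:smallS} is sound. The heavy-tailed-rows form of Rudelson's inequality only needs $m=\E\max_i\|\phi_{\S}(x^{(i)})\|_2^2$, so no almost-sure row bound or truncation is required. Hypercontractivity of the degree-$2\ell$ polynomial $\|\phi_{\S}(x)\|_2^2$, together with a union bound, gives $m\le C_q n^{1/q}|\S|$. The gap $\delta-\delta_0>0$ then absorbs the $d^{\epsilon'}$ and logarithmic losses, so $\snorm{\Phi_{\S}^\top\Phi_{\S}/n-I}=o_{d,\P}(1)$; this is the mechanism behind Theorem~\ref{lemma:phi_id_2}. The paper does not reprove this lemma, since it is imported from prior work. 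Within the paper, both items follow at once from Theorem~\ref{lemma:phi_id_2} and Theorem~\ref{lemma:gram_matrix}; the latter gives $\snorm{\Phi_{\S}\Phi_{\S}^\top/|\S|-I_n}=o_{d,\P}(1)$.

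Item~\ref{it:bigS}, however, is not established by your proposal. You give no argument for the off-diagonal part $E=\offd(\Phi_{\S}\Phi_{\S}^\top)$. Your diagonal fallback also fails: Markov applied to $\operatorname{tr}(\Phi_{\S}\Phi_{\S}^\top)$, whose mean is $n|\S|$, only gives $\snorm{\Phi_{\S}}^2=O_{d,\P}(n|\S|)$.

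Your worry about the diagonal is in fact decisive, because under Assumption~\ref{ass:hypercontr} alone item~\ref{it:bigS} can fail. Take the Gaussian measure with $p=1$, so $n=d^{1+\delta}$, and $\S=\{x_1x_jx_k:\,2\le j<k\le d\}$. Then $|\S|\asymp d^2\ge Cd^{1+\delta_0}$ for any $\delta_0\in(\delta,1)$. Uniformly in $i\le n$ one has $\|\phi_{\S}(x^{(i)})\|_2^2=(1+o_{d,\P}(1))\,(x^{(i)}_1)^2|\S|$. Hence $\snorm{\Phi_{\S}}^2\ge\max_i\|\phi_{\S}(x^{(i)})\|_2^2\approx 2|\S|\log n$, which is not $O_{d,\P}(|\S|)$.

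The missing ingredient is therefore structural. For Walsh characters $|x^S|=1$, so $\Diag(\Phi_{\S}\Phi_{\S}^\top)=|\S|I_n$ exactly, and everything reduces to showing $\snorm{E}=o_{d,\P}(|\S|)$. On the hypercube this is what Theorem~\ref{lemma:gram_matrix} supplies, and it yields $\snorm{\Phi_{\S}}^2\le|\S|(1+o_{d,\P}(1))$.

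Proving that off-diagonal bound requires using the polynomial gap $|\S|/n\gtrsim d^{\delta_0-\delta}$ beyond second moments. The tool that works for item~\ref{it:smallS}, matrix Chernoff on $\sum_i\phi_{\S}(x^{(i)})\phi_{\S}(x^{(i)})^\top$, only gives $n+|\S|\log|\S|$ here, which is a logarithm too weak. One route is to bound $\E\operatorname{tr}(E^{2k})$ with $k>(p+\delta)/(\delta_0-\delta)$:
\begin{itemize}
\item tree-like closed walks contribute about $n^{k+1}|\S|^k$, which gives $\snorm{E}\lesssim n^{1/(2k)}\sqrt{n|\S|}=o(|\S|)$;
\item the walks that reuse samples, where entries are dependent through shared $x^{(i)}$, must be controlled combinatorially, in the spirit of the collapse argument of Lemma~\ref{lemma:feature_product_2}.
\end{itemize}
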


The asymptotic behavior of $\Phi_{\S}$ depends on the relative scale of $n$ and $|\S|$, namely
\[
\frac{n}{|\S|}\asymp d^{q}
\qquad \text{with} \qquad
\underbrace{q>0}_{\rm Case~I}
\qquad \text{or} \qquad
\underbrace{q<0}_{\rm Case~II}.
\]
The following two theorems show, respectively, that in Case~I the empirical covariance matrix $\frac{1}{n}\Phi_{\S}^\top\Phi_{\S}$ is asymptotically equal to the identity $I_{|\S|}$, while in Case~II the Gram matrix $\frac{1}{|\S|}\Phi_{\S}\Phi_{\S}^\top$ is asymptotically equal to the identity $I_n$.

\begin{theorem}[Asymptotics in Case I]\label{lemma:phi_id_2}
Suppose that Assumption~\ref{ass:hypercontr} holds and consider the regime $n = d^{p+\delta}$ where $\delta\in(0,1)$ is a constant. Fix a set $\S\subseteq\bar{\S}$ of cardinality $|\S| \leq Cd^{p+\delta_0}$, with $\delta_0\in (-\infty,\delta)$, indexing polynomials of degree at most $\ell$. Then for any $\delta'\in (0,\delta-\delta_0)$ there exists a constant $C'$ satisfying
\[
\E\brac{\snorm{\frac{1}{n}\Phi_{\S}^\top \Phi_{\S} - I_{|\S|}}}
\leq C' d^{-\delta'/2}\sqrt{\log(n)}.
\]
Consequently, for any $\epsilon>0$, with probability at least $1 - \tfrac{C'}{\sqrt{\log(n)}}$, we have
\[
\snorm{\Phi_{\S}(X)^\top \Phi_{\S}(X)/ n - I_{|\S|}}
\leq d^{-\delta'/2 + \epsilon}.
\]
\end{theorem}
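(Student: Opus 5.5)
We need to bound $\E\snorm{\frac1n\Phi_\S^\top\Phi_\S - I}$ where $\frac1n\Phi_\S^\top\Phi_\S=\frac1n\sum_{i=1}^n \phi_\S(x^{(i)})\phi_\S(x^{(i)})^\top$ is an average of i.i.d. rank-one matrices with mean $I_{|\S|}$, since the $\phi_j$ are orthonormal in $L^2(\mu_d)$. The plan is to apply Rudelson's inequality, equivalently the matrix Bernstein/Rudelson--Vershynin bound for sums of independent rank-one PSD matrices with heavy-tailed norms. It gives
\[
\E\snorm{\tfrac1n\textstyle\sum_i v_iv_i^\top - I}\;\lesssim\; \sqrt{\frac{\log(\min(n,|\S|))}{n}}\,\Big(\E\max_{i\le n}\|v_i\|_2^2\Big)^{1/2}\,\Big(1+\E\snorm{\tfrac1n\textstyle\sum_i v_iv_i^\top}\Big)^{1/2},
\]
with $v_i=\phi_\S(x^{(i)})$. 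We first symmetrize with Rademacher signs, then condition on the data and apply the noncommutative Khintchine (Lust-Piquard/Rudelson) inequality.

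The core step is controlling $\E\max_i\|v_i\|_2^2$. Write $\|v_i\|_2^2=\sum_{j\in\S}\phi_j(x^{(i)})^2$, which is a polynomial of degree at most $2\ell$ in $x^{(i)}$ with mean $|\S|$. We bound the maximum through an $L_q$ norm: $\E\max_i\|v_i\|^2\le n^{1/q}\,\|\,\|v\|^2\|_{L_q}$. Hypercontractivity (Assumption~\ref{ass:hypercontr}) gives $\|\,\|v\|^2\|_{L_q}\le C_{2\ell,q}\|\,\|v\|^2\|_{L_2}$. To bound $\|\,\|v\|^2\|_{L_2}$ we expand the square, $\E\|v\|^4=\sum_{j,k}\E\phi_j^2\phi_k^2$, and apply Cauchy--Schwarz together with hypercontractivity for each $\phi_j$, so that $\E\phi_j^4\le C$. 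This yields $\E\|v\|^4\le C|\S|^2$. Here is where the product-measure form of the assumption may be used, for instance by writing $\|v(x)\|^2$ as the diagonal of the kernel $\sum_j\phi_j(x)\phi_j(x')$, although the direct bound already suffices. Hence $\E\max_i\|v_i\|^2\le C_q n^{1/q}|\S|$. Choosing $q$ large then gives $n^{1/q}\le d^{\eta}$ for any small $\eta>0$.

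We then solve the self-bounding inequality. Let $E=\E\snorm{\frac1n\Phi^\top\Phi-I}$ and $a^2=C\log(n)\,d^{\eta}|\S|/n$. The inequality reads $E\le a\sqrt{1+E+1}$, so if $a\le1$ then $E\lesssim a$. Since $|\S|/n\le Cd^{\delta_0-\delta}$, we obtain $a\lesssim\sqrt{\log n}\,d^{-(\delta-\delta_0-\eta)/2}$. Given $\delta'<\delta-\delta_0$, we pick $\eta=\delta-\delta_0-\delta'$, which makes $E\le C'd^{-\delta'/2}\sqrt{\log n}$. The high-probability statement follows from Markov's inequality applied with threshold $d^{-\delta'/2+\epsilon}$. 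This is valid because $d^{\epsilon}\ge\log n$ eventually, and it gives failure probability at most $C'\sqrt{\log n}\,d^{-\epsilon}\le C'/\sqrt{\log n}$ for large $d$.

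The main obstacle is the heavy-tail control of $\max_i\|v_i\|^2$. The $\phi_j$ are not bounded in general (Hermite case), so the usual bounded-row Rudelson argument fails. The hypercontractivity-plus-union-bound trick through $n^{1/q}$ is the point where we lose the arbitrarily small polynomial factor that explains why the rate is stated with $\delta'<\delta-\delta_0$ rather than $\delta-\delta_0$.
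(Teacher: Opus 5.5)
Your proof is correct and complete. Symmetrization plus Rudelson's inequality is equivalent to the heavy-tailed-rows bound $\E\snorm{\frac1n\Phi_{\S}^\top\Phi_{\S}-I}\lesssim\max\{\tau,\tau^2\}$, with $\tau\asymp\sqrt{m\log n/n}$ and $m=\E\max_i\|\phi_{\S}(x^{(i)})\|_2^2$. Combined with your hypercontractive bound $m\le C_q n^{1/q}|\S|$, this is the standard route, and it reproduces exactly the $\sqrt{\log n}$ factor and the $\delta'<\delta-\delta_0$ slack in the statement. The paper itself gives no proof (it imports the result from \cite{zhu2025iteratively}), so there is nothing further to compare.

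The remaining loose ends are cosmetic. The finitely many $d$ with $a>1$ can be absorbed into $C'$, since $\E\snorm{\frac1n\Phi_{\S}^\top\Phi_{\S}-I}\le 1+|\S|<\infty$. Also, in the hypercube setting the paper actually uses, $\|\phi_{\S}(x)\|_2^2\equiv|\S|$ because Walsh characters square to one. Your maximum step is therefore trivial there, and no $d^{\eta}$ factor is lost.
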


Next, we turn to the complementary setting in which $n$ is small compared to $|\S|$. In this case, the Gram matrix
\[
\Phi_{\S}\Phi_{\S}^\top
=
[\langle \phi_{\S}(x^{(i)}),\phi_{\S}(x^{(j)})\rangle]_{i,j=1}^n
\]
is close to a multiple of the identity.

\begin{theorem}[Asymptotics in Case II]\label{lemma:gram_matrix}
Suppose that Assumption~\ref{ass:hypercontr} holds and consider the regime $n = d^{p+\delta}$ where $\delta\in(0,1)$ is a constant. Fix a set $\S\subseteq\bar{\S}$ of cardinality $|\S| \geq Cd^{p+\delta_0}$, with $\delta_0\in (\delta,\infty)$, indexing polynomials of degree at most $\ell$. Then for any $\epsilon>0$ there exists a constant $C'$ satisfying
\begin{equation}\label{eqn:bound_needed}
\E\brac{\snorm{ \frac{1}{|\S|}\Phi_{\S}\Phi_{\S}^\top - I_n}}
\leq C'\left(d^{-\frac{p+\delta}{2}+2\epsilon}+\sqrt{\log(n)}\cdot d^{-\frac{\delta_0-\delta}{2}+\epsilon}\right).
\end{equation}
Consequently, in the case $p\geq 1$, $\delta_0=1$, and $\epsilon\in (0,\delta)$, with probability at least $1 - \tfrac{C'}{\sqrt{\log(n)}}$, we have
\[
\snorm{ \Phi_{\S}\Phi_{\S}^\top / |\S|- I_n} \leq d^{-\tfrac{1-\delta}{2}+\epsilon}.
\]
\end{theorem}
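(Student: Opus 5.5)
Since this result is recorded from \cite{zhu2025iteratively}, I would reproduce their argument in outline. The object is the Gram matrix $\Phi_{\S}\Phi_{\S}^\top/|\S|$ rather than the covariance. I split it into its diagonal and off-diagonal parts:
\[
\frac{1}{|\S|}\Phi_{\S}\Phi_{\S}^\top - I_n = \Diag\Big(\tfrac{1}{|\S|}\|\phi_{\S}(x^{(i)})\|_2^2-1\Big)_{i\in[n]} + \Delta_{\offd},
\qquad
(\Delta_{\offd})_{ij}=\tfrac{1}{|\S|}\langle \phi_{\S}(x^{(i)}),\phi_{\S}(x^{(j)})\rangle,\ i\neq j .
\]
Each part is bounded in expectation, and Markov's inequality then gives the high-probability statement. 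In the specialization $\delta_0=1$, $p\ge1$, the first term $d^{-(p+\delta)/2+2\epsilon}$ is dominated by $d^{-(1-\delta)/2+\epsilon}$ up to adjusting $\epsilon$, and the $\sqrt{\log n}$ factor is absorbed by Markov with threshold $d^{\epsilon}$.

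\textbf{Diagonal part.} The quantity $Z_i:=\frac{1}{|\S|}\sum_{j\in\S}(\phi_j(x^{(i)})^2-1)$ has mean zero, since the $\phi_j$ are orthonormal. Its operator norm is $\max_i|Z_i|$. I would bound $\E\max_i|Z_i|\le n^{1/q}\|Z_1\|_{L_q}$. By hypercontractivity (Assumption~\ref{ass:hypercontr}, applied to the degree-$2\ell$ polynomial $Z_1$), $\|Z_1\|_{L_q}\le C_{2\ell,q}\|Z_1\|_{L_2}$. For the variance $\|Z_1\|_{L_2}^2=|\S|^{-2}\sum_{j,k}\mathrm{Cov}(\phi_j^2,\phi_k^2)$, I need decorrelation of the squares. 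For Fourier--Walsh characters this is trivial, because $\phi_j^2\equiv1$ and so $Z_1=0$. In general I would invoke a bound of the form $\|Z_1\|_{L_2}\lesssim |\S|^{-1/2}$, or at least $d^{-(p+\delta_0)/2}$. Choosing $q$ large, so that $n^{1/q}\le d^{\epsilon}$, yields a contribution of order $d^{-(p+\delta)/2+2\epsilon}$ after using $|\S|\gtrsim d^{p+\delta_0}\ge n$.

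\textbf{Off-diagonal part (main obstacle).} This part is a kernel random matrix with kernel $k(x,x')=|\S|^{-1}\langle\phi_{\S}(x),\phi_{\S}(x')\rangle$, which has mean zero for $x\neq x'$ independent. I would use a moment/trace or matrix-Rosenthal-type bound. One option is a decoupling argument: $\E\|\Delta_{\offd}\|_{\op}\lesssim \sqrt{\log n}\,\big(\E\max_i\sum_{j}k(x^{(i)},x^{(j)})^2\big)^{1/2}$ plus lower-order terms. Since $\E k(x,x')^2=|\S|^{-1}$ by orthonormality, the row sums are about $n/|\S|\lesssim d^{-(\delta_0-\delta)}$. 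Hypercontractivity controls the maximum over rows at a cost of $d^{\epsilon}$. This produces the term $\sqrt{\log n}\,d^{-(\delta_0-\delta)/2+\epsilon}$.

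The delicate point is making the row-sum concentration uniform over $i$ without losing a power of $n$. I would first condition on $x^{(i)}$ and apply the $q$-th moment bound to the sum over $j$, which is a sum of independent terms. Then I would union bound over $i$ via $n^{1/q}\le d^{\epsilon}$. Adding the two bounds gives \eqref{eqn:bound_needed}.
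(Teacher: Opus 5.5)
The paper gives no proof of this statement (it is quoted from \cite{zhu2025iteratively}), so the question is whether your outline establishes it. The diagonal step does not. You propose to ``invoke in general'' the bound $\|Z_1\|_{L_2}\lesssim|\S|^{-1/2}$, but this does not follow from orthonormality and Assumption~\ref{ass:hypercontr}, and it is false for Gaussian data. Take $\mu_d=\mathcal N(0,I_d)$ and let $\S$ be the normalized Hermite polynomials of exact degree $k$. Mehler's formula gives $\frac{1}{|\S|}\|\phi_{\S}(x)\|_2^2=(\|x\|_2^2/d)^k+O_{d,\P}(d^{-1})$, which fluctuates at scale $d^{-1/2}$, not $|\S|^{-1/2}\asymp d^{-k/2}$. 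Now set $k=3$, $p=1$, $\delta=1/2$, $\delta_0=2$. The diagonal entries alone force $\E\snorm{\Phi_{\S}\Phi_{\S}^\top/|\S|-I_n}\gtrsim d^{-1/2}$. The right-hand side of \eqref{eqn:bound_needed}, however, is $O(d^{-3/4+2\epsilon}\sqrt{\log n})=o(d^{-1/2})$ for $\epsilon<1/8$. So the estimate cannot be proved under Assumption~\ref{ass:hypercontr} alone, and you have two options. You can restrict to the hypercube, which is the only case the paper uses; there $\phi_j^2\equiv 1$ makes the diagonal exactly $I_n$, and the term $d^{-(p+\delta)/2+2\epsilon}$ is not needed at all. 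Or you can add an explicit hypothesis controlling $\frac{1}{|\S|}\|\phi_{\S}(x)\|_2^2-1$.

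On the hypercube your off-diagonal plan is the right one, but the inequality you write omits a term that is not lower order a priori. Decouple with independent copies $\bar x^{(j)}$ and condition on $x^{(1)},\dots,x^{(n)}$. The columns $v_j=\bigl(k(x^{(i)},\bar x^{(j)})\bigr)_{i\neq j}$ are then independent. Apply the matrix Chernoff bound to $\sum_j v_jv_j^\top$ and use $\E_{w\sim\mu_d}[k(z,w)\,k(z',w)]=|\S|^{-1}k(z,z')$; this gives
\[
\bigl(\E\snorm{\Delta_{\offd}}\bigr)^2\lesssim\frac{n}{|\S|}\,\E\snorm{\tfrac{1}{|\S|}\Phi_{\S}\Phi_{\S}^\top}+\log n\,\E\max_j\|v_j\|_2^2 .
\]
The first term involves the very Gram matrix you are bounding, so you must close the argument with a bootstrap. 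Set $a:=\E\snorm{\Phi_{\S}\Phi_{\S}^\top/|\S|-I_n}$ and $\eta:=n/|\S|\lesssim d^{-(\delta_0-\delta)}$. The display then reads $a^2\lesssim\eta(1+a)+\eta\,n^{1/q}\log n$, which yields $a\lesssim\sqrt{\log n}\,d^{\epsilon}\sqrt{\eta}$. Your treatment of the max term is correct: hypercontractivity of the product measure applied to $k$ gives $\E\max_j\|v_j\|_2^2\lesssim n^{1/q}\eta$. This route needs no degeneracy of $k$. If you instead symmetrize, you must first remove the mean part $|\S|^{-1}(\mathbf 1\mathbf 1^\top-I_n)$, which appears when the constant character lies in $\S$ and has norm at most $\eta$.
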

\subsection{Approximating kernels by orthogonal polynomials}\label{sec:kernel_approx}

In this subsection, we restate several results that approximate inner-product kernels by quadratic forms in orthogonal polynomials. These results are established in \cite{zhu2025iteratively}; we include them here for convenience, with minor notational changes, and omit the proofs. Throughout, we fix an inner-product kernel \eqref{eq:inner_product_kernel} satisfying the following regularity assumption. Note that Assumption~\ref{assump:g_0} implies that both kernels $K(z,x)$ and $K'(z,x)$ satisfy Assumption~\ref{assump:g}. This will be important later, when we apply the results below with $K$ replaced by $K'$.

\begin{assumption}[Regularity of the kernel]\label{assump:g}
There exists a constant $\varepsilon\in (0,1)$ such that the function $g$ in \eqref{eq:inner_product_kernel} is Lipschitz continuous on $(-1-\varepsilon,1+\varepsilon)$ and is analytic on $(-\varepsilon,\varepsilon)$.
\end{assumption}

Our goal is to approximate $K(x,y)$ by a quadratic form of the form
\[
K(x,y)\approx \phi(x)^\top D\phi(y),
\]
where $\phi\colon\R^d\to\R^p$ has orthonormal polynomials as its coordinate functions. Ideally, $D$ should be relatively simple, for example diagonal. We begin by forming the Taylor approximation of $g$ at the origin and the corresponding approximate kernel:
\[
g_{m}(t):=\sum_{k=0}^{m}\frac{g^{(k)}(0)}{k!}t^k,
\qquad
K_{m}(x,y) := g_{m}\!\left(\frac{\inner{x,y}}{d}\right).
\]

We now specialize to the uniform measure $\tau_d$ over the hypercube $\mathbb{H}^d=\{-1,1\}^d$, as discussed in Section~\ref{sec:fourier_exp}. We consider the Fourier orthogonal basis $\phi_\lambda(x):=x^\lambda$ for each $\lambda\in \{0,1\}^d$. As usual, for any set $\S\subset\{0,1\}^d$, we define the concatenated vector
\[
\phi_\S(x)=\{\phi_{\lambda}(x)\}_{\lambda\in \S}\in \R^{|\S|}.
\]
We write $\Phi(X)$ for the full matrix with entries $[\phi_\lambda(x^{(i)})]_{i,\lambda}$, and $\Phi_\S(X)$ for the corresponding column-submatrix indexed by $\S$. It will be useful to isolate the degree-$k$ basis elements:
\[
\S_k=\{\lambda\in \{0,1\}^d: |\lambda|=k\}.
\]
We set $\Phi_{\S_k}(X)$ and $\Phi_{\leq k}(X)$ to be, respectively, the submatrices of $\Phi$ indexed by degree $k$ and degree at most $k$ basis elements. We will suppress the symbol $X$ from $K(X,X)$ and $\Phi(X)$ throughout this subsection in order to simplify the notation. We stress that all probabilistic statements are with respect to the random vectors $\{x^{(i)}\}_{i=1}^n$ sampled independently from $\tau_d$.

The next theorem, restated from \cite{zhu2025iteratively}, shows that the deviation $K_{m}(X,X)-K(X,X)$ is asymptotically equivalent to a multiple of the identity under favorable conditions. In particular, in the regime $n=d^{p+\delta}$ with $\delta\in (0,1/2)$, the right-hand side of \eqref{eqn:taylor_approx_cube} tends to zero for any approximation order $m\geq 2p$. The approximation of kernels has been previously studied in the linear regime (i.e., $n\asymp d$) by \cite{elkaroui2010spectrum} and in the quadratic regime (i.e., $n\asymp d^2$) by \cite{pandit2025universality}. Note that in \cite{zhu2025iteratively}, it is proved in the more general form for the parameterized kernel
\[
K_w(x,y) = g\!\left(\frac{\inner{\sqrt{w}\odot x,\sqrt{w}\odot y}}{d}\right),
\]
which encompasses the special case $w = \mathbf{1}_d$ recorded below.

\begin{theorem}[Taylor approximation of kernels on the hypercube]\label{thm:poly_approx_cube}
Consider independent, mean-zero, isotropic random vectors $x^{(1)},\ldots,x^{(n)}$ in $\R^d$ and suppose that the coordinates of each vector $x^{(i)}$ are independent and uniformly distributed in $\{-1,1\}$. Suppose that we are in the regime $n = d^{p+\delta}$, where $\delta\in(0,1)$ is a constant. Fix an arbitrary constant $C>2$ and approximation order $m> 2p$. Then the estimate
\begin{equation}\label{eqn:taylor_approx_cube}
\snorm{K(X,X) - K_{m}(X,X) - (g(1) - g_{m}(1)) I_n} \le  c_{g,m}\sqrt{\frac{(C\log(n))^{m+1}}{d^{m-2p+1-2\delta}}}
\end{equation}
holds with probability at least $1-\frac{4}{n^{C-2}}$, where $c_{g,m}<\infty$ is a constant that depends only on $m$ and the regularity constants in Assumption~\ref{assump:g}.
\end{theorem}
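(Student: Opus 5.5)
Since this result is restated from \cite{zhu2025iteratively} (specialized to $w=\mathbf 1_d$), the plan mirrors the standard argument for inner-product kernel matrices in the spirit of \cite{elkaroui2010spectrum}. The first step is to separate diagonal and off-diagonal entries. On the diagonal, $\inner{x^{(i)},x^{(i)}}/d=1$ exactly on the hypercube, so $K(X,X)_{ii}-K_m(X,X)_{ii}=g(1)-g_m(1)$. The diagonal part of the difference is therefore precisely $(g(1)-g_m(1))I_n$, and it remains to bound the off-diagonal matrix $E$ with entries $E_{ij}=g(t_{ij})-g_m(t_{ij})$ for $i\neq j$, where $t_{ij}:=\inner{x^{(i)},x^{(j)}}/d$.

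Next I would localize the inner products. Each $t_{ij}$ for $i\ne j$ is a normalized sum of $d$ independent Rademacher variables. By Hoeffding and a union bound over at most $n^2$ pairs, the event $\max_{i\ne j}|t_{ij}|\le \sqrt{C\log(n)/d}$ has probability at least $1-2n^{2-C}$; the extra factor in the stated $1-4/n^{C-2}$ leaves room for a second union bound if one is needed. For $d$ large this bound lies inside $(-\varepsilon,\varepsilon)$, where $g$ is analytic by Assumption~\ref{assump:g}. Taylor's theorem with remainder then gives
\[
|E_{ij}|\le c_{g,m}\,|t_{ij}|^{m+1}\le c_{g,m}\Big(\frac{C\log n}{d}\Big)^{\frac{m+1}{2}},
\]
with $c_{g,m}$ controlled by $\sup_{|t|\le\varepsilon/2}|g^{(m+1)}(t)|/(m+1)!$.

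Finally, I would bound the operator norm crudely through the Frobenius norm, $\snorm{E}\le\norm{E}_F\le n\max_{i\neq j}|E_{ij}|$. Substituting $n=d^{p+\delta}$ gives
\[
\snorm{E}\le c_{g,m}\sqrt{\frac{n^2 (C\log n)^{m+1}}{d^{m+1}}}=c_{g,m}\sqrt{\frac{(C\log n)^{m+1}}{d^{m+1-2p-2\delta}}},
\]
which is exactly the claimed rate. So the crude Frobenius bound already suffices, and no refined random-matrix argument is needed.

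The only delicate points are bookkeeping. One must make sure the localization event lands inside the analyticity radius for all sufficiently large $d$, with the constant absorbed into $c_{g,m}$. One must also track the probability bound, since Hoeffding with threshold $\sqrt{C\log n/d}$ gives failure probability $2n^{-C/2}$ per pair (up to how the constant is normalized in the exponent); adjusting $C$ accordingly yields the stated $1-4/n^{C-2}$. The Lipschitz hypothesis on $(-1-\varepsilon,1+\varepsilon)$ is not needed for this off-diagonal bound; it matters only for the companion results in which $g$ is evaluated near $1$.
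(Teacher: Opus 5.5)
Your argument is correct and is essentially the standard proof of this restated result, which the paper omits and cites from \cite{zhu2025iteratively}: the diagonal cancels exactly since $\inner{x^{(i)},x^{(i)}}=d$, Hoeffding and a union bound localize the off-diagonal $t_{ij}$, the Taylor remainder gives $|E_{ij}|\lesssim (C\log(n)/d)^{(m+1)/2}$, and $\snorm{E}\le\norm{E}_F\le n\max_{i\ne j}|E_{ij}|$ reproduces exactly the stated rate. Only the bookkeeping needs tightening: use the threshold $\sqrt{2C\log(n)/d}$ (per-pair failure $2n^{-C}$, total at most $n^{2-C}$) and absorb $2^{(m+1)/2}$ into $c_{g,m}$; and for the small $d$ where this threshold exceeds $\varepsilon/2$, the bound holds trivially once $c_{g,m}\ge (8/\varepsilon^2)^{(m+1)/2}\sup_{|t|\le 1}|g(t)-g_m(t)|$, which is exactly where the Lipschitz (hence boundedness) part of Assumption~\ref{assump:g} is used, contrary to your closing remark.
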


We next record a deterministic lemma that expresses the matrix $\left(\frac{XX^\top}{d}\right)^{\odot k}$ as a diagonal quadratic form acting on the Fourier basis. We further decompose the diagonal matrix into the dominant part, corresponding to the degree-$k$ basis elements, and the lower-order error part. 

\begin{lemma}[Conversion from a polynomial kernel to a Fourier basis]\label{lemma:each_mono_cube}
For any $k>0$, the following holds:
\begin{align}\label{eq:x_power}
    \frac{1}{k!}\left(\frac{XX^\top}{d}\right)^{\odot k}
    = d^{-k}\cdot \Phi_{\S_k}\Phi_{\S_k}^\top
    + \sum_{\substack{j:\, 0\leq j<k, \\ k-j~\mathrm{is~even}}}\Phi_{\S_j} \widetilde{D}^{(k)}_{\S_j}\Phi_{\S_j}^\top,
\end{align}
where each $\widetilde{D}^{(k)}_{\S_j}$ is a diagonal matrix with all entries on the order of $\Theta_d(d^{-(j+k)/2})$.
\end{lemma}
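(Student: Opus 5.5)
The plan is to expand the Hadamard power entrywise and regroup by the multilinear monomial that survives after reducing with $x_i^2=1$. Fix two data points $x=x^{(a)}$ and $y=x^{(b)}$. By the multinomial expansion over ordered tuples,
\[
\frac{1}{k!}\Big(\frac{\inner{x,y}}{d}\Big)^k=\frac{1}{k!\,d^k}\sum_{(i_1,\ldots,i_k)\in[d]^k}\prod_{t=1}^k x_{i_t}y_{i_t}.
\]
For each ordered tuple, let $S(i_1,\ldots,i_k)\subset[d]$ be the set of indices that occur an odd number of times. On the hypercube $x_i^2=y_i^2=1$, so the product reduces exactly to $x^{S}y^{S}=\phi_S(x)\phi_S(y)$. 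Since $|S|\equiv k \pmod 2$, the set $S$ has size $j$ with $0\le j\le k$ and $k-j$ even.

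Next I regroup the sum by $S$. Write $N_{k}(S)$ for the number of ordered $k$-tuples whose odd-multiplicity set is exactly $S$. Then
\[
\frac{1}{k!}\Big(\frac{\inner{x,y}}{d}\Big)^k=\sum_{j}\sum_{S\in\S_j}\frac{N_k(S)}{k!\,d^k}\,\phi_S(x)\phi_S(y).
\]
By permutation invariance of coordinates, $N_k(S)$ depends only on $j=|S|$; call it $N_{k,j}(d)$. Applying this to all pairs $(a,b)$ at once gives the matrix identity
\[
\frac{1}{k!}\Big(\frac{XX^\top}{d}\Big)^{\odot k}=\sum_j \frac{N_{k,j}(d)}{k!\,d^k}\,\Phi_{\S_j}\Phi_{\S_j}^\top .
\]
Here $\widetilde D^{(k)}_{\S_j}=\frac{N_{k,j}(d)}{k!\,d^k}I_{|\S_j|}$, which is in fact a multiple of the identity and hence diagonal. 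The identity is deterministic, so there is no probabilistic content.

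It remains to count $N_{k,j}(d)$.
\begin{itemize}
\item For $j=k$, each index of $S$ must occur exactly once, so the tuple is a permutation of $S$. Hence $N_{k,k}=k!$ and the top coefficient is exactly $d^{-k}$.
\item For $j<k$, each index of $S$ occurs an odd number of times and every other index an even number of times, with total length $k$. The dominant configurations have each element of $S$ once and $(k-j)/2$ distinct further indices outside $S$ each appearing twice. There are $\binom{d-j}{(k-j)/2}$ ways to choose these indices and $k!/2^{(k-j)/2}$ orderings, which gives $\Theta(d^{(k-j)/2})$. Every other configuration (higher multiplicities, or extra occurrences of elements of $S$) uses strictly fewer free indices, so it contributes $O(d^{(k-j)/2-1})$.
\end{itemize}
Thus $N_{k,j}(d)=\Theta_d(d^{(k-j)/2})$, with both constants depending only on $k$. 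The diagonal entries are therefore $\Theta_d(d^{(k-j)/2-k})=\Theta_d(d^{-(j+k)/2})$, as claimed.

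The only step needing care is this combinatorial count: one must check that the lower bound is genuinely of the stated order, and that the upper bound holds uniformly over all multiplicity patterns. I would handle it by organizing tuples by their multiplicity profile; there are finitely many profiles for fixed $k$. For each profile, count the free indices, which are at most $(k-j)/2$, with equality exactly for the pair profile above.
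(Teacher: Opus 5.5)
Your proof is correct and complete. You group ordered $k$-tuples by their odd-multiplicity set, use permutation invariance so the count depends only on $|S|$, and bound the finitely many multiplicity profiles. This gives exactly the claimed identity with $\widetilde D^{(k)}_{\S_j}=\frac{N_{k,j}(d)}{k!\,d^k}I_{|\S_j|}$. Your count in fact yields the sharper statement $\frac{N_{k,j}(d)}{k!\,d^k}=\frac{1}{2^{(k-j)/2}((k-j)/2)!}\,d^{-(k+j)/2}\bigl(1+O_d(d^{-1})\bigr)$.

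The paper does not prove this lemma itself; it imports it from \cite{zhu2025iteratively}. Afterwards it invokes the same permutation-symmetry observation you use, to upgrade each $\widetilde D^{(k)}_{\S_j}$ to a scalar multiple of the identity, i.e.\ \eqref{eq:x_power_refined}. Your direct count gives that refinement and the explicit leading constant in one step. The only caveat is that the lower bound in $\Theta_d$ needs $d-j\ge (k-j)/2$, which is harmless for fixed $k$ and $d\to\infty$.
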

At this point we use the permutation symmetry of the kernel
$(x,y)\mapsto \langle x,y\rangle^k$. Indeed, for every permutation $\pi$ of $[d]$, one has
\[
\round{\frac{\langle x,y\rangle}{d}}^k
=
\round{\frac{\langle \pi x,\pi y\rangle}{d}}^k.
\]
Since the Walsh characters $\{x_Sy_S\}_{S\subset[d]}$ form a basis, the coefficient of $x_Sy_S$
can therefore depend only on $|S|$. It follows that each diagonal block
$\widetilde D_{\S_j}^{(k)}$ is in fact a scalar multiple of the identity, say
\[
\widetilde D_{\S_j}^{(k)}=\widetilde d_j^{(k)} I_{|\S_j|},
\qquad
\widetilde d_j^{(k)}=O_d(d^{-(j+k)/2}).
\]
Then equation~\eqref{eq:x_power} can be refined to
\begin{align}\label{eq:x_power_refined}
       \frac{1}{k!}\left(\frac{XX^\top}{d}\right)^{\odot k}
    = d^{-k}\cdot \Phi_{\S_k}\Phi_{\S_k}^\top
    + \sum_{\substack{j:\, 0\leq j<k, \\ k-j~\mathrm{is~even}}} \widetilde d_j^{(k)}\cdot \Phi_{\S_j} \Phi_{\S_j}^\top.
\end{align}

The following lemma, also established in \cite{zhu2025iteratively}, is the main approximation statement in this setting. It shows the asymptotic equivalence of $K(X,X)$ to the sum of a multiple of the identity and a quadratic form in Fourier basis elements of degree at most $p$.

\begin{lemma}[Fourier basis approximation of a kernel]\label{lemma:kernel_to_mono}
Consider the regime $n = d^{p+\delta}$ where $\delta \in (0,1)$ is a constant. Then for any $\epsilon\in (0,\delta)$, the following holds:
\begin{align}
    &\snorm{K - \Phi_{\leq p} D \Phi_{\leq p}^\top - \bigl( g(1) - g_{p}(1)\bigr)I_{n}} = O_{d,\P}\left( \frac{\log(n)}{d^{\tfrac{1-\delta}{2}-\epsilon}}\right),\label{eq:kernel_to_mono}
\end{align}
where $D$ is a diagonal matrix satisfying $\|D_{\S_k} -g^{(k)}(0)d^{-k}I_{|\S_k|}\|_{\rm op} = O_d(d^{-k-1})$ for $k=0,\ldots, p$.
\end{lemma}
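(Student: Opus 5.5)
The plan is to pass from $K$ to its Taylor truncation $K_m$, expand $K_m$ in the Walsh basis degree by degree, and then treat the degrees $\le p$ and $>p$ differently. Degrees $\le p$ are kept as the quadratic form $\Phi_{\le p}D\Phi_{\le p}^\top$. Degrees $>p$ are collapsed onto a multiple of $I_n$ using the Case II asymptotics.

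\textbf{Step 1 (Taylor truncation).} Fix an integer $m>2p$ large enough that $m-2p+1-2\delta\ge 1-\delta$; for instance $m\ge 2p+1$ with some slack suffices. Theorem~\ref{thm:poly_approx_cube}, with $C>2$ fixed, then gives
\[
\snorm{K-K_m-(g(1)-g_m(1))I_n}=O_{d,\P}\bigl(\log(n)^{(m+1)/2}d^{-(1-\delta)/2}\bigr),
\]
and this is absorbed into the target rate $\log(n)\,d^{-(1-\delta)/2+\epsilon}$. Next write
\[
K_m=\sum_{k=0}^m \frac{g^{(k)}(0)}{k!}\Bigl(\frac{XX^\top}{d}\Bigr)^{\odot k}.
\]
Apply the refined identity \eqref{eq:x_power_refined} to each term and regroup by Walsh degree. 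This gives the exact decomposition
\[
K_m=\sum_{j=0}^m c_j\,\Phi_{\S_j}\Phi_{\S_j}^\top,
\qquad
c_j=g^{(j)}(0)d^{-j}+\sum_{\substack{j<k\le m\\ k-j\ \mathrm{even}}} g^{(k)}(0)\,\widetilde d^{(k)}_j .
\]
Because $\widetilde d^{(k)}_j=O_d(d^{-(j+k)/2})$ and $k\ge j+2$, we get $c_j=g^{(j)}(0)d^{-j}+O_d(d^{-j-1})$. For $j\le p$ we therefore set $D_{\S_j}:=c_jI_{|\S_j|}$, which has exactly the form claimed in the lemma.

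\textbf{Step 2 (bookkeeping for the identity coefficient).} Every Walsh character satisfies $\phi_\lambda(x)^2=1$ on $\H^d$. Evaluating the diagonal of the decomposition at any $x$ therefore gives the exact identity $g_m(1)=K_m(x,x)=\sum_{j=0}^m c_j|\S_j|$. For $j\le p$ we have $c_j|\S_j|=g^{(j)}(0)/j!+O_d(d^{-1})$: this uses $\binom{d}{j}d^{-j}=1/j!+O(1/d)$ together with $|\S_j|\,\widetilde d^{(k)}_j=O_d(d^{(j-k)/2})=O_d(d^{-1})$. It follows that
\[
\sum_{j=p+1}^m c_j|\S_j| = g_m(1)-g_p(1)+O_d(d^{-1}).
\]
Combined with the term $(g(1)-g_m(1))I_n$ from Step 1, the identity coefficient becomes exactly $g(1)-g_p(1)$, up to an $O_d(d^{-1})$ error that is negligible.

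\textbf{Step 3 (high degrees; the main step).} It remains to bound
\[
\sum_{j=p+1}^m c_j|\S_j|\,\snorm{\tfrac{1}{|\S_j|}\Phi_{\S_j}\Phi_{\S_j}^\top-I_n}.
\]
Since $c_j|\S_j|=O_d(1)$, it suffices to control each Gram deviation. For each such $j$ we have $|\S_j|\asymp d^{j}=d^{p+(j-p)}$, so Theorem~\ref{lemma:gram_matrix} applies with $\delta_0=j-p\ge 1>\delta$.
\begin{itemize}
\item The worst case is $j=p+1$. There the bound is $O(d^{-(p+\delta)/2+2\epsilon}+\sqrt{\log n}\,d^{-(1-\delta)/2+\epsilon})$, and the second term dominates. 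This case is the real obstacle: $|\S_{p+1}|$ exceeds $n$ only by the factor $d^{1-\delta}$, and it is what fixes the final rate $d^{-(1-\delta)/2}$.
\item For $j\ge p+2$ the bound is strictly better.
\end{itemize}
The expectation bounds convert to $O_{d,\P}$ by Markov's inequality. Since there are only finitely many $j\le m$, a union bound suffices, and the resulting $\sqrt{\log n}$ factors are dominated by $\log(n)$. Summing Steps 1–3 then yields \eqref{eq:kernel_to_mono}, with $D$ satisfying $\|D_{\S_k}-g^{(k)}(0)d^{-k}I\|_{\op}=O_d(d^{-k-1})$.
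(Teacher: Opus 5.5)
Your route is the same as the one the paper uses for the refined version, Lemma~\ref{lemma:kernel_to_mono_general}: Taylor truncation via Theorem~\ref{thm:poly_approx_cube}, then regrouping by Walsh degree with \eqref{eq:x_power_refined}. Your identity coefficient, read off from $K_m(x,x)=g_m(1)$, is exactly the $\rho_{p,m}$ of \eqref{eq:rho_pm_def}. Steps 1 and 2 are correct, including the $O_d(d^{-1})$ bookkeeping that recovers $g(1)-g_p(1)$.

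The step that fails as written is the claim in Step 3 that the second term of Theorem~\ref{lemma:gram_matrix} dominates. The first term there is $d^{-(p+\delta)/2+2\epsilon}=n^{-1/2}d^{2\epsilon}$. It is dominated by $d^{-(1-\delta)/2}$ only when $p+\delta>1-\delta$. For $p=0$ and $\delta<1/2$ it is strictly larger. It also appears for every $j\ge 1$, not only $j=p+1$. So in that regime the cited theorem gives only the rate $d^{-\delta/2+2\epsilon}$. The theorem's own high-probability consequence is stated only for $p\ge 1$.

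The lemma as stated allows $p=0$, and the main results use $p\in\{0,\dots,\ell\}$. You therefore need either to restrict to $p\ge1$ or to patch this case directly, which is easy:
\begin{itemize}
\item On the hypercube the diagonal of $|\S_j|^{-1}\Phi_{\S_j}\Phi_{\S_j}^\top$ is exactly $I_n$, so only the off-diagonal entries matter.
\item For $i\neq k$ each off-diagonal entry has second moment $1/|\S_j|$. Hence $\E\|\cdot\|_F^2=n(n-1)/|\S_j|$, which gives $O_{d,\P}(d^{\delta-1})$ for $j\ge2$. This is within the target rate.
\item For $j=1$ Frobenius is too weak; use the spectral bound instead. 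Here $|\S_1|^{-1}\Phi_{\S_1}\Phi_{\S_1}^\top=d^{-1}XX^\top$ is the sample covariance of the $d$ i.i.d. isotropic sub-Gaussian columns of $X$ in $\R^n$. The standard covariance-estimation bound gives $\snorm{d^{-1}XX^\top-I_n}=O_{d,\P}(\sqrt{n/d})=O_{d,\P}(d^{-(1-\delta)/2})$.
\end{itemize}
With this substitution for $p=0$, the rest of your argument goes through unchanged.
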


Note that the estimate \eqref{eq:kernel_to_mono} allows us to approximate $K$ by the simple expression $\Phi_{\leq p} D \Phi_{\leq p}^\top + \bigl( g(1) - g_{p}(1)\bigr)I_{n}$ up to an error of order $d^{-(1-\delta)/2}$. We provide below a more refined approximation of $K$ with the improved error rate. This sharper estimate will be useful in some of our later arguments. The proof appears in Appendix~\ref{proof:kernel_to_mono_general}.
\begin{lemma}[General refined Fourier-basis approximation of a kernel]
\label{lemma:kernel_to_mono_general}
Consider the regime $n=d^{p+\delta}$ where $\delta\in(0,1)$ is fixed, and let $m\ge p$ be a fixed integer.
Then there exist scalars $\theta_{p+1},\ldots,\theta_{2m+1}$, a scalar $\rho_{p,m}$, and a diagonal matrix $D$ such that
\begin{align}
    \snorm{
        K-\Phi_{\le p}D\Phi_{\le p}\tran
        -\sum_{k=p+1}^{m}\theta_k\,\offd\!\round{\Phi_{\S_k}\Phi_{\S_k}\tran}
        -\rho_{p,m} I_n
    }
    =O_{d,\P}\round{\frac{\log(n)}{d^{\tfrac{m+1-p-\delta}{2}-\epsilon}}},
    \label{eq:kernel_to_mono_general}
\end{align}
for any sufficiently small $\epsilon\in(0,\delta)$. Moreover,
\begin{align}
    \|D_{\S_k}-g^{(k)}(0)d^{-k}I_{|\S_k|}\|_{\rm op}
    &=O_d(d^{-k-1}),
    \qquad k=0,\ldots,p,\label{eq:D_low_deg_general}\\
    \theta_k&=\frac{g^{(k)}(0)}{d^k}+O_d(d^{-k-1}),
    \qquad k=p+1,\ldots,2m+1,\label{eq:theta_asymp}\\
    \rho_{p,m}&=g(1)-g_p(1)+O_d(d^{-1}).\label{eq:rho_pm_asymp}
\end{align}
In fact one may take
\begin{equation}\label{eq:rho_pm_def}
    \rho_{p,m}:=g(1)-g_{2m+1}(1)+\sum_{k=p+1}^{2m+1}\theta_k |\S_k|.
\end{equation}
\end{lemma}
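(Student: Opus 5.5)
The approach is to refine the Taylor-based argument behind Lemma~\ref{lemma:kernel_to_mono}. We carry the Taylor expansion of $g$ to a higher order and keep the degree-$k$ Walsh blocks with $p<k\le m$ explicitly rather than absorbing them into the identity.

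\textbf{Step 1 (Taylor truncation).} Fix $M:=2m+1$; we need $M>2p$, which holds since $m\ge p$. By Theorem~\ref{thm:poly_approx_cube},
\[
\snorm{K-K_M(X,X)-(g(1)-g_M(1))I_n}\lesssim \sqrt{(\log n)^{M+1}/d^{M-2p+1-2\delta}}
\]
with high probability. Since $M-2p+1-2\delta\ge m+1-p-\delta$ after halving, and there is slack of at least $m-p+1$ in the exponent, this error is dominated by the target rate.

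\textbf{Step 2 (Walsh expansion).} Write $K_M(X,X)=\sum_{k\le M}\frac{g^{(k)}(0)}{k!}(XX^\top/d)^{\odot k}$. Expand each power using \eqref{eq:x_power_refined}. Collecting by Walsh degree $j$, we obtain
\[
K_M(X,X)=\sum_{j=0}^{M}c_j\,\Phi_{\S_j}\Phi_{\S_j}^\top,\qquad c_j=\frac{g^{(j)}(0)}{d^j}+\sum_{k>j,\,k-j\text{ even}}g^{(k)}(0)\,\widetilde d^{(k)}_j .
\]
Since $\widetilde d_j^{(k)}=O(d^{-(j+k)/2})$ with $k\ge j+2$, we get $c_j=g^{(j)}(0)d^{-j}+O(d^{-j-1})$. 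For $j\le p$, set $D_{\S_j}:=c_jI$, which gives \eqref{eq:D_low_deg_general}. For $p<j\le M$, set $\theta_j:=c_j$, which gives \eqref{eq:theta_asymp}.

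For $j>p$, split $\Phi_{\S_j}\Phi_{\S_j}^\top=\offd(\cdot)+|\S_j|I_n$, using that the diagonal equals $|\S_j|$ exactly on the hypercube. Collecting all the diagonal parts together with $g(1)-g_M(1)$ produces exactly $\rho_{p,m}$ in \eqref{eq:rho_pm_def}. The asymptotic \eqref{eq:rho_pm_asymp} then follows because $\theta_k|\S_k|=\frac{g^{(k)}(0)}{k!}+O(d^{-1})$, so summing over $k$ reconstructs $g_M(1)-g_p(1)$ up to $O(d^{-1})$.

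\textbf{Step 3 (discarding high degrees).} It remains to show that $\snorm{\theta_j\,\offd(\Phi_{\S_j}\Phi_{\S_j}^\top)}$ is of the target order for each $m<j\le M$. Here $|\S_j|\asymp d^j$ with $j\ge m+1>p+\delta$, so we are in Case~II. Theorem~\ref{lemma:gram_matrix}, applied with $\delta_0=j-p$, gives
\[
\snorm{\Phi_{\S_j}\Phi_{\S_j}^\top/|\S_j|-I_n}=O_{d,\P}\bigl(d^{-(p+\delta)/2+2\epsilon}+\sqrt{\log n}\,d^{-(j-p-\delta)/2+\epsilon}\bigr).
\]
Multiplying by $\theta_j|\S_j|=O(1)$ bounds the off-diagonal part. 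The second term is at most $\sqrt{\log n}\,d^{-(m+1-p-\delta)/2+\epsilon}$, as desired.

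\textbf{Main obstacle.} The first term, $d^{-(p+\delta)/2}$, is not small enough when $m+1>2p+\delta$. I expect this to be the main difficulty. The plan is to replace the generic Case~II bound with a direct moment or matrix-Bernstein estimate for the off-diagonal Walsh Gram matrix. Its entries $\sum_{|S|=j}x^{(a)}_Sx^{(b)}_S$ have variance $|\S_j|$, so a trace-moment bound should give
\[
\snorm{\offd(\Phi_{\S_j}\Phi_{\S_j}^\top)}\lesssim \sqrt{n|\S_j|}\,\mathrm{polylog}(n),
\]
which yields a rate of $d^{-(j-p-\delta)/2}$ up to logarithms. I would try to extract this from the high-probability argument underlying Theorem~\ref{thm:poly_approx_cube}, which already bounds such off-diagonal polynomial kernel matrices at exactly this rate.
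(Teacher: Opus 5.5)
Your Steps 1 and 2 reproduce the paper's argument. The paper applies Theorem~\ref{thm:poly_approx_cube} at order $2m+1$, expands $K_{2m+1}$ degree by degree via \eqref{eq:x_power_refined}, and reads \eqref{eq:D_low_deg_general} and \eqref{eq:theta_asymp} off the collected coefficients; its written proof ends there. Your diagonal bookkeeping is correct and makes explicit a step the paper leaves implicit. All diagonal entries of $\Phi_{\S_k}\Phi_{\S_k}^\top$ equal $|\S_k|$, which produces \eqref{eq:rho_pm_def}. Also $\theta_k|\S_k|=g^{(k)}(0)/k!+O_d(d^{-1})$, which gives \eqref{eq:rho_pm_asymp}.

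The gap is Step 3, the removal of $\theta_j\,\offd(\Phi_{\S_j}\Phi_{\S_j}^\top)$ for $m<j\le 2m+1$. The paper does not carry this step out either. You are right that Theorem~\ref{lemma:gram_matrix} is not enough. Its term $d^{-(p+\delta)/2}=n^{-1/2}$ is below the target $d^{-(m+1-p-\delta)/2}$ only when $m+1<2p+2\delta$ (not $2p+\delta$). That excludes, for example, $m=2p+1$, the value used in the proof of Lemma~\ref{lemma:agop_leq_p}.

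Your proposed remedy, however, fails. Theorem~\ref{thm:poly_approx_cube} is nothing more than the entrywise bound $\snorm{\offd(B)}\le n\max_{a\neq b}|B_{ab}|$. Its rate $n\,(C\log n/d)^{(M+1)/2}$ is exactly $n$ times the size of an off-diagonal entry of the order-$(M+1)$ Taylor tail, so it cannot reach the Marchenko--Pastur scale. Applied to the degree-$j$ block, it gives $\theta_j\, n\sqrt{|\S_j|}\,\mathrm{polylog}(n)\asymp d^{\,p+\delta-j/2}$ up to logarithms. That is a factor $\sqrt n$ worse than you need, and it reaches the target only for $j\ge m+p+2$.

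For $m<j\le\min\{2m+1,\,m+p+1\}$ you therefore need a genuinely new operator-norm estimate:
\[
\snorm{\offd(\Phi_{\S_j}\Phi_{\S_j}^\top)}\le d^{\epsilon}\sqrt{n\,|\S_j|}\qquad\text{with high probability}.
\]
One route is a trace-moment bound $\E\operatorname{tr}\bigl(\offd(\Phi_{\S_j}\Phi_{\S_j}^\top)^{2q}\bigr)\le C_q\,n^{q+1}|\S_j|^{q}$ with $q\asymp 1/\epsilon$. Proving that requires controlling the parity constraints $S_1\triangle\cdots\triangle S_k=\emptyset$ at sample vertices of degree $k\ge 4$ in the closed walks, where non-pairing solutions exist. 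Nothing in the paper supplies this; Lemma~\ref{lemma:feature_product_2} controls only first and second moments of bilinear forms with fixed vectors, not operator norms.

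As written, your argument therefore establishes \eqref{eq:kernel_to_mono_general} only for $m+1<2p+2\delta$. For larger $m$ it gives the weaker rate $n^{-1/2}d^{2\epsilon}+\sqrt{\log n}\,d^{-(m+1-p-\delta)/2+\epsilon}$. That may be enough for the later application with $m=2p+1$, but it is not the stated lemma.
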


\section{Proof of Lemma~\ref{lem:M_gaussian_latent}}
\label{proof:M_gaussian_latent}
We first fix the Walsh-degree notation used throughout the proof. Namely, for a
multilinear polynomial
\[
Q(x)=\sum_{S\subseteq[d]}\widehat Q(S)\prod_{i\in S}x_i
\]
and for an integer \(m\ge0\), we write
\[
[Q]_{\deg m}
:=
\sum_{S\subseteq[d]:\,|S|=m}
\widehat Q(S)\prod_{i\in S}x_i
\]
for its homogeneous degree-\(m\) Walsh component.

We next record the relevant subspace and the ambient basis in which coordinate
estimates are expressed. Let \(U\in\R^{r\times d}\) have orthonormal rows
\(
(u^{[1]})^\top,\dots,(u^{[r]})^\top.
\)
Thus the corresponding subspace is
\[
\mathcal U
:=
\operatorname{row}(U)
=
\operatorname{span}\{u^{[1]},\dots,u^{[r]}\}.
\]
We fix an arbitrary orthonormal completion
\(
u^{[r+1]},\dots,u^{[d]}
\)
of \(\R^d\). These additional vectors are used only to represent matrices in
an ambient orthonormal basis.

The coherence parameter controls how much of the relevant subspace lies on
each ambient coordinate. We define
\[
q_i:=\sum_{j=1}^r \bigl(u_i^{[j]}\bigr)^2,
\qquad
\rho:=\sum_{i=1}^d q_i^2.
\]
Note that the coherence of $\gU$ can be equivalently written as
\[
\mu(U)
=
\frac{d}{r}
\max_{i\in[d]}
\sum_{j=1}^r \bigl(u_i^{[j]}\bigr)^2.
\]
Then we have
\[
\max_{i\in[d]}q_i
=
\frac{r\mu(U)}{d}.
\]
Also, since \(q_i=\|P_{\mathcal U}e_i\|_2^2\), we have
\(
0\le q_i\le1.
\)
Since the rows of \(U\) are orthonormal, we also have
\(
\sum_{i=1}^d q_i=r.
\)
Consequently, the collision parameter satisfies
\begin{equation}
\label{eq:rho-bound-coherence}
\rho
=
\sum_{i=1}^d q_i^2
\le
\Bigl(\max_i q_i\Bigr)\sum_{i=1}^d q_i
=
\frac{r^2\mu(U)}{d}.
\end{equation}
We shall use the shorthand
\[
\Delta_U
:=
\frac{r^2\mu(U)}{d}.
\]
Thus all repeated-coordinate collision errors below are controlled by
\(
\rho\le\Delta_U.
\)
Notice that orthonormality alone gives \(\rho\le r\), not necessarily
\(\rho\le1\). The bound \(\rho\le1\) is used only in the small-coherence regime
where \(\Delta_U\le1\).

It is useful to make the finite-dimensional dependence on \(r\) explicit. We
define
\[
N_{r,\ell}
:=
\#\{\lambda\in\N^r:\ |\lambda|\le\ell\}
=
\binom{r+\ell}{\ell}.
\]
We also define
\[
\Theta_U
:=
N_{r,\ell}^2\Delta_U
=
N_{r,\ell}^2\frac{r^2\mu(U)}{d},
\]
and we define
\[
\overline\Theta_U
:=
r\Theta_U
=
rN_{r,\ell}^2\frac{r^2\mu(U)}{d}.
\]
The factor \(N_{r,\ell}^2\) is a crude envelope for finite sums over
multi-indices of degree at most \(\ell\). All constants denoted by \(C_\ell\)
below depend only on \(\ell\), and are independent of \(r,d,U\), and \(h\).

We now pass from the ambient variables to the relevant coordinates. We define
the linear forms
\[
z_j(x):=\langle u^{[j]},x\rangle,
\qquad j\in[r].
\]
Thus the target ridge polynomial can be written as
\[
f_U^*(x)=h(Ux)=h(z_1(x),\dots,z_r(x)).
\]
For simplicity, we write \(f^*\) instead of \(f_U^*\) when the dependence on \(U\) is clear.

For \(\lambda\in\N^r\), we write
\(
z^\lambda:=\prod_{j=1}^r z_j^{\lambda_j}.
\)
The associated all-distinct Walsh layer is defined by
\[
H_\lambda
:=
\big[\mathcal H(z^\lambda)\big]_{\deg |\lambda|}.
\]

\paragraph{Hermite notation.}

We compare monomial and Hermite expansions of the latent polynomial \(h\). For
\(n\in\N\), let \(\He_n\) denote the probabilists' Hermite polynomial, namely
\begin{align}\label{eq:hermite}
 \He_n(t)
:=
(-1)^n e^{t^2/2}\frac{d^n}{dt^n}e^{-t^2/2}.   
\end{align}
Equivalently, this polynomial is given by
\[
\He_n(t)
=
n!\sum_{m=0}^{\lfloor n/2\rfloor}
\frac{(-1)^m}{2^m\,m!\,(n-2m)!}\,t^{\,n-2m}.
\]
For a multi-index \(\lambda\in\N^r\), we set
\[
\He_\lambda(z)
:=
\prod_{j=1}^r \He_{\lambda_j}(z_j),
\qquad
\lambda!
:=
\prod_{j=1}^r \lambda_j!.
\]

We next introduce the index sets and coefficients appearing in the
monomial-to-Hermite inversion. For \(\alpha\in\N^r\), define
\[
\mathcal A(\alpha)
:=
\Bigl\{
\lambda\in\N^r:\ \alpha-\lambda\in(2\N)^r
\Bigr\}.
\]
For \(m\ge0\), write
\[
\mathcal A_m(\alpha)
:=
\{\lambda\in\mathcal A(\alpha):\ |\lambda|=m\}.
\]
We also define
\[
\Lambda_m(\alpha)
:=
\{\lambda\in\N^r:\ |\lambda|=m,\ 0\le \lambda_j\le \alpha_j
\text{ for all }j\in[r]\}.
\]
To avoid conflict with the coherence notation, we denote the pair-count
multi-index by
\[
\nu(\alpha,\lambda)
:=
\frac{\alpha-\lambda}{2}.
\]
Thus, when \(\lambda\in\mathcal A(\alpha)\), the vector
\(
\nu=\nu(\alpha,\lambda)
\)
belongs to \(\N^r\). In this case, define
\[
\mathsf A_{\alpha,\lambda}
:=
\prod_{j=1}^r
\frac{\alpha_j!}{\lambda_j!\,2^{\nu_j}},
\]
and define
\[
\mathsf B_{\alpha,\lambda}
:=
\prod_{j=1}^r
\frac{\alpha_j!}{\lambda_j!\,2^{\nu_j}\nu_j!}.
\]
Equivalently, we have
\[
\mathsf B_{\alpha,\lambda}
=
\frac{\mathsf A_{\alpha,\lambda}}{\nu!},
\qquad
\nu!
:=
\prod_{j=1}^r \nu_j!.
\]

Applying the univariate Hermite inversion formula coordinatewise gives
\begin{equation}
\label{eq:monomial-hermite-inversion}
z^\alpha
=
\sum_{\lambda\in\mathcal A(\alpha)}
\mathsf B_{\alpha,\lambda}\,\He_\lambda(z).
\end{equation}
Consequently, every polynomial \(h\) of degree at most \(\ell\) has the unique
Hermite expansion
\begin{equation}
\label{eq:h-hermite-expansion}
h(z)
=
\sum_{\lambda\in\N^r:\,|\lambda|\le \ell}
a_\lambda\,\He_\lambda(z).
\end{equation}
We write its homogeneous Hermite components as
\[
h_q(z)
:=
\sum_{\lambda\in\N^r:\,|\lambda|=q}
a_\lambda\,\He_\lambda(z),
\qquad
0\le q\le \ell.
\]
Similarly, we write its low-degree truncations as
\[
h_{\le L}(z)
:=
\sum_{q=0}^{L}h_q(z),
\qquad
0\le L\le \ell.
\]
The corresponding latent Gaussian gradient covariance is
\[
\Sigma_L
:=
\E_{z\sim\gamma_r}
\big[
\nabla h_{\le L}(z)\nabla h_{\le L}(z)^\top
\big].
\]

It remains to relate the Hermite coefficients to the ordinary monomial
coefficients. Write
\[
h(z)=\sum_{|\alpha|\le \ell} b_\alpha z^\alpha,
\qquad
\Lambda_b
:=
\{\alpha\in\N^r:\ |\alpha|\le \ell,\ b_\alpha\neq0\}.
\]
Comparing this expansion with \eqref{eq:monomial-hermite-inversion} gives
\begin{equation}
\label{eq:hermite-coeff-from-monomials}
a_\lambda
=
\sum_{\substack{\alpha\in\Lambda_b\\ \lambda\in\mathcal A(\alpha)}}
b_\alpha\,\mathsf B_{\alpha,\lambda}.
\end{equation}

We will also use the following Gaussian \(L_2\) identities. Since \(U\) has
orthonormal rows, if \(G\sim N(0,I_d)\), then \(UG\sim N(0,I_r)\). Therefore,
\begin{equation}
\label{eq:fstar-h-gaussian-L2}
\|f^*\|_{L_2(\gamma_d)}
=
\|h\|_{L_2(\gamma_r)}.
\end{equation}
By Hermite orthogonality under \(\gamma_r\), we also have
\begin{equation}
\label{eq:h-gaussian-L2-hermite}
\|h\|_{L_2(\gamma_r)}^2
=
\sum_{\lambda\in\N^r:\,|\lambda|\le \ell}
a_\lambda^2\,\lambda!.
\end{equation}

Finally, we introduce the coefficient notation used to count pair-supports.
For \(\nu\in\N^r\) and \(T\subseteq[d]\), define
\[
C_\nu(T)
:=
[w^\nu]\prod_{i\in T}
\Bigl(
1+\sum_{j=1}^r \bigl(u_i^{[j]}\bigr)^2 w_j
\Bigr).
\]
For the full coordinate set, we write
\(
C_\nu:=C_\nu([d]).
\)
Here
\(
w=(w_1,\dots,w_r)
\)
is a vector of formal variables.

\paragraph{The surrogate and the target components.}

We now define the all-distinct surrogate. For \(q\in\{0,\dots,\ell\}\), set
\[
F_q(x)
:=
\sum_{\lambda\in\N^r:\,|\lambda|=q}
a_\lambda\,H_\lambda(x).
\]
For \(0\le L\le \ell\), define
\[
F_{\le L}(x)
:=
\sum_{q=0}^{L}F_q(x),
\qquad
F(x):=F_{\le \ell}(x).
\]
Thus the full surrogate is
\[
F(x)
=
\sum_{\lambda\in\N^r:\,|\lambda|\le \ell}
a_\lambda\,H_\lambda(x).
\]

For the multilinearized target, write
\[
P:=\mathcal H(f^*).
\]
We denote its Walsh-degree components by
\[
P_q:=[P]_{\deg q},
\qquad
P_{\le L}:=\sum_{q=0}^{L}P_q,
\qquad
0\le L\le \ell.
\]
The associated hypercube population matrix is
\[
M_{\le L}
:=
\E_{x\sim\tau_d}
\big[
\nabla P_{\le L}(x)\nabla P_{\le L}(x)^\top
\big].
\]
Unless a measure is explicitly displayed, all \(L_2\)-norms in the hypercube
part of the proof are taken with respect to \(x\sim\tau_d\).

\paragraph{Proof outline.}

The proof proceeds in six steps. Step 1 approximates the multilinearized target
\(P\) by the all-distinct surrogate \(F\). Step 2 records the structural
properties of the layers \(H_\lambda\). Step 3 identifies the latent Gaussian
covariance at each Hermite degree. Step 4 computes the hypercube gradient
covariance of the surrogate \(F_q\). Step 5 transfers this comparison from
\(F_q\) to the true Walsh component \(P_q\). Step 6 sums over degrees
\(q\le L\) and obtains the covariance transfer, which completes the proof for Lemma~\ref{lem:M_gaussian_latent}.

\paragraph{Step 1: reduction to all-distinct layers.}

We begin with a degree-by-degree reduction for monomials. The proof appears in Appendix~\ref{proof:degree-m-reduction}.

\begin{lemma}[Degree-\(m\) reduction]
\label{lem:degree-m-reduction}
Let \(\alpha\in\N^r\) satisfy \(|\alpha|\le\ell\), and let
\(m\le|\alpha|\) with \(|\alpha|-m\in2\N\). Then
\[
\Biggl\|
[\mathcal H(z^\alpha)]_{\deg m}
-
\sum_{\lambda\in\mathcal A_m(\alpha)}
\mathsf A_{\alpha,\lambda}\,
C_{\nu(\alpha,\lambda)}\,
H_\lambda
\Biggr\|_{L_2}
\le
C_\ell\Delta_U.
\]
\end{lemma}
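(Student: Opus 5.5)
We expand $z^\alpha=\prod_{j=1}^r\bigl(\sum_{i=1}^d u^{[j]}_i x_i\bigr)^{\alpha_j}$ as a sum over \emph{slot assignments}. There are $|\alpha|$ slots, $\alpha_j$ of them labelled by $j$, and each slot receives a coordinate $i\in[d]$ with weight $u^{[j]}_i$. Applying $\mathcal H$ replaces each coordinate $i$ used $k_i$ times by $x_i^{k_i\bmod 2}$. Hence $[\mathcal H(z^\alpha)]_{\deg m}$ collects exactly those assignments in which precisely $m$ coordinates have odd multiplicity. We partition these assignments by their multiplicity pattern into three classes.

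\emph{Main class.} Every odd coordinate has multiplicity $1$, every even coordinate has multiplicity exactly $2$, and both slots of each doubled coordinate carry the same label $j$. \emph{Mixed-pair class.} As in the main class, except that some doubled coordinate carries two different labels $j\ne j'$. \emph{Collision class.} Some coordinate has multiplicity at least $3$.

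For the main class, fix $\lambda\in\mathcal A_m(\alpha)$ and $\nu=\nu(\alpha,\lambda)$. Splitting the $\alpha_j$ slots into $\lambda_j$ singletons and $\nu_j$ ordered-unlabelled pairs gives the combinatorial factor $\alpha_j!/(\lambda_j!\,2^{\nu_j}\nu_j!)$. The singletons, ranging over distinct coordinates, produce exactly $H_\lambda$. The pairs, placed on a set of distinct coordinates disjoint from the singleton support $T$, produce $\nu!\,C_\nu([d]\setminus T)$. The product of these factors is $\mathsf A_{\alpha,\lambda}$ times a coefficient that still depends on $T$.

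The first error term is the cost of replacing $C_\nu([d]\setminus T)$ by $C_\nu=C_\nu([d])$. Their difference is a sum of terms each containing at least one factor $q_i$ with $i\in T$. By Walsh orthonormality, the $L_2$ norm of the resulting multilinear degree-$m$ polynomial is the $\ell_2$ norm of its coefficients. I would bound this by Cauchy--Schwarz, using $\sum_i q_i=r$, $\max_i q_i=r\mu(U)/d$ and $\sum_i q_i^2=\rho$. This should give a bound of order $\max_i q_i\lesssim\Delta_U$.

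For the mixed-pair class, a doubled coordinate with labels $j\ne j'$ contributes $\sum_{i\notin T'}u^{[j]}_iu^{[j']}_i$, where $T'$ is the set of excluded coordinates. By orthonormality of the rows of $U$, the unrestricted sum $\sum_{i=1}^d u^{[j]}_iu^{[j']}_i$ vanishes. So this contribution equals minus a sum over the excluded set, which again carries a factor $q_i$ on a coordinate that is already used, and is controlled exactly like the first error term.

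For the collision class, a coordinate $i$ of multiplicity $k\ge3$ contributes a weight of size at most $q_i^{k/2}$. If $i$ is a surviving (odd) coordinate, the coefficient $\ell_2$ norm picks up $\bigl(\sum_i q_i^{3}\bigr)^{1/2}\le(\max_i q_i)\,r^{1/2}$. If $i$ is even, it picks up $\sum_i q_i^2=\rho$. In both cases the contribution is $\lesssim\Delta_U$. All remaining indices are handled by the same Walsh-orthonormality computation, with finitely many patterns whose number depends only on $\ell$; this finite count is what the constant $C_\ell$ absorbs.

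The main obstacle is obtaining an error that is \emph{linear} in $\Delta_U$ rather than $\sqrt{\Delta_U}$. A naive Cauchy--Schwarz estimate only gives $(\sum_i q_i^2)^{1/2}=\sqrt\rho$. To avoid this, each singleton-type collision should be bounded through $\sum_i q_i^3\le(\max_i q_i)^2\sum_i q_i$, and each pair-type collision through $\rho$ directly. The remaining task is to organise the inclusion--exclusion over disjointness constraints carefully, so that every error term is of one of these two types.
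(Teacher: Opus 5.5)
Your proposal is correct and follows essentially the paper's argument: the main-class (principal) assignments give $\mathsf A_{\alpha,\lambda}\,C_{\nu}(S^c)\,h_{\lambda,S}$, and replacing $C_\nu(S^c)$ by $C_\nu$ costs $O_\ell(\max_i q_i)$. Your mixed-pair and collision classes are exactly the paper's three defect types: extra mass on a coordinate of $S$, multiplicity $\ge 4$ in $S^c$, and a mixed-label pair, which row orthogonality reduces to $\sum_{i\in T}q_i$. The only difference is cosmetic: you measure an odd-coordinate collision through the $\ell_2$ norm $(\sum_i q_i^3)^{1/2}$ of the colliding block's weight, while the paper fixes a distinguished singleton hit on each coordinate of $S$, bounds the extra block pointwise by $q_i$, and controls the singleton part via $\sum_S K_{\sigma,S}^2\le C_\ell$; both give an error linear in $\max_i q_i\le\Delta_U$.
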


Summing over the finitely many admissible Walsh degrees gives the corresponding
global reduction.

\begin{corollary}[Monomial reduction to all-distinct layers]
\label{cor:monomial-to-H}
For every \(\alpha\in\N^r\) with \(|\alpha|\le\ell\), we have
\[
\left\|
\mathcal H(z^\alpha)
-
\sum_{\lambda\in\mathcal A(\alpha)}
\mathsf A_{\alpha,\lambda}\,
C_{\nu(\alpha,\lambda)}\,
H_\lambda
\right\|_{L_2}
\le
C_\ell\Delta_U.
\]
\end{corollary}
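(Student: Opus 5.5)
The corollary follows by summing Lemma~\ref{lem:degree-m-reduction} over the admissible Walsh degrees and checking that the terms match. I will decompose $\mathcal H(z^\alpha)$ into its homogeneous Walsh components. Since $z_j(x)=\langle u^{[j]},x\rangle$ is linear and $z^\alpha$ has total degree $|\alpha|$, expanding the product and reducing modulo $x_i^2=1$ lowers the degree only in steps of two. So
\[
\mathcal H(z^\alpha)=\sum_{\substack{0\le m\le|\alpha|\\ |\alpha|-m\in 2\N}}[\mathcal H(z^\alpha)]_{\deg m},
\]
and all other Walsh components vanish.

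Next I will split the target sum $\sum_{\lambda\in\mathcal A(\alpha)}\mathsf A_{\alpha,\lambda}C_{\nu(\alpha,\lambda)}H_\lambda$ according to $m=|\lambda|$. Any $\lambda\in\mathcal A(\alpha)$ satisfies $\lambda\in\N^r$ and $\alpha-\lambda\in(2\N)^r$. Hence $m=|\lambda|\le|\alpha|$ and $|\alpha|-m\in2\N$, so
\[
\mathcal A(\alpha)=\bigsqcup_{m}\mathcal A_m(\alpha),
\]
with $m$ running over exactly the degrees appearing in the decomposition above. Each $H_\lambda$ with $|\lambda|=m$ is, by definition, a homogeneous degree-$m$ Walsh polynomial. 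Therefore the difference in the corollary equals
\[
\sum_m\Bigl([\mathcal H(z^\alpha)]_{\deg m}-\sum_{\lambda\in\mathcal A_m(\alpha)}\mathsf A_{\alpha,\lambda}C_{\nu(\alpha,\lambda)}H_\lambda\Bigr).
\]

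To bound this, I will apply the triangle inequality and Lemma~\ref{lem:degree-m-reduction} to each summand, which gives $C_\ell\Delta_U$ per degree. The number of admissible $m$ is at most $\lfloor|\alpha|/2\rfloor+1\le \ell+1$. The total bound is thus $(\ell+1)C_\ell\Delta_U$, and the factor $\ell+1$ is absorbed into a redefined constant $C_\ell$ that still depends only on $\ell$.

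Since the summands have distinct Walsh degrees, they are orthogonal in $L_2(\tau_d)$, so one could instead use Pythagoras and get $\sqrt{\ell+1}$; the triangle inequality is enough, though. There is no real obstacle here. The only points to check are that the index decomposition of $\mathcal A(\alpha)$ matches the parity and range conditions required by Lemma~\ref{lem:degree-m-reduction}, and that no degree $m$ falls outside that lemma's hypotheses.
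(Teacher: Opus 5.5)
Your proposal is correct and matches the paper's proof: you decompose $\mathcal H(z^\alpha)$ into its Walsh layers of degree $m\equiv|\alpha| \pmod 2$, apply Lemma~\ref{lem:degree-m-reduction} in each degree, and absorb the $O(\ell)$ number of degrees into $C_\ell$. Your extra step of splitting $\mathcal A(\alpha)$ into the sets $\mathcal A_m(\alpha)$, some of which may be empty, spells out an index-matching check that the paper leaves implicit.
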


\begin{proof}[Proof of Corollary~\ref{cor:monomial-to-H}]
We decompose \(\mathcal H(z^\alpha)\) into homogeneous Walsh layers:
\[
\mathcal H(z^\alpha)
=
\sum_{\substack{0\le m\le |\alpha|\\ m\equiv |\alpha|\!\!\!\pmod 2}}
[\mathcal H(z^\alpha)]_{\deg m}.
\]
Applying Lemma~\ref{lem:degree-m-reduction} to each admissible \(m\) gives an
\(O_\ell(\Delta_U)\) error in each admissible Walsh degree. Since
\(|\alpha|\le\ell\), there are only \(C_\ell\) admissible degrees. Therefore
the total error is bounded by
\(
C_\ell\Delta_U.
\)
This proves the claim.
\end{proof}

Next, the following lemma compares the combinatorial coefficients \(C_\nu\) with the Gaussian
coefficients \(\nu!^{-1}\). The proof appears in Appendix~\ref{proof:Cmu-hermite}.

\begin{lemma}[Pair-support coefficients]
\label{lem:Cnu-hermite}
For every \(\nu\in\N^r\) with \(|\nu|\le\ell\), we have
\[
\left|C_\nu-\frac1{\nu!}\right|
\le
C_\ell\Delta_U.
\]
\end{lemma}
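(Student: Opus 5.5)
We need to prove $|C_\nu - 1/\nu!| \le C_\ell \Delta_U$, where $C_\nu=[w^\nu]\prod_{i\in[d]}(1+\sum_j (u_i^{[j]})^2 w_j)$. The plan is to expand the product into a sum over injective assignments of pairs to coordinates, and compare it with the multinomial expansion of $\prod_j(\sum_i (u_i^{[j]})^2)^{\nu_j}/\nu_j! = 1/\nu!$, which uses orthonormality $\sum_i (u_i^{[j]})^2=1$. Write $a_{ij}:=(u_i^{[j]})^2\ge 0$ and $k:=|\nu|\le\ell$. Extracting the coefficient of $w^\nu$ from the product means choosing a set $T$ of $k$ distinct coordinates and a labeling $c\colon T\to[r]$ with exactly $\nu_j$ coordinates labelled $j$. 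This gives
\[
C_\nu=\sum_{\substack{T\subseteq[d],\,c\colon T\to[r]\\ |c^{-1}(j)|=\nu_j}}\ \prod_{i\in T}a_{i,c(i)}
=\frac{1}{\nu!}\sum_{\substack{(i_1,\dots,i_k)\ \text{distinct}}}\ \prod_{s=1}^k a_{i_s,j_s},
\]
where $(j_1,\dots,j_k)$ is a fixed word containing $\nu_j$ copies of $j$. The factor $1/\nu!$ accounts for the orderings within each color class.

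Next we compare with the unrestricted sum. Dropping the distinctness constraint gives $\frac1{\nu!}\sum_{i_1,\dots,i_k}\prod_s a_{i_s,j_s}=\frac1{\nu!}\prod_s\bigl(\sum_i a_{i,j_s}\bigr)=\frac1{\nu!}$, since each row of $U$ has unit norm. Thus $1/\nu!-C_\nu$ equals $\frac1{\nu!}$ times the sum over tuples with at least one collision $i_s=i_t$ for some $s<t$. All terms are nonnegative, so $0\le 1/\nu!-C_\nu$, and a union bound over the $\binom{k}{2}\le \ell^2$ pairs $(s,t)$ bounds this difference by
\[
\frac{1}{\nu!}\sum_{s<t}\ \sum_{i}a_{i,j_s}a_{i,j_t}\prod_{u\ne s,t}\Bigl(\sum_{i'}a_{i',j_u}\Bigr)
\le \sum_{s<t}\sum_i q_i^2=\binom{k}{2}\rho\le \ell^2\Delta_U .
\]
Here we used $a_{i,j}\le q_i$, the unit row sums, and \eqref{eq:rho-bound-coherence}. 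This yields $|C_\nu-1/\nu!|\le C_\ell\Delta_U$ with $C_\ell=\ell^2$. The constant depends only on $\ell$, not on $r$, because each collision pair contributes at most $\sum_i q_i^2$ regardless of its colors.

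The main care point is the bookkeeping identity in the first step: matching the coefficient extraction from the product over distinct coordinates with $\frac1{\nu!}$ times an ordered sum over distinct tuples. Once that is set up, the collision bound is a routine union bound. The case $\nu=0$ is trivial, since $C_0=1=1/0!$.
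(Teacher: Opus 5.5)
Your proof is correct, and it takes a different route from the paper's.

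\textbf{The paper's route.} It works with generating functions. It factors
\[
\prod_i\bigl(1+L_i(w)\bigr)=\exp\Bigl(\sum_i L_i(w)\Bigr)\exp\bigl(R(w)\bigr),
\]
where $L_i(w)=\sum_j (u_i^{[j]})^2w_j$ and $R(w)=\sum_i\bigl(\log(1+L_i(w))-L_i(w)\bigr)$. Unit row norms give $\sum_i L_i(w)=\sum_j w_j$, so the first factor contributes exactly $1/\nu!$. Each coefficient of $R$ up to degree $\ell$ is bounded by $C_\ell\rho$. The paper then splits into two cases:
\begin{itemize}
\item if $\Delta_U<1$, it uses $\rho<1$ to bound all coefficients of $\exp(R)-1$ up to degree $\ell$ by $O_\ell(\rho)$;
\item if $\Delta_U\ge 1$, it uses the coefficientwise inequality $1+L_i\le e^{L_i}$ to get $0\le C_\nu\le 1/\nu!$.
\end{itemize}

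\textbf{Your route.} You write $C_\nu$ as $\frac1{\nu!}$ times a sum over injective tuples and compare it with the unrestricted sum, which factorizes to $1$. This is the same collision-counting device the paper uses later for Proposition~\ref{prop:H-orth}, applied here instead of the exp/log factorization.

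\textbf{What each buys.} Your argument gives a one-sided, explicit bound that holds in every regime:
\[
0\le \frac1{\nu!}-C_\nu\le \binom{|\nu|}{2}\rho\le \binom{\ell}{2}\Delta_U.
\]
It needs no case split, no smallness of $\rho$ to control higher powers of $R$, and has a constant visibly independent of $r$. The paper's route packages all collision corrections into the single series $R$. That form is convenient if one wants higher-order corrections in the collision sums, but for this first-order estimate your argument is shorter and sharper.
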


The next theorem combines Corollary~\ref{cor:monomial-to-H}, Lemma~\ref{lem:Cnu-hermite}, and
the Hermite expansion of \(h\) then we identify the correct surrogate coefficients. The proof appears in Appendix~\ref{proof:hypercube_linear_clean}.

\begin{theorem}[Hypercube multilinearization in Hermite form]
\label{thm:hypercube_linear_clean}
The following bound holds:
\[
\|P-F\|_{L_2(\tau_d)}
\le
C_\ell
\|f^*\|_{L_2(\gamma_d)}
\Theta_U.
\]
\end{theorem}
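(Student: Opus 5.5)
The proof is a coefficient-level comparison. We expand $P=\mathcal H(f^*)$ through the monomial expansion of $h$, reduce each multilinearized monomial to all-distinct layers, and then recognize the resulting coefficients as Hermite coefficients.

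\textbf{Step 1: monomial expansion.} Write $h(z)=\sum_{\alpha\in\Lambda_b}b_\alpha z^\alpha$. By linearity of $\mathcal H$ we have $P=\sum_{\alpha}b_\alpha\,\mathcal H(z^\alpha)$. Applying Corollary~\ref{cor:monomial-to-H} termwise gives
\[
\Bigl\|P-\sum_{\alpha}b_\alpha\sum_{\lambda\in\mathcal A(\alpha)}\mathsf A_{\alpha,\lambda}C_{\nu(\alpha,\lambda)}H_\lambda\Bigr\|_{L_2}\le C_\ell\Delta_U\sum_\alpha|b_\alpha|.
\]

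\textbf{Step 2: replace $C_\nu$ by $1/\nu!$.} By Lemma~\ref{lem:Cnu-hermite}, each $C_{\nu}$ may be replaced by $1/\nu!$. Since $\mathsf A_{\alpha,\lambda}/\nu!=\mathsf B_{\alpha,\lambda}$, this turns the double sum into $\sum_\lambda\bigl(\sum_{\alpha}b_\alpha\mathsf B_{\alpha,\lambda}\bigr)H_\lambda=\sum_\lambda a_\lambda H_\lambda=F$, using \eqref{eq:hermite-coeff-from-monomials}. The cost of the replacement is at most $C_\ell\Delta_U\sum_\alpha|b_\alpha|\sum_{\lambda}\|H_\lambda\|_{L_2}$. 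To control it I need $\|H_\lambda\|_{L_2}\le C_\ell$. This follows because $H_\lambda$ is the degree-$|\lambda|$ Walsh layer of $\mathcal H(z^\lambda)$, so by orthogonality $\|H_\lambda\|_{L_2}\le\|z^\lambda\|_{L_2(\tau_d)}$. The latter is bounded via hypercontractivity \eqref{eqn:hypercontrac} and Hölder, since each $z_j$ has unit $L_2(\tau_d)$ norm. Combining with Step 1, we get $\|P-F\|_{L_2}\le C_\ell\Delta_U\,N_{r,\ell}\sum_{\alpha}|b_\alpha|$.

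\textbf{Step 3: bound $\sum|b_\alpha|$ by $\|f^*\|_{L_2(\gamma_d)}$.} This is the main obstacle. The error must scale with $\|h\|_{L_2(\gamma_r)}=\|f^*\|_{L_2(\gamma_d)}$ (by \eqref{eq:fstar-h-gaussian-L2}) rather than with the monomial coefficients. I will invert the Hermite expansion: $z^\alpha$ expands into Hermite polynomials and conversely $\He_\lambda$ is a finite combination of monomials $z^\alpha$ with $|\alpha|\le|\lambda|$ and coefficients bounded by $C_\ell$ (explicit from the formula for $\He_n$). Hence
\[
|b_\alpha|\le C_\ell\sum_{|\lambda|\le\ell}|a_\lambda|\le C_\ell N_{r,\ell}^{1/2}\Bigl(\sum_\lambda a_\lambda^2\lambda!\Bigr)^{1/2}=C_\ell N_{r,\ell}^{1/2}\|h\|_{L_2(\gamma_r)},
\]
by Cauchy--Schwarz, $\lambda!\ge1$, and \eqref{eq:h-gaussian-L2-hermite}. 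Summing over at most $N_{r,\ell}$ indices $\alpha$ gives $\sum_\alpha|b_\alpha|\le C_\ell N_{r,\ell}^{3/2}\|f^*\|_{L_2(\gamma_d)}$.

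\textbf{Step 4: bookkeeping.} The combined bound is $C_\ell\Delta_U N_{r,\ell}^{5/2}\|f^*\|_{L_2(\gamma_d)}$, which is one factor $N_{r,\ell}^{1/2}$ more than the claimed $\Theta_U=N_{r,\ell}^2\Delta_U$. To remove it I would keep the sums coupled instead of bounding them separately. One option is to write the error as a sum over $\lambda$ of $\bigl(\sum_\alpha b_\alpha\mathsf A_{\alpha,\lambda}(C_\nu-1/\nu!)\bigr)H_\lambda$, apply Cauchy--Schwarz in $\lambda$ only once, and use that each $\lambda$ receives contributions from at most $N_{r,\ell}$ choices of $\alpha$. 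If the sharper constant still does not come out, the envelope $N_{r,\ell}$ is crude anyway, and the bound with an extra polynomial factor in $N_{r,\ell}$ suffices for every downstream use with $r,\ell$ fixed.
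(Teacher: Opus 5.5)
Your route is the paper's: expand $h$ in monomials, apply Corollary~\ref{cor:monomial-to-H} termwise, replace $C_\nu$ by $1/\nu!$ via Lemma~\ref{lem:Cnu-hermite}, recognize $a_\lambda$ through $\mathsf B_{\alpha,\lambda}=\mathsf A_{\alpha,\lambda}/\nu!$, and bound layers by $\|H_\lambda\|_{L_2}\le\|z^\lambda\|_{L_2}\le C_\ell$. As written, however, it does not prove the stated inequality. You end with $C_\ell N_{r,\ell}^{5/2}\Delta_U\|f^*\|_{L_2(\gamma_d)}$. The theorem requires $C_\ell$ to be independent of $r$, with all $r$-dependence carried by $\Theta_U=N_{r,\ell}^2\Delta_U$. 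Your fallback (tolerate an extra power of $N_{r,\ell}$ because $r$ is fixed downstream) therefore proves a weaker statement. Your Step 4 remedy, as stated, counts in the unhelpful direction: how many $\alpha$ feed a given $\lambda$. The excess comes from two crude counts, and repairing either one suffices.

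The paper repairs Step 3. Instead of bounding each $|b_\alpha|$ by the full $C_\ell\sum_\lambda|a_\lambda|$ and multiplying by the number of $\alpha$'s, exchange the order of summation:
\[
\sum_{\alpha}|b_\alpha|
\le
\sum_{|\lambda|\le\ell}|a_\lambda|\sum_{\alpha}\bigl|[z^\alpha]\He_\lambda\bigr|
\le
C_\ell\sum_{|\lambda|\le\ell}|a_\lambda|
\le
C_\ell N_{r,\ell}^{1/2}\|h\|_{L_2(\gamma_r)} .
\]
The middle step holds because the monomial coefficients of $\He_\lambda=\prod_j\He_{\lambda_j}(z_j)$ have $\ell_1$-norm equal to the product of the univariate $\ell_1$-norms. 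Since $\He_0=1$, $\He_1(t)=t$ and $|\lambda|\le\ell$, this product is at most $C_\ell$, independently of $r$. With this bound, Step 1 costs $C_\ell N_{r,\ell}^{1/2}\Delta_U\|h\|_{L_2(\gamma_r)}$. Step 2, even with your sum over all $N_{r,\ell}$ layers, costs $C_\ell N_{r,\ell}^{3/2}\Delta_U\|h\|_{L_2(\gamma_r)}\le C_\ell\Theta_U\|f^*\|_{L_2(\gamma_d)}$.

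Alternatively, repair Step 2. Each $\alpha$ feeds only $|\mathcal A(\alpha)|=\prod_j(\lfloor\alpha_j/2\rfloor+1)\le 2^{\ell/2}$ layers, so the true Step 2 cost is $C_\ell\Delta_U\sum_\alpha|b_\alpha|$. Then even your Step 3 bound $\sum_\alpha|b_\alpha|\le C_\ell N_{r,\ell}^{3/2}\|h\|_{L_2(\gamma_r)}$ yields $N_{r,\ell}^{3/2}\Delta_U\le\Theta_U$.
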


Since
\[
P-F=\sum_{q=0}^{\ell}(P_q-F_q),
\]
and since the summands lie in distinct Walsh degrees, orthogonality gives
\begin{equation}
\label{eq:Pq-Fq-orth}
\sum_{q=0}^{\ell}\|P_q-F_q\|_{L_2}^2
=
\|P-F\|_{L_2}^2.
\end{equation}
Consequently, Theorem~\ref{thm:hypercube_linear_clean} implies
\begin{equation}
\label{eq:Pq-Fq-bound}
\sum_{q=0}^{\ell}\|P_q-F_q\|_{L_2}^2
\le
C_\ell
\|f^*\|_{L_2(\gamma_d)}^2
\Theta_U^2.
\end{equation}
In particular, for every \(0\le q\le\ell\), we have
\begin{equation}
\label{eq:Pq-Fq-each}
\|P_q-F_q\|_{L_2}
\le
C_\ell
\|f^*\|_{L_2(\gamma_d)}
\Theta_U.
\end{equation}
The same orthogonality also yields, for every cutoff \(0\le L\le\ell\),
\[
\|P_{\le L}-F_{\le L}\|_{L_2}^2
=
\sum_{q=0}^{L}\|P_q-F_q\|_{L_2}^2
\le
C_\ell
\|f^*\|_{L_2(\gamma_d)}^2
\Theta_U^2.
\]

\paragraph{Step 2: structural properties of the all-distinct layers.}

We next record the structural facts about \(H_\lambda\) used in the covariance
comparison. The proof for Proposition~\ref{prop:H-orth} and \ref{prop:H-contraction} appears in Appendix~\ref{proof:H-orth} and \ref{proof:H-contraction}, respectively.

\begin{proposition}[Approximate orthogonality]
\label{prop:H-orth}
Let \(\alpha,\beta\in\N^r\) satisfy \(|\alpha|=|\beta|=m\le\ell\). Then
\[
\Bigl|
\langle H_\alpha,H_\beta\rangle
-
\mathbf 1_{\{\alpha=\beta\}}\,\alpha!
\Bigr|
\le
C_\ell\Delta_U.
\]
\end{proposition}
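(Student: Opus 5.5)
We expand both layers explicitly in the Walsh basis and compute the inner product coefficient by coefficient. By definition, $H_\alpha$ is the degree-$m$ part of $\mathcal H(z^\alpha)$, where $z^\alpha=\prod_j\langle u^{[j]},x\rangle^{\alpha_j}$. We view $z^\alpha$ as a product of $m$ linear forms, $z^\alpha=\prod_{s=1}^m \langle v_s,x\rangle$, where $v_s=u^{[j]}$ for exactly $\alpha_j$ of the slots $s$. Expanding,
\[
z^\alpha=\sum_{i_1,\dots,i_m}\prod_s (v_s)_{i_s}\,x_{i_1}\cdots x_{i_m}.
\]
A term has Walsh degree exactly $m$ only if the indices $i_1,\dots,i_m$ are pairwise distinct. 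Hence
\[
H_\alpha(x)=\sum_{|S|=m}\Big(\sum_{\text{bijections }\sigma:[m]\to S}\prod_s (v_s)_{\sigma(s)}\Big)x_S .
\]
Since $\mathcal{H}$ only lowers degree when an index repeats, this identity is exact.

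Write $w_1,\dots,w_m$ for the analogous vectors attached to $\beta$. Parseval on the hypercube gives
\[
\langle H_\alpha,H_\beta\rangle=\sum_{|S|=m}\sum_{\sigma,\tau}\prod_s (v_s)_{\sigma(s)}\prod_t (w_t)_{\tau(t)} .
\]
We reparametrize by distinct tuples $(i_1,\dots,i_m)$ and a permutation $\pi\in S_m$ matching the $w$-slots to the $v$-slots. This yields
\[
\langle H_\alpha,H_\beta\rangle=\sum_{\pi\in S_m}\ \sum_{i_1,\dots,i_m\ \text{distinct}}\ \prod_{s}(v_s)_{i_s}(w_{\pi(s)})_{i_s}.
\]
Without the distinctness constraint, the inner sum factors as $\prod_s\langle v_s,w_{\pi(s)}\rangle$. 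By orthonormality of the $u^{[j]}$, this product equals $1$ if $\pi$ matches every $u^{[j]}$-slot of $v$ to a $u^{[j]}$-slot of $w$, and $0$ otherwise. Such matchings exist only if $\alpha=\beta$, and then their number is exactly $\alpha!=\prod_j\alpha_j!$. So the unconstrained sum is precisely $\mathbf 1_{\{\alpha=\beta\}}\alpha!$.

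The remaining and main step is to bound, for each $\pi$, the error from dropping distinctness. We use inclusion–exclusion over coincidences. The difference between the constrained and unconstrained sums is a signed combination, with $C_\ell$ terms, of sums in which at least one pair $i_s=i_{s'}$ is forced. Consider a term where the indices are grouped into blocks of the set partition $B$, with at least one block of size $\ge2$. The term factors as
\[
\prod_{\text{blocks }b}\ \sum_i \prod_{s\in b}(v_s)_i(w_{\pi(s)})_i .
\]
Singleton blocks give inner products of unit vectors, which have absolute value at most $1$. A block of size $k\ge2$ involves $2k\ge4$ entries at the same coordinate $i$. We bound it using $|(u^{[j]})_i|\le\sqrt{q_i}$ and $q_i\le1$:
\[
\sum_i\prod(\cdot)\le\sum_i q_i^{k}\le\sum_i q_i^2=\rho .
\]
Thus every such term is at most $\rho$ in absolute value. (If there are several nontrivial blocks the bound only improves, since $\rho\le r$ and each additional block contributes a further factor of at most $r$, which is absorbed into constants as the paper allows.)

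Summing over the $m!\le \ell!$ permutations and the $C_\ell$ partitions, and using $\rho\le\Delta_U$ from \eqref{eq:rho-bound-coherence}, we obtain
\[
\bigl|\langle H_\alpha,H_\beta\rangle-\mathbf 1_{\{\alpha=\beta\}}\alpha!\bigr|\le C_\ell\Delta_U .
\]
The care needed is in the Möbius/inclusion–exclusion bookkeeping, to make sure the constants depend only on $\ell$. One point in particular must be checked: blocks of size $k\ge2$ must be bounded by $\sum_i q_i^2$ rather than by $\sum_i q_i$. This holds because each block contains at least four vector entries from the $u$'s, so each block is controlled by $\sum_i q_i^2$.
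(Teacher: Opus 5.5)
Your proposal follows the paper's proof. You use the same all-distinct expansion of $H_\alpha$, Parseval, and the same evaluation of the unconstrained sum as $\mathbf 1_{\{\alpha=\beta\}}\alpha!$ via orthonormality. You also use the same $\sum_i q_i^2=\rho\le\Delta_U$ control of collisions.

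The only real difference is bookkeeping. The paper takes absolute values first and uses the union bound $\mathbf 1_{\{\text{not all distinct}\}}\le\sum_{p<q}\mathbf 1_{\{i_p=i_q\}}$. Each error term then has a single forced pair, bounded by $q_i^2$, while every other position is summed freely and contributes at most $1$ by Cauchy--Schwarz. This gives $m!\binom{m}{2}\rho$ with no M\"obius coefficients and no multi-block case at all.

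Your parenthetical about several nontrivial blocks needs correcting. Bounding each additional block by $r$ makes the constant depend on $r$, whereas the paper's $C_\ell$ is independent of $r$. It also contradicts your claim that the bound ``only improves.'' The fix is easy. Bound every block other than one designated block by $\sum_i |(v_{s_0})_i(w_{\pi(s_0)})_i|\le 1$ for some $s_0$ in the block, using that all other entries of the unit vectors have absolute value at most $1$. Then each partition term is at most $\rho$.
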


\begin{proposition}[Ambient-direction contraction]
\label{prop:H-contraction}
Let \(\lambda\in\N^r\) with \(1\le|\lambda|=q\le\ell\). Then, for every
\(s\in[r]\), we have
\[
(u^{[s]})^\top \nabla H_\lambda
=
\lambda_s H_{\lambda-e_s}+R_{s,\lambda},
\qquad
\|R_{s,\lambda}\|_{L_2}
\le
C_\ell\Delta_U.
\]
Here \(\lambda_sH_{\lambda-e_s}\) is interpreted as \(0\) when
\(\lambda_s=0\). Moreover, for every unit vector
\(v\in\mathcal U^\perp\), we have
\[
v^\top\nabla H_\lambda
=
R_{v,\lambda},
\qquad
\|R_{v,\lambda}\|_{L_2}
\le
C_\ell\Delta_U.
\]
In particular, the second estimate applies to every vector
\(u^{[s]}\), \(s\in[d]\setminus[r]\), in the chosen orthonormal completion.
\end{proposition}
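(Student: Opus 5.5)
The plan is to work with an explicit formula for $H_\lambda$ as a sum over injective index assignments. Then $v^\top\nabla H_\lambda$ splits into a main term, obtained by summing over all coordinates, and a collision correction. Set $q=|\lambda|$ and fix a labelling $j\colon[q]\to[r]$ of the $q$ ``slots'' in which label $j'$ occurs exactly $\lambda_{j'}$ times. Expanding $z^\lambda=\prod_{k=1}^q\langle u^{[j(k)]},x\rangle$ gives a sum over tuples $(i_1,\dots,i_q)\in[d]^q$ of $\prod_k u^{[j(k)]}_{i_k}x_{i_k}$.

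\textbf{Step 1: formula for $H_\lambda$.} After multilinearization $\mathcal H$, a tuple with a repeated index has Walsh degree strictly less than $q$. A tuple with all indices distinct is exactly a degree-$q$ Walsh monomial. Hence
\[
H_\lambda(x)=\sum_{(i_1,\dots,i_q)\ \text{injective}}\ \prod_{k=1}^q u^{[j(k)]}_{i_k}x_{i_k}.
\]
The same formula holds for $H_{\lambda-e_s}$, with one slot of label $s$ removed.

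\textbf{Step 2: differentiate and split.} Since $H_\lambda$ is multilinear, $\partial_m$ selects the slot $k$ with $i_k=m$. This gives
\[
v^\top\nabla H_\lambda=\sum_{k=1}^q\ \sum_{m}\ \sum_{\substack{(i_{k'})_{k'\neq k}\ \text{injective}\\ m\notin\{i_{k'}\}}} v_m u^{[j(k)]}_m\prod_{k'\neq k}u^{[j(k')]}_{i_{k'}}x_{i_{k'}}.
\]
Next I drop the constraint $m\notin\{i_{k'}\}$. The unconstrained sum factorizes as $\langle v,u^{[j(k)]}\rangle\, H_{\lambda-e_{j(k)}}$. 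For $v=u^{[s]}$ this factor is $\delta_{s,j(k)}$, and exactly $\lambda_s$ slots carry label $s$, so the main term is $\lambda_sH_{\lambda-e_s}$. For $v\in\mathcal U^\perp$ every inner product vanishes, so the main term is $0$.

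The remainder $R$ is $-1$ times the sum over pairs $k\neq k'$ of the terms with $m=i_{k'}$. Each such term is
\[
\sum_{\text{injective }(i_{k''})_{k''\ne k}} c^{(k,k')}_{i_{k'}}\,x_{i_{k'}}\prod_{k''\ne k,k'}u^{[j(k'')]}_{i_{k''}}x_{i_{k''}},
\qquad c^{(k,k')}_m:=v_m u^{[j(k)]}_m u^{[j(k')]}_m .
\]
There are at most $q^2\le \ell^2$ such pairs.

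\textbf{Step 3: bound the remainder (main obstacle).} Each term in the remainder is a homogeneous multilinear polynomial of degree $q-1$. Its $L_2$ norm squared is therefore the sum, over sets $T$, of the squared coefficient of $x_T$. Each such coefficient is a sum over at most $(q-1)!$ orderings. Applying Cauchy--Schwarz over orderings and then enlarging the injective sums to unrestricted sums, the norm squared is at most
\[
C_\ell\Big(\sum_m (c_m^{(k,k')})^2\Big)\prod_{k''}\|u^{[j(k'')]}\|_2^2=C_\ell\sum_m v_m^2\big(u^{[j(k)]}_m\big)^2\big(u^{[j(k')]}_m\big)^2 .
\]
Using $(u^{[j]}_m)^2\le q_m\le r\mu(U)/d$ and $\sum_m v_m^2=1$, this is at most $C_\ell\,(r\mu(U)/d)^2\le C_\ell\Delta_U^2$. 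Summing over the $O_\ell(1)$ pairs by the triangle inequality yields $\|R\|_{L_2}\le C_\ell\Delta_U$.

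The delicate point is to handle the multiplicity of orderings correctly and not lose the $d$-dependence. I would present the coefficient bookkeeping carefully, mirroring the collision-counting argument used for Proposition~\ref{prop:H-orth}. The basis vectors $u^{[s]}$ with $s>r$ lie in $\mathcal U^\perp$, so they are covered by the second estimate.
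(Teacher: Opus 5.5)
Your proposal is correct. The first two steps coincide with the paper's proof: you use the same injective expansion of $H_\lambda$, the same slot-by-slot differentiation, and the same use of row orthonormality of $U$ to extract $\lambda_sH_{\lambda-e_s}$, or $0$ when $v\in\mathcal U^\perp$.

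Where you differ is the remainder bound. For each removed slot $p$, the paper keeps the remainder in grouped form: the coefficient of $H_{\lambda-e_{\sigma_p}}$ at $S$ times the correction factor $\sum_{t\in S}v_tu_t^{[\sigma_p]}$. For $v=u^{[s]}$ this factor is uniformly at most $|S|\max_i q_i$, and Parseval finishes. For $v\in\mathcal U^\perp$, however, $v$ may concentrate on a single coordinate, so the factor is only $O(\sqrt{\max_i q_i})$ on some sets $S$. The paper therefore applies Cauchy--Schwarz and needs a separate localization bound $\sum_{S\ni i}|c_S^{(p)}|^2\le C_\ell q_i$ (Claim~\ref{cl:c_s_p}), proved via Khintchine--H\"older norm bounds on reduced layers.

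You instead resolve the correction factor into the individual collision slot $k'$. The small factor $u^{[j(k')]}_m$ then stays attached to the coordinate $m$ inside the coefficient vector $c^{(k,k')}=v\odot u^{[j(k)]}\odot u^{[j(k')]}$. The crude bound ``injective sum of squares $\le$ unrestricted sum of squares'' then gives $\|R^{(k,k')}\|_{L_2}^2\le (q-1)!\,\|c^{(k,k')}\|_2^2\le (q-1)!\,(\max_m q_m)^2$ for every unit vector $v$ at once.

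Your route makes the localization automatic and treats active and inactive directions with one computation. It yields uniformity over unit vectors of $\mathcal U^\perp$ for free and needs no moment inequalities. The only cost is an $\ell$-dependent constant from the $q(q-1)$ pairs and the $(q-1)!$ orderings, which is immaterial here.
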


The following lemma gives a normalization comparison between the Gaussian and hypercube
\(L_2\)-norms of the ridge polynomial. The proof appears in Appendix~\ref{proof:gaussian-hypercube-L2-comparison}.

\begin{lemma}[Gaussian and hypercube \(L_2\)-norm comparison]
\label{lem:gaussian-hypercube-L2-comparison}
If \(\Theta_U\le1\), then
\[
\left|
\|f^*\|_{L_2(\tau_d)}^2
-
\|f^*\|_{L_2(\gamma_d)}^2
\right|
\le
C_\ell
\Theta_U
\|f^*\|_{L_2(\gamma_d)}^2.
\]
In particular, if \(\Theta_U\) is sufficiently small, then
\[
\|f^*\|_{L_2(\gamma_d)}^2
\asymp_\ell
\|f^*\|_{L_2(\tau_d)}^2.
\]
\end{lemma}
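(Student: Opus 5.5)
We compare both squared norms to the same quantity, $\sum_\lambda a_\lambda^2\lambda!$, using the surrogate $F=\sum_{|\lambda|\le\ell}a_\lambda H_\lambda$. On the Gaussian side, \eqref{eq:fstar-h-gaussian-L2} and \eqref{eq:h-gaussian-L2-hermite} give exactly
\[
\|f^*\|_{L_2(\gamma_d)}^2=\|h\|_{L_2(\gamma_r)}^2=\sum_{|\lambda|\le\ell}a_\lambda^2\lambda!=:A .
\]
On the hypercube side, $f^*=P$ as functions on $\mathbb H^d$, so $\|f^*\|_{L_2(\tau_d)}=\|P\|_{L_2(\tau_d)}$. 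It then suffices to show that $\|P\|_{L_2(\tau_d)}^2=A\,(1+O_\ell(\Theta_U))$ in absolute terms.

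\textbf{Step 1 (norm of the surrogate).} The layers $H_\lambda$ with different $|\lambda|$ lie in different Walsh degrees, so they are exactly orthogonal. Hence $\|F\|^2=\sum_q\|F_q\|^2$. Within a fixed degree $q$, Proposition~\ref{prop:H-orth} gives
\[
\|F_q\|^2=\sum_{|\lambda|=|\beta|=q}a_\lambda a_\beta\langle H_\lambda,H_\beta\rangle
=\sum_{|\lambda|=q}a_\lambda^2\lambda!+E_q,
\qquad
|E_q|\le C_\ell\Delta_U\Bigl(\sum_{|\lambda|=q}|a_\lambda|\Bigr)^2 .
\]
By Cauchy--Schwarz and $\lambda!\ge1$, we have $(\sum_\lambda|a_\lambda|)^2\le N_{r,\ell}\sum_\lambda a_\lambda^2\le N_{r,\ell}A$. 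Summing over $q$ yields
\[
\bigl|\|F\|^2-A\bigr|\le C_\ell N_{r,\ell}\Delta_U A\le C_\ell\Theta_U A .
\]
In particular $\|F\|\le C_\ell\sqrt A$, using $\Theta_U\le1$.

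\textbf{Step 2 (transfer to $P$).} Theorem~\ref{thm:hypercube_linear_clean} gives $\|P-F\|_{L_2(\tau_d)}\le C_\ell\sqrt A\,\Theta_U$. Then
\[
\bigl|\|P\|^2-\|F\|^2\bigr|\le\|P-F\|\bigl(\|P\|+\|F\|\bigr)
\le \|P-F\|\bigl(2\|F\|+\|P-F\|\bigr)\le C_\ell A(\Theta_U+\Theta_U^2)\le C_\ell\Theta_U A,
\]
again using $\Theta_U\le1$. Combining Steps 1 and 2 gives $\bigl|\|f^*\|_{L_2(\tau_d)}^2-\|f^*\|_{L_2(\gamma_d)}^2\bigr|\le C_\ell\Theta_U\|f^*\|_{L_2(\gamma_d)}^2$.

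\textbf{Step 3 (the two-sided comparison).} If $C_\ell\Theta_U\le1/2$, the main estimate gives $\tfrac12\le\|f^*\|_{\tau_d}^2/\|f^*\|_{\gamma_d}^2\le\tfrac32$, which is the claimed equivalence $\|f^*\|_{L_2(\gamma_d)}^2\asymp_\ell\|f^*\|_{L_2(\tau_d)}^2$.

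The only delicate point is the bookkeeping in Step 1. The off-diagonal errors from Proposition~\ref{prop:H-orth} must be controlled in terms of $A$ and not of the raw coefficient sum $\sum_\lambda|a_\lambda|$. The Cauchy--Schwarz step handles this at the cost of a factor $N_{r,\ell}$, which is absorbed into $\Theta_U=N_{r,\ell}^2\Delta_U$. Everything else follows directly from the cited results.
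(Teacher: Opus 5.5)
Your proof is correct and follows essentially the same route as the paper. You use exact orthogonality across Walsh degrees plus Proposition~\ref{prop:H-orth} within each degree to compare $\|F\|^2$ with $\sum_\lambda a_\lambda^2\lambda!$; your direct Cauchy--Schwarz with $\lambda!\ge1$ is just the Hermite-coefficient part of \eqref{eq:explicit-coeff-norm}. You then transfer from $F$ to $P$ via Theorem~\ref{thm:hypercube_linear_clean} and $\Theta_U\le1$, exactly as the paper does.
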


\paragraph{Step 3: the latent Gaussian covariance at degree \(q\).}

For \(q\in\{1,\dots,\ell\}\) and \(s,t\in[r]\), define
\begin{equation}
\label{eq:Gq-def}
(G_q)_{s,t}
:=
\E_{g\sim N(0,I_r)}
\big[
\partial_s h_q(g)\,\partial_t h_q(g)
\big].
\end{equation}
We extend \(G_q\) to a \(d\times d\) matrix in the ambient basis
\(\{u^{[1]},\dots,u^{[d]}\}\) by setting
\[
(\widetilde G_q)_{s,t}
:=
\begin{cases}
(G_q)_{s,t}, & s,t\in[r],\\
0, & \text{otherwise}.
\end{cases}
\]
We also set
\[
A_q
:=
\sum_{\lambda\in\N^r:\,|\lambda|=q}
a_\lambda^2.
\]
Since \(\lambda!\ge1\), Hermite orthogonality gives
\begin{equation}
\label{eq:Aq-bound}
A_q
\le
\|h_q\|_{L_2(\gamma_r)}^2
\le
\|h\|_{L_2(\gamma_r)}^2
=
\|f^*\|_{L_2(\gamma_d)}^2.
\end{equation}

By the chain
rule we have
\[
(u^{[s]})^\top \nabla_x h_q(Ux)
=
\partial_s h_q(Ux).
\]
Since \(Ux\sim N(0,I_r)\) whenever \(x\sim N(0,I_d)\), we can derive
\begin{align}\label{eq:Gq}
(G_q)_{s,t}
=
\E_{x\sim N(0,I_d)}
\big[
(u^{[s]})^\top \nabla_x h_q(Ux)\,
(\nabla_x h_q(Ux))^\top u^{[t]}
\big].
\end{align}

For \(0\le L\le\ell\), define
\[
G_{\le L}
:=
\E_{g\sim N(0,I_r)}
\big[
\nabla h_{\le L}(g)\nabla h_{\le L}(g)^\top
\big],
\]
which is exactly the latent Gaussian covariance $\Sigma_L$.

Since \(\partial_s h_q\) has Hermite degree \(q-1\), Gaussian orthogonality
gives
\begin{equation}
\label{eq:G-sum-by-degree}
(G_{\le L})_{s,t}
=
\sum_{q=1}^{L}(G_q)_{s,t},
\qquad
(\widetilde G_{\le L})_{s,t}
=
\sum_{q=1}^{L}(\widetilde G_q)_{s,t}.
\end{equation}
Here \(\widetilde G_{\le L}\) denotes the extension of \(G_{\le L}\) by zeros
to the last \(d-r\) ambient coordinates.

\paragraph{Step 4: hypercube gradient covariance of the degree-\(q\) surrogate.}

We now compare the degree-\(q\) hypercube gradient covariance of \(F_q\) with
the latent Gaussian covariance \(G_q\).
The proof appears in Appendix~\ref{proof:ambient-basis-covariance}.
\begin{proposition}[Ambient-basis covariance of the degree-\(q\) surrogate]
\label{prop:ambient-basis-covariance}
Fix \(q\in\{1,\dots,\ell\}\), and assume \(\Theta_U\le1\). For \(s,t\in[d]\),
define
\[
(\Xi_q)_{s,t}
:=
\E_{x\sim\tau_d}
\big[
(u^{[s]})^\top \nabla F_q(x)\,
(\nabla F_q(x))^\top u^{[t]}
\big].
\]
Then
\[
\big|
(\Xi_q)_{s,t}
-
(\widetilde G_q)_{s,t}
\big|
\le
C_\ell A_q
\left(
\Theta_U\,
\mathbf 1_{\{s\le r\ \text{or}\ t\le r\}}
+
\Theta_U^2\,
\mathbf 1_{\{s>r,\ t>r\}}
\right).
\]
The same bounds hold uniformly if either inactive basis vector
\(u^{[s]}\), \(s>r\), or \(u^{[t]}\), \(t>r\), is replaced by an arbitrary unit
vector in \(\mathcal U^\perp\).
\end{proposition}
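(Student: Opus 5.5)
We expand $F_q=\sum_{|\lambda|=q}a_\lambda H_\lambda$ and use bilinearity:
\[
(\Xi_q)_{s,t}
=
\sum_{|\lambda|=|\lambda'|=q}
a_\lambda a_{\lambda'}\,
\big\langle (u^{[s]})^\top\nabla H_\lambda,\,(u^{[t]})^\top\nabla H_{\lambda'}\big\rangle_{L_2(\tau_d)} .
\]
The plan is to reduce every inner product to one of the approximate orthogonality relations $\langle H_\alpha,H_\beta\rangle=\mathbf 1_{\{\alpha=\beta\}}\alpha!+O(C_\ell\Delta_U)$ of Proposition~\ref{prop:H-orth}, after contracting gradients with Proposition~\ref{prop:H-contraction}. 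On the Gaussian side, $\partial_s\He_\lambda=\lambda_s\He_{\lambda-e_s}$ together with Hermite orthogonality gives
\[
(G_q)_{s,t}=\sum_{\lambda,\lambda'} a_\lambda a_{\lambda'}\,\lambda_s\lambda'_t\,\mathbf 1_{\{\lambda-e_s=\lambda'-e_t\}}\,(\lambda-e_s)!
\]
for $s,t\le r$. This is exactly the expression obtained by replacing each contracted gradient by its main term and each $\langle H,H\rangle$ by its diagonal value.

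\textbf{Case $s,t\le r$.} Write $(u^{[s]})^\top\nabla H_\lambda=\lambda_sH_{\lambda-e_s}+R_{s,\lambda}$, and similarly for $t,\lambda'$. The product of main terms gives $\lambda_s\lambda'_t\langle H_{\lambda-e_s},H_{\lambda'-e_t}\rangle$, which equals the Gaussian term up to $C_\ell\Delta_U$ by Proposition~\ref{prop:H-orth}. The cross and remainder terms are handled by Cauchy--Schwarz. We need $\|H_\mu\|_{L_2}\le C_\ell$, which follows from Proposition~\ref{prop:H-orth} since $\Delta_U\le\Theta_U\le 1$. With $\|R\|_{L_2}\le C_\ell\Delta_U$, each such term is $O(C_\ell\Delta_U)$. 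Each pair $(\lambda,\lambda')$ therefore contributes an error of $C_\ell|a_\lambda a_{\lambda'}|\Delta_U$. Summing over pairs gives $(\sum|a_\lambda|)^2\le N_{r,\ell}\sum a_\lambda^2= N_{r,\ell}A_q$. The factor $N_{r,\ell}\Delta_U$ is absorbed into $\Theta_U=N_{r,\ell}^2\Delta_U$, yielding the bound $C_\ell A_q\Theta_U$. This envelope is where the $N_{r,\ell}^2$ in $\Theta_U$ is used.

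\textbf{Case of one active and one inactive index.} Take $s\le r<t$, or let $u^{[t]}$ be replaced by any unit $v\in\mathcal U^\perp$. Here $(\widetilde G_q)_{s,t}=0$, and the second part of Proposition~\ref{prop:H-contraction} gives $v^\top\nabla H_{\lambda'}=R_{v,\lambda'}$ with norm at most $C_\ell\Delta_U$. The active factor has $L_2$ norm at most $C_\ell$, so Cauchy--Schwarz and the same coefficient summation give $C_\ell A_q\Theta_U$.

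\textbf{Case $s,t>r$.} Both factors are remainders. Cauchy--Schwarz gives $C_\ell\Delta_U^2$ per pair, hence $C_\ell N_{r,\ell}\Delta_U^2A_q\le C_\ell A_q\Theta_U^2$. Since only the unit-vector bound of Proposition~\ref{prop:H-contraction} is used, the estimate holds uniformly over unit vectors in $\mathcal U^\perp$, as claimed.

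The main obstacle is bookkeeping rather than analysis. One must verify that the diagonal-matching structure $\lambda-e_s=\lambda'-e_t$ and the weight $(\lambda-e_s)!$ from Proposition~\ref{prop:H-orth} reproduce $G_q$ exactly, including the conventions that the term vanishes when $\lambda_s=0$ and that $\|H_\mu\|$ is bounded when $|\mu|=q-1$ (possibly $\mu=0$). One must also check that every constant depends only on $\ell$, with all $r$-dependence carried by $N_{r,\ell}$ inside $\Theta_U$.
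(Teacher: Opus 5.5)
Your proof is correct and follows essentially the same route as the paper: you contract gradients with Proposition~\ref{prop:H-contraction}, compare the main terms to the Hermite expression for \(G_q\) via Proposition~\ref{prop:H-orth}, and absorb the envelope \((\sum_{|\lambda|=q}|a_\lambda|)^2\le N_{r,q}A_q\) into \(\Theta_U\). The only difference is cosmetic. You bound the errors pair by pair in \((\lambda,\lambda')\), whereas the paper first aggregates into \(D_s=L_s+\mathcal R_s\) and applies Cauchy--Schwarz to these sums; this gives a slightly sharper \(N_{r,q}^{1/2}\) factor on the cross terms, but both versions are dominated by \(\Theta_U\).
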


\paragraph{Step 5: degreewise covariance of the multilinearized target.}

The next proposition transfers the degree-\(q\) comparison from \(F_q\) to \(P_q\). The proof appears in Appendix~\ref{proof:degreewise-final-target-cov}.

\begin{proposition}[Degreewise hypercube gradient covariance of the target]
\label{prop:degreewise-final-target-cov}
For \(q\in\{1,\dots,\ell\}\), define
\[
(\Gamma_q)_{s,t}
:=
\E_{x\sim\tau_d}
\big[
(u^{[s]})^\top \nabla P_q(x)\,
(\nabla P_q(x))^\top u^{[t]}
\big],
\qquad
s,t\in[d].
\]
If \(\Theta_U\) is sufficiently small, then
\[
\big|
(\Gamma_q)_{s,t}
-
(\widetilde G_q)_{s,t}
\big|
\le
C_\ell
\|f^*\|_{L_2(\tau_d)}^2
\left(
\Theta_U\,
\mathbf 1_{\{s\le r\ \text{or}\ t\le r\}}
+
\Theta_U^2\,
\mathbf 1_{\{s>r,\ t>r\}}
\right).
\]
The same bounds hold uniformly if inactive basis vectors are replaced by
arbitrary unit vectors in \(\mathcal U^\perp\).
\end{proposition}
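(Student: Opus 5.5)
We transfer Proposition~\ref{prop:ambient-basis-covariance} from the surrogate $F_q$ to the true Walsh component $P_q$ by a bilinear perturbation argument. For fixed unit vectors $v,w$, write $\partial_v Q:=v^\top\nabla Q$. The entries can then be written as
\[
(\Gamma_q)_{s,t}=\langle \partial_{u^{[s]}}P_q,\partial_{u^{[t]}}P_q\rangle,
\qquad
(\Xi_q)_{s,t}=\langle \partial_{u^{[s]}}F_q,\partial_{u^{[t]}}F_q\rangle .
\]
Setting $E_q:=P_q-F_q$, we expand
\[
(\Gamma_q-\Xi_q)_{s,t}
=\langle \partial_{u^{[s]}}E_q,\partial_{u^{[t]}}F_q\rangle
+\langle \partial_{u^{[s]}}F_q,\partial_{u^{[t]}}E_q\rangle
+\langle \partial_{u^{[s]}}E_q,\partial_{u^{[t]}}E_q\rangle .
\]
Together with Proposition~\ref{prop:ambient-basis-covariance} (and $A_q\le\|f^*\|_{L_2(\gamma_d)}^2\asymp\|f^*\|_{L_2(\tau_d)}^2$ by \eqref{eq:Aq-bound} and Lemma~\ref{lem:gaussian-hypercube-L2-comparison}), it then suffices to bound these three cross terms by the target rate.

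\textbf{Directional derivatives of a degree-$q$ Walsh polynomial.} For a homogeneous degree-$q$ multilinear $Q$ and a unit vector $v$, the derivative is $\partial_v Q=\sum_i v_i\partial_i Q$. Each $\partial_iQ$ is homogeneous of degree $q-1$ and lies in the span of characters not containing $i$. Write $Q=\sum_{|S|=q}\widehat Q(S)x^S$ with coefficient tensor $T$. Then $\|\partial_vQ\|_{L_2}^2$ equals the squared norm of the contraction of $T$ with $v$, computed in the all-distinct sense. This gives the key estimate $\|\partial_v Q\|_{L_2}^2\le q\,\|Q\|_{L_2}^2$, which follows from Cauchy--Schwarz: $\|\partial_vQ\|^2=\sum_{|R|=q-1}(\sum_{i\notin R}v_i\widehat Q(R\cup i))^2\le \sum_R\sum_{i\notin R}\widehat Q(R\cup i)^2 = q\|Q\|^2$. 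Applying this with $Q=E_q$ and using \eqref{eq:Pq-Fq-each} gives
\[
\|\partial_vE_q\|_{L_2}\le C_\ell\|f^*\|\Theta_U .
\]
Applying it with $Q=F_q$ gives $\|\partial_vF_q\|_{L_2}\le C_\ell\|f^*\|$, since $\|F_q\|^2\lesssim A_q$ by Proposition~\ref{prop:H-orth}.

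\textbf{Active directions.} If $s\le r$ or $t\le r$, Cauchy--Schwarz on the three terms gives the bound $C_\ell\|f^*\|^2(\Theta_U+\Theta_U^2)\le C_\ell\|f^*\|^2\Theta_U$, which is the claimed rate.

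\textbf{Inactive directions, the main obstacle.} When $s,t>r$ (or arbitrary $v,w\in\mathcal U^\perp$) the required rate is $\Theta_U^2$, so the naive cross term $\|\partial_vE_q\|\,\|\partial_wF_q\|\sim\Theta_U$ is too large. Here we use Proposition~\ref{prop:H-contraction}, which gives $\|\partial_w F_q\|_{L_2}\le C_\ell\sqrt{A_q}\,\Delta_U\le C_\ell\|f^*\|\Theta_U$ for unit $w\in\mathcal U^\perp$. Summing over the at most $N_{r,\ell}$ multi-indices absorbs the factor into $\Theta_U$. Consequently the two cross terms are each $\le C_\ell\|f^*\|^2\Theta_U^2$, and the $E_qE_q$ term is also $O(\Theta_U^2)$. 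Combining these with Proposition~\ref{prop:ambient-basis-covariance}, and using the triangle inequality with the $\Theta_U^2$ indicator structure, yields the stated bound. Uniformity over arbitrary unit vectors in $\mathcal U^\perp$ holds because none of the estimates used the specific completion basis. The smallness of $\Theta_U$ is needed only to invoke the norm comparison of Lemma~\ref{lem:gaussian-hypercube-L2-comparison} and the hypothesis $\Theta_U\le1$ of Proposition~\ref{prop:ambient-basis-covariance}.
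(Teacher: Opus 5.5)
Your proposal is correct and follows the paper's proof essentially step for step. You split \(P_q=F_q+E_q\), expand the covariance difference into the three cross terms, and bound \(\|v^\top\nabla E_q\|_{L_2}\le \sqrt{q}\,\|E_q\|_{L_2}\le C_\ell\|f^*\|\Theta_U\) by the same Cauchy--Schwarz computation. You get the \(\Theta_U^2\) rate from the inactive-direction estimate of Proposition~\ref{prop:H-contraction}, with the factor from \(\sum_\lambda|a_\lambda|\le N_{r,q}^{1/2}A_q^{1/2}\) absorbed into \(\Theta_U\), and you finish with Proposition~\ref{prop:ambient-basis-covariance} and Lemma~\ref{lem:gaussian-hypercube-L2-comparison}. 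The one cosmetic difference is in active directions: you bound \(\|v^\top\nabla F_q\|_{L_2}\) via \(\|v^\top\nabla Q\|_{L_2}^2\le q\|Q\|_{L_2}^2\) and \(\|F_q\|_{L_2}^2\le C_\ell A_q\), whereas the paper reads \(\|D_s\|_{L_2}\le C_\ell A_q^{1/2}\) off the proof of Proposition~\ref{prop:ambient-basis-covariance}.
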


\paragraph{Step 6: covariance of the low-degree multilinearized target.}

We now sum over degrees. This gives the desired comparison between the
hypercube gradient covariance of \(P_{\le L}\) and the lifted Gaussian
covariance of \(h_{\le L}\). The proof appears in Appendix~\ref{proof:final-target-cov}.

\begin{theorem}[Hypercube gradient covariance of the multilinearized target]
\label{thm:final-target-cov}
For \(0\le L\le\ell\), define
\[
(\Gamma_{\le L})_{s,t}
:=
\E_{x\sim\tau_d}
\big[
(u^{[s]})^\top \nabla P_{\le L}(x)\,
(\nabla P_{\le L}(x))^\top u^{[t]}
\big],
\qquad
s,t\in[d].
\]
If \(\Theta_U\) is sufficiently small, then
\[
\big|
(\Gamma_{\le L})_{s,t}
-
(\widetilde G_{\le L})_{s,t}
\big|
\le
C_\ell
\|f^*\|_{L_2(\tau_d)}^2
\left(
\Theta_U\,
\mathbf 1_{\{s\le r\ \text{or}\ t\le r\}}
+
\Theta_U^2\,
\mathbf 1_{\{s>r,\ t>r\}}
\right).
\]
The same mixed and inactive-inactive bounds hold uniformly if inactive basis
vectors are replaced by arbitrary unit vectors in \(\mathcal U^\perp\).

Equivalently, since \(G_{\le L}=\Sigma_L\), we have
\[
\big|
(\Gamma_{\le L})_{s,t}
-
(\Sigma_L)_{s,t}
\big|
\le
C_\ell
\|f^*\|_{L_2(\tau_d)}^2
\Theta_U,
\qquad
s,t\in[r].
\]
If exactly one of \(s,t\) lies in \([r]\), then
\[
|(\Gamma_{\le L})_{s,t}|
\le
C_\ell
\|f^*\|_{L_2(\tau_d)}^2
\Theta_U.
\]
Finally, for \(s,t\in[d]\setminus[r]\), we have
\[
|(\Gamma_{\le L})_{s,t}|
\le
C_\ell
\|f^*\|_{L_2(\tau_d)}^2
\Theta_U^2.
\]
Moreover, the corresponding block-operator comparison holds:
\begin{equation}
\label{eq:block-op-transfer}
\snorm{
M_{\le L}
-
U^\top \Sigma_L U
}
\le
C_\ell
\|f^*\|_{L_2(\tau_d)}^2
\overline\Theta_U .
\end{equation}
\end{theorem}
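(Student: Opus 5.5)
The plan is to sum the degreewise estimates of Proposition~\ref{prop:degreewise-final-target-cov} over \(q\le L\), and then convert the resulting entrywise bounds into an operator-norm bound by a block decomposition in the ambient orthonormal basis \(\{u^{[1]},\dots,u^{[d]}\}\).

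\textbf{Step 1 (entrywise bounds).} First I will show that \(\Gamma_{\le L}=\sum_{q=1}^{L}\Gamma_q\), i.e.\ that cross terms vanish. The ambient gradient of the multilinear polynomial \(P_q\) has each coordinate \(\partial_iP_q\) homogeneous of Walsh degree \(q-1\). For \(q\neq q'\), the Walsh expansions of \(v^\top\nabla P_q\) and \(w^\top\nabla P_{q'}\) live in different degrees, so they are orthogonal in \(L_2(\tau_d)\). The degree-\(0\) part \(P_0\) is constant and contributes nothing.

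Combining this with \eqref{eq:G-sum-by-degree}, the entrywise bound in each of the three block regimes follows by summing at most \(\ell\) copies of Proposition~\ref{prop:degreewise-final-target-cov}, since \(C_\ell\) absorbs the factor \(\ell\). This also holds uniformly when inactive basis vectors are replaced by arbitrary unit vectors \(v,w\in\mathcal U^\perp\). Because \(G_{\le L}=\Sigma_L\) and \(\widetilde G_{\le L}\) is the zero extension, the three displayed cases are just restatements.

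\textbf{Step 2 (operator norm).} Write \(\Delta:=M_{\le L}-U^\top\Sigma_LU\). In the basis \(\{u^{[j]}\}\) its entries are \((\Gamma_{\le L})_{s,t}-(\widetilde G_{\le L})_{s,t}\), because \(U^\top\Sigma_LU\) has block \(\Sigma_L\) on \(\mathcal U\) and vanishes elsewhere. I split \(\Delta\) into three blocks, writing \(P_{\mathcal U}=U^\top U\) and \(P_\perp=I-P_{\mathcal U}\):
\begin{itemize}
\item \(P_{\mathcal U}\Delta P_{\mathcal U}\), an \(r\times r\) block whose entries are \(O(\Theta_U)\), so its operator norm is at most \(r\,C_\ell\|f^*\|^2\Theta_U\);
\item the off-diagonal blocks \(P_{\mathcal U}\Delta P_\perp\) and its transpose;
\item \(P_\perp\Delta P_\perp=P_\perp M_{\le L}P_\perp\).
\end{itemize}

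\textbf{Step 3 (the \(\mathcal U^\perp\) blocks).} This is the main obstacle: a \((d-r)\)-dimensional block with small entries need not have small operator norm, so the entrywise bounds alone do not suffice. I will instead use the uniform statement over unit vectors in \(\mathcal U^\perp\).

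For the off-diagonal block, its operator norm equals
\[
\sup_{a\in\R^r,\ \|a\|=1}\ \sup_{v\in\mathcal U^\perp,\ \|v\|=1}\ a^\top U\Delta v .
\]
Expanding \(a\) over the \(r\) vectors \(u^{[s]}\), each term is a mixed entry with \(t\) replaced by the unit vector \(v\). Each is bounded by \(C_\ell\|f^*\|^2\Theta_U\), giving at most \(\sqrt r\,C_\ell\|f^*\|^2\Theta_U\).

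For the inactive block, \(P_\perp M_{\le L}P_\perp\) is positive semidefinite. Its operator norm is therefore
\[
\sup_{v\in\mathcal U^\perp,\ \|v\|=1}\E\bigl(v^\top\nabla P_{\le L}\bigr)^2,
\]
which is a diagonal entry with \(s=t\) replaced by \(v\), hence \(\le C_\ell\|f^*\|^2\Theta_U^2\le C_\ell\|f^*\|^2\Theta_U\) since \(\Theta_U\) is small.

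Adding the three pieces bounds \(\|\Delta\|_{\op}\) by \(C_\ell\|f^*\|_{L_2(\tau_d)}^2\, r\Theta_U=C_\ell\|f^*\|^2\overline\Theta_U\), which is \eqref{eq:block-op-transfer}. The one point to verify carefully is that Proposition~\ref{prop:degreewise-final-target-cov}'s uniformity over \(\mathcal U^\perp\) survives the summation over \(q\); this is immediate because the constants do not depend on the chosen vector.
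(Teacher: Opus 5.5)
Your proposal is correct and follows essentially the same route as the paper: orthogonality of distinct Walsh degrees gives $\Gamma_{\le L}=\sum_{q=1}^{L}\Gamma_q$, you sum Proposition~\ref{prop:degreewise-final-target-cov} over $q\le L$, and you bound the operator norm by three blocks in the basis $\{u^{[j]}\}$, using uniformity over unit vectors in $\mathcal U^\perp$ to get the $\sqrt r$ and $\Theta_U^2$ bounds on the mixed and inactive blocks. The only cosmetic difference is in the inactive block: you use positive semidefiniteness of $P_\perp M_{\le L}P_\perp$, so only the diagonal quadratic form is needed, whereas the paper appeals directly to the uniform inactive--inactive bilinear estimate.
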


This proves Lemma~\ref{lem:M_gaussian_latent}.

\section{A General AGOP Approximation Result and Proof of Theorem~\ref{thm:main_agop}}\label{proof:main_agop}
We first prove an AGOP approximation theorem that does not depend on the subspace coherence. The result is stated in coordinates aligned
with the central subspace. We then apply this theorem with
\(S=\{1,\ldots,r\}\) in the rotated basis and use coherence to obtain
the operator-norm bound in Theorem~\ref{thm:main_agop}.

We introduce some notation before the theorem statement. 
Let \(U_\perp\in\mathbb R^{(d-r)\times d}\) have orthonormal rows spanning 
\(\mathrm{row}(U)^\perp\), and define the orthogonal completion
\[
Q_U
:=
\begin{bmatrix}
U\\
U_\perp
\end{bmatrix}
\in\mathbb R^{d\times d}.
\]
Define
\begin{align}
    \Gamma_q = Q_U M_q Q_U\tran \in \R^{d\times d}, \qquad \Gamma_{\leq p} = \sum_{0\leq q\leq p}\Gamma_q.
\end{align}

\begin{theorem}[AGOP approximation]\label{thm:main_agop_nodeloc}
Suppose Assumptions~\ref{assum:degree}, \ref{assump:g_0}, and \ref{assump:g_1} hold. Suppose further that
\[
\min_{0\le k\le p} g^{(k)}(0)>0,
\qquad
\lambda+\sum_{k=p+1}^\infty g^{(k)}(0)>0.
\]
Fix \(p\in\{0,\dots,\ell\}\), and let \(n=d^{p+\delta}\) with \(\delta\in(0,1)\). For every fixed deterministic subset \(S\subset[d]\), the following holds for any sufficiently small
\(\epsilon>0\):
\begin{equation}
\label{eq:agop-op-reduction}
\begin{aligned}
\|\widehat M-M_{\le p}\|_{\op}
&\le
\max\{\mathcal R(S),\mathcal R(S^c)\} + \|(\Gamma_{\leq p})_{S,S^c}\|_{\op}
\\
&\quad+
\sqrt{
\Big(\|(\Gamma_{\leq p})_{S,S}\|_{\op}+\mathcal R(S)\Big)
\Big(\|(\Gamma_{\leq p})_{S^c,S^c}\|_{\op}+\mathcal R(S^c)\Big).
}
\end{aligned}
\end{equation}
Here, with
$A_s:=\sum_{q=0}^p(\Gamma_q)_{s,s}$,
$B_s:=\sum_{q=0}^p\sqrt{(\Gamma_q)_{s,s}}$
and \(A_J=(A_s)_{s\in J} \in \R^{|J|}\), \(B_J=(B_s)_{s\in J}\in \R^{|J|}\), we define
\begin{equation}
\label{eq:R-I-def}
\begin{aligned}
\mathcal R(J)
&:=
O_{d,\P}\!\left(
d^{-\delta+\epsilon}+d^{-1+\delta+\epsilon}
\right)
\left(\|f^*\|_{L_2}^2+\sigma_\vepsilon^2\right) +O_{d,\P}\!\left(d^{-\delta/2+\epsilon}\right)
\left(
\|A_J\|_1+\sqrt{|J|}\|A_J\|_2
\right)
\\
&\quad+
O_{d,\P}\!\left(d^{-(1+\delta)/2+\epsilon/2}\right)
\left(\|f^*\|_{L_2}^2+\sigma_\vepsilon^2\right)^{1/2}
\left(
\|B_J\|_1+\sqrt{|J|}\|B_J\|_2
\right).
\end{aligned}
\end{equation}
\end{theorem}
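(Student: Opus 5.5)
We will prove Theorem~\ref{thm:main_agop_nodeloc} by working in the rotated basis $Q_U$. Set $E:=Q_U(\widehat M-M_{\le p})Q_U^\top$ and split it into blocks indexed by $S$ and $S^c$. The operator-norm bound \eqref{eq:agop-op-reduction} is then a consequence of a block-PSD argument.

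\textbf{Block reduction.} Write $\widetilde M:=Q_U\widehat MQ_U^\top\succeq0$. Suppose we have already shown
\[
\|\widetilde M_{S,S}-(\Gamma_{\le p})_{S,S}\|_{\op}\le\mathcal R(S),
\qquad
\|\widetilde M_{S^c,S^c}-(\Gamma_{\le p})_{S^c,S^c}\|_{\op}\le\mathcal R(S^c).
\]
Then the diagonal blocks of $E$ contribute $\max\{\mathcal R(S),\mathcal R(S^c)\}$. For the off-diagonal block, positive semidefiniteness of $\widetilde M$ gives
\[
\|\widetilde M_{S,S^c}\|_{\op}\le\sqrt{\|\widetilde M_{S,S}\|_{\op}\|\widetilde M_{S^c,S^c}\|_{\op}}.
\]
Hence
\[
\|E_{S,S^c}\|_{\op}\le\|(\Gamma_{\le p})_{S,S^c}\|_{\op}+\sqrt{(\|(\Gamma_{\le p})_{S,S}\|_{\op}+\mathcal R(S))(\cdots)}.
\]
Finally, $\|E\|_{\op}\le\max$ of the diagonal-block norms plus the off-diagonal-block norm, which is exactly \eqref{eq:agop-op-reduction}. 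So everything reduces to a diagonal-block estimate for a fixed deterministic index set $J$.

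\textbf{Diagonal-block estimate.} Fix a deterministic index set $J$ in the rotated basis, and write $\hat f(x)=K(x,X)\alpha$ with $\alpha=(K+\lambda I)^{-1}y$. Then
\[
\nabla\hat f(x^{(i)})=\tfrac1d\sum_j g'\bigl(\langle x^{(i)},x^{(j)}\rangle/d\bigr)\,x^{(j)}\alpha_j .
\]
Set $Z:=XQ_U^\top$. The $J$-block of $\widetilde M$ is then
\[
\tfrac1{nd^2}\,Z_J^\top\,\Diag(\alpha)\,K'^{\,\top}K'\,\Diag(\alpha)\,Z_J ,
\]
where $K'=g'(XX^\top/d)$, up to the diagonal $j=i$ terms, which are handled via Assumption~\ref{assump:g_1} (regularity of $g'$ at $1$). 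The argument then proceeds in three steps.
\begin{enumerate}
\item Apply Lemma~\ref{lemma:kernel_to_mono} and Lemma~\ref{lemma:kernel_to_mono_general} to both $K$ and $K'$, which satisfy Assumption~\ref{assump:g}. This replaces $K$ by $\Phi_{\le p}D\Phi_{\le p}^\top+\rho I$.
\item Use Theorem~\ref{lemma:phi_id_2} and Theorem~\ref{lemma:gram_matrix} to show that $\alpha\approx\Phi_{\le p}D^{-1}\hat b/n+(\text{noise and high-degree part})/\rho$. Here $\hat b$ are the low-degree Fourier coefficients of $f^*$, in the spirit of Theorem~\ref{thm:test_error_krr}. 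Consequently $\nabla\hat f\approx\nabla f^*_{\le p}$ plus a residual, and the positivity of $\lambda+\sum_{k>p}g^{(k)}(0)$ guarantees that $\rho$ is bounded below.
\item Expand the quadratic form in the $J$ directions. The main term $\frac1n\sum_i\nabla_J f^*_{\le p}(x^{(i)})\nabla_J f^*_{\le p}(x^{(i)})^\top$ concentrates around $(\Gamma_{\le p})_{J,J}$ by matrix Bernstein together with hypercontractivity \eqref{eqn:hypercontrac}. The resulting fluctuation is of order $d^{-\delta/2}$ times $\|A_J\|_1+\sqrt{|J|}\|A_J\|_2$ (trace plus Frobenius control of the $J$-block).
\end{enumerate}

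\textbf{Error terms.} The residual from the $d^{-(1-\delta)/2}$ kernel error and from the identity part $\rho I$ gives the isotropic term
\[
(d^{-\delta}+d^{-1+\delta})\bigl(\|f^*\|_{L_2}^2+\sigma_\vepsilon^2\bigr).
\]
The cross terms between the main and residual parts are bounded by Cauchy--Schwarz degreewise. This is where the factors $B_s=\sum_q\sqrt{(\Gamma_q)_{s,s}}$ come from, with prefactor $d^{-(1+\delta)/2}$.

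\textbf{Main obstacle.} The hardest step is controlling the residual gradient field, i.e.\ the contribution of the high-degree and noise components of $\alpha$ passed through $K'$, in operator norm restricted to $J$. For this we would use that $\Diag(\alpha_{\rm res})Z_J$ has nearly independent rows. Combined with Theorem~\ref{lemma:gram_matrix} applied to $K'$, whose identity-like high-degree part contributes only through its diagonal, this gives the bound $\|\alpha_{\rm res}\|^2/(nd^2)\cdot O(n)$. Making this uniform over $J$ and keeping the $\epsilon$-losses coming from $\log n$ factors is where most of the technical work lies.
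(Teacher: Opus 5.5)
Your block reduction is exactly the paper's: the PSD bound on the off-diagonal block, then $\|E\|_{\op}\le\max\{\text{diagonal blocks}\}+\text{off-diagonal block}$. So everything rests on the diagonal-block estimate, and there your plan has a genuine gap.

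\textbf{Approximating $\alpha$ first does not transfer to gradients.} Step 2 approximates $\alpha=K_\lambda^{-1}y$ first, by $\Phi_{\le p}D^{-1}\mathsf c_{\le p}/n+r/\rho$ (with $r$ the noise-plus-high-degree part of $y$, using $\frac1n\Phi_{\le p}^\top\Phi_{\le p}\approx I$). It then pushes this through $\alpha\mapsto\frac1dK'\Diag(Xu)\alpha$. That map is badly conditioned on low-degree features, so errors small in $\ell_2$ get amplified.

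Look at the component along the constant feature, which is $\frac1dD'_{\S_0}\mathbf 1\,u^\top X^\top\alpha$ with $D'_{\S_0}=g'(0)+O(d^{-1})$.
\begin{itemize}
\item With $\alpha\approx r/\rho$, the quantity $u^\top X^\top r$ is a sum of $n$ mean-zero terms of size $\asymp(n(\|f^*\|_{L_2}^2+\sigma_\vepsilon^2))^{1/2}$. This contributes $\asymp d^{p+\delta-2}(\|f^*\|_{L_2}^2+\sigma_\vepsilon^2)$ to $u^\top\widehat Mu$. That is the order of your own stated residual bound $\|\alpha_{\rm res}\|^2/(nd^2)\cdot O(n)$, and it diverges for $p\ge2$.
\item With $\alpha\approx\Phi_{\le p}D^{-1}\mathsf c_{\le p}/n$, the degree-$p$ block produces the spurious term $\frac{g'(0)d^{p-1}}{g^{(p)}(0)n}\sum_i(u^\top x^{(i)})f^*_p(x^{(i)})$. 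This has size $d^{(p-2-\delta)/2}\|f^*_p\|_{L_2}$ and diverges for $p\ge3$.
\end{itemize}
Neither term is present for the actual KRR predictor. They cancel only through two exact identities:
\begin{itemize}
\item $\Phi_{\le p}^\top K_\lambda^{-1}=(nD)^{-1}G^{-1}\Phi_{\le p}^\top H^{-1}$. This is Sherman--Morrison--Woodbury for $K_\lambda=\Phi_{\le p}D\Phi_{\le p}^\top+(\rho+\lambda)H$, with $G=(\rho+\lambda)(nD)^{-1}+\frac1n\Phi_{\le p}^\top H^{-1}\Phi_{\le p}$.
\item The degree-shift identity $\frac1d\Diag(Xu)\Phi_{\le p}D'=\Phi_{\le p+1}E$.
\end{itemize}
Together they make the gradient coefficients equal to $E^\top\Phi_{\le p+1}^\top\beta$, with the factor $(nD)^{-1}$ exactly offsetting the $D'/d$ amplification. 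Your plan never exposes this structure, so ``consequently $\nabla\hat f\approx\nabla f^*_{\le p}$'' does not follow. The appeal to Theorem~\ref{thm:test_error_krr} does not rescue it: that result concerns function values in $L_2(\tau_d)$ at fresh points, not gradients at training points.

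\textbf{What is still needed after the identities.} The remaining errors, such as $(\rho+\lambda)\mathsf c_{\le p}^\top(nD)^{-1}G^{-1}\hat e_J$ and the $\Delta_H,\Delta_G$ corrections, are only $O(d^{-\delta})\|\mathsf c_{\le p}\|_2$ per Walsh coordinate $J$ of the gradient. Summed crudely over the $\asymp d^{p-1}$ coordinates with $|J|\le p-1$, they do not vanish. The paper needs second-moment bounds for products of random Walsh feature matrices (Lemma~\ref{lemma:feature_product_2}, Corollary~\ref{cor:off_diag}). These give errors of the form $d^{-2\delta}(\mathsf c_{\le p}^\top\hat e_J)^2+n^{-1}\|\mathsf c_{\le p}\|_2^2$, which do sum over $J$ to the bound of Lemma~\ref{lemma:agop_leq_p}.

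The degree-$(p+1)$ part of $y$, which you fold into $\alpha_{\rm res}$ and try to control by operator norms, needs the same care. It passes through the low-degree part $\Phi_{\le p}D'\Phi_{\le p}^\top$ of $K'$, whose operator norm is $\asymp n$ from the constant feature, not only through the identity-like part you consider. It produces the genuine term $C_0^2d^{2\delta-2}\Gamma_{p+1}$ of Lemma~\ref{lemma:agop_p_plus_1}, which must first be identified and then absorbed using $\operatorname{tr}\Gamma_{p+1}\lesssim\|f^*\|_{L_2}^2$.

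The ``nearly independent rows'' heuristic for $\Diag(\alpha_{\rm res})Z_J$ is not available, since $\alpha_{\rm res}=K_\lambda^{-1}(\cdot)$ depends on the entire design. Supplying these ingredients is precisely the content of Theorem~\ref{thm:main_krr}. After that, the paper passes from uniform entrywise bounds to block bounds via $\|(v_s+v_t)_{s,t\in J}\|_{\op}\le\|(v_s+v_t)_{s,t\in J}\|_F$.
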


\begin{proof}

We establish the following key result, and defer the proof to Appendix~\ref{proof:main_krr}.
\begin{theorem}\label{thm:main_krr}
  Suppose Assumptions~\ref{assum:degree}, \ref{assump:g_0}, and \ref{assump:g_1} hold.
Fix \(p\in\{0,\dots,\ell\}\), and let \(n=d^{p+\delta}\) with \(\delta\in(0,1)\).  Then for any $\epsilon>0$, and uniformly for any $s,t\in[d]$,
  the KRR predictor $\hat{f}$ satisfies
\begin{align}\label{eq:main_krr}
     &~~\left| (u^{[s]})\tran \E_n\brac{\nabla \hat{f}(\nabla \hat{f})\tran}(u^{[t]})-  \sum_{0\leq q\leq p}{(\Gamma_{q})_{s,t}} -C_0^2 (d^{2\delta-2})(\Gamma_{p+1})_{s,t}  \right|\notag \\
&=  O_{d,\P}\round{d^{-\frac{\delta}{2}+\epsilon}}
\Bigg[
\sum_{0\leq q\leq p}\round{(\Gamma_{q})_{s,s} + (\Gamma_{q})_{t,t}}
+
d^{2\delta-2}
\round{
(\Gamma_{p+1})_{s,s}
+
(\Gamma_{p+1})_{t,t}
}
\Bigg]
\notag\\
&\qquad+
O_{d,\P}\round{d^{-\frac{1+\delta}{2}+\frac{\epsilon}{2}}}
\round{\sum_{0\leq q\leq p}\round{\sqrt{(\Gamma_{q})_{s,s}} + \sqrt{(\Gamma_{q})_{t,t}}}}
\round{\norm{f^*}_{L_2}^2+\sigma_\vepsilon^2}^{1/2}
\notag\\
&\qquad+
O_{d,\P}\round{d^{\frac{-4+3\delta}{2}+\frac{\epsilon}{2}}}
\round{\sqrt{(\Gamma_{p+1})_{s,s}} + \sqrt{(\Gamma_{p+1})_{t,t}}}
\round{\norm{f^*}_{L_2}^2+\sigma_\vepsilon^2}^{1/2}
\notag\\
&\qquad+
O_{d,\P}\round{
d^{-1-\delta+\epsilon}
+
d^{-2+\delta+\epsilon}
}
\round{\norm{f^*}_{L_2}^2+\sigma_\vepsilon^2}
\end{align}
where $C_0 =(\rho+\lambda)^{-1} g^{(p+1)}(0)$.
\end{theorem}

To simplify the notation, let
\[
R:=\|f^*\|_{L_2}^2+\sigma_\vepsilon^2,
\qquad
\theta_d:=d^{2\delta-2},
\qquad
\kappa_d:=C_0^2\theta_d .
\]
Since \(Q_U\) is orthogonal, the operator norm is unchanged by this
rotation, and hence
\[
\|\widehat M-M_{\le p}\|_{\op}
=
\|\widehat\Gamma-\Gamma_{\le p}\|_{\op}.
\]

We center \(\widehat\Gamma\) at the matrix appearing in
Theorem~\ref{thm:main_krr} by defining
\[
\widetilde E
:=
\widehat\Gamma-\Gamma_{\le p}-\kappa_d\Gamma_{p+1},
\qquad
E:=\widehat\Gamma-\Gamma_{\le p}.
\]

With this notation, the desired error is
\[
E=\widetilde E+\kappa_d\Gamma_{p+1}.
\]

We first verify that the \((p+1)\)-degree component has bounded trace. Since
\(\Gamma_{p+1}\) is the rotated degree-\((p+1)\) population AGOP, the
Fourier--Walsh gradient-energy identity gives
\[
\tr(\Gamma_{p+1})
=
\E_{x\sim\tau_d}
\left[
\left\|
\nabla \mathcal H(f_U^*)_{p+1}(x)
\right\|_2^2
\right]
\lesssim
\left\|
\mathcal H(f_U^*)_{p+1}
\right\|_{L_2}^2
\le
\left\|
\mathcal H(f_U^*)
\right\|_{L_2}^2
\lesssim
\|f^*\|_{L_2}^2
\le R .
\]

For the rest of the proof, introduce the temporary notation
\[
g_s:=(\Gamma_{p+1})_{s,s},
\qquad
h_s:=\sqrt{(\Gamma_{p+1})_{s,s}} .
\]

By Theorem~\ref{thm:main_krr}, uniformly over \(s,t\in[d]\), we have
\[
\begin{aligned}
|\widetilde E_{s,t}|
&\le
O_{d,\P}\!\left(d^{-\delta/2+\epsilon}\right)
\left[
A_s+A_t+\theta_d(g_s+g_t)
\right]
\\
&\quad+
O_{d,\P}\!\left(d^{-(1+\delta)/2+\epsilon/2}\right)
R^{1/2}
\left[
B_s+B_t
\right]
\\
&\quad+
O_{d,\P}\!\left(d^{(-4+3\delta)/2+\epsilon/2}\right)
R^{1/2}
\left[
h_s+h_t
\right]
\\
&\quad+
O_{d,\P}\!\left(
d^{-1-\delta+\epsilon}
+
d^{-2+\delta+\epsilon}
\right)R .
\end{aligned}
\]

We now convert this entrywise control into block operator-norm control. For any
nonnegative vector \(v\in\mathbb R^d\) and any \(J\subset[d]\), the matrix
with entries \(v_s+v_t\), \(s,t\in J\), satisfies
\[
\left\|(v_s+v_t)_{s,t\in J}\right\|_{\op}
\le
\left\|(v_s+v_t)_{s,t\in J}\right\|_F
\lesssim
\|v_J\|_1+\sqrt{|J|}\|v_J\|_2 .
\]

Also, the constant entrywise term contributes at most its size times \(|J|\),
and since \(|J|\le d\), it contributes
\[
O_{d,\P}\!\left(
d^{-\delta+\epsilon}
+
d^{-1+\delta+\epsilon}
\right)R .
\]

Applying these two deterministic bounds to the preceding entrywise estimate
yields, for every deterministic \(J\subset[d]\),
\[
\begin{aligned}
\|\widetilde E_{J,J}\|_{\op}
&\le
O_{d,\P}\!\left(d^{-\delta/2+\epsilon}\right)
\left(
\|A_J\|_1+\sqrt{|J|}\|A_J\|_2
\right)
\\
&\quad+
O_{d,\P}\!\left(d^{-(1+\delta)/2+\epsilon/2}\right)
R^{1/2}
\left(
\|B_J\|_1+\sqrt{|J|}\|B_J\|_2
\right)
\\
&\quad+
\mathcal T_{p+1}(J)
+
O_{d,\P}\!\left(
d^{-\delta+\epsilon}
+
d^{-1+\delta+\epsilon}
\right)R ,
\end{aligned}
\]
where \(\mathcal T_{p+1}(J)\) denotes the contribution of the two terms
involving \(g\) and \(h\).

We next show that \(\mathcal T_{p+1}(J)\) is absorbed by the last line. Since
\(\Gamma_{p+1}\succeq0\) and \(\tr(\Gamma_{p+1})\lesssim R\), we have
\[
\|g_J\|_1+\sqrt{|J|}\|g_J\|_2
\lesssim
\sqrt d\,R,
\qquad
\|h_J\|_1+\sqrt{|J|}\|h_J\|_2
\lesssim
\sqrt d\,R^{1/2}.
\]

Therefore the \((p+1)\)-degree terms satisfy
\[
\mathcal T_{p+1}(J)
\le
O_{d,\P}\!\left(d^{-3(1-\delta)/2+\epsilon}\right)R
+
O_{d,\P}\!\left(d^{-3(1-\delta)/2+\epsilon/2}\right)R .
\]

Since \(\delta\in(0,1)\), the previous display is bounded by
\[
\mathcal T_{p+1}(J)
\le
O_{d,\P}\!\left(d^{-1+\delta+\epsilon}\right)R .
\]

The deterministic bias \(\kappa_d\Gamma_{p+1}\) is absorbed similarly. Indeed,
using \(\|\Gamma_{p+1}\|_{\op}\le \tr(\Gamma_{p+1})\), we get
\[
\kappa_d\|(\Gamma_{p+1})_{J,J}\|_{\op}
\le
C_0^2 d^{2\delta-2}\tr(\Gamma_{p+1})
\lesssim
d^{2\delta-2}R
\le
d^{-1+\delta}R .
\]

Combining the previous bounds gives the diagonal-block estimate
\[
\|E_{J,J}\|_{\op}
=
\|(\widehat\Gamma-\Gamma_{\le p})_{J,J}\|_{\op}
\le
\mathcal R(J).
\]

It remains to control the off-diagonal block.
The standard PSD block inequality gives
\[
\|\widehat\Gamma_{S,N}\|_{\op}
\le
\sqrt{
\|\widehat\Gamma_{S,S}\|_{\op}
\|\widehat\Gamma_{N,N}\|_{\op}
}.
\]

Using the diagonal-block estimate with \(J=S\) and \(J=N\), we obtain
\[
\|\widehat\Gamma_{J,J}\|_{\op}
\le
\|(\Gamma_{\le p})_{J,J}\|_{\op}
+
\mathcal R(J),
\qquad
J\in\{S,N\}.
\]

Consequently, the off-diagonal block of \(E\) satisfies
\[
\begin{aligned}
\|E_{S,N}\|_{\op}
&\le
\|\widehat\Gamma_{S,N}\|_{\op}
+
\|(\Gamma_{\le p})_{S,N}\|_{\op}
\\
&\le
\sqrt{
\Big(\|(\Gamma_{\le p})_{S,S}\|_{\op}+\mathcal R(S)\Big)
\Big(\|(\Gamma_{\le p})_{N,N}\|_{\op}+\mathcal R(N)\Big)
}
+
\|(\Gamma_{\le p})_{S,N}\|_{\op}.
\end{aligned}
\]

Finally, for any symmetric block matrix one has
\[
\left\|
\begin{pmatrix}
A & B\\
B^\top & D
\end{pmatrix}
\right\|_{\op}
\le
\max\{\|A\|_{\op},\|D\|_{\op}\}
+
\|B\|_{\op}.
\]

Applying this inequality to \(E=\widehat\Gamma-\Gamma_{\le p}\), with the
block decomposition \(S\cup N=[d]\), gives
\[
\begin{aligned}
\|\widehat M-M_{\le p}\|_{\op}
&=
\|E\|_{\op}
\\
&\le
\max\{\mathcal R(S),\mathcal R(N)\}
+
\|(\Gamma_{\le p})_{S,N}\|_{\op}
\\
&\quad+
\sqrt{
\Big(\|(\Gamma_{\le p})_{S,S}\|_{\op}+\mathcal R(S)\Big)
\Big(\|(\Gamma_{\le p})_{N,N}\|_{\op}+\mathcal R(N)\Big)
}.
\end{aligned}
\]
This proves the theorem.
\end{proof}
\subsection{Proof of Theorem~\ref{thm:main_agop}}

We work in an orthonormal coordinate system aligned with the relevant subspace.
In this basis,
\[
S=[r],
\qquad
N=[d]\setminus [r].
\]
Throughout the proof, \(r\) and \(\ell\) are fixed. We write
\[
e_f:=\|f^*\|_{L_2}^2+\sigma_\vepsilon^2,
\qquad
\mu:=\mu(U).
\]

Define the coherence scale
\[
\Theta_U
:=
N_{r,\ell}^2\frac{r^2\mu}{d},
\qquad
N_{r,\ell}:=\binom{r+\ell}{\ell}.
\]
Since \(r\) and \(\ell\) are fixed, this scale satisfies
\(
\Theta_U\lesssim \frac{\mu}{d}.
\)
Thus, in the regime where \(\Theta_U\) is sufficiently small,
Theorem~\ref{thm:final-target-cov} gives the block estimates
\[
\begin{aligned}
\|(\Gamma_{\le p})_{S,S}\|_{\op}
&\lesssim e_f,\\
\|(\Gamma_{\le p})_{S,N}\|_{\op}
&\lesssim e_f\frac{\mu}{d},\\
\|(\Gamma_{\le p})_{N,N}\|_{\op}
&\lesssim e_f\frac{\mu^2}{d^2}.
\end{aligned}
\]
The same theorem also gives the corresponding diagonal estimates:
\[
(\Gamma_{\le p})_{s,s}
\lesssim
\begin{cases}
e_f, & s\in S,\\[2mm]
e_f\mu^2/d^2, & s\in N.
\end{cases}
\]

We next estimate the diagonal profiles that enter
\(\mathcal R(S)\) and \(\mathcal R(N)\). Since $A_s$ takes the form
\[
A_s=\sum_{q=0}^p(\Gamma_q)_{s,s}
=(\Gamma_{\le p})_{s,s},
\]
the preceding diagonal bounds imply
\[
A_s
\lesssim
\begin{cases}
e_f, & s\in S,\\[2mm]
e_f\mu^2/d^2, & s\in N.
\end{cases}
\]
The estimate for \(B_s\) follows from Cauchy's inequality. Indeed, because
each \(\Gamma_q\succeq 0\),
\[
B_s
=
\sum_{q=0}^p \sqrt{(\Gamma_q)_{s,s}}
\le
\sqrt{p+1}
\left(
\sum_{q=0}^p(\Gamma_q)_{s,s}
\right)^{1/2}.
\]
Since \(p\) is fixed in the present regime, this gives
\[
B_s
\lesssim
\begin{cases}
e_f^{1/2}, & s\in S,\\[2mm]
e_f^{1/2}\mu/d, & s\in N.
\end{cases}
\]

We now sum these coordinatewise bounds. Since \(|S|=r=O(1)\) and
\(|N|\le d\), we obtain
\[
\begin{aligned}
\|A_S\|_1+\sqrt{|S|}\|A_S\|_2
&\lesssim e_f,\\
\|B_S\|_1+\sqrt{|S|}\|B_S\|_2
&\lesssim e_f^{1/2},\\
\|A_N\|_1+\sqrt{|N|}\|A_N\|_2
&\lesssim e_f\frac{\mu^2}{d},\\
\|B_N\|_1+\sqrt{|N|}\|B_N\|_2
&\lesssim e_f^{1/2}\mu .
\end{aligned}
\]

Substituting these profile estimates into the definition of \(\mathcal R(S)\)
gives the active-block bound
\[
\mathcal R(S)
\lesssim
e_f\left(
d^{-\delta/2+\epsilon}
+
d^{-1+\delta+\epsilon}
\right).
\]
Here the other active-block contributions are dominated by the displayed terms
because \(\delta\in(0,1)\). Similarly, the inactive-block contribution satisfies
\[
\begin{aligned}
\mathcal R(N)
\lesssim
e_f\big(&
d^{-\delta+\epsilon}
+
d^{-1+\delta+\epsilon} \\
&+
\mu^2 d^{-1-\delta/2+\epsilon}
+
\mu d^{-(1+\delta)/2+\epsilon/2}
\big).
\end{aligned}
\]

We now apply Theorem~\ref{thm:main_agop_nodeloc}. With
\(
\varepsilon_{\mathrm{agop}}
:=
\|\widehat M-M_{\le p}\|_{\op},
\)
the theorem yields
\[
\begin{aligned}
\varepsilon_{\mathrm{agop}}
&\le
\max\{\mathcal R(S),\mathcal R(N)\}
+
C e_f\frac{\mu}{d} \\
&\quad+
\Bigl[
\bigl(Ce_f+\mathcal R(S)\bigr)
\bigl(Ce_f\mu^2/d^2+\mathcal R(N)\bigr)
\Bigr]^{1/2},
\end{aligned}
\]
where \(C=C_{r,\ell}\) is a constant depending only on \(r\) and \(\ell\).

Since \(\epsilon>0\) is chosen sufficiently small, we have
\[
\mathcal R(S)=o_{d,\P}(e_f).
\]
Consequently, the active factor inside the square root is \(O_{d,\P,r,\ell}(e_f)\).
For the inactive factor, the preceding bound on \(\mathcal R(N)\) gives
\[
\begin{aligned}
Ce_f\frac{\mu^2}{d^2}+\mathcal R(N)
\lesssim
e_f\big(&
\mu^2 d^{-2}
+
d^{-\delta+\epsilon}
+
d^{-1+\delta+\epsilon} \\
&+
\mu^2 d^{-1-\delta/2+\epsilon}
+
\mu d^{-(1+\delta)/2+\epsilon/2}
\big).
\end{aligned}
\]
Taking square roots, and renaming the arbitrarily small polynomial slack
\(\epsilon\), the square-root term is therefore bounded by
\[
\begin{aligned}
e_f\big(&
d^{-\delta/2+\epsilon}
+
d^{-(1-\delta)/2+\epsilon}
+
\mu d^{-1} \\
&+
\mu d^{-1/2-\delta/4+\epsilon}
+
\mu^{1/2}d^{-(1+\delta)/4+\epsilon}
\big).
\end{aligned}
\]

Combining the preceding estimates gives
\[
\begin{aligned}
\varepsilon_{\mathrm{agop}}
\lesssim
e_f\big(&
d^{-\delta/2+\epsilon}
+
d^{-(1-\delta)/2+\epsilon}
+
\mu d^{-1} \\
&+
\mu^{1/2}d^{-(1+\delta)/4+\epsilon}
+
\mu d^{-1/2-\delta/4+\epsilon} \\
&+
\mu^2 d^{-1-\delta/2+\epsilon}
+
\mu d^{-(1+\delta)/2+\epsilon/2}
\big).
\end{aligned}
\]
Since \(\mu\ge 1\) and \(\delta\in(0,1)\), the terms
\(
\mu d^{-1}\)
 and
\(\mu d^{-(1+\delta)/2+\epsilon/2}
\)
are both absorbed by
\(
\mu d^{-1/2-\delta/4+\epsilon}.
\)
Hence we conclude
\[
\begin{aligned}
\varepsilon_{\mathrm{agop}}
\lesssim
e_f\big(&
d^{-\delta/2+\epsilon}
+
d^{-(1-\delta)/2+\epsilon} \\
&+
\mu^{1/2}d^{-(1+\delta)/4+\epsilon}
+
\mu d^{-1/2-\delta/4+\epsilon} \\
&+
\mu^2 d^{-1-\delta/2+\epsilon}
\big).
\end{aligned}
\]
which proves \eqref{eq:main_result_1}.

Finally, suppose that for some constant \(\gamma>0\), the following holds
\[
\mu(U)\le d^{1/2+\delta/4-\gamma}.
\]
Then we reach
\[
\begin{aligned}
\varepsilon_{\mathrm{agop}}
\lesssim{}&
d^{-\delta/2+\epsilon}
+
d^{-(1-\delta)/2+\epsilon} \\
&+
d^{-\delta/8-\gamma/2+\epsilon}
+
d^{-\gamma+\epsilon}
+
d^{-2\gamma+\epsilon}.
\end{aligned}
\]
Choosing \(\epsilon>0\) sufficiently small, the right-hand side tends to zero, which completes the proof.

\section{Proof of Lemma~\ref{lem:eigenspace-perturbation}}
\label{proof:eigenspace-perturbation}

Let \(\Pi_U:=U^\top U\) and \(A:=\Pi_U M_{\le p}\Pi_U\). We first identify
the relevant eigenspace of \(A\). Since \(UU^\top=I_r\), the matrix \(A\) has
the representation
\[
A
=
U^\top (U M_{\le p} U^\top) U .
\]
Hence the nonzero eigenvalues of \(A\) are exactly the eigenvalues of
\(U M_{\le p} U^\top\). In particular, the eigenvalue identities
\[
\lambda_r(A)=s_p,
\qquad
\lambda_{r+1}(A)=0
\]
hold. Moreover, the range of \(A\) is contained in \(\mathrm{row}(U)\), so the
top-\(r\) eigenspace of \(A\) is precisely \(\mathrm{row}(U)\).

Define the perturbation \(E:=\widehat M-A\). The triangle inequality, together
with the definitions of \(\varepsilon_{\mathrm{agop}}\) and \(\rho_p\), gives
the perturbation bound
\begin{equation}
\label{eq:eigenspace-perturbation-size}
\snorm{E}
\le
\snorm{\widehat M-M_{\le p}}
+
\snorm{M_{\le p}-\Pi_U M_{\le p}\Pi_U}
=
\varepsilon_{\mathrm{agop}}+\rho_p .
\end{equation}
For brevity, set \(\Delta:=\varepsilon_{\mathrm{agop}}+\rho_p\).

We consider two cases. If \(\Delta\ge s_p/4\), then
\(4\Delta/s_p\ge1\). Since \(\snorm{\sin_\Theta(\widehat U,U)}\le1\)
always, the desired estimate is immediate in this case.

It remains to consider the case \(\Delta<s_p/4\). Weyl's inequality gives the
lower bound
\[
\lambda_r(\widehat M)
\ge
\lambda_r(A)-\snorm{E}
\ge
s_p-\Delta
\ge
\frac34 s_p .
\]
The same inequality gives the upper bound
\[
\lambda_{r+1}(\widehat M)
\le
\lambda_{r+1}(A)+\snorm{E}
\le
\Delta
\le
\frac14 s_p .
\]
These two bounds show that the top-\(r\) eigenvalues of \(\widehat M\) are
separated from the remaining spectrum by a gap of at least \(s_p/2\).

We may now apply the Davis--Kahan \(\sin\Theta\) theorem to the pair
\(A\) and \(\widehat M=A+E\). The \(r\)-dimensional invariant subspace of
\(A\) corresponding to its nonzero eigenvalues is \(\mathrm{row}(U)\), while
the top-\(r\) eigenspace of \(\widehat M\) is \(\mathrm{row}(\widehat U)\) by
assumption. Davis--Kahan gives the estimate
\[
\snorm{\sin_\Theta(\widehat U,U)}
\le
\frac{\snorm{E}}{s_p/2}
\le
2\,\frac{\Delta}{s_p}
\le
4\,\frac{\Delta}{s_p}.
\]
Combining the two cases yields
\[
\snorm{\sin_\Theta(\widehat U,U)}
\le
\min\left\{
1,\,
4\,\frac{\rho_p+\varepsilon_{\mathrm{agop}}}{s_p}
\right\}.
\]
Finally, under the condition
\((\rho_p+\varepsilon_{\mathrm{agop}})/s_p=o_d(1)\), the right-hand side of \eqref{eq:eigenspace-perturbation} tends
to zero. Therefore the subspace error satisfies
\[
\snorm{\sin_\Theta(\widehat U,U)}=o_d(1),
\]
which proves that \(\widehat U\) consistently recovers \(\mathrm{row}(U)\).

\section{Proof of Corollary~\ref{cor:main_result}}\label{proof:cor:main_result}

Set
\[
e_M:=\snorm{M_{\le p}-U^\top\Sigma_pU},
\qquad
\varepsilon_{\mathrm{agop}}:=\snorm{\widehat M-M_{\le p}}.
\]
Lemma~\ref{lem:M_gaussian_latent} gives the deterministic comparison bound
\[
e_M
=
O_d\!\left(
\frac{\mu(U)}{d}\norm{f^*}_{L_2(\tau_d)}^2
\right).
\]
The gap-stability assumption therefore implies \(e_M=o_d(\kappa)\). Moreover,
the weak coherence condition gives \(\mu(U)/d=o_d(1)\), so the same bound also
implies \(e_M=o_d(e_f)\).

Let \(P_U:=P_{\mathrm{row}(U)}=U^\top U\). Since \(U^\top\Sigma_pU\) is
supported on \(\mathrm{row}(U)\), the identity
\[
P_U(U^\top\Sigma_pU)P_U=U^\top\Sigma_pU
\]
holds. Hence the definition of \(\rho_p\) gives the inequality
\[
\rho_p
=
\snorm{M_{\le p}-P_UM_{\le p}P_U}
\le
\snorm{M_{\le p}-U^\top\Sigma_pU}
+
\snorm{P_U(M_{\le p}-U^\top\Sigma_pU)P_U}
\le
2e_M.
\]
Thus \(\rho_p=o_d(e_f)\). Similarly, the compression error satisfies
\[
\snorm{UM_{\le p}U^\top-\Sigma_p}
=
\snorm{U(M_{\le p}-U^\top\Sigma_pU)U^\top}
\le
e_M,
\]
where the equality uses \(UU^\top=I_r\). Weyl's inequality then yields
\[
s_p
=
\lambda_{\min}(UM_{\le p}U^\top)
\ge
\lambda_{\min}(\Sigma_p)-e_M
\ge
\kappa-e_M.
\]
Since \(e_M=o_d(\kappa)\), this lower bound implies \(s_p\ge \kappa/2\) for all
sufficiently large \(d\).

It remains to control the empirical AGOP error. Under
\(\mu(U)=O_d(d^{1/2+\delta/4-\gamma})\), choose \(\epsilon>0\) sufficiently small
relative to \(\delta\) and \(\gamma\). Then the rate \(R_d(U)\) in
Theorem~\ref{thm:main_agop} satisfies \(R_d(U)=o_d(1)\), and therefore
Theorem~\ref{thm:main_agop} gives the stochastic bound
\[
\varepsilon_{\mathrm{agop}}
=
o_{d,\mathbb P}(e_f).
\]
Combining this bound with \(\rho_p=o_d(e_f)\) gives
\[
\rho_p+\varepsilon_{\mathrm{agop}}
=
o_{d,\mathbb P}(e_f).
\]
Finally, Lemma~\ref{lem:eigenspace-perturbation} and the lower bound
\(s_p\ge\kappa/2\) imply
\[
\snorm{\sin_\Theta(\widehat U,U)}
\le
4\,\frac{\rho_p+\varepsilon_{\mathrm{agop}}}{s_p}
\le
8\kappa^{-1}(\rho_p+\varepsilon_{\mathrm{agop}})
=
o_{d,\mathbb P}\!\left(\kappa^{-1}e_f\right).
\]
This proves the claim.

\section{Proof of Theorem~\ref{thm:test_error_krr}}
\label{proof:test_error_krr}

In this section, we follow the notation of~\cite{mei2022generalization}. In
particular, the space denoted by \(L_p\) elsewhere in the paper is written as
\(L^p\) in this proof.

Set \(g_d:=\mathcal H(f_U^*)\in L^2(\tau_d)\). For each
\(k\in\{0,\ldots,d\}\), let
\[
V_{d,k}:=\mathrm{span}\{\chi_S:\ |S|=k\},
\qquad
\chi_S(x):=\prod_{i\in S}x_i .
\]
The Walsh characters form an orthonormal basis of \(L^2(\tau_d)\), giving the
orthogonal decomposition
\[
L^2(\tau_d)=\bigoplus_{k=0}^d V_{d,k},
\qquad
\dim(V_{d,k})=\binom{d}{k}.
\]
Because the first-step kernel is an inner-product kernel on the hypercube, each
space \(V_{d,k}\) is an eigenspace of the associated kernel operator. Let
\(\xi_{d,k}^2\) denote the corresponding eigenvalue, and let \(P_k\) be the
orthogonal projector onto \(V_{d,k}\). With this notation,
\[
g_d=\sum_{k=0}^d P_k g_d,
\qquad
g_{d,\le p}:=\sum_{k=0}^p P_k g_d,
\qquad
g_{d,>p}:=\sum_{k\ge p+1}P_k g_d .
\]
By definition of the low-degree truncation of the multilinearized target,
\(g_{d,\le p}=\mathcal H(f_U^*)_{\le p}\).

We next identify the leading eigenspace appearing in Theorem~4 of
\cite{mei2022generalization}. Let
\[
m:=\sum_{k=0}^p \binom{d}{k}.
\]
Under the assumptions of Theorem~\ref{thm:main_agop}, the hypercube
verification in Appendix~D.2 of~\cite{mei2022generalization} applies. In
particular, the degree ordering of the kernel eigenspaces is compatible with
the Walsh decomposition above: the leading \(m\) eigenspaces are exactly
\[
\bigoplus_{k=0}^p V_{d,k}.
\]
Equivalently, the projector \(P_{\le m}\) in the notation of
\cite{mei2022generalization} coincides with the Walsh projector
\(P_{\le p}:=\sum_{k=0}^p P_k\). Hence \(P_{>m}g_d=g_{d,>p}\).

The same hypercube spectral estimates give the fixed-degree scaling
\(\xi_{d,k}^2\asymp d^{-k}\). Since \(p\) and \(\ell\) are fixed, there exist
constants \(c,C>0\), independent of \(d\), such that
\begin{equation}
\label{eq:krr_eigenvalue_scales}
\min_{0\le k\le p}\xi_{d,k}^2
\ge
c\,d^{-p},
\qquad
\max_{p+1\le k\le \ell}\xi_{d,k}^2
\le
C\,d^{-p-1},
\end{equation}
where the second bound is void when \(\ell\le p\).

Let \(\lambda_1\) denote the regularization parameter used in the first KRR
step, so that \(\hat f_1=\hat f_{\lambda_1}\). By the assumptions of
Theorem~\ref{thm:main_agop}, the sample size and regularization satisfy
\[
n=d^{p+\delta},
\qquad
0<\delta<1,
\qquad
\lambda_1\in[0,\lambda^\star],
\qquad
\lambda^\star:=\operatorname{Tr}(H_{d,>m}).
\]
The hypercube kernel assumptions imply
\(\operatorname{Tr}(H_{d,>m})=\Theta(1)\). Therefore the effective
regularization parameter from Theorem~4 of~\cite{mei2022generalization},
\[
\gamma_{\mathrm{eff}}
:=
\lambda_1+\operatorname{Tr}(H_{d,>m}),
\]
satisfies
\begin{equation}
\label{eq:krr_effective_regularization_scale}
\gamma_{\mathrm{eff}}=\Theta(1),
\qquad
\frac{\gamma_{\mathrm{eff}}}{n}
=
\Theta(d^{-p-\delta}).
\end{equation}

Theorem~4 of~\cite{mei2022generalization} gives the effective population
estimator in diagonal form:
\[
\hat f^{\mathrm{eff}}_{\gamma_{\mathrm{eff}}}
=
\sum_{k=0}^d s_{d,k}\,P_k g_d,
\qquad
s_{d,k}
:=
\frac{\xi_{d,k}^2}
{\xi_{d,k}^2+\gamma_{\mathrm{eff}}/n}.
\]
For the low-degree modes \(k\le p\), the lower bound in
\eqref{eq:krr_eigenvalue_scales} and the scale estimate
\eqref{eq:krr_effective_regularization_scale} give the shrinkage bound
\[
1-s_{d,k}
=
\frac{\gamma_{\mathrm{eff}}/n}
{\xi_{d,k}^2+\gamma_{\mathrm{eff}}/n}
\le
\frac{\gamma_{\mathrm{eff}}/n}{\xi_{d,k}^2}
\lesssim
d^{-\delta}.
\]
For the high-degree modes that can appear in \(g_d\), namely
\(p+1\le k\le \ell\), the upper bound in
\eqref{eq:krr_eigenvalue_scales} gives
\[
s_{d,k}
=
\frac{\xi_{d,k}^2}
{\xi_{d,k}^2+\gamma_{\mathrm{eff}}/n}
\le
\frac{\xi_{d,k}^2}{\gamma_{\mathrm{eff}}/n}
\lesssim
d^{\delta-1}.
\]

We now bound the deterministic bias of the effective estimator. Since \(g_d\)
has degree at most \(\ell\), the projection \(P_k g_d\) vanishes for every
\(k>\ell\). Orthogonality of the Walsh degree decomposition and the preceding
shrinkage bounds give
\begin{align}
\label{eq:krr_effective_bias}
\big\|
\hat f^{\mathrm{eff}}_{\gamma_{\mathrm{eff}}}
-
g_{d,\le p}
\big\|_{L^2(\tau_d)}^2
&=
\sum_{k=0}^p
(1-s_{d,k})^2
\|P_k g_d\|_{L^2(\tau_d)}^2
+
\sum_{k=p+1}^{\ell}
s_{d,k}^2
\|P_k g_d\|_{L^2(\tau_d)}^2
\notag\\
&\lesssim
d^{-2\delta}
\sum_{k=0}^p
\|P_k g_d\|_{L^2(\tau_d)}^2
+
d^{2\delta-2}
\sum_{k=p+1}^{\ell}
\|P_k g_d\|_{L^2(\tau_d)}^2
\notag\\
&\le
\bigl(d^{-2\delta}+d^{2\delta-2}\bigr)
\|g_d\|_{L^2(\tau_d)}^2 .
\end{align}
Since \(0<\delta<1\), both \(d^{-2\delta}\) and \(d^{2\delta-2}\) converge to
zero. Thus \eqref{eq:krr_effective_bias} implies
\begin{equation}
\label{eq:krr_effective_bias_small}
\big\|
\hat f^{\mathrm{eff}}_{\gamma_{\mathrm{eff}}}
-
g_{d,\le p}
\big\|_{L^2(\tau_d)}^2
=
o_d(1)\,
\|g_d\|_{L^2(\tau_d)}^2 .
\end{equation}

It remains to compare the empirical KRR estimator with the effective population
estimator. Theorem~4 of~\cite{mei2022generalization}, together with the
identification \(P_{>m}g_d=g_{d,>p}\), gives the stochastic comparison
\begin{equation}
\label{eq:krr_empirical_effective_comparison}
\big\|
\hat f_1
-
\hat f^{\mathrm{eff}}_{\gamma_{\mathrm{eff}}}
\big\|_{L^2(\tau_d)}^2
=
o_{d,\mathbb P}(1)
\left(
\|g_d\|_{L^2(\tau_d)}^2
+
\|g_{d,>p}\|_{L^{2+\eta}(\tau_d)}^2
+
\sigma_\vepsilon^2
\right).
\end{equation}

We control the \(L^{2+\eta}\)-term using the bounded degree of \(g_d\). Since
\(\deg(g_d)\le\ell\), the function \(g_{d,>p}\) also has degree at most
\(\ell\). Set \(q_0:=\lceil 2+\eta\rceil\). Monotonicity of \(L^q\)-norms on a
probability space and the Boolean hypercontractivity estimate from Lemma~18 of
\cite{mei2022generalization} give
\[
\|g_{d,>p}\|_{L^{2+\eta}(\tau_d)}^2
\le
\|g_{d,>p}\|_{L^{q_0}(\tau_d)}^2
\le
(q_0-1)^\ell
\|g_{d,>p}\|_{L^2(\tau_d)}^2 .
\]
Since \(g_{d,>p}\) is an orthogonal projection of \(g_d\), the preceding
estimate implies
\[
\|g_{d,>p}\|_{L^{2+\eta}(\tau_d)}^2
\lesssim
\|g_d\|_{L^2(\tau_d)}^2 .
\]
Substituting this bound into
\eqref{eq:krr_empirical_effective_comparison} yields
\begin{equation}
\label{eq:krr_empirical_effective_simplified}
\big\|
\hat f_1
-
\hat f^{\mathrm{eff}}_{\gamma_{\mathrm{eff}}}
\big\|_{L^2(\tau_d)}^2
=
o_{d,\mathbb P}(1)
\left(
\|g_d\|_{L^2(\tau_d)}^2
+
\sigma_\vepsilon^2
\right).
\end{equation}

Finally, we combine the empirical-to-effective error with the deterministic
effective bias. The elementary inequality
\(\|a+b\|_2^2\le 2\|a\|_2^2+2\|b\|_2^2\), applied in \(L^2(\tau_d)\) with
\[
a=
\hat f_1-\hat f^{\mathrm{eff}}_{\gamma_{\mathrm{eff}}},
\qquad
b=
\hat f^{\mathrm{eff}}_{\gamma_{\mathrm{eff}}}-g_{d,\le p},
\]
gives
\begin{align}
\label{eq:krr_final_combination}
\|\hat f_1-g_{d,\le p}\|_{L^2(\tau_d)}^2
&\le
2\big\|
\hat f_1
-
\hat f^{\mathrm{eff}}_{\gamma_{\mathrm{eff}}}
\big\|_{L^2(\tau_d)}^2
+
2\big\|
\hat f^{\mathrm{eff}}_{\gamma_{\mathrm{eff}}}
-
g_{d,\le p}
\big\|_{L^2(\tau_d)}^2
\notag\\
&=
o_{d,\mathbb P}(1)
\left(
\|g_d\|_{L^2(\tau_d)}^2
+
\sigma_\vepsilon^2
\right)
+
o_d(1)\,
\|g_d\|_{L^2(\tau_d)}^2
\notag\\
&=
o_{d,\mathbb P}(1)
\left(
\|g_d\|_{L^2(\tau_d)}^2
+
\sigma_\vepsilon^2
\right).
\end{align}
Since \(g_{d,\le p}=\mathcal H(f_U^*)_{\le p}\), the estimate
\eqref{eq:krr_final_combination} is the desired claim.

\section{Proof of Proposition~\ref{prop:project_data}}\label{proof:project_data}

Let
\[
B:=U^\top \Sigma_p U,
\qquad
\Delta_d:=\snorm{\widehat M_1-B},
\qquad
m_d:=\max\{d^{-\delta/2},\,d^{-1+\delta}\}.
\]
By construction,
\[
M_2=\frac{d}{c_\eta}\,(\widehat M_1+\eta I_d),
\qquad
c_\eta=\trace(\widehat M_1+\eta I_d).
\]

We first identify the deterministic approximation term. Let
\[
Q:=
\begin{bmatrix}
U\\
U_\perp
\end{bmatrix}
\in O(d).
\]
Since \(U_\perp\) spans \(\mathrm{row}(U_\perp)\), we have
\[
I_d = U^\top U + (U_\perp)^\top U_\perp.
\]
Using this decomposition, we may write
\[
B+\eta I_d
=
U^\top (\Sigma_p+\eta I_r)U
+
\eta (U_\perp)^\top U_\perp
=
Q^\top
\begin{bmatrix}
\Sigma_p+\eta I_r & 0\\
0 & \eta I_{d-r}
\end{bmatrix}
Q.
\]
Taking principal square roots yields
\[
\sqrt{B+\eta I_d}
=
Q^\top
\begin{bmatrix}
\sqrt{\Sigma_p+\eta I_r} & 0\\
0 & \sqrt{\eta}\,I_{d-r}
\end{bmatrix}
Q.
\]
Therefore, by the definition of \(\widehat x\), we have
\[
\widehat x = \sqrt{B+\eta I_d}\,x.
\]
This identity allows us to rewrite the approximation error as
\begin{align*}
\bigl\|M_2^{1/2}x-\sqrt{\tfrac{d}{c_\eta}}\,\widehat x\bigr\|_{L_2}
&=
\sqrt{\frac{d}{c_\eta}}\,
\bigl\|
\bigl(\sqrt{\widehat M_1+\eta I_d}-\sqrt{B+\eta I_d}\bigr)x
\bigr\|_{L_2} \\
&\le
\sqrt{\frac{d}{c_\eta}}\,
\snorm{\sqrt{\widehat M_1+\eta I_d}-\sqrt{B+\eta I_d}}
\,\|x\|_{L_2}.
\end{align*}

We next control the square-root perturbation term. Since both
\(\widehat M_1+\eta I_d\) and \(B+\eta I_d\) are bounded below by \(\eta I_d\), the square-root map is operator-Lipschitz on \([\eta,\infty)\), which gives
\[
\snorm{\sqrt{\widehat M_1+\eta I_d}-\sqrt{B+\eta I_d}}
\le
\frac{1}{2\sqrt{\eta}}\,
\snorm{\widehat M_1-B}
=
\frac{\Delta_d}{2\sqrt{\eta}}.
\]
Substituting this bound into the previous display, we obtain
\begin{align}\label{eq:project_data_short_intermediate}
\bigl\|M_2^{1/2}x-\sqrt{\tfrac{d}{c_\eta}}\,\widehat x\bigr\|_{L_2}
\le
\frac{\sqrt d}{2\sqrt{c_\eta\eta}}\,
\Delta_d\,\|x\|_{L_2}.
\end{align}

It remains to identify the scale of \(c_\eta\). Expanding the trace, we have
\[
c_\eta
=
\eta d+\trace(B)+\trace(\widehat M_1-B).
\]
Since \(UU^\top=I_r\), this gives
\[
\trace(B)=\trace(U^\top\Sigma_p U)=\trace(\Sigma_p),
\]
and therefore we have
\[
|c_\eta-\eta d|
\le
\trace(\Sigma_p)+d\,\Delta_d.
\]
Now Corollary~\ref{cor:main_result}, applied with \(\epsilon=\zeta/2\), yields
\[
\Delta_d = O_{d,\P}\!\bigl(d^{\zeta/2} m_d\bigr).
\]
Because \(\eta=d^\zeta m_d\), we have
\[
d\,\Delta_d
=
O_{d,\P}\!\bigl(d^{-\zeta/2}\eta d\bigr)
=
o_{d,\P}(\eta d).
\]
Moreover, the following holds
\[
\eta d \ge d^\zeta \cdot d^{-1+\delta}\cdot d = d^{\zeta+\delta}\to\infty,
\]
while \(\trace(\Sigma_p)=O_d(1)\). Hence \(\trace(\Sigma_p)=o(\eta d)\), and we conclude that
\[
c_\eta=(1+o_{d,\P}(1))\eta d.
\]

Finally, inserting this asymptotic into \eqref{eq:project_data_short_intermediate} yields
\[
\frac{\sqrt d}{\sqrt{c_\eta\eta}}
=
(1+o_{d,\P}(1))\frac{1}{\eta},
\]
and therefore we obtain
\[
\bigl\|M_2^{1/2}x-\sqrt{\tfrac{d}{c_\eta}}\,\widehat x\bigr\|_{L_2}
\le
(1+o_{d,\P}(1))\frac{\Delta_d}{2\eta}\,\|x\|_{L_2}.
\]
Using again \(\Delta_d = O_{d,\P}(d^{\zeta/2}m_d)\) and \(\eta=d^\zeta m_d\), we obtain
\[
\frac{\Delta_d}{\eta}=O_{d,\P}(d^{-\zeta/2}),
\]
which gives
\[
\bigl\|M_2^{1/2}x-\sqrt{\tfrac{d}{c_\eta}}\,\widehat x\bigr\|_{L_2}
=
O_{d,\P}(d^{-\zeta/2})\,\|x\|_{L_2}.
\]
This is exactly \eqref{eq:normalized_sqrt_approx}.

\section{Proof for preliminaries}
\subsection{Proof of Lemma~\ref{lemma:kernel_to_mono_general}}\label{proof:kernel_to_mono_general}
We begin with the polynomial truncation. Applying Theorem~\ref{thm:poly_approx_cube} with truncation order $2m+1$, we obtain
\begin{align}\label{eq:mono_taylor_approx_general1}
    \snorm{K-K_{2m+1}-\round{g(1)-g_{2m+1}(1)}I_n}
    =O_{d,\P}\round{\frac{\log^{m+1}(n)}{d^{m+1-p-\delta}}}.
\end{align}
We next expand $K_{2m+1}$ in the Fourier basis. By definition of the truncated kernel, we have
\[
K_{2m+1}
=
\sum_{\ell=0}^{2m+1} g^{(\ell)}(0)\cdot
\frac{1}{\ell!}\round{\frac{XX\tran}{d}}^{\odot \ell}.
\]
Applying the refined identity \eqref{eq:x_power_refined} term by term, this gives
\begin{align*}
    K_{2m+1}
    =
    \sum_{\ell=0}^{2m+1} g^{(\ell)}(0)
    \round{
        d^{-\ell}\Phi_{\S_\ell}\Phi_{\S_\ell}\tran
        +
        \sum_{\substack{j:\,0\le j<\ell,\\ \ell-j~\mathrm{is~even}}}
        \widetilde d_j^{(\ell)}\,\Phi_{\S_j}\Phi_{\S_j}\tran
    }.
\end{align*}
Collecting together the terms with the same Fourier degree, we may therefore write
\begin{align}\label{eq:K_expression_split}
    K_{2m+1}
    =
    \Phi_{\le p}D\Phi_{\le p}\tran
    +
    \sum_{k=p+1}^{2m+1}\theta_k\,\Phi_{\S_k}\Phi_{\S_k}\tran,
\end{align}
where, for each $j=0,\ldots,p$,
\[
D_{\S_j}
:=
\round{
\frac{g^{(j)}(0)}{d^j}
+
\sum_{\substack{\ell:\,j<\ell\le 2m+1,\\ \ell-j~\mathrm{is~even}}}
g^{(\ell)}(0)\,\widetilde d_j^{(\ell)}
}\,I_{|\S_j|},
\]
and, for each $k=p+1,\ldots,2m+1$,
\[
\theta_k
:=
\frac{g^{(k)}(0)}{d^k}
+
\sum_{\substack{\ell:\,k<\ell\le 2m+1,\\ \ell-k~\mathrm{is~even}}}
g^{(\ell)}(0)\,\widetilde d_k^{(\ell)}.
\]

These formulas immediately yield the required coefficient bounds. Indeed, since
$\widetilde d_j^{(\ell)}=O_d(d^{-(j+\ell)/2})$ and $\ell-j$ is a positive even integer in the correction terms, we necessarily have $\ell\ge j+2$. It follows that
\[
\widetilde d_j^{(\ell)}=O_d(d^{-j-1}).
\]
Because the number of indices $\ell$ is bounded in terms of $m$, the whole correction sum is still of order $O_d(d^{-j-1})$. Hence, for $j=0,\ldots,p$, we obtain
\[
\|D_{\S_j}-g^{(j)}(0)d^{-j}I_{|\S_j|}\|_{\rm op}
=
O_d(d^{-j-1}),
\]
while for $k=p+1,\ldots,2m+1$, we have
\[
\theta_k
=
\frac{g^{(k)}(0)}{d^k}
+
O_d(d^{-k-1}).
\]
This proves the coefficient estimates in the statement.

\subsection{Proof of Lemma~\ref{lem:degree-m-reduction}}
\label{proof:degree-m-reduction}

Let \(X\sim\tau_d\). For each \(S\subseteq[d]\) with \(|S|=m\), set
\(a_S:=\E[z^\alpha(X)X^S]\). The degree-\(m\) Walsh component of
\(\mathcal H(z^\alpha)\) is
\[
[\mathcal H(z^\alpha)]_{\deg m}(x)
=
\sum_{|S|=m} a_S x^S .
\]
For \(\lambda\in\Lambda_m(\alpha)\), write
\[
H_\lambda(x)
=
\sum_{|S|=m} h_{\lambda,S}x^S,
\qquad
h_{\lambda,S}:=\E[z^\lambda(X)X^S].
\]

Fix \(S\subseteq[d]\) with \(|S|=m\). Expanding
\(z^\alpha=\prod_{j=1}^r z_j^{\alpha_j}\) amounts to assigning each slot
corresponding to direction \(j\) to a coordinate \(i\in[d]\). Such an
assignment contributes to \(a_S\) precisely when, for each coordinate \(i\), the
total multiplicity assigned to \(i\) has parity \(\mathbf 1_{\{i\in S\}}\).

We first count the principal assignments. Fix
\(\lambda\in\mathcal A_m(\alpha)\), and put
\(\nu=\nu(\alpha,\lambda)=(\alpha-\lambda)/2\). A contributing assignment is
called principal of type \(\lambda\) if it has the following structure: each
coordinate \(i\in S\) is hit exactly once; among these singleton hits, exactly
\(\lambda_j\) come from direction \(j\); the remaining \(2\nu_j\) slots of
direction \(j\) are grouped into \(\nu_j\) same-direction pairs; and the
pair-support coordinates are distinct and lie in \(S^c\).

Let \(G_{\lambda,S}\) denote the total contribution of the principal
assignments of type \(\lambda\). A direct count gives the identity
\begin{equation}
\label{eq:G-lambda-S}
G_{\lambda,S}
=
\mathsf A_{\alpha,\lambda}\,
h_{\lambda,S}\,
C_{\nu(\alpha,\lambda)}(S^c).
\end{equation}
The factor \(h_{\lambda,S}\) accounts for the singleton placements on \(S\),
while \(C_{\nu(\alpha,\lambda)}(S^c)\) encodes the pair-support coordinates in
\(S^c\). For direction \(j\), the slot combinatorics contribute the factor
\[
\binom{\alpha_j}{\lambda_j}
\frac{(2\nu_j)!}{2^{\nu_j}\nu_j!}\nu_j!
=
\frac{\alpha_j!}{\lambda_j!\,2^{\nu_j}} .
\]
This is the \(j\)-th factor in
\[
\mathsf A_{\alpha,\lambda}
=
\prod_{j=1}^r
\frac{\alpha_j!}{\lambda_j!\,2^{\nu_j}} .
\]

Summing over all admissible \(\lambda\), the coefficient \(a_S\) decomposes as
\[
a_S
=
\sum_{\lambda\in\mathcal A_m(\alpha)} G_{\lambda,S}
+
B_S,
\]
where \(B_S\) denotes the total contribution of the non-principal assignments.

We next replace \(C_\nu(S^c)\) by \(C_\nu\). Since all coefficients in the
generating function defining \(C_\nu(T)\) are nonnegative, the difference
\(C_\nu-C_\nu(S^c)\) is obtained by forcing at least one pair-support
coordinate to lie in \(S\). Since \(|\nu|\le\ell\), this difference satisfies
\[
|C_\nu-C_\nu(S^c)|
\le
C_\ell\sum_{i\in S} q_i .
\]
Because \(|S|=m\le\ell\), the preceding bound is at most
\(C_\ell m\max_i q_i\le C_\ell\Delta_U\). Using
\eqref{eq:G-lambda-S}, the finiteness of \(\mathcal A_m(\alpha)\), and the
bound \(\|H_\lambda\|_{L_2}\le \|z^\lambda\|_{L_2}\le C_\ell\), the omission
error satisfies
\begin{equation}
\label{eq:principal-omission-l2}
\sum_{|S|=m}
\left|
\sum_{\lambda\in\mathcal A_m(\alpha)}
\mathsf A_{\alpha,\lambda}
\bigl(C_{\nu(\alpha,\lambda)}(S^c)-C_{\nu(\alpha,\lambda)}\bigr)
h_{\lambda,S}
\right|^2
\le
C_\ell\Delta_U^2 .
\end{equation}

It remains to bound the non-principal assignments. We first record an
elementary coefficient estimate. For a fixed ordered label list
\(\sigma=(\sigma_1,\dots,\sigma_m)\in[r]^m\), define
\[
K_{\sigma,S}
:=
\sum_{\psi:\,[m]\to S\ \mathrm{bijection}}
\prod_{a=1}^m
\bigl|u_{\psi(a)}^{[\sigma_a]}\bigr| .
\]
The following bound will be used repeatedly:
\begin{equation}
\label{eq:K-sigma-S-bound}
\sum_{|S|=m} K_{\sigma,S}^2
\le
C_\ell .
\end{equation}
Indeed, \(K_{\sigma,S}\) is the absolute-value analogue of a degree-\(m\)
Walsh coefficient of a product of \(m\) linear forms. Since the degree-\(m\)
projection is an \(L_2(\tau_d)\)-contraction, Hölder's inequality and the
Khintchine inequality give
\[
\left(\sum_{|S|=m} K_{\sigma,S}^2\right)^{1/2}
\le
\left\|
\prod_{a=1}^m
\left(\sum_{i=1}^d |u_i^{[\sigma_a]}|X_i\right)
\right\|_{L_2}
\le
\prod_{a=1}^m
\left\|
\sum_{i=1}^d |u_i^{[\sigma_a]}|X_i
\right\|_{L_{2m}}
\le
C_\ell .
\]
This proves \eqref{eq:K-sigma-S-bound}.

We now decompose the non-principal assignments. We overcount them by making
two bounded-complexity choices. First, for each coordinate \(i\in S\), choose
one distinguished hit among the odd number of hits at \(i\). Second, choose
one distinguished witness of non-principality. Since \(|\alpha|\le\ell\), the
number of such choices is bounded by \(C_\ell\).

The distinguished singleton hits on \(S\) have an ordered label list
\(\sigma=(\sigma_1,\dots,\sigma_m)\in[r]^m\), and their total absolute
contribution is bounded by \(K_{\sigma,S}\). After these distinguished
singleton hits are fixed, all remaining multiplicities are even. We impose the
following priority rule for the remaining defects.
\begin{enumerate}
    \item First, consider assignments for which at least one remaining block touches
\(S\). Choose such a block as the distinguished witness. Let
\(j_1,\dots,j_t\), with \(t\ge2\), be its fixed labels. For a support
coordinate \(i\in S\), this block contributes at most
\[
\prod_{a=1}^t |u_i^{[j_a]}|
\le
q_i^{t/2}
\le
q_i,
\]
where the last inequality uses \(q_i\le1\). Summing over the possible support
coordinate in \(S\) gives the bound
\[
\sum_{i\in S} q_i
\le
m\max_i q_i
\le
C_\ell\Delta_U .
\]
\item 
Second, consider the remaining assignments for which no remaining block
touches \(S\), but some coordinate in \(S^c\) carries remaining multiplicity at
least \(4\). Choose such a block as the distinguished witness. If its fixed
labels are \(j_1,\dots,j_t\), with \(t\ge4\), then its total absolute
contribution is at most
\[
\sum_{i=1}^d
\prod_{a=1}^t |u_i^{[j_a]}|
\le
\sum_{i=1}^d q_i^{t/2}
\le
\sum_{i=1}^d q_i^2
=
\rho
\le
\Delta_U .
\]
\item 

Third, consider the remaining assignments for which no remaining block touches
\(S\), and no coordinate in \(S^c\) carries remaining multiplicity at least
\(4\). At this stage, every remaining block is a \(2\)-block in \(S^c\), and
the supports of these \(2\)-blocks are distinct. If the assignment is still
non-principal, then at least one such \(2\)-block is mixed-direction. Choose a
mixed block with labels \(j\neq k\) as the distinguished witness. After all
other remaining blocks have been fixed, the allowed support coordinates for
this mixed block are \(i\notin T\), where \(T\) contains \(S\) and the supports
of the other remaining blocks. In particular, \(|T|\le C_\ell\). The
orthogonality of the rows of \(U\) gives the identity
\[
\sum_{i\notin T} u_i^{[j]}u_i^{[k]}
=
-\sum_{i\in T} u_i^{[j]}u_i^{[k]} .
\]
This identity bounds the absolute contribution of the mixed block by
\[
\left|
\sum_{i\notin T} u_i^{[j]}u_i^{[k]}
\right|
\le
\sum_{i\in T} q_i
\le
C_\ell\max_i q_i
\le
C_\ell\Delta_U .
\]
\end{enumerate}

It remains to control the non-distinguished even blocks. For a fixed block
with labels \(j_1,\dots,j_t\), where \(t\ge2\), Hölder's inequality gives
\[
\sum_{i=1}^d
\prod_{a=1}^t |u_i^{[j_a]}|
\le
\prod_{a=1}^t \|u^{[j_a]}\|_t
\le
\prod_{a=1}^t \|u^{[j_a]}\|_2
=
1 .
\]
Restrictions on the allowed support coordinates can only decrease this
absolute sum. Since the number of remaining blocks is bounded by \(\ell\), all
non-distinguished even blocks together contribute at most \(C_\ell\).

The three prioritized defect estimates give the pointwise bound
\[
|B_S|
\le
C_\ell\Delta_U
\sum_{\sigma\in\mathcal S_{\alpha,m}} K_{\sigma,S},
\]
where \(\mathcal S_{\alpha,m}\) is a finite set of ordered label lists with
cardinality bounded by \(C_\ell\). Squaring, summing over \(S\), and using
\eqref{eq:K-sigma-S-bound} gives
\begin{equation}
\label{eq:BS-bound-clean}
\sum_{|S|=m} |B_S|^2
\le
C_\ell\Delta_U^2 .
\end{equation}

Finally, define the coefficient error by
\[
E_S
:=
a_S
-
\sum_{\lambda\in\mathcal A_m(\alpha)}
\mathsf A_{\alpha,\lambda}
C_{\nu(\alpha,\lambda)}
h_{\lambda,S}.
\]
The estimates \eqref{eq:principal-omission-l2} and
\eqref{eq:BS-bound-clean} imply
\[
\sum_{|S|=m} |E_S|^2
\le
C_\ell\Delta_U^2 .
\]
By Parseval's identity, the left-hand side is exactly the squared
\(L_2\)-error of the degree-\(m\) approximation. Taking square roots proves the
lemma.

\subsection{Proof of Lemma~\ref{lem:Cnu-hermite}}
\label{proof:Cmu-hermite}

Define \(L_i(w):=\sum_{j=1}^r (u_i^{[j]})^2w_j\). The generating function
factorizes as
\[
\prod_{i=1}^d (1+L_i(w))
=
\exp\left(\sum_{i=1}^d L_i(w)\right)\exp(R(w)),
\qquad
R(w):=
\sum_{i=1}^d \bigl(\log(1+L_i(w))-L_i(w)\bigr).
\]
Since \(\sum_i (u_i^{[j]})^2=1\) for every \(j\), the linear term satisfies
\(\sum_{i=1}^d L_i(w)=\sum_{j=1}^r w_j\). Thus the coefficient of \(w^\nu\) in
the first exponential factor is
\[
[w^\nu]\exp\left(\sum_{i=1}^d L_i(w)\right)
=
[w^\nu]\exp\left(\sum_{j=1}^r w_j\right)
=
\frac{1}{\nu!}.
\]

We first consider the case \(\Delta_U\ge1\). Since
\(1+L_i(w)\le\exp(L_i(w))\) coefficientwise, the coefficient \(C_\nu\)
satisfies
\[
0\le C_\nu
\le
[w^\nu]\exp\left(\sum_{i=1}^d L_i(w)\right)
=
\frac{1}{\nu!}.
\]
In this case, the desired error bound follows from
\[
\left|C_\nu-\frac{1}{\nu!}\right|
\le
\frac{1}{\nu!}
\le
C_\ell\Delta_U .
\]

It remains to consider the case \(\Delta_U<1\). In this case,
\(\rho\le\Delta_U<1\). The series \(R(w)\) has no constant or linear terms.
Therefore, every coefficient of \(R(w)\) up to total degree \(|\nu|\le\ell\)
is a finite linear combination, with coefficients depending only on \(\ell\),
of coefficients of \(L_i(w)^m\), where \(2\le m\le|\nu|\). For such \(m\), the
coefficient bound
\[
|[w^\omega]L_i(w)^m|
\le
C_\ell q_i^m
\le
C_\ell q_i^2
\]
holds because \(0\le q_i\le1\). After summing over \(i\), every relevant
coefficient of \(R(w)\) is bounded by
\[
C_\ell\sum_{i=1}^d q_i^2
=
C_\ell\rho .
\]
Since \(\rho<1\) and \(|\nu|\le\ell\), every coefficient of
\(\exp(R(w))-1\) up to total degree \(|\nu|\) is also \(O_\ell(\rho)\).
Multiplication by \(\exp(\sum_j w_j)\), whose coefficients up to degree
\(\ell\) are bounded by \(C_\ell\), gives
\[
\left|C_\nu-\frac{1}{\nu!}\right|
\le
C_\ell\rho
\le
C_\ell\Delta_U .
\]
This proves the lemma.

\subsection{Proof of Theorem~\ref{thm:hypercube_linear_clean}}
\label{proof:hypercube_linear_clean}
Write the monomial expansion of \(h\) as
\[
h(z)=\sum_{\alpha\in\Lambda_b}b_\alpha z^\alpha.
\]
We first record the coefficient comparison
\begin{equation}
\label{eq:explicit-coeff-norm}
\sum_{|\alpha|\le\ell}|b_\alpha|
+
\sum_{|\lambda|\le\ell}|a_\lambda|
\le
C_\ell N_{r,\ell}^{1/2}\|h\|_{L_2(\gamma_r)}.
\end{equation}
Indeed, Hermite orthogonality gives
\[
\sum_{|\lambda|\le\ell}|a_\lambda|
\le
N_{r,\ell}^{1/2}
\left(\sum_{|\lambda|\le\ell}a_\lambda^2\lambda!\right)^{1/2}.
\]
Using \eqref{eq:h-gaussian-L2-hermite}, the preceding display becomes
\[
\sum_{|\lambda|\le\ell}|a_\lambda|
\le
N_{r,\ell}^{1/2}\|h\|_{L_2(\gamma_r)}.
\]
The same bound for the monomial coefficients follows by expanding each
\(\He_\lambda\) into monomials. Since \(|\lambda|\le\ell\), the
\(\ell_1\)-norm of the monomial coefficient vector of \(\He_\lambda\) is
bounded by \(C_\ell\). This proves \eqref{eq:explicit-coeff-norm}.

Applying Corollary~\ref{cor:monomial-to-H} to each monomial in \(h\), and then
using \eqref{eq:explicit-coeff-norm}, gives
\[
\mathcal H(f^*)
=
\sum_{|\lambda|\le\ell} c_\lambda H_\lambda
+
E,
\]
where
\[
\|E\|_{L_2(\tau_d)}
\le
C_\ell N_{r,\ell}^{1/2}
\|h\|_{L_2(\gamma_r)}
\Delta_U.
\]
Here the coefficients \(c_\lambda\) are given by
\[
c_\lambda
=
\sum_{\substack{\alpha\in\Lambda_b\\ \lambda\in\mathcal A(\alpha)}}
b_\alpha
\mathsf A_{\alpha,\lambda}
C_{\nu(\alpha,\lambda)}.
\]

On the other hand, the Hermite coefficient formula
\eqref{eq:hermite-coeff-from-monomials} gives
\[
a_\lambda
=
\sum_{\substack{\alpha\in\Lambda_b\\ \lambda\in\mathcal A(\alpha)}}
b_\alpha
\mathsf A_{\alpha,\lambda}
\frac1{\nu(\alpha,\lambda)!}.
\]
Subtracting the two coefficient formulas and applying
Lemma~\ref{lem:Cnu-hermite} yields
\[
|c_\lambda-a_\lambda|
\le
C_\ell N_{r,\ell}^{1/2}
\|h\|_{L_2(\gamma_r)}
\Delta_U.
\]
Moreover, the layer norm satisfies
\[
\|H_\lambda\|_{L_2(\tau_d)}
\le
\|z^\lambda\|_{L_2(\tau_d)}
\le
C_\ell,
\]
where the last inequality follows from Hölder's inequality and the Khintchine
inequality. Since there are at most \(N_{r,\ell}\) indices \(\lambda\), we get
\[
\left\|
\sum_{|\lambda|\le\ell}(c_\lambda-a_\lambda)H_\lambda
\right\|_{L_2(\tau_d)}
\le
C_\ell N_{r,\ell}^{3/2}
\|h\|_{L_2(\gamma_r)}
\Delta_U.
\]
Since \(N_{r,\ell}^{3/2}\le N_{r,\ell}^2\), the preceding display is bounded
by
\(
C_\ell
\|h\|_{L_2(\gamma_r)}
\Theta_U.
\)
Combining this estimate with the bound on \(E\), and using
\eqref{eq:fstar-h-gaussian-L2}, proves
\[
\|P-F\|_{L_2(\tau_d)}
\le
C_\ell
\|f^*\|_{L_2(\gamma_d)}
\Theta_U.
\]
This completes the proof.

\subsection{Proof of Proposition~\ref{prop:H-orth}}
\label{proof:H-orth}

The cases \(m=0\) and \(m=1\) are immediate, so assume \(m\ge2\). Choose
ordered label lists \(\sigma_1,\dots,\sigma_m\in[r]\) and
\(\tau_1,\dots,\tau_m\in[r]\) such that
\[
\#\{p:\sigma_p=j\}=\alpha_j,
\qquad
\#\{p:\tau_p=j\}=\beta_j.
\]
With this choice, the two monomials can be written as
\[
z^\alpha=\prod_{p=1}^m z_{\sigma_p},
\qquad
z^\beta=\prod_{p=1}^m z_{\tau_p}.
\]
Expanding each linear form shows that the top-degree layers keep exactly the
injective terms. Thus
\[
H_\alpha(x)
=
\sum_{\substack{i_1,\dots,i_m\in[d]\\\text{all distinct}}}
\Bigl(\prod_{p=1}^m u_{i_p}^{[\sigma_p]}\Bigr)
x_{i_1}\cdots x_{i_m},
\]
and similarly
\[
H_\beta(x)
=
\sum_{\substack{v_1,\dots,v_m\in[d]\\\text{all distinct}}}
\Bigl(\prod_{p=1}^m u_{v_p}^{[\tau_p]}\Bigr)
x_{v_1}\cdots x_{v_m}.
\]

Let \(X\sim\tau_d\). The expectation is nonzero precisely when the two
injective tuples have the same underlying coordinate set. Hence we have
\[
\langle H_\alpha,H_\beta\rangle
=
\sum_{\substack{i_1,\dots,i_m\\\text{all distinct}}}
\sum_{\pi\in S_m}
\prod_{p=1}^m
u_{i_p}^{[\sigma_p]}
u_{i_p}^{[\tau_{\pi^{-1}(p)}]}.
\]

We compare this restricted sum with the unrestricted sum
\[
M
:=
\sum_{i_1,\dots,i_m\in[d]}
\sum_{\pi\in S_m}
\prod_{p=1}^m
u_{i_p}^{[\sigma_p]}
u_{i_p}^{[\tau_{\pi^{-1}(p)}]}.
\]
For fixed \(\pi\), the sum over \((i_1,\dots,i_m)\) factorizes as
\[
\sum_{i_1,\dots,i_m}
\prod_{p=1}^m
u_{i_p}^{[\sigma_p]}
u_{i_p}^{[\tau_{\pi^{-1}(p)}]}
=
\prod_{p=1}^m
\langle u^{[\sigma_p]},u^{[\tau_{\pi^{-1}(p)}]}\rangle.
\]
By orthonormality, this product equals \(1\) exactly when
\(\sigma_p=\tau_{\pi^{-1}(p)}\) for every \(p\), and otherwise it equals
\(0\). Such permutations exist if and only if \(\alpha=\beta\). In that case,
their number is
\[
\prod_{j=1}^r \alpha_j!
=
\alpha!.
\]
Therefore, we get
\[
M
=
\mathbf 1_{\{\alpha=\beta\}}\alpha!.
\]

It remains to control the contribution of tuples with collisions. Since
\(\langle H_\alpha,H_\beta\rangle\) is obtained from \(M\) by restricting to
tuples with all entries distinct, we have
\[
M-\langle H_\alpha,H_\beta\rangle
=
\sum_{\pi\in S_m}
\sum_{\substack{i_1,\dots,i_m\\\text{not all distinct}}}
\prod_{p=1}^m
u_{i_p}^{[\sigma_p]}
u_{i_p}^{[\tau_{\pi^{-1}(p)}]}.
\]
Using the elementary bound
\[
\mathbf 1_{\{\text{not all distinct}\}}
\le
\sum_{1\le p<q\le m}\mathbf 1_{\{i_p=i_q\}},
\]
and then taking absolute values, we get
\[
|M-\langle H_\alpha,H_\beta\rangle|
\le
\sum_{\pi\in S_m}
\sum_{1\le p<q\le m}
\sum_{\substack{i_1,\dots,i_m\\ i_p=i_q}}
\prod_{c=1}^m
\big|
u_{i_c}^{[\sigma_c]}
u_{i_c}^{[\tau_{\pi^{-1}(c)}]}
\big|.
\]

Fix \(\pi\in S_m\) and \(p<q\), and write \(i=i_p=i_q\). For any
\(j,k\in[r]\), we have
\[
|u_i^{[j]}u_i^{[k]}|
\le
\frac12\bigl((u_i^{[j]})^2+(u_i^{[k]})^2\bigr)
\le
q_i.
\]
Thus the two constrained positions contribute at most
\(
q_i^2.
\)

For each remaining position, Cauchy--Schwarz gives
\(
\sum_{v=1}^d
|u_v^{[\sigma_c]}u_v^{[\tau_{\pi^{-1}(c)}]}|
\le
1.
\)
Therefore each constrained collision sum is bounded by
\(
\sum_{i=1}^d q_i^2
=
\rho.
\)
Summing over \(\pi\in S_m\) and over all pairs \(p<q\), we obtain
\[
\big|
\langle H_\alpha,H_\beta\rangle
-
\mathbf 1_{\{\alpha=\beta\}}\alpha!
\big|
\le
m!\binom{m}{2}\rho.
\]
Since \(m\le\ell\) and \(\rho\le\Delta_U\), the desired bound follows.

\subsection{Proof of Proposition~\ref{prop:H-contraction}}
\label{proof:H-contraction}

We first prove the active-direction estimate. Fix \(s\in[r]\), and choose an
ordered label list \(\sigma_1,\dots,\sigma_q\in[r]\) such that
\(\#\{p:\sigma_p=j\}=\lambda_j\) for each \(j\in[r]\). The all-distinct layer
\(H_\lambda\) admits the injective expansion
\[
H_\lambda(x)
=
\sum_{\substack{t_1,\dots,t_q\in[d]\\\text{all distinct}}}
\left(\prod_{p=1}^q u_{t_p}^{[\sigma_p]}\right)
x_{t_1}\cdots x_{t_q}.
\]
For a tuple \((t_v)_{v\neq p}\), write \(S=\{t_v:v\neq p\}\). Differentiating
the preceding expansion term by term gives the directional derivative formula
\[
(u^{[s]})^\top\nabla H_\lambda(x)
=
\sum_{p=1}^q
\sum_{\substack{(t_v)_{v\neq p}\\\text{all distinct}}}
\left(\prod_{v\neq p}u_{t_v}^{[\sigma_v]}\right)
x^S
\sum_{t\notin S}
u_t^{[s]}u_t^{[\sigma_p]}.
\]
The orthonormality of the rows of \(U\) gives, for each \(j\in[r]\), the
identity
\[
\sum_{t\notin S}u_t^{[s]}u_t^{[j]}
=
\mathbf 1_{\{s=j\}}
-
\sum_{t\in S}u_t^{[s]}u_t^{[j]}.
\]
Substituting this identity into the derivative formula separates the main term
from the error term. The contribution of \(\mathbf 1_{\{s=\sigma_p\}}\) is
exactly \(\lambda_sH_{\lambda-e_s}\), with the convention that this term is
zero when \(\lambda_s=0\). The remaining terms define \(R_{s,\lambda}\), giving
the decomposition
\[
(u^{[s]})^\top\nabla H_\lambda
=
\lambda_sH_{\lambda-e_s}+R_{s,\lambda}.
\]

It remains to estimate the remainder. For every \((q-1)\)-element set \(S\),
the correction factor satisfies
\[
\left|
\sum_{t\in S}u_t^{[s]}u_t^{[\sigma_p]}
\right|
\le
|S|\max_i q_i
\le
C_\ell\Delta_U.
\]
After grouping the \(p\)-th summand by the underlying set \(S\), its Walsh
coefficients are those of \(H_{\lambda-e_{\sigma_p}}\) multiplied by this
correction factor. Since
\(\|H_{\lambda-e_{\sigma_p}}\|_{L_2}
\le \|z^{\lambda-e_{\sigma_p}}\|_{L_2}\le C_\ell\), Parseval's identity gives
\[
\|R_{s,\lambda}^{(p)}\|_{L_2}
\le
C_\ell\Delta_U.
\]
Summing over \(p\in[q]\), with \(q\le\ell\), yields
\[
\|R_{s,\lambda}\|_{L_2}
\le
C_\ell\Delta_U.
\]

We now prove the inactive-direction estimate. Let \(v\in\mathcal U^\perp\) be
a unit vector. The same injective expansion gives
\[
v^\top\nabla H_\lambda(x)
=
\sum_{p=1}^q
\sum_{\substack{(t_a)_{a\neq p}\\\text{all distinct}}}
\left(\prod_{a\neq p}u_{t_a}^{[\sigma_a]}\right)
x^S
\sum_{t\notin S}
v_tu_t^{[\sigma_p]}.
\]
Since \(v\perp u^{[\sigma_p]}\), the inner sum satisfies
\[
\sum_{t\notin S}v_tu_t^{[\sigma_p]}
=
-\sum_{t\in S}v_tu_t^{[\sigma_p]}.
\]
Thus \(v^\top\nabla H_\lambda\) consists only of remainder terms; write
\(v^\top\nabla H_\lambda=R_{v,\lambda}\).

Let \(c_S^{(p)}\) denote the coefficient of \(x^S\) in
\(H_{\lambda-e_{\sigma_p}}\). By Cauchy--Schwarz, the coefficient multiplying
\(x^S\) in the \(p\)-th remainder satisfies
\[
\left|
\sum_{i\in S}v_i u_i^{[\sigma_p]}
\right|^2
\le
(q-1)\sum_{i\in S}v_i^2(u_i^{[\sigma_p]})^2.
\]
Consequently, Parseval's identity gives the preliminary bound
\begin{equation}
\label{eq:inactive-remainder-preclaim}
\|R_{v,\lambda}^{(p)}\|_{L_2}^2
\le
C_\ell
\sum_{i=1}^d
v_i^2(u_i^{[\sigma_p]})^2
\sum_{S\ni i}|c_S^{(p)}|^2.
\end{equation}

\begin{claim}\label{cl:c_s_p}
For every \(i\in[d]\), the coefficient localization bound holds:
\[
\sum_{S\ni i}|c_S^{(p)}|^2
\le
C_\ell q_i.
\]
\end{claim}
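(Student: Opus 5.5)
We use the injective expansion of the all-distinct layer $H_{\mu}$ with $\mu:=\lambda-e_{\sigma_p}$, of degree $m:=q-1$. Fix an ordered label list $\tau_1,\dots,\tau_m\in[r]$ realizing $\mu$, i.e.\ $\#\{a:\tau_a=j\}=\mu_j$. Then the Walsh coefficient of $x^S$, $|S|=m$, is
\[
c_S^{(p)}=\sum_{\psi:[m]\to S\ \mathrm{bijection}}\prod_{a=1}^m u_{\psi(a)}^{[\tau_a]} .
\]
The plan is to isolate the coordinate $i\in S$ and bound everything else by an $\ell_2$ estimate of the type \eqref{eq:K-sigma-S-bound}. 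For $S\ni i$, write $S=\{i\}\cup S'$ with $|S'|=m-1$ and $i\notin S'$. Split the bijections according to which slot $a\in[m]$ is sent to $i$. This gives
\[
c_S^{(p)}=\sum_{a=1}^m u_i^{[\tau_a]}\, c'_{a,S'},
\qquad
c'_{a,S'}:=\sum_{\psi':[m]\setminus\{a\}\to S'\ \mathrm{bij.}}\prod_{b\neq a}u_{\psi'(b)}^{[\tau_b]} .
\]
Here $c'_{a,S'}$ is exactly the Walsh coefficient of $x^{S'}$ in the all-distinct layer $H_{\mu-e_{\tau_a}}$. The terms with $i\in S'$ are excluded automatically, since $S'$ ranges over sets not containing $i$.

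Next apply Cauchy--Schwarz over the at most $m\le\ell$ values of $a$:
\[
|c_S^{(p)}|^2\le m\sum_{a}(u_i^{[\tau_a]})^2|c'_{a,S'}|^2\le m\,q_i\sum_a |c'_{a,S'}|^2,
\]
using $(u_i^{[j]})^2\le q_i$ for every $j\in[r]$. Summing over $S\ni i$ is the same as summing over $(m-1)$-sets $S'\not\ni i$. This sum is at most the full sum $\sum_{|S'|=m-1}|c'_{a,S'}|^2=\|H_{\mu-e_{\tau_a}}\|_{L_2}^2$ by Parseval. We then use the bound $\|H_{\nu}\|_{L_2}\le\|z^{\nu}\|_{L_2}\le C_\ell$, already used in the proofs above: the degree projection is an $L_2$-contraction, and Khintchine/Hölder controls $\|z^\nu\|_{L_2}$. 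Altogether
\[
\sum_{S\ni i}|c_S^{(p)}|^2\le m^2 C_\ell\, q_i\le C_\ell q_i .
\]

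The only step needing care is the decomposition identity. For each fixed $a$, the bijections $[m]\to S$ with $\psi(a)=i$ must correspond exactly to the bijections $[m]\setminus\{a\}\to S'$. They do, since the restriction $\psi|_{[m]\setminus\{a\}}$ is such a bijection and $\psi$ is recovered from it together with $\psi(a)=i$. As an alternative one could bound $|c'_{a,S'}|$ by the absolute-value quantity $K_{\sigma,S'}$ and invoke \eqref{eq:K-sigma-S-bound}, which gives the same conclusion without appealing to $H$-norms.

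When $m=0$ the sum over $S\ni i$ is empty, so the bound holds trivially.
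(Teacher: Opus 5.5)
Your proof is correct and is essentially the paper's argument written at the level of Walsh coefficients. The paper writes $\sum_{S\ni i}|c_S^{(p)}|^2=\|\partial_i H_\eta\|_{L_2}^2$ and splits $\partial_i H_\eta$ according to which label is exposed at coordinate $i$; this is exactly your bijection decomposition $c_S^{(p)}=\sum_a u_i^{[\tau_a]}c'_{a,S'}$, followed by the same Cauchy--Schwarz step and the same $C_\ell$ bound on layer norms. Your move of dominating the $i$-excluded layer by the full Parseval norm of $H_{\mu-e_{\tau_a}}$ is, if anything, a slightly cleaner justification than the paper's direct appeal to Khintchine/H\"older for the excluded layer.
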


\begin{proof}[Proof of Claim~\ref{cl:c_s_p}]
Set \(\eta:=\lambda-e_{\sigma_p}\). Since \(c_S^{(p)}\) is the coefficient of
\(x^S\) in \(H_\eta\), Parseval's identity gives
\[
\sum_{S\ni i}|c_S^{(p)}|^2
=
\|\partial_i H_\eta\|_{L_2}^2.
\]
Choose an ordered label list \(\rho_1,\dots,\rho_{q-1}\) for \(\eta\). The
injective expansion of \(H_\eta\) is
\[
H_\eta(x)
=
\sum_{\substack{t_1,\dots,t_{q-1}\in[d]\\\text{all distinct}}}
\left(\prod_{a=1}^{q-1}u_{t_a}^{[\rho_a]}\right)
x_{t_1}\cdots x_{t_{q-1}}.
\]
Differentiating this expansion with respect to \(x_i\) exposes one label at a
time. In the term exposing the label \(\rho_a\), the coefficient
\(u_i^{[\rho_a]}\) appears, and the remaining factor is an all-distinct layer
of degree \(q-2\) with coordinate \(i\) excluded from its support. The same
Khintchine and Hölder bounds used above show that this remaining layer has
\(L_2(\tau_d)\)-norm at most \(C_\ell\). Therefore the derivative norm satisfies
\[
\|\partial_iH_\eta\|_{L_2}
\le
C_\ell
\sum_{a=1}^{q-1}|u_i^{[\rho_a]}|.
\]
Since \(q\le\ell\), Cauchy--Schwarz gives the bound
\[
\|\partial_iH_\eta\|_{L_2}^2
\le
C_\ell
\sum_{a=1}^{q-1}(u_i^{[\rho_a]})^2.
\]
The exposed labels are among the relevant directions, so
\[
\sum_{a=1}^{q-1}(u_i^{[\rho_a]})^2
\le
C_\ell\sum_{j=1}^r (u_i^{[j]})^2
=
C_\ell q_i.
\]
Combining the preceding two estimates proves the claim.
\end{proof}

Substituting Claim~\ref{cl:c_s_p} into
\eqref{eq:inactive-remainder-preclaim} gives
\[
\|R_{v,\lambda}^{(p)}\|_{L_2}^2
\le
C_\ell
\sum_{i=1}^d
v_i^2(u_i^{[\sigma_p]})^2q_i.
\]
Since \((u_i^{[\sigma_p]})^2\le q_i\), the last sum satisfies
\[
\sum_{i=1}^d
v_i^2(u_i^{[\sigma_p]})^2q_i
\le
\sum_{i=1}^d v_i^2 q_i^2
\le
(\max_i q_i)^2.
\]
Using \(\max_i q_i\le\Delta_U\), we obtain
\[
\|R_{v,\lambda}^{(p)}\|_{L_2}^2
\le
C_\ell\Delta_U^2.
\]
Equivalently, the \(p\)-th remainder satisfies
\[
\|R_{v,\lambda}^{(p)}\|_{L_2}
\le
C_\ell\Delta_U.
\]
Summing over \(p\in[q]\), with \(q\le\ell\), gives
\[
\|R_{v,\lambda}\|_{L_2}
\le
C_\ell\Delta_U.
\]
This completes the proof.

\subsection{Proof of Lemma~\ref{lem:gaussian-hypercube-L2-comparison}}
\label{proof:gaussian-hypercube-L2-comparison}

Since \(P=\mathcal H(f^*)\) agrees with \(f^*\) on \(\{\pm1\}^d\), we have
\(\|P\|_{L_2(\tau_d)}=\|f^*\|_{L_2(\tau_d)}\). By
Theorem~\ref{thm:hypercube_linear_clean}, there is a decomposition
\begin{equation}
\label{eq:L2-comparison-surrogate-decomposition}
P=F+E,
\qquad
\|E\|_{L_2(\tau_d)}
\le
C_\ell \|f^*\|_{L_2(\gamma_d)}\Theta_U .
\end{equation}
Using \eqref{eq:fstar-h-gaussian-L2}, the error bound in
\eqref{eq:L2-comparison-surrogate-decomposition} may equivalently be written as
\[
\|E\|_{L_2(\tau_d)}
\le
C_\ell \|h\|_{L_2(\gamma_r)}\Theta_U .
\]

We first compare the surrogate \(F\) with the latent polynomial \(h\). Different
Walsh degrees are exactly orthogonal, and Proposition~\ref{prop:H-orth}
controls the inner products within each fixed degree. Therefore the squared
norms satisfy
\[
\left|
\|F\|_{L_2(\tau_d)}^2
-
\sum_{|\lambda|\le\ell}a_\lambda^2\lambda!
\right|
\le
C_\ell\Delta_U
\left(\sum_{|\lambda|\le\ell}|a_\lambda|\right)^2 .
\]
The Hermite identity
\(\sum_{|\lambda|\le\ell}a_\lambda^2\lambda!
=\|h\|_{L_2(\gamma_r)}^2\), the coefficient bound
\eqref{eq:explicit-coeff-norm}, and the inequality
\(N_{r,\ell}\Delta_U\le \Theta_U\) give the key comparison
\begin{equation}
\label{eq:L2-comparison-F-h}
\left|
\|F\|_{L_2(\tau_d)}^2
-
\|h\|_{L_2(\gamma_r)}^2
\right|
\le
C_\ell\Theta_U\|h\|_{L_2(\gamma_r)}^2 .
\end{equation}
Since \(\Theta_U\le1\), the comparison \eqref{eq:L2-comparison-F-h} also
implies
\[
\|F\|_{L_2(\tau_d)}
\le
C_\ell\|h\|_{L_2(\gamma_r)} .
\]

We now transfer the comparison from \(F\) to \(P\). The decomposition
\eqref{eq:L2-comparison-surrogate-decomposition} gives the deterministic bound
\begin{align}
\label{eq:L2-comparison-P-F}
\left|
\|P\|_{L_2(\tau_d)}^2
-
\|F\|_{L_2(\tau_d)}^2
\right|
&\le
2\|F\|_{L_2(\tau_d)}\|E\|_{L_2(\tau_d)}
+
\|E\|_{L_2(\tau_d)}^2 \notag\\
&\le
C_\ell\Theta_U\|h\|_{L_2(\gamma_r)}^2 .
\end{align}
In the last step, we used the preceding bound on \(\|F\|_{L_2(\tau_d)}\), the
error estimate for \(E\), and the assumption \(\Theta_U\le1\).

Combining \eqref{eq:L2-comparison-F-h} and
\eqref{eq:L2-comparison-P-F} gives
\[
\left|
\|P\|_{L_2(\tau_d)}^2
-
\|h\|_{L_2(\gamma_r)}^2
\right|
\le
C_\ell\Theta_U\|h\|_{L_2(\gamma_r)}^2 .
\]
Finally, using
\(\|P\|_{L_2(\tau_d)}=\|f^*\|_{L_2(\tau_d)}\) and
\eqref{eq:fstar-h-gaussian-L2}, we obtain the desired estimate
\begin{equation}
\label{eq:gaussian-hypercube-L2-comparison-final}
\left|
\|f^*\|_{L_2(\tau_d)}^2
-
\|f^*\|_{L_2(\gamma_d)}^2
\right|
\le
C_\ell
\Theta_U
\|f^*\|_{L_2(\gamma_d)}^2 .
\end{equation}

If \(\Theta_U\) is sufficiently small, then
\eqref{eq:gaussian-hypercube-L2-comparison-final} implies the two-sided bound
\[
(1-C_\ell\Theta_U)\|f^*\|_{L_2(\gamma_d)}^2
\le
\|f^*\|_{L_2(\tau_d)}^2
\le
(1+C_\ell\Theta_U)\|f^*\|_{L_2(\gamma_d)}^2 .
\]
This proves the stated norm equivalence.

\subsection{Proof of Proposition~\ref{prop:ambient-basis-covariance}}
\label{proof:ambient-basis-covariance}

For \(s\in[d]\), define
\[
D_s:=(u^{[s]})^\top\nabla F_q.
\]
By Proposition~\ref{prop:H-contraction}, we have
\[
D_s
=
\begin{cases}
\displaystyle
\sum_{|\lambda|=q}a_\lambda\lambda_sH_{\lambda-e_s}
+\mathcal R_s,
& s\in[r],
\\[2ex]
\mathcal R_s,
& s\in[d]\setminus[r],
\end{cases}
\]
where the remainder satisfies
\[
\|\mathcal R_s\|_{L_2}
\le
C_\ell\Delta_U
\sum_{|\lambda|=q}|a_\lambda|.
\]
If
\[
N_{r,q}:=\#\{\lambda\in\N^r:\ |\lambda|=q\},
\]
then Cauchy--Schwarz gives
\[
\sum_{|\lambda|=q}|a_\lambda|
\le
N_{r,q}^{1/2}A_q^{1/2}.
\]
Therefore, we have
\[
\|\mathcal R_s\|_{L_2}
\le
C_\ell N_{r,q}^{1/2}\Delta_U A_q^{1/2}.
\]

Assume first that \(s\in[r]\). Reindexing with \(\gamma=\lambda-e_s\) gives
\[
D_s
=
\sum_{|\gamma|=q-1}
(\gamma_s+1)a_{\gamma+e_s}H_\gamma
+
\mathcal R_s.
\]
Denote the leading term by
\[
L_s
:=
\sum_{|\gamma|=q-1}
(\gamma_s+1)a_{\gamma+e_s}H_\gamma.
\]
Using Proposition~\ref{prop:H-orth} within degree \(q-1\), we obtain
\[
\|L_s\|_{L_2}^2
\le
C_\ell A_q
+
C_\ell\Delta_U
\left(
\sum_{|\gamma|=q-1}|(\gamma_s+1)a_{\gamma+e_s}|
\right)^2.
\]
By Cauchy--Schwarz, this implies
\[
\|L_s\|_{L_2}^2
\le
C_\ell A_q
+
C_\ell N_{r,q}\Delta_U A_q.
\]
Since \(N_{r,q}\le N_{r,\ell}\) and \(\Theta_U=N_{r,\ell}^2\Delta_U\le1\), we
get
\(
\|L_s\|_{L_2}
\le
C_\ell A_q^{1/2}.
\)
Together with the remainder bound, this yields
\[
\|D_s\|_{L_2}
\le
C_\ell A_q^{1/2},
\qquad s\in[r].
\]
If \(s\in[d]\setminus[r]\), then \(D_s=\mathcal R_s\), and hence
\[
\|D_s\|_{L_2}
\le
C_\ell A_q^{1/2}\Theta_U.
\]

We first consider the active-active block. If \(s,t\in[r]\), then expanding the
covariance gives
\[
\E[D_sD_t]
=
\langle L_s,L_t\rangle
+
O_\ell(\Theta_U A_q).
\]
Using Proposition~\ref{prop:H-orth}, the leading inner product satisfies
\[
\langle L_s,L_t\rangle
=
(G_q)_{s,t}
+
O_\ell(N_{r,q}\Delta_U A_q).
\]
Since \(N_{r,q}\Delta_U\le\Theta_U\), we obtain
\[
\big|
(\Xi_q)_{s,t}-(\widetilde G_q)_{s,t}
\big|
\le
C_\ell\Theta_U A_q,
\qquad s,t\in[r].
\]

Next suppose exactly one of \(s,t\) is inactive. Then we immediately have
\[
(\widetilde G_q)_{s,t}=0.
\]
One derivative has \(L_2\)-norm at most \(C_\ell A_q^{1/2}\), while the other
has \(L_2\)-norm at most \(C_\ell A_q^{1/2}\Theta_U\). Therefore, we have
\[
\big|
(\Xi_q)_{s,t}-(\widetilde G_q)_{s,t}
\big|
\le
C_\ell\Theta_U A_q.
\]

Finally, if \(s,t>r\), then both derivatives are inactive remainders. Thus we have
\[
\big|
(\Xi_q)_{s,t}-(\widetilde G_q)_{s,t}
\big|
\le
C_\ell N_{r,q}\Delta_U^2A_q.
\]
Since \(N_{r,q}\le N_{r,\ell}\), this is bounded by
\(
C_\ell\Theta_U^2A_q.
\)

Combining the three cases proves the entrywise estimate. The same proof works
uniformly if any inactive basis vector is replaced by an arbitrary unit vector
in \(\mathcal U^\perp\), because Proposition~\ref{prop:H-contraction} is
uniform over such unit vectors.

\subsection{Proof of Proposition~\ref{prop:degreewise-final-target-cov}}
\label{proof:degreewise-final-target-cov}
Set
\(
E_q:=P_q-F_q.
\)
By \eqref{eq:Pq-Fq-each}, we have
\[
\|E_q\|_{L_2}
\le
C_\ell
\|f^*\|_{L_2(\gamma_d)}
\Theta_U.
\]

Since \(E_q\) is homogeneous multilinear of degree \(q\), for any
\(v\in\R^d\), its directional derivative has the expansion
\[
v^\top\nabla E_q
=
\sum_{|T|=q-1}
\left(
\sum_{j\notin T}
v_j\widehat{E_q}(T\cup\{j\})
\right)x^T.
\]
By Cauchy--Schwarz, this implies
\[
\|v^\top\nabla E_q\|_{L_2}^2
\le
q\|v\|_2^2\|E_q\|_{L_2}^2.
\]
Thus, for every unit vector \(v\), we have
\[
\|v^\top\nabla E_q\|_{L_2}
\le
C_\ell
\|f^*\|_{L_2(\gamma_d)}
\Theta_U.
\]

For convenience, write
\[
D_{s,q}^P:=(u^{[s]})^\top\nabla P_q,
\qquad
D_{s,q}^F:=(u^{[s]})^\top\nabla F_q,
\qquad
D_{s,q}^E:=(u^{[s]})^\top\nabla E_q.
\]
Since \(P_q=F_q+E_q\), these derivatives satisfy
\[
D_{s,q}^P
=
D_{s,q}^F+D_{s,q}^E.
\]
Expanding the covariance difference gives
\[
(\Gamma_q)_{s,t}-(\Xi_q)_{s,t}
=
\E[D_{s,q}^ED_{t,q}^F]
+
\E[D_{s,q}^FD_{t,q}^E]
+
\E[D_{s,q}^ED_{t,q}^E].
\]
Taking absolute values and applying Cauchy--Schwarz yields
\[
\big|(\Gamma_q)_{s,t}-(\Xi_q)_{s,t}\big|
\le
\|D_{s,q}^E\|_{L_2}\|D_{t,q}^F\|_{L_2}
+
\|D_{s,q}^F\|_{L_2}\|D_{t,q}^E\|_{L_2}
+
\|D_{s,q}^E\|_{L_2}\|D_{t,q}^E\|_{L_2}.
\]

The derivative bound for \(E_q\) gives
\[
\|D_{s,q}^E\|_{L_2}
\le
C_\ell
\|f^*\|_{L_2(\gamma_d)}
\Theta_U
\]
for all \(s\in[d]\). The proof of
Proposition~\ref{prop:ambient-basis-covariance} gives
\[
\|D_{s,q}^F\|_{L_2}
\le
C_\ell A_q^{1/2},
\qquad
s\in[r],
\]
and also gives
\[
\|D_{s,q}^F\|_{L_2}
\le
C_\ell A_q^{1/2}\Theta_U,
\qquad
s\in[d]\setminus[r].
\]
Using \eqref{eq:Aq-bound}, we have
\[
A_q^{1/2}
\le
\|f^*\|_{L_2(\gamma_d)}.
\]
Therefore, we have
\[
\big|
(\Gamma_q)_{s,t}-(\Xi_q)_{s,t}
\big|
\le
C_\ell
\|f^*\|_{L_2(\gamma_d)}^2
\left(
\Theta_U\mathbf 1_{\{s\le r\ \text{or}\ t\le r\}}
+
\Theta_U^2\mathbf 1_{\{s>r,\ t>r\}}
\right).
\]
Combining this estimate with Proposition~\ref{prop:ambient-basis-covariance}
gives
\[
\big|
(\Gamma_q)_{s,t}-(\widetilde G_q)_{s,t}
\big|
\le
C_\ell
\|f^*\|_{L_2(\gamma_d)}^2
\left(
\Theta_U\mathbf 1_{\{s\le r\ \text{or}\ t\le r\}}
+
\Theta_U^2\mathbf 1_{\{s>r,\ t>r\}}
\right).
\]
If \(\Theta_U\) is sufficiently small, then
Lemma~\ref{lem:gaussian-hypercube-L2-comparison} allows
\(\|f^*\|_{L_2(\gamma_d)}^2\) to be replaced by
\(\|f^*\|_{L_2(\tau_d)}^2\). The uniform version with arbitrary inactive
unit vectors follows from the same unit-vector derivative estimate and the
uniform inactive-direction estimate in Proposition~\ref{prop:H-contraction}.

\subsection{Proof of Theorem~\ref{thm:final-target-cov}}\label{proof:final-target-cov}
Since
\[
P_{\le L}
=
\sum_{q=0}^{L}P_q,
\]
the derivative \((u^{[s]})^\top\nabla P_q\) is homogeneous multilinear of
degree \(q-1\). Therefore the contributions from distinct degrees are
orthogonal in \(L_2(\tau_d)\). Hence, for \(q\neq q'\), we have
\[
\Big\langle
(u^{[s]})^\top\nabla P_q,\,
(u^{[t]})^\top\nabla P_{q'}
\Big\rangle_{L_2(\tau_d)}
=
0.
\]
It follows that
\begin{equation}
\label{eq:Gamma-sum-by-degree}
(\Gamma_{\le L})_{s,t}
=
\sum_{q=1}^{L}(\Gamma_q)_{s,t}.
\end{equation}
On the Gaussian side, \eqref{eq:G-sum-by-degree} gives
\begin{equation}
\label{eq:Gtilde-sum-by-degree}
(\widetilde G_{\le L})_{s,t}
=
\sum_{q=1}^{L}(\widetilde G_q)_{s,t}.
\end{equation}
Subtracting \eqref{eq:Gtilde-sum-by-degree} from
\eqref{eq:Gamma-sum-by-degree}, applying
Proposition~\ref{prop:degreewise-final-target-cov} for each
\(q=1,\dots,L\), and absorbing the finite number of degrees into \(C_\ell\)
proves the entrywise bounds. The same summation gives the stated uniform
bounds involving arbitrary inactive unit vectors.

It remains to pass from the entrywise comparison to the block-operator
comparison. In the orthonormal basis
\[
\{u^{[1]},\dots,u^{[d]}\},
\]
the matrix \(M_{\le L}\) is represented by \(\Gamma_{\le L}\), while
\(U^\top\Sigma_LU\) is represented by \(\widetilde G_{\le L}\). Hence the
desired operator norm equals
\[
\|\Gamma_{\le L}-\widetilde G_{\le L}\|_{\op}.
\]

The active-active block has size \(r\times r\). Its entrywise bound therefore
contributes at most
\(
C_\ell r\,
\|f^*\|_{L_2(\tau_d)}^2
\Theta_U
\)
to the operator norm. For the active-inactive block, the uniform
inactive-direction estimate implies that, for every unit vector
\(v\in\mathcal U^\perp\) and every \(s\in[r]\), the corresponding mixed
bilinear form is bounded by
\[
C_\ell
\|f^*\|_{L_2(\tau_d)}^2
\Theta_U.
\]
Therefore the active-inactive block has operator norm at most
\[
C_\ell\sqrt r\,
\|f^*\|_{L_2(\tau_d)}^2
\Theta_U.
\]
This is absorbed by
\(
C_\ell r\,
\|f^*\|_{L_2(\tau_d)}^2
\Theta_U.
\)
For the inactive-inactive block, the uniform inactive-inactive estimate gives
operator norm at most
\[
C_\ell
\|f^*\|_{L_2(\tau_d)}^2
\Theta_U^2.
\]
Since \(\Theta_U\le1\) in the regime of interest and \(r\ge1\), this is also
absorbed by
\(
C_\ell
\|f^*\|_{L_2(\tau_d)}^2
r\Theta_U.
\)
Combining the three block estimates gives
\[
\snorm{
M_{\le L}
-
U^\top \Sigma_L U
}
\le
C_\ell
\|f^*\|_{L_2(\tau_d)}^2
r\Theta_U.
\]
Since \(\overline\Theta_U=r\Theta_U\), this proves
\eqref{eq:block-op-transfer}.

\section{Proof of Theorem~\ref{thm:main_krr}}\label{proof:main_krr}
Extend $u^{[1]},\ldots,u^{[r]}$ to an orthonormal basis $u^{[1]},\ldots,u^{[d]}$ of $\R^d$, with $u^{[r+1]}\ldots,u^{[d]} \subseteq U^\perp$.
For any $u^{[s]}, u^{[t]}\in \{u^{[i]}\}_{i=1}^d$, we focus on the estimate
\begin{align}\label{eqn:deriv_est}
    (u^{[s]})\tran \E_n\brac{\nabla \hat{f}(\nabla \hat{f})\tran}u^{[t]} &=  \frac{1}{nd^2} \sum_{k=1}^n \sum_{i,j = 1}^d u_i^{[s]}\partial_i \hat{f}(x^{(k)})\partial_j \hat{f}(x^{(k)}) u_j^{[t]} \notag\\
    &=\frac{1}{nd^2}y\tran K_\lambda\inv \Diag(Xu^{[s]}) (K')^2   \Diag(Xu^{[t]}) K_\lambda\inv y.
\end{align}
By the assumption $\gH(f^*)\in L_2(\tau_d)$, we can expand $\gH(f^*)$ into Fourier-Walsh basis:
\begin{equation}\label{eqn:hermite_expans_target}
    \gH(f_U^*)(x^{(i)}) = \phi(x^{(i)}) \tran \mathsf{c}=\sum_{S\subseteq [d], |S|\leq \ell}\mathsf{c}_{S}\phi_{S}(x^{(i)}),
\end{equation}
for some coefficients \begin{align}
    \mathsf{c}_S:= \E_{x\sim \tau_d}\brac{\gH(f^*_U) x^S},~~~S\subseteq [d],
\end{align}
which satisfies $\|\mathsf{c}\|_2<\infty$. 
Since $\E[y^2_i]=\E|\gH(f_U^*)|^2 +\sigma_\vepsilon^2$ we deduce the estimate 
$$\E[\|y\|_2^2] = n (\E|\gH(f_U^*)|^2+\sigma_\vepsilon^2) = n(\norm{f_U^*}^2_{L_2(\tau_d)} + \sigma_\vepsilon^2).$$
Markov's inequality  subsequently shows $$\norm{y}^2 = O_{d,\P}(n\log(n)\cdot (\norm{f_U^*}_{L_2(\tau_d)}^2+\sigma_\vepsilon^2)).$$
We summarize this observation in the following proposition.

\begin{proposition}\label{prop:bound_y}
With probability at least $1-1/\log(n)$ we have $\norm{y}^2 \leq n\log(n)(\norm{f_U^*}_{L_2(\tau_d)
}^2+\sigma_\vepsilon^2)$.
\end{proposition}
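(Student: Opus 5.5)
The plan is to compute $\E\|y\|_2^2$ exactly and then apply Markov's inequality with threshold $\log(n)$ times the mean.

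First, since the pairs $(x^{(i)},\varepsilon_i)$ are i.i.d., we have $\E\|y\|_2^2=\sum_{i=1}^n \E[y_i^2]=n\,\E[y_1^2]$. We write $y_1=\gH(f_U^*)(x^{(1)})+\varepsilon_1$. This is legitimate because $f_U^*$ and its multilinear representative $\gH(f_U^*)$ agree on $\H^d$. Expanding the square gives
\[
\E[y_1^2]=\E\bigl[\gH(f_U^*)(x^{(1)})^2\bigr]+2\,\E\bigl[\gH(f_U^*)(x^{(1)})\bigr]\,\E[\varepsilon_1]+\E[\varepsilon_1^2].
\]
The cross term vanishes because the noise is independent of the covariates and has mean zero. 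The first term equals $\|f_U^*\|_{L_2(\tau_d)}^2$, which is finite because $f_U^*$ is a polynomial on a finite set; equivalently, by Parseval it equals $\|\mathsf c\|_2^2$. The last term equals $\sigma_\vepsilon^2$. Hence
\[
\E\|y\|_2^2=n\bigl(\|f_U^*\|_{L_2(\tau_d)}^2+\sigma_\vepsilon^2\bigr).
\]

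Second, apply Markov's inequality to the nonnegative variable $\|y\|_2^2$ at level $t=n\log(n)\bigl(\|f_U^*\|_{L_2(\tau_d)}^2+\sigma_\vepsilon^2\bigr)$. This gives
\[
\P\bigl(\|y\|_2^2>t\bigr)\le \frac{\E\|y\|_2^2}{t}=\frac{1}{\log n},
\]
which is exactly the complementary event in Proposition~\ref{prop:bound_y}. The degenerate case where $\|f_U^*\|_{L_2}^2+\sigma_\vepsilon^2=0$ is trivial, since then $y=0$ almost surely.

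There is no real obstacle here. The only point needing care is that the label noise is mean-zero and independent of $x$, so the cross term drops out. The bound is intentionally crude: a $\log n$ loss in exchange for avoiding any concentration argument, which is sufficient for the $O_{d,\P}$ statements used later.
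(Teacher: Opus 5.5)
Your proof is correct and is essentially the argument the paper gives: compute $\E\|y\|_2^2=n\bigl(\|f_U^*\|_{L_2(\tau_d)}^2+\sigma_\vepsilon^2\bigr)$ using independence and mean-zero noise, then apply Markov's inequality at level $\log(n)$ times this mean. Your handling of the vanishing cross term and of the degenerate case $\|f_U^*\|_{L_2}^2+\sigma_\vepsilon^2=0$ is slightly more explicit than the paper's.
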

We omit the subscript $U$ in $f^*_U$ for simplicity.

Applying Lemma~\ref{lemma:kernel_to_mono} with the two kernels $K$ and $K'$ shows that for any $\epsilon>0$ there exist matrices $\Delta,\Delta_1$ satisfying $\snorm{\Delta},\snorm{\Delta_1} = O_{d,\P}(d^{(\delta-1)/2+\epsilon})$ and 
\begin{align}
    K &= \Phi_{\leq p}D\Phi_{\leq p}\tran + \rho I_n + \Delta_1\label{eqn:K_decomp_cube},\\
    K' &= \Phi_{\leq p}D'\Phi_{\leq p}\tran + \rho' I_n + \Delta,\label{eqn:K'_decomp_cube} 
\end{align}
where we define $\rho := g(1) - g_{p}(1)$ and $\rho':= g'(1) - g_{p+1}(1)$, and the diagonal matrices $D$ and $D'$ satisfy $\|D_{\S_k} -g^{(k)}(0)d^{-k}I_{|\S_k|}\|_{\rm op} = O_d(d^{-k-1})$ and $\|D'_{\S_k} -g^{(k+1)}(0) d^{-k}I_{|\S_k|}\|_{\rm op} = O_d(d^{-k-1})$ for $k=0,\ldots, p$.

With the expression \eqref{eqn:K'_decomp_cube} in place of $K'$, equation \eqref{eqn:deriv_est} becomes
\begin{align*}
     \frac{1}{nd^2}y\tran K_\lambda\inv \Diag(Xu^{[s]}) \round{\Phi_{\leq p}D'\Phi_{\leq p}\tran + \rho' I_n + \Delta}^2  \Diag(Xu^{[t]}) K_\lambda\inv y.
\end{align*}
Letting $\beta = K_\lambda\inv y$ and expanding the square gives
\begin{align*}
   &~~~(u^{[s]})\tran \E_n\brac{\nabla \hat{f}(\nabla \hat{f})\tran} u^{[t]} \\
   &= \frac{1}{nd^2}\underbrace{\beta\tran \Diag(X u^{[s]})  \round{\Phi_{\leq p}D'\Phi_{\leq p}\tran }^2 \Diag(X u^{[t]}) \beta}_{T_1(s,t)} +  \frac{1}{nd^2} \underbrace{\beta\tran \Diag(Xu^{[s]})   (\rho' I_n + \Delta)^2\Diag(X u^{[t]}) \beta}_{T_2(s,t)} \\
    &~~~~+\frac{1}{nd^2}\underbrace{\beta\tran \Diag(X u^{[s]})\round{\round{\Phi_{\leq p}D'\Phi_{\leq p}\tran }  (\rho' I_n + \Delta)  +  (\rho' I_n + \Delta) \round{\Phi_{\leq p}D'\Phi_{\leq p}\tran } }\Diag(X u^{[t]}) \beta}_{T_{3}(s,t)}
\end{align*}
The following claim is the central ingredient of the proof. Once $T_1(s,t)$, $T_2(s,t)$ and $T_3(s,t)$ are estimated, the desired bound for $(u^{[s]})\tran \E_n\brac{\nabla \hat{f}(\nabla \hat{f})\tran} u^{[t]}$ follows immediately. Accordingly, the remainder of this section is devoted to the proof of this claim. In particular, the bound for $T_3(s,t)$  follows immediately from the bound for $T_1(s,t)$ and $T_2(s,t)$ using Cauchy-Schwarz.

Recall that
\begin{align}\label{eq:def_Gamma_rep} 
(\Gamma_q)_{s,t}
:=
\E\!\big[
(u^{[s]})^\top \nabla \gH(f^*)_q\,
(\nabla \gH(f^*)_q)^\top u^{[t]}
\big],
\qquad s,t\in[d].
\end{align}
To simplify the notation, for \(0\le q\le p+1\), define
\[
    A_q(s,t):=
    \sqrt{(\Gamma_q)_{s,s}}+\sqrt{(\Gamma_q)_{t,t}},
\]
and define
\[
    A_{\le p}(s,t):=
    \sum_{0\le q\le p}A_q(s,t),
    \qquad
    B_{\le p}(s,t):=
    \sum_{0\le q\le p}
    \bigl((\Gamma_q)_{s,s}+(\Gamma_q)_{t,t}\bigr).
\]

\begin{claim}\label{cl:main_agop}
   The following estimates hold for any $\epsilon>0$ uniformly over all $s,t \in [d]$:
   \begin{enumerate}[label=(\alph*)]
          \item $ \frac{1}{nd^2} \abs{T_2(s,t)} = {O}_{d,\P}\left(d^{-2+\epsilon} \right)\cdot \round{\norm{f^*}_{L_2}^2+\sigma_\vepsilon^2}.$\label{cl:main_agop_a}
       \item $ \abs{\frac{1}{nd^2} T_1(s,t)-{\sum_{0\leq q\leq p}(\Gamma_{q})_{s,t} + C_0^2 d^{2\delta-2}(\Gamma_{p+1})_{s,t}  }}\\
=O_{d,\P}\round{d^{-\frac{\delta}{2}+\epsilon}}
\Bigg[
B_{\leq p}(s,t)
+
d^{2\delta-2}
\round{
(\Gamma_{p+1})_{s,s}
+
(\Gamma_{p+1})_{t,t}
}
\Bigg]
\notag\\
+
O_{d,\P}\round{d^{-\frac{1+\delta}{2}+\frac{\epsilon}{2}}}
A_{\leq p}(s,t)
\round{\norm{f^*}_{L_2}^2+\sigma_\vepsilon^2}^{1/2}
\notag\\
\qquad+
O_{d,\P}\round{d^{\frac{-4+3\delta}{2}+\frac{\epsilon}{2}}}
A_{p+1}(s,t)
\round{\norm{f^*}_{L_2}^2+\sigma_\vepsilon^2}^{1/2}
\notag\\
\qquad+
O_{d,\P}\round{
d^{-1-\delta+\epsilon}
+
d^{-2+\delta+\epsilon}
}
\round{\norm{f^*}_{L_2}^2+\sigma_\vepsilon^2},$\label{cl:main_agop_b}

       \item $ \frac{1}{nd^2} \abs{T_3(s,t)}= O_{d,\P}\round{d^{-{\delta}+\epsilon}}
\Bigg[
B_{\leq p}(s,t)
+
d^{2\delta-2}
\round{
(\Gamma_{p+1})_{s,s}
+
(\Gamma_{p+1})_{t,t}
}
\Bigg]
\notag\\
+
O_{d,\P}\round{d^{-\frac{1+3\delta}{2}+\frac{\epsilon}{2}}}
A_{\leq p}(s,t)
\round{\norm{f^*}_{L_2}^2+\sigma_\vepsilon^2}^{1/2}
\notag\\
\qquad+
O_{d,\P}\round{d^{\frac{-4+\delta}{2}+\frac{\epsilon}{2}}}
A_{p+1}(s,t)
\round{\norm{f^*}_{L_2}^2+\sigma_\vepsilon^2}^{1/2}
\notag\\
\qquad+
O_{d,\P}\round{
d^{-1-2\delta+\epsilon}
+
d^{-2+\delta+\epsilon}
}
\round{\norm{f^*}_{L_2}^2+\sigma_\vepsilon^2},$\label{cl:main_agop_c}
   \end{enumerate}
\end{claim}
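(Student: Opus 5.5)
We will prove the three parts in the order (a), (b), (c). Throughout we use the KRR coefficient vector $\beta=K_\lambda^{-1}y$, the decomposition \eqref{eqn:K_decomp_cube}, and Proposition~\ref{prop:bound_y}. We also use the following facts.

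\begin{itemize}
\item By Theorem~\ref{lemma:phi_id_2}, $\Phi_{\le p}^\top\Phi_{\le p}/n\approx I$ up to $O(d^{-\delta/2+\epsilon})$.
\item By Theorem~\ref{lemma:gram_matrix} and Lemma~\ref{lemma:kernel_to_mono_general}, the degree-$(p+1)$ block satisfies $\Phi_{\S_{p+1}}\Phi_{\S_{p+1}}^\top\approx|\S_{p+1}|I_n$. This is what produces the extra shift $\rho$ and the $C_0=(\rho+\lambda)^{-1}g^{(p+1)}(0)$ factor.
\end{itemize}

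\textbf{Step 1: approximate $\beta$.} Writing $y=\Phi_{\le p}\mathsf c_{\le p}+\Phi_{\S_{p+1}}\mathsf c_{p+1}+\Phi_{>p+1}\mathsf c_{>p+1}+\varepsilon$, Woodbury applied to $K_\lambda\approx\Phi_{\le p}D\Phi_{\le p}^\top+(\rho+\lambda)I_n$ gives
\[
\beta\approx \Phi_{\le p}(n D)^{-1}\mathsf c_{\le p}+(\rho+\lambda)^{-1}P^\perp\bigl(\Phi_{>p}\mathsf c_{>p}+\varepsilon\bigr),
\]
where $P^\perp$ denotes the projection orthogonal to $\mathrm{col}(\Phi_{\le p})$. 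The error is controlled by $\snorm{\Delta_1}=O(d^{-(1-\delta)/2+\epsilon})$ together with $\snorm{K_\lambda^{-1}}\lesssim(\rho+\lambda)^{-1}$, which holds since $\lambda+\sum_{k>p}g^{(k)}(0)>0$.

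\textbf{Part (a).} This is a crude norm bound. We have $\snorm{\Diag(Xu^{[s]})}\le\|u^{[s]}\|_1\le\sqrt d$, but that alone is too lossy. Instead we bound $\|\Diag(Xu^{[s]})\beta\|_2^2=\sum_k\langle x^{(k)},u^{[s]}\rangle^2\beta_k^2$. Since $\langle x,u\rangle$ is subgaussian with $\max_k|\langle x^{(k)},u\rangle|=O(\sqrt{\log n})$ uniformly over $s$ (a union bound over $d$ directions), and $\|\beta\|_2^2\lesssim\|y\|_2^2=O(n\log n\,e_f)$, we get
\[
\frac{1}{nd^2}|T_2|\lesssim \frac{(\rho'+\snorm\Delta)^2\log^2 n\, e_f}{d^2},
\]
which absorbs into $d^{-2+\epsilon}$.

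\textbf{Part (b).} This is the main term. Write
\[
\frac{1}{nd^2}T_1=\frac{1}{n}\,v_s^\top v_t,\qquad v_s:=d^{-1}D'\Phi_{\le p}^\top\Diag(Xu^{[s]})\beta.
\]
Inserting the approximation of $\beta$, the low-degree piece gives
\[
v_s\approx d^{-1}D'\Phi_{\le p}^\top\Diag(Xu^{[s]})\Phi_{\le p}(nD)^{-1}\mathsf c_{\le p}.
\]
The matrix $\frac1n\Phi_{\le p}^\top\Diag(Xu^{[s]})\Phi_{\le p}$ is an empirical average. Its population version maps $\phi_S\mapsto\sum_i u^{[s]}_i\,\phi_S x_i$ projected to degree $\le p$, which is exactly multiplication by $x$ followed by projection. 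Combined with the scalings $D'_{\S_{k-1}}/(dD_{\S_k})\approx g^{(k)}(0)/g^{(k)}(0)$, this reproduces the Walsh coefficients of $(u^{[s]})^\top\nabla\gH(f^*)_q$: the factor $d^{-1}$ from the kernel derivative cancels the $d^{k}/d^{k-1}$ scaling ratio, and the multiplication-by-$x_i$ structure encodes the discrete derivative $\mathtt D_i$. Then
\[
\frac1n v_s^\top v_t\approx\sum_{q\le p}(\Gamma_q)_{s,t},
\]
using $\frac1n\Phi^\top\Phi\approx I$. The degree-$(p+1)$ component of $y$ enters through the residual $(\rho+\lambda)^{-1}P^\perp\Phi_{\S_{p+1}}\mathsf c_{p+1}$. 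Its projection onto $\Phi_{\le p}$ after multiplying by $\Diag(Xu^{[s]})$ produces the degree-$p$ gradient of $f^*_{p+1}$, with coefficient $g^{(p+1)}(0)d^{-p}/(\rho+\lambda)$ against $(nD)^{-1}$ scale $d^{p}/n$. Squaring gives $C_0^2 d^{2\delta-2}(\Gamma_{p+1})_{s,t}$.

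The error terms arise as follows.
\begin{itemize}
\item Matrix concentration of $\frac1n\Phi^\top\Diag(Xu)\Phi$ around its mean yields the $d^{-\delta/2}$ relative errors; we apply Theorem~\ref{lemma:phi_id_2} to the enlarged basis of $\{x_i\phi_S\}$.
\item The higher-degree and noise parts of $y$ give cross terms of order $d^{-(1+\delta)/2}e_f^{1/2}$ times $A_{\le p}$, via Cauchy--Schwarz.
\item Pure residual terms give $d^{-1-\delta}+d^{-2+\delta}$.
\end{itemize}

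\textbf{Part (c).} By Cauchy--Schwarz, $|T_3(s,t)|\le 2\sqrt{T_1(s,s)T_2(t,t)}+2\sqrt{T_1(t,t)T_2(s,s)}$ (up to symmetrization). Inserting (a) and (b), and using $\sqrt{ab}$ bounds term by term, gives the stated rates.

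\textbf{Main obstacle.} The hardest step is (b): identifying $\frac1n\Phi_{\le p}^\top\Diag(Xu^{[s]})\Phi_{\le p}$, and its interaction with $P^\perp\Phi_{\S_{p+1}}$, with the gradient operator, uniformly over all $d$ directions $u^{[s]}$. The plan is to expand $\Diag(Xu^{[s]})\Phi_{\le p}=\sum_i u_i^{[s]}\Diag(X_{:,i})\Phi_{\le p}$, which is a linear combination of columns $\phi_{S\triangle\{i\}}$ of degree $\le p+1$. This rewrites everything as $\Phi_{\le p+1}^\top(\cdot)\Phi_{\le p+1}$ with deterministic combinatorial coefficients, so that the Case I and Case II asymptotics can be applied once rather than per direction.
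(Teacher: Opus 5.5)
Your part (a) is the paper's argument: operator norms of $\rho' I_n+\Delta$ and $K_\lambda^{-1}$, a union bound for $\max_k|\langle x^{(k)},u^{[s]}\rangle|$, and Proposition~\ref{prop:bound_y}. Part (c) also follows the paper's route. After $|T_3(s,t)|\le\sqrt{T_1(s,s)T_2(t,t)}+\sqrt{T_1(t,t)T_2(s,s)}$, apply the weighted AM--GM inequality $\sqrt{XY}\le d^{-\delta}X+d^{\delta}Y$. That is why every term in (c) is $d^{-\delta}$ times a term of (b), plus $d^{-2+\delta+\epsilon}(\norm{f^*}_{L_2}^2+\sigma_\vepsilon^2)$. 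The genuine gap is in (b), which carries all the content.

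\textbf{Why your error control in (b) fails.} You approximate $\beta$ up to operator-norm errors, then replace $\frac1n\Phi_{\le p}^\top\Diag(Xu)\Phi_{\le p}$ by its mean at rate $d^{-\delta/2}$. This does not survive the anisotropic weighting that follows. Relative to that normalized matrix, the $(\S_{k'},\S_k)$ block of $v_s$ is weighted by $d^{-1}D'_{\S_{k'}}(\cdot)D^{-1}_{\S_k}\asymp d^{k-k'-1}$, so the $(\S_0,\S_p)$ block is amplified by $d^{p-1}$. Concretely, for $p\ge 2$ the $J=\emptyset$ coordinate of your leading term contains
\[
\frac{g'(0)}{g^{(p)}(0)}\,\frac{d^{p-1}}{n}\sum_{k=1}^n\langle u,x^{(k)}\rangle f^*_p(x^{(k)}),
\]
which has mean zero and variance of order $d^{p-2-\delta}\norm{\langle u,x\rangle f^*_p}_{L_2}^2$, generically $d^{p-2-\delta}\norm{f^*_p}_{L_2}^2$. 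This diverges for $p\ge 3$. For $p=2$ it is an absolute error of order $d^{-\delta/2}$, which is not of the claimed form.

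The true $h^{[s]}_\emptyset$ has no such term. The exact low-degree part is
\[
K_\lambda^{-1}\Phi_{\le p}\mathsf c_{\le p}=H^{-1}\Phi_{\le p}G^{-1}(nD)^{-1}\mathsf c_{\le p},\qquad G=(\rho+\lambda)\Lambda+\tfrac1n\Phi_{\le p}^\top H^{-1}\Phi_{\le p},
\]
and hence $D\Phi_{\le p}^\top K_\lambda^{-1}\Phi_{\le p}\mathsf c_{\le p}=\mathsf c_{\le p}-(\rho+\lambda)G^{-1}\Lambda\mathsf c_{\le p}$ exactly. The fluctuation above is cancelled by the factor $G^{-1}\approx(\frac1n\Phi_{\le p}^\top\Phi_{\le p})^{-1}$, which you replaced by $I$ in $\beta\approx\Phi_{\le p}(nD)^{-1}\mathsf c_{\le p}$.

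Two further steps also fail as stated.
\begin{itemize}
\item Theorem~\ref{lemma:phi_id_2} cannot be applied to the enlarged basis $\{x_i\phi_S\}$, since it contains $\S_{p+1}$ with $|\S_{p+1}|\asymp d^{p+1}\gg n$.
\item The $C_0$ term requires $\frac1n\mathsf c_{\S_{p+1}}^\top\Phi_{\S_{p+1}}^\top K_\lambda^{-1}\Phi_{\S_{p+1}}e_J\approx(\rho+\lambda)^{-1}\mathsf c_{\S_{p+1}}^\top e_J$. This is false in operator norm, since $\frac1n\Phi_{\S_{p+1}}^\top\Phi_{\S_{p+1}}$ has rank at most $n$. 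It holds only as a second-moment statement for fixed test vectors, and Theorem~\ref{lemma:gram_matrix} does not deliver it.
\end{itemize}

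\textbf{What is missing.} You correctly spotted the identity $d^{-1}\Diag(Xu)\Phi_{\le p}D'=\Phi_{\le p+1}E$. The next step is to write $h_J=(D\Phi_{\le p}^\top\beta)^\top\hat e_J$ for $|J|\le p-1$, where $\hat e_J=D^{-1}E_{:L(p),J}$ has bounded norm. Then only the isotropically scaled vector $D\Phi_{\le p}^\top\beta=\frac1nG^{-1}\Phi_{\le p}^\top H^{-1}y$ appears. For $J\in\S_p$ the leading piece is $\beta^\top\Phi_{\S_{p+1}}E_{\S_{p+1},J}$. Then:
\begin{itemize}
\item expand $H^{-1}$ and $G^{-1}$ in truncated Neumann series;
\item bound the second moments of the bilinear forms $\mathsf c^\top(\text{chain})\hat e_J$ with Lemma~\ref{lemma:feature_product_2} and Corollary~\ref{cor:off_diag}, which give the $O(d^{-2\delta})$ relative and $O(n^{-1})$ absolute errors per coordinate;
\item treat the noise contributions by conditional-variance computations summed over $J$, not by Cauchy--Schwarz with operator norms, to reach $d^{-1-\delta}$ and $d^{-2+\delta}$.
\end{itemize}
Finally, with $v_s$ as you define it, $\frac1{nd^2}T_1=v_s^\top(\frac1n\Phi_{\le p}^\top\Phi_{\le p})v_t$, not $\frac1n v_s^\top v_t$.
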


\subsection{Proof of Item~\ref{cl:main_agop_a} of Claim~\ref{cl:main_agop}}
Sub-multiplicativity of the operator norm directly implies
\begin{align*}
     T_2(s,t)  &\leq \snorm{\rho' I + \Delta}^2   \max_i |(u^{[s]})\tran x^{(i)}|  \max_i |(u^{[t]})\tran x^{(i)}|\norm{K_{\lambda}^{-1}y}_2^2.
\end{align*}
Note that the expression \eqref{eqn:K_decomp_cube} directly implies 
\begin{align}\label{eq:k_lambda_inv}
    \snorm{K_\lambda\inv} \leq 1/(\rho+\lambda -\snorm{\Delta_1}) = O_{d,\P}(1).
\end{align}
where we use the fact that $D$ is positive semi-deifnite since $D$ is diagonally dominant. Therefore, taking into account Proposition~\ref{prop:bound_y}, we deduce
\begin{equation}\label{eqn:we_gonna_need}
\norm{K_\lambda\inv y}_2\leq \snorm{K_\lambda\inv}\|y\|_2\leq O_{d,\mathbb{P}}\left(\sqrt{n\log(n)\round{\norm{f^*}_{L_2}^2+\sigma_\vepsilon^2}}\right)
\end{equation}
By Hoeffding’s inequality we have for any $t>0$, with probability at least $1 - 2e^{-t^2/2}$ over the random data $x^{(i)}$, the following holds for any unit vector $u$:
\begin{equation}\label{eqn:nnorm_bound_gauss}
 |u\tran x^{(i)}| \leq t.
\end{equation}
Letting $t = \log(d)$ and taking the union bound over $\{x^{(i)}\}_{i=1}^n$ and $\{u^{[j]}\}_{j=1}^d$, we have
\begin{align*}
    \max_{i\in[n], j\in[d]} \abs{(u^{[j]})\tran x^{(i)}} = O_{d,\P}(\log(d)).
\end{align*}
With $ \snorm{\rho' I + \Delta}^2 \leq 2|\rho'|^2 + 2\snorm{\Delta}^2 =O_{d,\P}(1)$, consequently, we conclude 
\begin{align}
 \frac{1}{nd^2}|T_{2}(s,t)|^2 ={O}_{d,\P}\left(\frac{ \log^2(d)}{d^{2}}\cdot \round{\norm{f^*}_{L_2}^2+\sigma_\vepsilon^2}\right) = {O}_{d,\P}\left(d^{-2+\epsilon}\cdot \round{\norm{f^*}_{L_2}^2+\sigma_\vepsilon^2}\right)   
\end{align}
for any $\epsilon>0$
as desired.

\subsection{Proof of Item~\ref{cl:main_agop_b} of Claim~\ref{cl:main_agop}}
Expanding the square and invoking Theorem~\ref{lemma:phi_id_2}, the term $\frac{1}{nd^2} T_1(s,t)$ can be reduced to
\begin{align}
    \frac{1}{nd^2} T_1(s,t) = \frac{1}{d^2}\beta\tran \Diag(X u^{[s]})  \Phi_{\leq p}D' (I+\widetilde{\Delta}) D' \Phi_{\leq p}\tran  \Diag(Xu^{[t]}) \beta,
\end{align}
where $\snorm{\widetilde{\Delta}} = O_{d,\P}(d^{-\delta/2+\epsilon})$. 
To simplify the notation, letting $$h^{[j]} := \tfrac{1}{d}D' \Phi_{\leq p}\tran  \Diag(Xu^{[j]}) \beta$$ we write
\begin{align}\label{eq:t_1_decomp}
    \frac{1}{nd^2} T_1(s,t) = (h^{[s]})\tran h^{[t]} + (h^{[s]})\tran \widetilde{\Delta} h^{[t]}.
\end{align}
Observe that $\round{\sum_{i=1}^d u_i  X_i} \Phi_{\leq p}$ lies in the subspace  $\mathrm{span}\{\Phi_1,\Phi_2,\ldots,\Phi_{p+1}\}$. Collecting coefficients with respect to this basis, we obtain
\begin{align}\label{eq:agop_decomp}
    \frac{1}{d}\beta\tran \round{\sum_{i=1}^d u_i  X_i} \Phi_{\leq p} D' = \beta\tran \Phi_{\leq p+1} E, 
\end{align}
where the sparse matrix $E\in \R^{L(p+1) \times  L(p)}$ corresponds to coefficients.
A direct computation yields
\begin{align*}
    E_{S,J} =\begin{cases}
        \tfrac{1}{d} D'_{J,J}\cdot u_i,~~&\mathrm{if}~S=J\sqcup \{i\}~\mathrm{or}~J \setminus \{i\}; \\
       0,~~&\mathrm{otherwise.}
    \end{cases}
\end{align*}
Here $E_{J\setminus \{i\},J}$ is induced by coordinate collapse. We observe that each column $E_{:,J}$ is a sparse vector with at most $d$ non-zero entries.

Next, we analyze $(h^{[s]})\tran h^{[t]}$ and $(h^{[s]})\tran \widetilde{\Delta} h^{[t]}$ in equation~\eqref{eq:agop_decomp} to bound $T_1$.  Combining these two bounds completes the estimate for $T_1$. Specifically, we will prove the following holds with constant $C_0 =(\rho+\lambda)^{-1} g^{(p+1)}(0)$:
\begin{align}\label{eq:final_t_prod}
&~~~\abs{(h^{[s]})\tran h^{[t]} -  \round{\sum_{0\leq q\leq p}(\Gamma_{q})_{s,t} + C_0^2 d^{2\delta-2}(\Gamma_{p+1})_{s,t}  }}\notag\\
&=O_{d,\P}\round{d^{-\delta+\frac{\epsilon}{2}}}
B_{\le p}(s,t)\notag\\
&\qquad+
O_{d,\P}\round{d^{-\frac{1+\delta}{2}+\frac{\epsilon}{2}}}\round{
A_{\le p}(s,t) +\round{d^{\frac{-3+4\delta}{2}}}
A_{p+1}(s,t)}
\round{\norm{f^*}_{L_2}^2+\sigma_\vepsilon^2}^{1/2}\notag\\
&\qquad+
O_{d,\P}\round{d^{-1-\delta+\epsilon} + d^{-2+\delta+\epsilon}}
\round{\norm{f^*}_{L_2}^2+\sigma_\vepsilon^2}
\end{align}
and
\begin{align}\label{eq:final_t_prod_delta}
\abs{(h^{[s]})\tran \widetilde{\Delta}h^{[t]}}
&=
O_{d,\P}\round{d^{-\frac{\delta}{2}+\epsilon}}
B_{\leq p}(s,t)
\notag\\
&\qquad+
O_{d,\P}\round{d^{\frac{3\delta}{2}-2+\epsilon}}
\round{
(\Gamma_{p+1})_{s,s}
+
(\Gamma_{p+1})_{t,t}
}
\notag\\
&\qquad+
O_{d,\P}\round{
d^{-1-\frac{3\delta}{2}+\epsilon}
+
d^{-2+\frac{\delta}{2}+\epsilon}
}
\round{\norm{f^*}_{L_2}^2+\sigma_\vepsilon^2}.
\end{align}

\paragraph{Term $(h^{[s]})\tran h^{[t]}$. }
We decompose indices of vector~\eqref{eq:agop_decomp}  into $j=0,\ldots,p-1$ and $j= p$, which correspondingly decomposes $(h^{[s]})\tran h^{[t]}$ into two terms:
\begin{align}
  (h^{[s]})\tran h^{[t]}
  = \inner{h^{[s]}_{\leq p-1}, h^{[t]}_{\leq p-1}} +\inner{h^{[s]}_{\S_p}, h^{[t]}_{\S_p}}.\label{eq:t_11}
\end{align} 

Following the same convention, we let $c_{\leq q}$  denote the vector $[c_S]_{\cup_{S:|S|\leq q}}$.
To simplify the notation, corresponding to each direction $j\in[d]$, we denote 
\begin{align}
    r^{[j]}:= (E^{[j]})\tran D_{\leq p+1}\inv c_{\leq p+1}.
\end{align}
Next, we present two key lemmas that show $h$ can be approximated by $r$. The proof appears in Appendix~\ref{proof:lemma:agop_leq_p} and \ref{proof:lemma:agop_p_plus_1} respectively.
\begin{lemma}\label{lemma:agop_leq_p} 
    The following estimate holds for any $\epsilon>0$ and uniformly for any $j\in[d]$
    \begin{align}\label{eq:agop_leq_p}
        \norm{h^{[j]}_{\leq p-1} - r^{[j]}_{\leq p-1}}_2^2=
        O_{d,\P}(d^{-2\delta+\epsilon}) \norm{r^{[j]}_{\leq p-1}}_2^2 +   O_{d,\P}(d^{-1-\delta+\epsilon})(\norm{f^*}_{L_2}^2+\sigma_\vepsilon^2). 
    \end{align}
\end{lemma}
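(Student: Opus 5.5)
We prove Lemma~\ref{lemma:agop_leq_p} by expressing $\beta=K_\lambda^{-1}y$ through the low-degree structure of $K$. We then show that the projection $\Phi_{\le p+1}^\top\beta$, restricted to degrees $\le p$, is essentially $D_{\le p}^{-1}c_{\le p}$. Once this holds, $h^{[j]}_{\le p-1}=(E^{[j]})^\top\Phi_{\le p+1}^\top\beta$ restricted to the rows/columns of degree $\le p-1$ matches $r^{[j]}_{\le p-1}$.

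\textbf{Representing $\beta$.} We write $y=\Phi_{\le p}c_{\le p}+\Phi_{>p}c_{>p}+\varepsilon$. By \eqref{eqn:K_decomp_cube}, $K_\lambda=\Phi_{\le p}D\Phi_{\le p}^\top+(\rho+\lambda)I_n+\Delta_1$. Applying the Woodbury identity to $\Phi_{\le p}D\Phi_{\le p}^\top+(\rho+\lambda)I_n$ together with Theorem~\ref{lemma:phi_id_2}, which gives $\Phi_{\le p}^\top\Phi_{\le p}/n=I+O(d^{-\delta/2+\epsilon})$, we obtain
\begin{equation*}
\Phi_{\le p}^\top\beta\approx \bigl(D^{-1}+\Phi_{\le p}^\top\Phi_{\le p}/(\rho+\lambda)\bigr)^{-1}D^{-1}\cdot\tfrac{1}{\rho+\lambda}\Phi^\top_{\le p}\cdots\approx D^{-1}c_{\le p}.
\end{equation*}
The relative error is governed by $(\rho+\lambda)/(n\,d^{-p})\asymp d^{-\delta}$ on the degree-$p$ block. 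It is even smaller, $d^{-p-\delta+k}$, on the degree-$k<p$ blocks, which is exactly why the lemma only treats indices $\le p-1$.

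\textbf{Error terms.} The contributions of $\Phi_{>p}c_{>p}$, $\varepsilon$ and $\Delta_1$ are handled as follows. Each is multiplied by $D_{\le p}\Phi^\top_{\le p}K_\lambda^{-1}$ (scaled by $1/d$ through $E$). We then use Theorem~\ref{lemma:gram_matrix} for the near-orthogonality of high-degree columns against low-degree ones, $\|\Phi_{\le p}\|_{\op}=O(\sqrt n)$ from Lemma~\ref{lem:norm_control}, $\|y\|^2\lesssim n\log n(\|f^*\|^2+\sigma^2)$ from Proposition~\ref{prop:bound_y}, and $\|\Delta_1\|_{\op}=O(d^{(\delta-1)/2+\epsilon})$. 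Combined with the $1/d$ scaling and column sparsity of $E$, these yield an additive term $O(d^{-1-\delta+\epsilon})(\|f^*\|^2+\sigma^2)$. The multiplicative $d^{-2\delta}$ term arises from squaring the $d^{-\delta}$ relative shrinkage error, with uniformity in $j$ obtained by a union bound over the $d$ directions, with log factors absorbed into $d^{\epsilon}$.

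\textbf{Main obstacle.} The hardest step is controlling the cross terms in which $E^{[j]}$ maps degree-$(p+1)$ or collapsed coefficients into degree $\le p-1$ outputs. These terms must be bounded in a direction-uniform way without losing a factor of $\sqrt d$. For this we would exploit that each column of $E$ has at most $d$ entries of size $|u_i|/d\cdot D'_{JJ}$ with $\sum_i u_i^2=1$, which gives $\|E^{[j]}\|_{\op}\lesssim d^{-1}\max_J D'_{JJ}$ and sidesteps the sparsity count.
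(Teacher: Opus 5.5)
There is a genuine gap. Every step you propose is an operator-norm perturbation bound, and such bounds cannot deliver either feature of \eqref{eq:agop_leq_p}.

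\textbf{The multiplicative term.} Take $\Delta_1=0$, with no noise and no high-degree signal. Woodbury then gives exactly $h_J=\hat e_J^\top\bigl(I+(\rho+\lambda)W^{-1}\Lambda\bigr)^{-1}\mathsf c_{\le p}$, where $W=\Phi_{\le p}^\top\Phi_{\le p}/n$, $\Lambda=(nD)^{-1}$ and $\hat e_J=D^{-1}E_{:L(p),J}$. Meanwhile $r_J=\hat e_J^\top\mathsf c_{\le p}$. Squaring your $O(d^{-\delta})$ ``relative shrinkage'' therefore bounds $\norm{h^{[j]}_{\le p-1}-r^{[j]}_{\le p-1}}_2^2$ by $d^{-2\delta}\norm{\mathsf c_{\le p}}_2^2$, not by $d^{-2\delta}\norm{r^{[j]}_{\le p-1}}_2^2$.

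This is not cosmetic. In inactive directions $\norm{r^{[j]}_{\le p-1}}_2^2$ is at most of order $\mu(U)^2d^{-2}\norm{f^*}_{L_2}^2$, and Theorem~\ref{thm:main_agop_nodeloc} aggregates entrywise errors over $d$ such directions.

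Only the deterministic diagonal part of the shrinkage (replace $W$ by $I$; this is $P_G(\Lambda)$ in the paper) gives an error proportional to $r_J$. The reason is that $\hat e_J$ lives on $\S_{|J|+1}$ up to $O(d^{-2})$ collapsed entries (Claim~\ref{cl:small_cpe}). The random parts do not have this property:
\begin{itemize}
\item $W-I=\offd(W)$ has norm $d^{-\delta/2+\epsilon}$, which gives an error $d^{-3\delta+\epsilon}\norm{\mathsf c_{\le p}}_2^2$.
\item The structured terms $\theta_k\offd(\Phi_{\S_k}\Phi_{\S_k}^\top)$ hidden in $\Delta_1$, controlled only through $\snorm{\Delta_1}=O_{d,\P}(d^{(\delta-1)/2+\epsilon})$, give $d^{-1+\delta+\epsilon}(\norm{f^*}_{L_2}^2+\sigma_\vepsilon^2)$.
\end{itemize}
Neither error is proportional to $\norm{r^{[j]}_{\le p-1}}_2^2$. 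The first exceeds the claimed additive rate $d^{-1-\delta}$ whenever $\delta<1/2$. The second exceeds it by $d^{2\delta}$ for every $\delta$, which would not survive the factor $|J|\le d$ in Theorem~\ref{thm:main_agop_nodeloc}.

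\textbf{The additive term.} The rate $d^{-1-\delta}=d^{p-1}/n$ is a counting rate: a per-index second moment $O(n^{-1})$ summed over the $O(d^{p-1})$ indices $|J|\le p-1$. Norm bounds discard this.
\begin{itemize}
\item \emph{Noise.} It enters as $\hat E^\top\bigl((\rho+\lambda)\Lambda+W\bigr)^{-1}\tfrac1n\Phi_{\le p}^\top\vepsilon$ with $\hat E=[\hat e_J]_{|J|\le p-1}$. Here $\snorm{\hat E}=O(1)$, the middle factor has norm $O_{d,\P}(n^{-1/2})$, and $\norm{\vepsilon}_2\asymp\sqrt n\,\sigma_\vepsilon$, so norms give only $O(\sigma_\vepsilon^2)$. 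Taking $\E_{\vepsilon}$ given $X$ instead gives $O_{d,\P}(n^{-1})\sigma_\vepsilon^2\sum_J\norm{\hat e_J}_2^2=O_{d,\P}(d^{-1-\delta})\sigma_\vepsilon^2$.
\item \emph{High-degree signal.} The bound $\tfrac1n\snorm{\Phi_{\le p}}\norm{f^*_{>p}(X)}_2=O_{d,\P}(\norm{f^*}_{L_2})$ has no decay; Theorem~\ref{lemma:gram_matrix} concerns $\Phi_{\S}\Phi_{\S}^\top$ for large $\S$, not low/high cross products. The rate comes only from $\E[(\tfrac1n f^*_{>p}(X)^\top\Phi_{\le p}\hat e_J)^2]\lesssim n^{-1}\norm{f^*}_{L_2}^2\norm{\hat e_J}_2^2$, which Proposition~\ref{cl:key_feature_product_2} carries out with $H^{-1},G^{-1}$ inserted.
\item \emph{The $1/d$ in $E$ and its column sparsity.} These do not help: $E_{:,J}=D\hat e_J$ while $\Phi_{\le p}^\top\beta$ lives on the $D^{-1}$ scale, so $\snorm{E^{[j]}}\lesssim d^{-1}\max_J D'_{JJ}$ buys nothing.
\item \emph{Your stated main obstacle.} The degree-$(p+1)$ cross terms do not occur, since $E_{\S_{p+1},J}=0$ for $|J|\le p-1$.
\end{itemize}

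\textbf{What is missing.} The paper's second-moment machinery:
\begin{itemize}
\item the refined expansion of Lemma~\ref{lemma:kernel_to_mono_general} with $m=2p+1$, which leaves only an $O_{d,\P}(n^{-1/2})$ unstructured remainder;
\item the Woodbury form $h_J=\frac1n(f^*(X)+\vepsilon)^\top H^{-1}\Phi_{\le p}G^{-1}\hat e_J$, with $H^{-1}$ and $G^{-1}$ expanded into finite Neumann series;
\item Lemma~\ref{lemma:feature_product_2} with Corollary~\ref{cor:off_diag}, showing that every word containing an off-diagonal factor has second moment $O_d(d^{-2\delta})(\mathsf c_{\le p}^\top\hat e_J)^2+O_d(n^{-1})\norm{\mathsf c_{\le p}}_2^2$, up to negligible collapsed terms. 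These are cancellations in expectation that no norm bound sees.
\end{itemize}
Once you work with second moments, uniformity in $j$ also needs the hypercontractive moment bound of Proposition~\ref{prop:uniform_bound} (costing $d^{\epsilon}$). Markov's inequality alone does not survive a union bound over $d$ directions.
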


\begin{lemma}\label{lemma:agop_p_plus_1}
    The following estimate holds for any $\epsilon>0$ and uniformly for any $j\in[d]$ with constant $C_0 =(\rho+\lambda)^{-1} g^{(p+1)}(0)$:
    \begin{align}\label{eq:agop_p_plus_1}
    \norm{h^{[j]}_{\S_p} -  C_0\cdot d^{\delta-1} r^{[j]}_{\S_p} }_2^2
    &=O_{d,\P}(d^{-2+\epsilon}) \norm{r^{[j]}_{\S_p}}_2^2 +    O_{d,\P}(d^{-2+\delta+\epsilon})(\norm{f^*}_{L_2}^2+\sigma_\vepsilon^2).
    \end{align}
\end{lemma}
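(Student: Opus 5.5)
The plan for Lemma~\ref{lemma:agop_p_plus_1} is to write $h^{[j]}_{\S_p}=\tfrac1d D'_{\S_p}\Phi_{\S_p}^\top\Diag(Xu^{[j]})\beta$, with $\beta=K_\lambda^{-1}y$. Via \eqref{eq:agop_decomp} this becomes $E_{\S_p}^{[j]\top}\Phi_{\le p+1}^\top\beta$. So the whole problem reduces to understanding the vector $\Phi_{\le p+1}^\top\beta$, and in particular its degree-$(p+1)$ block, because only the columns of $E^{[j]}$ indexed by $\S_p$ couple to $\S_{p+1}$ (through $S=J\sqcup\{i\}$) and to $\S_{p-1}$ (through coordinate collapse).

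\textbf{Step 1: decompose $y$ and apply the resolvent.} Write $y=\Phi_{\le p}c_{\le p}+\Phi_{\S_{p+1}}c_{\S_{p+1}}+\Phi_{>p+1}c_{>p+1}+\vepsilon$. Use the kernel decomposition \eqref{eqn:K_decomp_cube}, $K_\lambda=\Phi_{\le p}D\Phi_{\le p}^\top+(\rho+\lambda)I_n+\Delta_1$, together with Woodbury.

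\textbf{Step 2: high-degree block.} For a column block $\S_{p+1}$, Theorem~\ref{lemma:gram_matrix} gives $\Phi_{\S_{p+1}}^\top\Phi_{\S_{p+1}}$-type Gram control because $|\S_{p+1}|\asymp d^{p+1}\gg n$. Combining this with Theorem~\ref{lemma:phi_id_2} for the degree-$\le p$ block (a projection argument), I expect
\[
\Phi_{\S_{p+1}}^\top\beta=(\rho+\lambda)^{-1}\Phi_{\S_{p+1}}^\top(I-\Pi_{\le p})y+\text{error}.
\]
Here $\Pi_{\le p}$ is the projector onto the column span of $\Phi_{\le p}$. Its leading part is $(\rho+\lambda)^{-1}\Phi_{\S_{p+1}}^\top\Phi_{\S_{p+1}}c_{\S_{p+1}}\approx(\rho+\lambda)^{-1}n\,c_{\S_{p+1}}$ up to relative error $d^{-\delta/2}$ in the appropriate norm.

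\textbf{Step 3: identify the constant.} Multiply by $\tfrac1d D'_{\S_p}\approx g^{(p+1)}(0)d^{-p-1}$ and use $n=d^{p+\delta}$. This produces $C_0d^{\delta-1}$ times the degree-$(p+1)$ part of $r^{[j]}_{\S_p}=(E^{[j]})^\top D_{\le p+1}^{-1}c_{\le p+1}$. The factor $D_{\S_{p+1}}^{-1}\approx d^{p+1}/g^{(p+1)}(0)$ in the definition of $r$ must be matched consistently with the normalization of $E$. This bookkeeping fixes $C_0=(\rho+\lambda)^{-1}g^{(p+1)}(0)$, with $O(d^{-1})$ relative corrections from the coefficient asymptotics in Lemma~\ref{lemma:kernel_to_mono}.

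\textbf{Step 4: lower and higher degrees.} The coordinate-collapse contributions from $\S_{p-1}$, and the low-degree part of $\Phi_{\le p}^\top\beta$, are heavily shrunk: $\Phi_{\le p}^\top\beta\approx D^{-1}c_{\le p}$ at scale $d^{-\delta}$ relative to $n$. These contribute to $h_{\S_p}$ only through the small factor $d^{-1}$ from $E$, which gives the $O(d^{-2+\epsilon})\|r_{\S_p}\|^2$ term. The contributions of $c_{>p+1}$ and of the noise $\vepsilon$ are treated as mean-zero noise vectors. I would bound them using $\|E^{[j]}_{:,\S_p}\|_{\op}\lesssim d^{-1}\|D'_{\S_p}\|\cdot O(1)$ (each column has $\ell_2$ mass $\|u\|_2=1$), $\|\Phi_{>p}\|_{\op}=O(\sqrt{|\S|})$ from Lemma~\ref{lem:norm_control}, and Proposition~\ref{prop:bound_y}. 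This is expected to yield the $d^{-2+\delta+\epsilon}(\|f^*\|^2+\sigma_\vepsilon^2)$ term, since $\|\tfrac1d D'_{\S_p}\Phi_{\S_p}^\top\|^2\cdot\|\beta\|^2\approx d^{-2p-2}\cdot n\cdot n\log n\cdot d^{-p-\delta}$ after using the extra $n^{-1/2}$ decorrelation.

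\textbf{Step 5: uniformity.} Uniformity in $j$ would come from a union bound with $\max_{i,j}|u^{[j]\top}x^{(i)}|=O_{d,\P}(\log d)$.

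The main obstacle is Step 2. It needs a sharp, relative-error description of $\Phi_{\S_{p+1}}^\top K_\lambda^{-1}\Phi_{\S_{p+1}}$ (not just an operator bound), because the target term is of size $d^{\delta-1}$ and errors must be a further $d^{-1+\epsilon/2}$ smaller. I would first try the refined kernel expansion of Lemma~\ref{lemma:kernel_to_mono_general} with $m=p+1$. This isolates $\theta_{p+1}\offd(\Phi_{\S_{p+1}}\Phi_{\S_{p+1}}^\top)$ explicitly so that its effect on the resolvent can be expanded to first order.
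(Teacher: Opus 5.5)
Your architecture matches the paper's. You split $y$ into the degree-$(p+1)$ block, the remaining Walsh blocks and the noise. You get the main term from $\Phi_{\S_{p+1}}^\top K_\lambda^{-1}\Phi_{\S_{p+1}}$ acting like $n(\rho+\lambda)^{-1}$, and read off $C_0$ from $n\cdot\tfrac1d D'_{J,J}\approx g^{(p+1)}(0)d^{\delta-1}$.

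The gap is Step 2, which carries all the content: it is only asserted, and the tools you cite cannot deliver it. Theorem~\ref{lemma:gram_matrix} controls the $n\times n$ matrix $\Phi_{\S_{p+1}}\Phi_{\S_{p+1}}^\top/|\S_{p+1}|$, not $\Phi_{\S_{p+1}}^\top\Phi_{\S_{p+1}}$, which has rank at most $n\ll|\S_{p+1}|$. Even on the fixed vector $\mathsf c=\mathsf c_{\S_{p+1}}$ one has $\E\norm{\Phi_{\S_{p+1}}^\top\Phi_{\S_{p+1}}\mathsf c-n\mathsf c}_2^2=n(|\S_{p+1}|-1)\norm{\mathsf c}_2^2\gg n^2\norm{\mathsf c}_2^2$. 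The approximation holds only after contracting with the columns $e_J=E_{\S_{p+1},J}$, and only in mean square.

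Operator-norm bookkeeping uses $\snorm{\Phi_{\S_{p+1}}E_{\S_{p+1},\S_p}}\lesssim d^{-p-1}\sqrt n\log d$ and $\norm{\Phi_{\S_{p+1}}\mathsf c}_2\lesssim\sqrt{n\log n}\,\norm{\mathsf c}_2$. It shows that an $n\times n$ matrix sandwiched between $\Phi_{\S_{p+1}}\mathsf c$ and $\Phi_{\S_{p+1}}e_J$ can be discarded only if its norm is at most about $d^{-\delta/2}$. That fails for everything you need to discard:
\begin{itemize}
\item the Woodbury projection $\Pi_{\le p}$ has norm of order one;
\item $\Delta_1$ in \eqref{eqn:K_decomp_cube} has norm $d^{(\delta-1)/2+\epsilon}$, too large once $\delta>1/2$;
\item a first-order expansion in $\theta_{p+1}\offd(\Phi_{\S_{p+1}}\Phi_{\S_{p+1}}^\top)$ leaves a first-order term of norm $d^{(\delta-1)/2}$ and a second-order remainder of norm $d^{\delta-1}$, the latter too large once $\delta>2/3$.
\end{itemize}
The $\Phi_{>p+1}\mathsf c_{>p+1}$ part of Step~4 has the same defect. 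Conditioning on $X$ rescues $\vepsilon$, since a Frobenius norm replaces an operator norm. But $\Phi_{>p+1}\mathsf c_{>p+1}$ is a function of $X$, and operator norms give $d^{2\delta-2}$ instead of $d^{\delta-2}$.

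The missing ingredient is second-moment control of bilinear forms $a^\top A_1^\top A_2A_2^\top\cdots A_mA_m^\top A_{m+1}b$ in random Walsh feature matrices. This is the paper's Lemma~\ref{lemma:feature_product_2} and Corollary~\ref{cor:off_diag}, which yield four facts:
\begin{itemize}
\item When the endpoint families agree and no small family sits inside, the leading term is $a^\top b$ with $O(n^{-1/2})\norm{a}_2\norm{b}_2$ fluctuations.
\item A small internal family such as $\S_{\le p}$ damps that leading term by $O(d^{-2\delta})$, which disposes of $\Pi_{\le p}$.
\item Words containing an off-diagonal factor from the expansion of $H^{-1}$ lose their leading term. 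The paper carries this expansion to a $\delta$-dependent order $c_H$, after using Lemma~\ref{lemma:kernel_to_mono_general} with $m=2p+1$ so the leftover is $O_{d,\P}(n^{-1/2})$.
\item When the endpoint families differ there is no leading term at all. This handles the $\mathsf c_{\Q}$ cross terms, i.e.\ your $\mathsf c_{\le p}$ and $\mathsf c_{>p+1}$ pieces.
\end{itemize}

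Uniformity in $j$ then comes from hypercontractivity applied to these moment bounds (Proposition~\ref{prop:uniform_bound}). A bound on $\max_{i,j}|(u^{[j]})^\top x^{(i)}|$ does not supply it, because Markov on second moments does not survive a union over $d$ directions.

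Finally, a bookkeeping slip: the $\S_{p-1}$ coordinate-collapse part of $h_J$ involves $\mathsf c_{\S_{p-1}}$ and is not controlled by $\norm{r_{\S_p}}_2^2$. It belongs in the additive $\norm{f^*}_{L_2}^2$ term.
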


The following proposition connects $\inner{r^{[s]}_{\S_q},r^{[t]}_{\S_q}}$ to $\Gamma_{s,t}$. The proof appears in Appendix~\ref{proof:r_prod}.
\begin{proposition}\label{prop:r_prod}
The following estimate holds for any $q= 0,1,\ldots,p$

\begin{align}\label{eq:r_prod_q}
    \abs{\inner{r^{[s]}_{\S_q},r^{[t]}_{\S_q}} - \round{\Gamma_{q+1}}_{s,t}} = {O}_d(d^{-2})A_{q+1}(s,t)\norm{f^*}_{L_2} + O_{d}(d^{-4}) \norm{f^*}_{L_2}^2,
\end{align}
\end{proposition}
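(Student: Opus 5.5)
The plan is to compute $r^{[j]}$ explicitly from the sparsity pattern of $E^{[j]}$ and recognize its leading part as the Walsh coefficients of a directional derivative of $f^*$. By the formula for $E_{S,J}$, for $|J|=q\le p$ and direction $u=u^{[j]}$ we have
\[
r^{[j]}_J=\frac{D'_{J,J}}{d}\Bigl(\sum_{i\notin J}u_i\,\frac{\mathsf c_{J\cup\{i\}}}{D_{J\cup\{i\}}}+\sum_{i\in J}u_i\,\frac{\mathsf c_{J\setminus\{i\}}}{D_{J\setminus\{i\}}}\Bigr)=:r^{[j],\uparrow}_J+r^{[j],\downarrow}_J .
\]
On the other hand, since $\nabla$ lowers Walsh degree by one, the degree-$q$ Walsh coefficient of $u^\top\nabla\gH(f^*)_{q+1}$ at $J$ equals $w^{[j]}_J:=\sum_{i\notin J}u_i\mathsf c_{J\cup\{i\}}$. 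Parseval then gives $\langle w^{[s]},w^{[t]}\rangle=(\Gamma_{q+1})_{s,t}$ and $\|w^{[j]}\|_2^2=(\Gamma_{q+1})_{j,j}$. So the goal is to show that $r^{[j]}_{\S_q}=w^{[j]}+e^{[j]}$ with $\|e^{[j]}\|_2\lesssim d^{-2}\|f^*\|_{L_2}$. Expanding $\langle w^{[s]}+e^{[s]},w^{[t]}+e^{[t]}\rangle$ and applying Cauchy--Schwarz then gives exactly the two error terms in \eqref{eq:r_prod_q}: $O(d^{-2})A_{q+1}(s,t)\|f^*\|_{L_2}$ from the cross terms and $O(d^{-4})\|f^*\|_{L_2}^2$ from the product of errors.

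\textbf{Step 1 (the up-term).} By permutation symmetry both $D_{\S_{q+1}}$ and $D'_{\S_q}$ are scalar multiples of the identity, so $r^{[j],\uparrow}=\kappa_q w^{[j]}$ with the single scalar $\kappa_q:=D'_{\S_q}/(d\,D_{\S_{q+1}})$. Lemma~\ref{lemma:kernel_to_mono} only gives $\kappa_q=1+O(d^{-1})$, which is not enough. To sharpen this, I will use the exact identity $\partial_{x_i}K(x,y)=\tfrac{y_i}{d}K'(x,y)$. Matching Walsh coefficients of both sides in $x^{S\setminus\{i\}}y^{S}$ shows that, for the Taylor-truncated kernels $K_m$ and $K'_{m-1}$ built in Lemma~\ref{lemma:kernel_to_mono_general}, the diagonal coefficients satisfy $d\,D_{S}=D'_{S\setminus\{i\}}$ exactly. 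Any residual mismatch comes only from truncating at different orders and from the $\widetilde d^{(\ell)}_j$ corrections. Each $\widetilde d^{(\ell)}_j$ is $O(d^{-(j+\ell)/2})$ with $\ell\ge j+2$, and taking the truncation order large (say $2m+1\ge p+4$) pushes the discrepancy in $\kappa_q$ to $O(d^{-2})$. Consequently $\|(\kappa_q-1)w^{[j]}\|_2\lesssim d^{-2}\sqrt{(\Gamma_{q+1})_{j,j}}\le d^{-2}\|f^*\|_{L_2}$, using $\tr(\Gamma_{q+1})\lesssim\|f^*\|_{L_2}^2$.

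\textbf{Step 2 (the collapse term).} Here $D'_{\S_q}/(d\,D_{\S_{q-1}})\asymp d^{-q}/(d\cdot d^{-(q-1)})=d^{-2}$. The remaining vector $v_J:=\sum_{i\in J}u_i\mathsf c_{J\setminus\{i\}}$ is bounded via Cauchy--Schwarz over the at most $q\le\ell$ indices of $J$:
\[
\sum_{|J|=q}v_J^2\le q\sum_{|J|=q}\sum_{i\in J}u_i^2\mathsf c_{J\setminus\{i\}}^2=q\sum_{|T|=q-1}\mathsf c_T^2\sum_{i\notin T}u_i^2\le q\,\|f^*\|_{L_2}^2 ,
\]
using $\|u\|_2=1$. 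Hence $\|r^{[j],\downarrow}\|_2\lesssim d^{-2}\|f^*\|_{L_2}$, and this contribution is absorbed into $e^{[j]}$.

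\textbf{Step 3 (the top degree).} For $q=p$ the up-term involves $D_{\S_{p+1}}$, which is only defined through the extended diagonal convention entering $D^{-1}_{\le p+1}$. I will use the coefficient $\theta_{p+1}$ from Lemma~\ref{lemma:kernel_to_mono_general} and run the same exact-identity argument with $\theta_{p+1}$ in place of $D_{\S_{p+1}}$.

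\textbf{Main obstacle.} The step I expect to be hardest is Step 1: upgrading the ratio $\kappa_q$ from $1+O(d^{-1})$ to $1+O(d^{-2})$. A naive $O(d^{-1})$ bound would only give an error $d^{-1}A_{q+1}\|f^*\|$, which is too weak for \eqref{eq:r_prod_q}. My first attempt is the exact differentiation identity for the polynomial parts; failing that, I will expand $\widetilde d^{(\ell)}_j$ explicitly through the combinatorial formula underlying Lemma~\ref{lemma:each_mono_cube}.
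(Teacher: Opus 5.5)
Your plan follows the paper's proof. You split $r_J$ into an ``up'' part and a collapse part with prefactor $O(d^{-2})$, then expand the bilinear form. The up part reproduces the degree-$q$ Walsh coefficients $w_J=\sum_{i\notin J}u_i\mathsf c_{J\cup\{i\}}$ of $u^\top\nabla\gH(f^*)_{q+1}$.

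The paper simply asserts in \eqref{eq:D_inv_E} that the up-coefficient is exactly $1$, and never confronts $\kappa_q=D'_{\S_q}/(d\,D_{\S_{q+1}})$. You are right that Lemma~\ref{lemma:kernel_to_mono} only gives $\kappa_q=1+O(d^{-1})$. That would leave an error of order $d^{-1}A_{q+1}(s,t)\norm{f^*}_{L_2}$, which is too weak for \eqref{eq:r_prod_q}.

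Your Step~2 is correct, and keeping $u_i^2$ inside the Cauchy--Schwarz is essential. The paper's intermediate quantity $\sum_{|J|=q}\sum_{i\in J}\mathsf c_{J\setminus\{i\}}^2$ equals $(d-q+1)\sum_{|T|=q-1}\mathsf c_T^2$, so it is not bounded by $\norm{f^*}_{L_2}^2$.

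\textbf{The gap is your primary argument for Step~1.} The identity $\partial_{x_i}K(x,y)=\tfrac{y_i}{d}K'(x,y)$ concerns ambient derivatives. Ambient differentiation does not commute with multilinear reduction: $\partial_i(x_i^3)=3x_i^2\equiv 3$ on $\H^d$, whereas $\partial_i\gH(x_i^3)=1$. So the Walsh coefficient of $\partial_{x_i}K$ at $x^{S\setminus\{i\}}y^S$ is not the diagonal coefficient of $K$ at $S$.

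Concretely, take $g(t)=t^3/6$ and $q=0$. No truncation occurs, yet $d\,D_{\S_1}=(3d-2)/(6d^2)\neq 1/(2d)=D'_{\S_0}$.

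Raising the truncation order does not help either. The relative $O(d^{-1})$ parts of $d\,D_{\S_{q+1}}$ and $D'_{\S_q}$ come from the \emph{lowest} corrections, namely $g^{(q+3)}(0)\,d\,\widetilde d^{(q+3)}_{q+1}$ and $g^{(q+3)}(0)\,\widetilde d^{(q+2)}_{q}$. What you need is that these two cancel.

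Your fallback establishes exactly this. Count length-$(j+2)$ index sequences whose odd-multiplicity set is a fixed $S$ with $|S|=j$. This gives
\[
\widetilde d^{(j+2)}_j=d^{-j-2}\Bigl(\frac d2-\frac j3\Bigr),
\qquad
d\,\widetilde d^{(q+3)}_{q+1}-\widetilde d^{(q+2)}_q=-\frac13\,d^{-q-2},
\]
while every term with $\ell\ge j+4$ is already $O(d^{-j-2})$. Hence $d\,D_{\S_{q+1}}-D'_{\S_q}=O(d^{-q-2})$, so $\kappa_q=1+O(d^{-2})$ whenever $g^{(q+1)}(0)>0$. This is guaranteed for $q+1\le p$ by the hypotheses of Theorem~\ref{thm:main_agop}, and the rest of your argument then goes through.

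For $q=p$ you can bypass Step~3. $D_{\S_{p+1}}$ is not produced by the kernel expansion; it only normalizes $r$. The only property used later, in Lemma~\ref{lemma:agop_p_plus_1}, is $D_{\S_{p+1}}=g^{(p+1)}(0)d^{-p-1}+O(d^{-p-2})$. Setting $D_{\S_{p+1}}:=D'_{\S_p}/d$ satisfies this and makes $\kappa_p=1$ exactly. Using $\theta_{p+1}$ instead also works by the same count, but it needs $g^{(p+1)}(0)>0$, which is not among the stated hypotheses.
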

Then, we have the following estimate. The proof appears in Appendix~\ref{proof:h_prod}.

\begin{proposition}\label{prop:h_prod}
for any \(\epsilon>0\), the following estimates hold:
\begin{align}\label{eq:h_prod_1}
&\abs{
\inner{h^{[s]}_{\leq p-1}, h^{[t]}_{\leq p-1}}
-
\sum_{0\leq q\leq p}\round{\Gamma_{q}}_{s,t}
}\notag\\
&=
O_{d,\P}\round{d^{-\delta+\frac{\epsilon}{2}}}
B_{\le p}(s,t)\notag\\
&\qquad+
O_{d,\P}\round{d^{-\frac{1+\delta}{2}+\frac{\epsilon}{2}}}
A_{\le p}(s,t)
\round{\norm{f^*}_{L_2}^2+\sigma_\vepsilon^2}^{1/2}\notag\\
&\qquad+
O_{d,\P}\round{d^{-1-\delta+\epsilon}}
\round{\norm{f^*}_{L_2}^2+\sigma_\vepsilon^2},
\end{align}
and
\begin{align}\label{eq:h_prod_2}
&\abs{
\inner{h^{[s]}_{\S_p}, h^{[t]}_{\S_p}}
-
C_0^2 d^{2\delta-2}\round{\Gamma_{p+1}}_{s,t}
}\notag\\
&=
O_{d,\P}\round{d^{\frac{-4+3\delta}{2}+\frac{\epsilon}{2}}}
A_{p+1}(s,t)
\round{\norm{f^*}_{L_2}^2+\sigma_\vepsilon^2}^{1/2}\notag\\
&\qquad+
O_{d,\P}\round{d^{-2+\delta+\epsilon}}
\round{\norm{f^*}_{L_2}^2+\sigma_\vepsilon^2}.
\end{align}
\end{proposition}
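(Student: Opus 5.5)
The plan is to derive both estimates of Proposition~\ref{prop:h_prod} from the componentwise approximations in Lemma~\ref{lemma:agop_leq_p} and Lemma~\ref{lemma:agop_p_plus_1}, combined with the identification of $r$-inner products in Proposition~\ref{prop:r_prod}. For the first estimate, write $h^{[j]}_{\le p-1}=r^{[j]}_{\le p-1}+e^{[j]}$ with $\|e^{[j]}\|_2^2$ controlled by \eqref{eq:agop_leq_p}. Expanding the inner product gives
\[
\inner{h^{[s]}_{\le p-1},h^{[t]}_{\le p-1}}
=\inner{r^{[s]}_{\le p-1},r^{[t]}_{\le p-1}}+\inner{e^{[s]},r^{[t]}_{\le p-1}}+\inner{r^{[s]}_{\le p-1},e^{[t]}}+\inner{e^{[s]},e^{[t]}}.
\]
The main term splits by degree as $\sum_{q=0}^{p-1}\inner{r^{[s]}_{\S_q},r^{[t]}_{\S_q}}$. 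By Proposition~\ref{prop:r_prod}, this equals $\sum_{q=1}^{p}(\Gamma_q)_{s,t}$ up to $O_d(d^{-2})A_{\le p}\|f^*\|+O_d(d^{-4})\|f^*\|^2$. Since $\Gamma_0=0$ (the constant component has zero gradient), this is exactly $\sum_{0\le q\le p}(\Gamma_q)_{s,t}$. These deterministic errors are dominated by the stated terms.

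Applying Proposition~\ref{prop:r_prod} diagonally gives $\|r^{[j]}_{\le p-1}\|_2^2\le \sum_{q\le p}(\Gamma_q)_{j,j}+O(d^{-2})\cdot(\text{lower order})$, so $\|r^{[j]}_{\le p-1}\|_2\lesssim \sum_q\sqrt{(\Gamma_q)_{j,j}}+O(d^{-2})\|f^*\|$. Then \eqref{eq:agop_leq_p} yields
\[
\|e^{[j]}\|_2\le O_{d,\P}(d^{-\delta+\epsilon/2})\|r^{[j]}_{\le p-1}\|_2+O_{d,\P}(d^{-(1+\delta)/2+\epsilon/2})(\|f^*\|^2+\sigma^2)^{1/2}.
\]
The cross terms are bounded by Cauchy--Schwarz, and the products are handled as follows:
\begin{itemize}
\item $d^{-\delta}\|r^{[s]}\|\|r^{[t]}\|\le d^{-\delta}B_{\le p}(s,t)$ by AM--GM, using $\|r^{[s]}\|\|r^{[t]}\|\le \tfrac12(\|r^{[s]}\|^2+\|r^{[t]}\|^2)$.
\item $d^{-(1+\delta)/2}\|r\|\,(\cdot)^{1/2}$ produces the $A_{\le p}$ term.
\item The square of the pure noise part gives $d^{-1-\delta+\epsilon}$.
\end{itemize}
The quadratic term $\inner{e^{[s]},e^{[t]}}$ is bounded by the same quantities with smaller prefactors.

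The second estimate follows the same pattern using Lemma~\ref{lemma:agop_p_plus_1}. Write $h^{[j]}_{\S_p}=C_0d^{\delta-1}r^{[j]}_{\S_p}+\tilde e^{[j]}$, so the main term is $C_0^2d^{2\delta-2}\inner{r^{[s]}_{\S_p},r^{[t]}_{\S_p}}$. Proposition~\ref{prop:r_prod} with $q=p$ identifies this as $C_0^2d^{2\delta-2}(\Gamma_{p+1})_{s,t}$ plus $O(d^{2\delta-4})A_{p+1}\|f^*\|+O(d^{2\delta-6})\|f^*\|^2$. Both pieces are absorbed, since $d^{2\delta-4}\le d^{(-4+3\delta)/2}$ and $d^{2\delta-6}\le d^{-2+\delta}$.

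For the error, \eqref{eq:agop_p_plus_1} gives $\|\tilde e^{[j]}\|\le O(d^{-1+\epsilon/2})\|r^{[j]}_{\S_p}\|+O(d^{(-2+\delta+\epsilon)/2})(\cdot)^{1/2}$, with $\|r^{[j]}_{\S_p}\|\lesssim\sqrt{(\Gamma_{p+1})_{j,j}}+O(d^{-2})\|f^*\|$. The cross term $d^{\delta-1}\|r^{[s]}\|\|\tilde e^{[t]}\|$ then contributes two pieces. The first is $d^{\delta-2}\|r\|^2$, which is at most $d^{\delta-2}(\Gamma_{p+1})$ diagonals; after AM--GM this is dominated by the $A_{p+1}(\cdot)^{1/2}$ term using $\Gamma_{p+1}\lesssim \|f^*\|^2$. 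The second is $d^{\delta-1}d^{-1+\delta/2}\|r\|(\cdot)^{1/2}=d^{(-4+3\delta)/2}A_{p+1}(\cdot)^{1/2}$, exactly the stated rate. The $\tilde e\tilde e$ term gives $d^{-2+\delta+\epsilon}$.

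The only delicate point is the bookkeeping of exponents, in particular verifying that every diagonal-$\Gamma_{p+1}$ contribution can be absorbed. This uses the bound $(\Gamma_{p+1})_{j,j}\le\tr\Gamma_{p+1}\lesssim\|f^*\|^2$ to convert squared-norm terms into $A_{p+1}(\cdot)^{1/2}$ terms. Uniformity over $s,t$ is inherited from the uniform statements of the two lemmas.
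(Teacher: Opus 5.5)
Your proposal is correct and follows the paper's route. You record the sizes of $r^{[j]}_{\le p-1}$ and $r^{[j]}_{\S_p}$ from the diagonal case of Proposition~\ref{prop:r_prod}, control $h-r$ (resp.\ $h_{\S_p}-C_0d^{\delta-1}r_{\S_p}$) via Lemmas~\ref{lemma:agop_leq_p} and~\ref{lemma:agop_p_plus_1}, expand the bilinear perturbation with Cauchy--Schwarz and AM--GM, and finally replace $\inner{r^{[s]},r^{[t]}}$ by the $\Gamma$-entries through Proposition~\ref{prop:r_prod}. The only difference is cosmetic: you explicitly absorb the $O_{d,\P}(d^{-2+\epsilon})\norm{r^{[j]}_{\S_p}}_2^2$ part of Lemma~\ref{lemma:agop_p_plus_1} using $(\Gamma_{p+1})_{j,j}\lesssim\norm{f^*}_{L_2}^2$, which the paper does silently.
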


Plugging the above two estimates into equation~\eqref{eq:t_11} finishes the bound for \eqref{eq:final_t_prod}.

\paragraph{Term $(h^{[s]})\tran \widetilde{\Delta}h^{[t]}$.}

First, applying \eqref{eq:t_11} with \(s=t\), together with
Proposition~\ref{prop:h_prod}, gives
\begin{align}
&\abs{
\norm{h^{[s]}}_2^2
-
\frac12 B_{\leq p}(s,s)
-
C_0^2 d^{2\delta-2}(\Gamma_{p+1})_{s,s}
}\notag\\
&\leq
O_{d,\P}\round{d^{-\delta+\frac{\epsilon}{2}}}
B_{\leq p}(s,s)\notag\\
&\qquad+
O_{d,\P}\round{d^{-\frac{1+\delta}{2}+\frac{\epsilon}{2}}}
A_{\leq p}(s,s)
\round{\norm{f^*}_{L_2}^2+\sigma_\vepsilon^2}^{1/2}\notag\\
&\qquad+
O_{d,\P}\round{d^{\frac{-4+3\delta}{2}+\frac{\epsilon}{2}}}
A_{p+1}(s,s)
\round{\norm{f^*}_{L_2}^2+\sigma_\vepsilon^2}^{1/2}\notag\\
&\qquad+
O_{d,\P}\round{d^{-1-\delta+\epsilon}+d^{-2+\delta+\epsilon}}
\round{\norm{f^*}_{L_2}^2+\sigma_\vepsilon^2}.
\label{eq:h_s_diag_pre_delta}
\end{align}
Here the factor \(1/2\) appears because
\(B_{\leq p}(s,s)=2\sum_{0\le q\le p}(\Gamma_q)_{s,s}\).

Next, since
\[
    A_{\leq p}(s,s)^2
    =
    4\round{\sum_{0\leq q\leq p}\sqrt{(\Gamma_q)_{s,s}}}^2
    \leq
    2(p+1)B_{\leq p}(s,s),
\]
the weighted AM-GM inequality implies, after applying the estimate with a
smaller \(\epsilon\) and then renaming it as \(\epsilon\), that
\begin{align}
&
O_{d,\P}\round{d^{-\frac{1+\delta}{2}+\frac{\epsilon}{2}}}
A_{\leq p}(s,s)
\round{\norm{f^*}_{L_2}^2+\sigma_\vepsilon^2}^{1/2}
\notag\\
&=
o_{d,\P}(1)B_{\leq p}(s,s)
+
O_{d,\P}\round{d^{-1-\delta+\epsilon}}
\round{\norm{f^*}_{L_2}^2+\sigma_\vepsilon^2}.
\label{eq:h_s_low_absorb_delta}
\end{align}
Similarly, since
\(
    A_{p+1}(s,s)
    =
    2\sqrt{(\Gamma_{p+1})_{s,s}},
\)
another application of weighted AM-GM gives
\begin{align}
&
O_{d,\P}\round{d^{\frac{-4+3\delta}{2}+\frac{\epsilon}{2}}}
A_{p+1}(s,s)
\round{\norm{f^*}_{L_2}^2+\sigma_\vepsilon^2}^{1/2}
\notag\\
&=
o_{d,\P}(1)d^{2\delta-2}(\Gamma_{p+1})_{s,s}
+
O_{d,\P}\round{d^{-2+\delta+\epsilon}}
\round{\norm{f^*}_{L_2}^2+\sigma_\vepsilon^2}.
\label{eq:h_s_top_absorb_delta}
\end{align}
Therefore, substituting \eqref{eq:h_s_low_absorb_delta} and
\eqref{eq:h_s_top_absorb_delta} into \eqref{eq:h_s_diag_pre_delta}, we obtain
\begin{align}
\norm{h^{[s]}}_2^2
&=
\round{\frac12+o_{d,\P}(1)}B_{\leq p}(s,s)
+
\round{C_0^2+o_{d,\P}(1)}
d^{2\delta-2}(\Gamma_{p+1})_{s,s}
\notag\\
&\qquad+
O_{d,\P}\round{d^{-1-\delta+\epsilon}+d^{-2+\delta+\epsilon}}
\round{\norm{f^*}_{L_2}^2+\sigma_\vepsilon^2}.
\label{eq:h_s_diag_delta}
\end{align}
Likewise, replacing \(s\) by \(t\), we have
\begin{align}
\norm{h^{[t]}}_2^2
&=
\round{\frac12+o_{d,\P}(1)}B_{\leq p}(t,t)
+
\round{C_0^2+o_{d,\P}(1)}
d^{2\delta-2}(\Gamma_{p+1})_{t,t}
\notag\\
&\qquad+
O_{d,\P}\round{d^{-1-\delta+\epsilon}+d^{-2+\delta+\epsilon}}
\round{\norm{f^*}_{L_2}^2+\sigma_\vepsilon^2}.
\label{eq:h_t_diag_delta}
\end{align}
Consequently, adding \eqref{eq:h_s_diag_delta} and \eqref{eq:h_t_diag_delta}, and using
\(
    \frac12 B_{\leq p}(s,s)+\frac12 B_{\leq p}(t,t)
    =
    B_{\leq p}(s,t),
\)
we get
\begin{align}
\norm{h^{[s]}}_2^2+\norm{h^{[t]}}_2^2
&=
\round{1+o_{d,\P}(1)}B_{\leq p}(s,t)
\notag\\
&\qquad+
\round{C_0^2+o_{d,\P}(1)}
d^{2\delta-2}
\round{
(\Gamma_{p+1})_{s,s}
+
(\Gamma_{p+1})_{t,t}
}
\notag\\
&\qquad+
O_{d,\P}\round{d^{-1-\delta+\epsilon}+d^{-2+\delta+\epsilon}}
\round{\norm{f^*}_{L_2}^2+\sigma_\vepsilon^2}.
\label{eq:h_s_t_diag_delta}
\end{align}
Finally, by Cauchy's inequality and AM-GM, the following holds
\[
    \abs{(h^{[s]})\tran \widetilde{\Delta}h^{[t]}}
    \leq
    \snorm{\widetilde{\Delta}}\norm{h^{[s]}}_2\norm{h^{[t]}}_2
    \leq
    \frac12\snorm{\widetilde{\Delta}}
    \round{
    \norm{h^{[s]}}_2^2+\norm{h^{[t]}}_2^2
    }.
\]
Thus, using
\(
    \snorm{\widetilde{\Delta}}
    =
    O_{d,\P}\round{d^{-\frac{\delta}{2}+\epsilon}},
\)
and substituting \eqref{eq:h_s_t_diag_delta}, we obtain
\begin{align}
\abs{(h^{[s]})\tran \widetilde{\Delta}h^{[t]}}
&=
O_{d,\P}\round{d^{-\frac{\delta}{2}+\epsilon}}
B_{\leq p}(s,t)
\notag\\
&\qquad+
O_{d,\P}\round{d^{\frac{3\delta}{2}-2+\epsilon}}
\round{
(\Gamma_{p+1})_{s,s}
+
(\Gamma_{p+1})_{t,t}
}
\notag\\
&\qquad+
O_{d,\P}\round{
d^{-1-\frac{3\delta}{2}+\epsilon}
+
d^{-2+\frac{\delta}{2}+\epsilon}
}
\round{\norm{f^*}_{L_2}^2+\sigma_\vepsilon^2}.
\end{align}
which gives \eqref{eq:final_t_prod_delta}.

\section{Proof of lemmas and propositions for the main results}

\subsection{Proof of Lemma~\ref{lemma:agop_leq_p}}\label{proof:lemma:agop_leq_p}

For notational simplicity, we suppress the superscript \([t]\) in
\(u^{[t]}\). A direct coordinate calculation gives the following identity: for
every \(J\in \bigcup_{j=0}^p \S_j\),
\begin{align}
    (D_{\leq p+1}^{-1}E_{:,J})_{S}
    =
    \begin{cases}
       u_i,
       &\text{if } S=J\sqcup\{i\},\\[1mm]
       \displaystyle
       \frac{1}{d^2}\bigl(g^{|S|}(0)\bigr)^{-1}g^{|S|+2}(0)\,u_i,
       &\text{if } S=J\setminus\{i\},\\[1mm]
       0,
       &\text{otherwise.}
    \end{cases}
    \label{eq:D_inv_E}
\end{align}
Consequently, this identity implies the uniform bound
\begin{align}
    \norm{D_{\leq p+1}^{-1}E_{:,J}}_2
    =
    O_d(1).
    \label{eq:bound_e_j}
\end{align}

We next restrict to \(J\in\bigcup_{j=0}^{p-1}\S_j\). In this case,
\(E_{\S_{p+1},J}=0\). Therefore, the corresponding coordinate of \(h\) satisfies
\begin{align*}
    h_J
    &=
    \beta^\top \Phi_{\leq p+1}E_{:,J}  \\
    &=
    \beta^\top
    \left(
    \Phi_{\leq p}E_{:L(p),J}
    +
    \Phi_{\S_{p+1}}E_{\S_{p+1},J}
    \right)  \\
    &=
    \beta^\top \Phi_{\leq p}E_{:L(p),J}.
\end{align*}
Since \(D_{\leq p+1}\) is diagonal, we define
\begin{align}
    \hat e_J
    :=
    (D_{\leq p+1}^{-1}E_{:,J})_{:L(p)}
    =
    D_{\leq p}^{-1}E_{:L(p),J}
    \in \R^{L(p)}.
    \label{eq:e_hat_def}
\end{align}
Thus, when the context is clear, we write \(D\) in place of \(D_{\leq p}\).

By Lemma~\ref{lemma:kernel_to_mono_general}, applied with \(m=2p+1\), the
regularized kernel admits the refined decomposition
\begin{align}
    K_\lambda
    =
    \Phi_{\leq p}D\Phi_{\leq p}^\top
    +
    (\rho_{p,2p+1}+\lambda)H,
    \label{eq:K_refined_decomp}
\end{align}
where \(H\) is given by
\begin{align}
    H
    =
    I_n
    +
    (\rho_{p,2p+1}+\lambda)^{-1}
    \left[
    \sum_{k=p+1}^{2p+1}
    \theta_k\,\offd\!\left(\Phi_{\S_k}\Phi_{\S_k}^\top\right)
    +
    \Delta
    \right].
    \label{eq:H_prelim}
\end{align}
Moreover, the perturbation \(\Delta\) satisfies
\begin{align}
    \snorm{\Delta}
    =
    O_{d,\P}\!\left(
    \frac{\log n}{d^{\frac{2p+2-p-\delta}{2}-\epsilon}}
    \right)
    =
    O_{d,\P}(n^{-1/2}).
    \label{eq:Delta_bound_agop}
\end{align}
Here the coefficients satisfy
\(\theta_k=g^{(k)}(0)d^{-k}+O_d(d^{-k-1})\) for
\(k=p+1,\ldots,2p+1\), and
\(\rho_{p,2p+1}=g(1)-g_p(1)+O_d(d^{-1})\).

For notational simplicity, write
\begin{align*}
    \rho:=\rho_{p,2p+1}.
\end{align*}
Then \(H\) can be written as
\begin{align}
    H
    =
    I_n+\Delta_1,
    \label{eq:H}
\end{align}
where the perturbation is
\begin{align}
    \Delta_1
    :=
    (\rho+\lambda)^{-1}
    \left[
    \sum_{k=p+1}^{2p+1}
    \theta_k\,\offd\!\left(\Phi_{\S_k}\Phi_{\S_k}^\top\right)
    +
    \Delta
    \right].
    \label{eq:Delta_1_def}
\end{align}
By Lemma~\ref{lemma:kernel_to_mono}, this perturbation satisfies
\begin{align}
    \snorm{\Delta_1}
    =
    O_{d,\P}\!\left(d^{(\delta-1)/2+\epsilon}\right).
    \label{eq:Delta_1_bound}
\end{align}

Set
\begin{align}
    \Lambda:=(nD)^{-1}.
    \label{eq:Lambda_def}
\end{align}
Since \(E_{:L(p),J}=D\hat e_J\), the Sherman--Morrison--Woodbury formula gives
\begin{align}
   h_J
   &=
   \beta^\top \Phi_{\leq p}D\hat e_J \notag\\
   &=
   (f^*(X)+\vepsilon)^\top
   K_{\lambda}^{-1}\Phi_{\leq p}D\hat e_J \notag\\
   &=
   \frac{1}{n}
   (f^*(X)+\vepsilon)^\top
   H^{-1}\Phi_{\leq p}
   \left(
   \underbrace{
   (\rho+\lambda)\Lambda
   +
   \frac{\Phi_{\leq p}^\top H^{-1}\Phi_{\leq p}}{n}
   }_{=:G}
   \right)^{-1}
   \hat e_J \notag\\
   &=
   \frac{1}{n}
   \vepsilon^\top H^{-1}\Phi_{\leq p}G^{-1}\hat e_J
   +
   \frac{1}{n}
   f^*(X)^\top H^{-1}\Phi_{\leq p}G^{-1}\hat e_J .
   \label{eq:smw_<_p}
\end{align}

We will use the following norm bounds. Their proof is given in
Appendix~\ref{proof:cl:h_g_norm}.

\begin{claim}\label{cl:h_g_norm}
The following bounds hold:
\begin{align}
    \snorm{H^{-1}}
    =
    O_{d,\P}(1),
    \qquad
    \snorm{G^{-1}}
    =
    O_{d,\P}(1).
    \label{eq:h_g_bound}
\end{align}
\end{claim}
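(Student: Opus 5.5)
We need to prove Claim~\ref{cl:h_g_norm}: $\snorm{H^{-1}}=O_{d,\P}(1)$ and $\snorm{G^{-1}}=O_{d,\P}(1)$.

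\textbf{Bound on $H^{-1}$.} This is a Neumann-series argument. By \eqref{eq:H} we have $H=I_n+\Delta_1$, and \eqref{eq:Delta_1_bound} gives $\snorm{\Delta_1}=O_{d,\P}(d^{(\delta-1)/2+\epsilon})$. Fix $\epsilon<(1-\delta)/2$; then $\snorm{\Delta_1}\le 1/2$ with probability tending to one. On that event $\snorm{H^{-1}}\le (1-\snorm{\Delta_1})^{-1}\le 2$. We also need $(\rho+\lambda)^{-1}=O_d(1)$, so that the normalization in \eqref{eq:Delta_1_def} is legitimate. This holds because $\rho=\rho_{p,2p+1}=g(1)-g_p(1)+O_d(d^{-1})=\sum_{k>p}g^{(k)}(0)/k!+O_d(d^{-1})$, which combined with the standing assumption $\lambda+\sum_{k>p}g^{(k)}(0)>0$ bounds $\rho+\lambda$ away from zero. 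Here we use $g^{(k)}(0)\ge0$ from Assumption~\ref{assump:g_0}, so positivity of the sum transfers to the Taylor remainder at $1$.

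\textbf{Bound on $G^{-1}$.} Recall $G=(\rho+\lambda)\Lambda+\frac1n\Phi_{\le p}^\top H^{-1}\Phi_{\le p}$. Both summands are symmetric, and the plan is to show $G\succeq cI$ for some constant $c>0$ with high probability.

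1. The first summand is positive semidefinite. We have $\Lambda=(nD)^{-1}$, and $D$ is diagonal with entries $g^{(k)}(0)d^{-k}(1+O_d(d^{-1}))$. These entries are positive because $\min_{k\le p}g^{(k)}(0)>0$, so $(\rho+\lambda)\Lambda\succeq0$ for large $d$.

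2. The second summand is bounded below. Since $H$ is symmetric with $\snorm{H-I}\le 1/2$, we get $H^{-1}\succeq (1+\snorm{\Delta_1})^{-1}I\succeq \tfrac23 I$. Hence
\[
\tfrac1n\Phi_{\le p}^\top H^{-1}\Phi_{\le p}\succeq \tfrac23\cdot\tfrac1n\Phi_{\le p}^\top\Phi_{\le p}.
\]

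3. Control the empirical covariance of the low-degree features. The column set indexing $\Phi_{\le p}$ has cardinality $\sum_{k\le p}\binom dk\asymp d^p$. This is Case~I with $\delta_0=0<\delta$, so Theorem~\ref{lemma:phi_id_2} gives $\snorm{\frac1n\Phi_{\le p}^\top\Phi_{\le p}-I}=O_{d,\P}(d^{-\delta'/2+\epsilon})=o_{d,\P}(1)$. Therefore $\frac1n\Phi_{\le p}^\top\Phi_{\le p}\succeq \tfrac12 I$ with probability tending to one.

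Combining the three steps yields $G\succeq \tfrac13 I$, so $\snorm{G^{-1}}\le 3$ with probability tending to one.

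The only delicate point is the symmetry and lower bound on $H$. The perturbation $\Delta$ from Lemma~\ref{lemma:kernel_to_mono_general} is a difference of symmetric matrices ($K$ minus symmetric terms), so it is symmetric, and hence so are $H$ and $\Delta_1$. Everything else follows from results already stated.
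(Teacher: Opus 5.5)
Your proof is correct. For $H^{-1}$ it is the paper's argument: a Neumann series using $\snorm{\Delta_1}=o_{d,\P}(1)$. For $G^{-1}$ you take a different route.

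\textbf{The paper's route.} It argues perturbatively. It notes $\snorm{(nD)^{-1}}=O_d(d^{-\delta})$, so $(\rho+\lambda)\Lambda$ is itself a vanishing perturbation. It combines this with $\snorm{\tfrac1n\Phi_{\le p}^\top\Phi_{\le p}-I}=o_{d,\P}(1)$ and $\snorm{H^{-1}-I}=O(\snorm{\Delta_1})$ to get $\snorm{G-I}=O_{d,\P}(d^{-\delta/2+\epsilon}+d^{(\delta-1)/2+\epsilon})$, and then applies a second Neumann series.

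\textbf{Your route.} You use only $\Lambda\succeq 0$, the symmetry of $H$, and the congruence bound $\tfrac1n\Phi_{\le p}^\top H^{-1}\Phi_{\le p}\succeq\tfrac23\cdot\tfrac1n\Phi_{\le p}^\top\Phi_{\le p}$, which together give $G\succeq\tfrac13 I$. You correctly checked symmetry via $(\rho+\lambda)H=K_\lambda-\Phi_{\le p}D\Phi_{\le p}^\top$; the paper's route does not need it.

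\textbf{What each buys.} Your version is more robust: it never needs $\Lambda$ to be small, so it would survive a regime where $nD$ is not large. However, it yields only the $O(1)$ bound in the claim. The paper's version gives the quantitative estimates \eqref{eq:e_inv_minus_i} and \eqref{eq:g_inv_minus_i}, namely $\snorm{G^{-1}-I}\to0$, and these are what the later arguments actually invoke. For example, the truncated Neumann expansion of $G^{-1}$ in Proposition~\ref{prop:h_g_inv} requires $\snorm{G-I}$ to be small, not just $G\succeq cI$.

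\textbf{A side caveat.} Your identification $g(1)-g_p(1)=\sum_{k>p}g^{(k)}(0)/k!$ presumes that the Taylor series of $g$ at $0$ converges to $g$ at $1$. Assumption~\ref{assump:g_0} gives analyticity only near $0$, so it does not guarantee this on its own. The paper makes the same implicit identification when it treats $\rho+\lambda$ as bounded away from zero, so this is not a gap relative to the paper's proof.
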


We first bound the noise term in \eqref{eq:smw_<_p}. Define
\begin{align}
   h_J^{(3)}
   :=
   \frac{1}{n}\vepsilon^\top H^{-1}\Phi_{\leq p}G^{-1}\hat e_J .
   \label{eq:h3_def}
\end{align}
Conditioning on \(X\) and using the independence of \(\vepsilon\), we obtain
\begin{align*}
 \E_{\vepsilon}\!\left[\left(h_J^{(3)}\right)^2\right]
 &=
 \frac{\sigma_\vepsilon^2}{n^2}
 \norm{H^{-1}\Phi_{\leq p}G^{-1}\hat e_J}_2^2 \\
 &\le
 \frac{\sigma_\vepsilon^2}{n^2}
 \snorm{H^{-1}}^2
 \snorm{\Phi_{\leq p}}^2
 \snorm{G^{-1}}^2
 \norm{\hat e_J}_2^2 \\
 &=
 O_{d,\P}(n^{-1})\,\sigma_\vepsilon^2.
\end{align*}
In the last line, we used \eqref{eq:h_g_bound},
Lemma~\ref{lem:norm_control}, and \eqref{eq:bound_e_j}. More explicitly,
these bounds give
\begin{align*}
    \snorm{H^{-1}}=O_{d,\P}(1),
    \qquad
    \snorm{G^{-1}}=O_{d,\P}(1),
    \qquad
    \snorm{\Phi_{\leq p}}^2=O_{d,\P}(n),
    \qquad
    \norm{\hat e_J}_2=O_d(1).
\end{align*}
After summing over all \(J\in\S_j\), \(j=0,\ldots,p-1\), and applying
Markov's inequality, we get, for every \(\epsilon>0\),
\begin{align}
    \sum_{j=0}^{p-1}
    \sum_{J\in\S_j}
    \left(h_J^{(3)}\right)^2
    =
    O_{d,\P}\!\left(d^{-1-\delta+\epsilon}\right)
    \sigma_\vepsilon^2.
    \label{eq:less_p_eps}
\end{align}

We next decompose the signal part according to degree. Namely, write
\begin{align}
    f^*(X)
    =
    \Phi_{\leq p}\mathsf c_{\leq p}
    +
    \Phi_{>p}\mathsf c_{>p}.
    \label{eq:fstar_decomp}
\end{align}
Substituting this decomposition into the second term in
\eqref{eq:smw_<_p} gives
\begin{align*}
    \frac{1}{n}
    f^*(X)^\top H^{-1}\Phi_{\leq p}G^{-1}\hat e_J
    &=
    \underbrace{
    \frac{1}{n}
    \mathsf c_{\leq p}^\top
    \Phi_{\leq p}^\top H^{-1}\Phi_{\leq p}
    G^{-1}\hat e_J
    }_{=:h_J^{(1)}} \\
    &\quad+
    \underbrace{
    \frac{1}{n}
    \mathsf c_{>p}^\top
    \Phi_{>p}^\top H^{-1}\Phi_{\leq p}
    G^{-1}\hat e_J
    }_{=:h_J^{(2)}} .
\end{align*}

The remaining estimates are supplied by the following two propositions. Their
proofs are given in Appendices~\ref{proof:key_feature_product} and
\ref{proof:key_feature_product_2}, respectively.

\begin{proposition}\label{cl:key_feature_product}
For every \(\epsilon>0\), the following estimate holds uniformly over
\(\{u^{[j]}\}_{j=1}^d\):
\begin{align}
   \sum_{J:|J|\leq p-1}
   \left(
   h_J^{(1)}
   -
   \mathsf c_{\leq p}^\top \hat e_J
   \right)^2
   &=
   O_{d,\P}\!\left(d^{-2\delta+\epsilon}\right)
   \norm{r_{\leq p-1}}_2^2
   +
   O_{d,\P}\!\left(d^{-p-\delta+\epsilon}\right)
   \norm{f^*}_{L_2}^2 .
   \label{eq:key_bounds_1}
\end{align}
\end{proposition}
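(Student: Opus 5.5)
The plan is to write $h^{(1)}_J-\mathsf c_{\le p}^\top\hat e_J$ as a linear functional of $\mathsf c_{\le p}$ acting through an explicit near-identity operator. Set $W:=\frac1n\Phi_{\le p}^\top H^{-1}\Phi_{\le p}$, so that $G=(\rho+\lambda)\Lambda+W$. Since $W=G-(\rho+\lambda)\Lambda$, we get
\[
WG^{-1}=I-(\rho+\lambda)\Lambda G^{-1}.
\]
Hence
\[
h^{(1)}_J-\mathsf c_{\le p}^\top\hat e_J=-(\rho+\lambda)\,\mathsf c_{\le p}^\top\Lambda G^{-1}\hat e_J .
\]
The problem therefore reduces to showing that $\Lambda G^{-1}$ is small in the right weighted sense.

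\textbf{Step 1: size of $\Lambda$.} By Lemma~\ref{lemma:kernel_to_mono_general}, $D_{\S_k}\asymp g^{(k)}(0)d^{-k}$, so the diagonal matrix $\Lambda=(nD)^{-1}$ has entries $\asymp d^{k-p-\delta}$ on degree-$k$ blocks. On $\S_p$ this is $d^{-\delta}$, and on lower degrees it is at most $d^{-1-\delta}$.

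\textbf{Step 2: approximating $G^{-1}$.} Using $H=I+\Delta_1$ and Theorem~\ref{lemma:phi_id_2}, I would show $\snorm{W-I}=O_{d,\P}(d^{-\delta/2+\epsilon})$. A Neumann expansion then gives $G^{-1}=(I+(\rho+\lambda)\Lambda)^{-1}+\mathcal E$ with $\snorm{\mathcal E}=O_{d,\P}(d^{-\delta/2+\epsilon})$, consistent with Claim~\ref{cl:h_g_norm}.

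\textbf{Step 3: bounding the sum.} Stack the vectors $\hat e_J$ for $|J|\le p-1$ as columns of a matrix $\widehat E$. The quantity to control is then
\[
(\rho+\lambda)^2\norm{\widehat E^\top G^{-1}\Lambda\mathsf c_{\le p}}_2^2 .
\]
Split $\mathsf c_{\le p}=\mathsf c_{\le p-1}+\mathsf c_{\S_p}$.
\begin{itemize}
\item For the low-degree part, $\norm{\Lambda\mathsf c_{\le p-1}}\lesssim d^{-1-\delta}\norm{f^*}_{L_2}$, and $\snorm{\widehat E}$ is bounded by column sparsity together with \eqref{eq:bound_e_j}. This gives an $O(d^{-2-2\delta})\norm{f^*}^2$ contribution, which is absorbed.
\item For the degree-$p$ part, $\Lambda\mathsf c_{\S_p}\approx d^{-\delta}\mathsf c_{\S_p}$. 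The leading diagonal piece $(I+(\rho+\lambda)\Lambda)^{-1}$ preserves the degree blocks, so it produces $d^{-\delta}\,\widehat E_{\S_p,:}^\top\mathsf c_{\S_p}$. By \eqref{eq:D_inv_E} this is the degree-$p$ contribution to $r_{\le p-1}$ (coordinate collapse from $J\sqcup\{i\}$), so its squared norm is $O(d^{-2\delta})\norm{r_{\le p-1}}^2$.
\item The cross term from $\mathcal E$ mixes $\S_p$ into degrees $\le p-1$. There, $\widehat E_{\le p-1,:}$ acting on a generic vector with no structure can only be bounded using $\snorm{\widehat E}$ and the sizes of $\Lambda$. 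This yields $d^{-\delta}\cdot d^{-\delta/2}$ times $\norm{\mathsf c_{\S_p}}$, which looks too crude.
\end{itemize}

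\textbf{Main obstacle.} I expect the hardest part to be bounding this cross term by $d^{-p-\delta}\norm{f^*}^2$, which is far smaller than the naive bound. The approach I would try first is to exploit that $\hat e_J$ for $|J|\le p-1$ has small mass on $\S_p$ only through the $u_i$ coefficients. By the Cauchy--Schwarz/Parseval structure used in Proposition~\ref{prop:H-contraction}, the total over $J$ is controlled by $\norm{u}^2=1$ times the $\ell_2$ mass. I would combine this with a finer expansion of $G^{-1}$ in which the off-diagonal blocks of $W-I$ between $\S_p$ and lower degrees are bounded separately, using Theorem~\ref{lemma:phi_id_2} applied to the mixed-degree blocks. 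Those blocks have sizes $d^{p}\times d^{p-1}$, which gain extra factors $d^{-1/2}$. Taking a union bound over $u^{[j]}$ via Markov's inequality with $\epsilon$ slack then gives uniformity.
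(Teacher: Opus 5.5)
Your identity $h^{(1)}_J-\mathsf c_{\le p}^\top\hat e_J=-(\rho+\lambda)\,\mathsf c_{\le p}^\top\Lambda G^{-1}\hat e_J$ is correct. It is a real simplification over the paper, which expands $\frac1n\Phi_{\le p}^\top H^{-1}\Phi_{\le p}G^{-1}$ as $\frac1n\Phi_{\le p}^\top\Phi_{\le p}+A_H+A_G+A_{HG}+\Delta_3$, bounds each piece, and treats $P_G(\Lambda)$ separately in Claim~\ref{cl:small_cpe}. Your deterministic part $D_0:=(I+(\rho+\lambda)\Lambda)^{-1}$ does give the $O(d^{-2\delta})\|r_{\le p-1}\|_2^2$ term.

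\textbf{The gap.} The proposal stops exactly at the random part $\mathsf c_{\le p}^\top\Lambda(G^{-1}-D_0)\hat e_J$, and the fix you sketch fails for two reasons. First, mixed blocks gain nothing: $\|\frac1n\Phi_{\mathcal S_p}^\top\Phi_{\mathcal S_{p-1}}\|_{\mathrm{op}}\asymp\sqrt{|\mathcal S_p|/n}\asymp d^{-\delta/2}$, because a rectangular block's norm is set by its larger side, and Theorem~\ref{lemma:phi_id_2} gives nothing sharper. Second, the dominant fluctuation is not a mixed block at all. It is $\offd(\frac1n\Phi_{\mathcal S_p}^\top\Phi_{\mathcal S_p})\Lambda\mathsf c_{\mathcal S_p}$, tested against the $\mathcal S_p$-components of $\hat e_J$ for $J\in\mathcal S_{p-1}$. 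So any bound through $\|G^{-1}-D_0\|_{\mathrm{op}}$ and $\|\widehat E\|_{\mathrm{op}}=O(1)$ is stuck at order $(d^{-3\delta}+d^{-1-\delta})\|\mathsf c\|_2^2$. This is worse than $d^{-1-\delta}$ when $\delta<1/2$.

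Two smaller corrections. $\|W-I\|_{\mathrm{op}}$ also carries a $d^{-(1-\delta)/2+\epsilon}$ part coming from $H^{-1}-I$. And $\|\widehat E\|_{\mathrm{op}}=O(1)$ follows from the bound $\|v^\top\nabla Q\|_{L_2}^2\le q\|v\|_2^2\|Q\|_{L_2}^2$ for homogeneous degree-$q$ multilinear $Q$, not from column sparsity.

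\textbf{The missing idea.} For each fixed $J$, bound the second moment of the scalar $\mathsf c_{\le p}^\top\Lambda(G^{-1}-D_0)\hat e_J$ using independence across samples. Concretely: truncate the Neumann series as in Proposition~\ref{prop:h_g_inv}, expand into chains of normalized Walsh feature matrices, and apply Lemma~\ref{lemma:feature_product_2} and Corollary~\ref{cor:off_diag}, as in Proposition~\ref{prop:control_error_delta}. This gives $O(n^{-1})\|\Lambda\mathsf c\|_2^2\|\hat e_J\|_2^2$ per $J$, plus $d^{-2\delta}$-weighted squares that feed the $\|r_{\le p-1}\|_2^2$ term. Summing over the $\asymp d^{p-1}$ indices $|J|\le p-1$ produces the factor $d^{p-1}/n=d^{-1-\delta}$. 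Markov's inequality and Proposition~\ref{prop:uniform_bound} then give uniformity in $u^{[j]}$.

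\textbf{The target rate.} The rate $d^{-p-\delta}$ you aim for is not what the paper's proof delivers, and it appears unattainable. The paper's argument ends with $O_{d,\P}(d^{-1-\delta+\epsilon})\|f^*\|_{L_2}^2$ (see \eqref{eq:less_p_last}), which is all Lemma~\ref{lemma:agop_leq_p} uses.

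To see that $d^{-p-\delta}$ is too strong, take $\mathsf c=\mathsf c_{\mathcal S_p}$ and set $\mathsf C_p:=\sum_{|S|=p}\mathsf c_Sx^S$. Let $\lambda_p\asymp d^{-\delta}$ be the entry of $\Lambda$ on $\mathcal S_p$. The first-order term $(\rho+\lambda)\lambda_p\,\mathsf c^\top(\frac1n\Phi_{\le p}^\top\Phi_{\le p}-I)\hat e_J$ has variance about $\frac{(\rho+\lambda)^2\lambda_p^2}{n}\mathrm{Var}\bigl(\mathsf C_p(x)\,x^J\sum_{i\notin J}u_ix_i\bigr)$. Summed over $J\in\mathcal S_{p-1}$, this is $\asymp d^{-2\delta}\cdot\frac{d^{p-1}}{n}\,\E[\mathsf C_p^2(u^\top x)^2]\asymp d^{-1-3\delta}\|\mathsf c_{\mathcal S_p}\|_2^2$. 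This holds for an inactive direction with $\E[\mathsf C_p^2(u^\top x)^2]\asymp\|\mathsf C_p\|_{L_2}^2$, where $\|r_{\le p-1}\|_2^2=O(\mu(U)^2d^{-2})\|f^*\|_{L_2}^2$ is negligible. That exceeds $d^{-p-\delta}$ whenever $p\ge 3$, or $p=2$ with $\delta<1/2$.

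For the same reason, your bound $d^{-2-2\delta}\|f^*\|_{L_2}^2$ on the $\Lambda\mathsf c_{\le p-1}$ piece is not absorbed into $d^{-p-\delta}$ once $p\ge3$, although it is absorbed into $d^{-1-\delta}$. Aim for $d^{-1-\delta+\epsilon}$. With your identity, the extra factor $\Lambda$ should even improve the fluctuation contribution to roughly $d^{-1-3\delta+\epsilon}$.
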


\begin{proposition}\label{cl:key_feature_product_2}
For every \(\epsilon>0\), the following estimate holds uniformly over
\(\{u^{[j]}\}_{j=1}^d\):
\begin{align}
   \sum_{J:|J|\leq p-1}
   \left(h_J^{(2)}\right)^2
   &=
   O_{d,\P}\!\left(d^{-1-\delta+\epsilon}\right)
   \norm{f^*}_{L_2}^2 .
   \label{eq:key_bounds_2}
\end{align}
\end{proposition}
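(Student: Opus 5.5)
The plan is to work with the vector form of the quantity. Let $\widehat E$ be the matrix whose columns are $\hat e_J$, $|J|\le p-1$, and set
\[
w:=\frac1n\Phi_{\le p}^\top H^{-1}\Phi_{>p}\mathsf c_{>p}\in\R^{L(p)},
\qquad\text{so that}\qquad
\bigl(h^{(2)}_J\bigr)_{|J|\le p-1}=\widehat E^\top G^{-1}w .
\]
By \eqref{eq:D_inv_E}, $\widehat E^\top$ acts as a directional \emph{lowering} operator: $(\widehat E^\top v)_J=\sum_{i\notin J}u_iv_{J\cup\{i\}}+O(d^{-2})\sum_{i\in J}u_iv_{J\setminus\{i\}}$. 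In particular $\snorm{\widehat E}=O_d(1)$. The key gain comes from the fact that $\widehat E^\top$ collapses the $|\S_p|\asymp d^p$ degree-$p$ coordinates onto the $|\S_{p-1}|\asymp d^{p-1}$ coordinates of degree $p-1$, with weights $u_i$ satisfying $\sum_i u_i^2=1$.

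\textbf{Step 1 (idealized term).} First treat $H^{-1}$ and $G^{-1}$ replaced by $I$, i.e.\ bound $\widehat E^\top w_0$ with $w_0:=\frac1n\Phi_{\le p}^\top\Phi_{>p}\mathsf c_{>p}=\frac1n\sum_k\phi_{\le p}(x^{(k)})f^*_{>p}(x^{(k)})$. Each summand has mean zero, by orthogonality of Walsh degrees. Hence
\[
\E\norm{\widehat E^\top w_0}_2^2=\frac1n\E\bigl[\norm{\widehat E^\top\phi_{\le p}(x)}_2^2f^*_{>p}(x)^2\bigr].
\]
Now $\norm{\widehat E^\top\phi_{\le p}(x)}_2^2\approx\sum_{|J|\le p-1}(u^\top x)^2$, up to collision terms. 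By Cauchy--Schwarz and hypercontractivity \eqref{eqn:hypercontrac} applied to the degree-$\le\ell+1$ polynomials involved, this expectation is $\lesssim|\S_{\le p-1}|\,\norm{f^*}_{L_2}^2\asymp d^{p-1}\norm{f^*}_{L_2}^2$. Dividing by $n=d^{p+\delta}$ gives $d^{-1-\delta}\norm{f^*}_{L_2}^2$, and Markov's inequality yields the claimed $O_{d,\P}(d^{-1-\delta+\epsilon})$. Uniformity over $\{u^{[j]}\}_{j=1}^d$ costs only a $d^{\epsilon}$ factor, via a union bound with a higher moment obtained from hypercontractivity.

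\textbf{Step 2 (the resolvents).} Next expand $H^{-1}=I-\Delta_1H^{-1}$ using \eqref{eq:H}. For $G^{-1}$, compare with the population-level matrix $\bar G:=(\rho+\lambda)\Lambda+I$. This matrix is diagonal and constant on each $\S_q$ by permutation symmetry, so $\widehat E^\top\bar G^{-1}=\Pi\,\widehat E^\top$ for a bounded diagonal $\Pi$. The ideal term is therefore unaffected, and the rest is perturbation. Claim~\ref{cl:h_g_norm} and Theorem~\ref{lemma:phi_id_2} give $\snorm{G^{-1}-\bar G^{-1}}=O_{d,\P}(d^{-\delta/2+\epsilon})$ and $\snorm{\Delta_1}=O_{d,\P}(d^{(\delta-1)/2+\epsilon})$.

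\textbf{Main obstacle.} Naive operator-norm bounds on these perturbations lose the factor $d^{-1}$ gained from the lowering structure in Step~1. For instance, $d^{-\delta/2}\cdot\norm{w}_2$ with $\norm{w}_2^2\asymp d^{-\delta}$ gives only $d^{-2\delta}$. To avoid this, I would not separate $\widehat E^\top$ from the data. Instead, use $\Phi_{\le p}E_{:,J}\approx\frac1d\Diag(Xu)\Phi_{\le p}(\cdot)$, i.e.\ \eqref{eq:agop_decomp}, to move the lowering onto the data side. Then $h^{(2)}$ is expressed through $\frac1n\Phi_{\le p-1}^\top\Diag(Xu)\,\mathcal M\,\Phi_{>p}\mathsf c_{>p}$ with $\mathcal M$ built from $H^{-1}$ and $G^{-1}$. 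The output now has only $|\S_{\le p-1}|$ coordinates, and one can apply $\snorm{\Phi_{\le p-1}}=O_{d,\P}(\sqrt n)$ (Lemma~\ref{lem:norm_control}) together with a second-moment bound on $\frac1n\Phi_{\le p-1}^\top\Diag(Xu)\Phi_{>p}\mathsf c_{>p}$.

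The perturbative pieces, namely $\offd(\Phi_{\S_k}\Phi_{\S_k}^\top)$ inside $\Delta_1$ and the fluctuation $\frac1n\Phi_{\le p}^\top\Phi_{\le p}-I$ inside $G$, I would handle by the same mean-zero second-moment computation. This is possible because each is again a sum of products of distinct-degree Walsh characters, so cross terms vanish in expectation. The residual $\Delta$ of size $O_{d,\P}(n^{-1/2})$ from \eqref{eq:Delta_bound_agop} is small enough to be absorbed crudely. This decoupled second-moment step is where I expect most of the work.
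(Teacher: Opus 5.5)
Your Step~1 is correct: the idealized term is a sum of $n$ independent mean-zero vectors, and $|\S_{\le p-1}|/n\asymp d^{-1-\delta}$ is exactly the target. The gap is Step~2, which carries all the difficulty, and the mechanism you sketch there does not close it.

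(i) Any estimate that pulls out an operator norm of a feature matrix loses too much. Take the $\Delta_H$-word $\frac1n\Phi_{\le p-1}^\top\Diag(Xu)\Delta_H\Phi_{>p}\mathsf c_{>p}$ of your data-side form. Bounding it via $\snorm{\Phi_{\le p-1}}=O_{d,\P}(\sqrt n)$ and $\snorm{\Delta_H}=O_{d,\P}(d^{-(1-\delta)/2+\epsilon})$ only gives a squared norm of order $d^{\delta-1+2\epsilon}$, far above $d^{-1-\delta}$.

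(ii) The rewriting via \eqref{eq:agop_decomp} does not produce the form you state. That identity concerns $\Phi_{\le p}E_{:,J}=\Phi_{\le p}D\hat e_J$, but in $h_J^{(2)}$ the random matrix $G^{-1}$ sits between $\Phi_{\le p}$ and $\hat e_J$. Undoing Sherman--Morrison--Woodbury gives $h_J^{(2)}=\frac{D'_{J,J}}{d}\,\mathsf c_{>p}^\top\Phi_{>p}^\top K_\lambda^{-1}\Diag(Xu)\Phi_{:,J}$. Its prefactor $D'_{J,J}/d\asymp d^{-|J|-1}$ exceeds $1/n$ by a factor of at least $d^{\delta}$. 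So the true size emerges only from a cancellation between $H^{-1}$ and the projection part $-\frac1n H^{-1}\Phi_{\le p}G^{-1}\Phi_{\le p}^\top H^{-1}$ of $K_\lambda^{-1}$. Commuting $\hat e_J$ past the deterministic $\bar G^{-1}$ instead leaves $\hat e_J^\top(G^{-1}-\bar G^{-1})w$ untouched.

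(iii) ``Cross terms vanish in expectation'' is not what happens for these words. Consider the second moment of $\frac{1}{nd^{k}}\sum_{m\neq m'}f^*_{>p}(x^{(m)})\,\phi_{\S_k}(x^{(m)})^\top\phi_{\S_k}(x^{(m')})\,\phi_{\le p}(x^{(m')})^\top\hat e_J$. The index coincidences $(m_1,m_1')=(m_2,m_2')$ and $(m_1,m_1')=(m_2',m_2)$ survive. With several $\offd$, $\Lambda$ and $\frac1n\Phi_{\le p}^\top\Phi_{\le p}$ factors the surviving patterns multiply, and one must show that each is $O(n^{-1})$ after normalization.

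The missing ingredient is Lemma~\ref{lemma:feature_product_2}: a second-moment bound for $b_1^\top M b_2$ with deterministic $b_1,b_2$ and $M$ a chain of normalized Walsh feature matrices. The paper works coordinate by coordinate. It expands $H^{-1}=I+\Delta_H+\Delta_1$ and $G^{-1}=I+\Delta_G+\Delta_2$ by Proposition~\ref{prop:h_g_inv} and resolves every word by exact degree. This writes the principal part of $h_J^{(2)}$ as $O(1)$ many terms $\alpha_\eta\,\mathsf c_{\S_j}^\top M_\eta P_q\hat e_J$ with $j>p\ge q$ and $|\alpha_\eta|=O_d(\sqrt{|\S_j|/n})$. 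Because the endpoint families $\S_j$ and $\S_q$ are disjoint, the endpoint pairing is zero, so all fully collapsed contributions in the lemma vanish. Every other partition contributes $\kappa\cdot O_d(n^{-1})$ with $\kappa=n/|\S_j|$. Together these give $\E[(h_J^{(2)})^2]=O_d(n^{-1})\norm{\mathsf c_{>p}}_2^2\norm{\hat e_J}_2^2$. The $\Delta_1,\Delta_2$ remainders are handled by spectral norms, since they are $O_{d,\P}(n^{-1/2})$. Summing over the $\asymp d^{p-1}$ indices $|J|\le p-1$ with $\norm{\hat e_J}_2=O_d(1)$ gives $d^{-1-\delta}$, and Proposition~\ref{prop:uniform_bound} plus Markov give uniformity over $u^{[j]}$. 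In this per-coordinate formulation your ``gain from lowering'' is simply the count of indices $J$, so no transfer of the lowering onto the data side is needed.
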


We now combine the three pieces. Since
\(h_J=h_J^{(1)}+h_J^{(2)}+h_J^{(3)}\) for \(|J|\le p-1\), and since
\(r_J=\mathsf c_{\leq p}^\top\hat e_J\), the elementary inequality
\((a+b+c)^2\le 3a^2+3b^2+3c^2\) gives
\begin{align*}
    \norm{h_{\leq p-1}-r_{\leq p-1}}_2^2
    &=
    \sum_{j=0}^{p-1}
    \sum_{J\in\S_j}
    \left(
    h_J-\mathsf c_{\leq p}^\top\hat e_J
    \right)^2 \\
    &\le
    3
    \sum_{J:|J|\le p-1}
    \left(
    h_J^{(1)}
    -
    \mathsf c_{\leq p}^\top\hat e_J
    \right)^2
    +
    3
    \sum_{J:|J|\le p-1}
    \left(h_J^{(2)}\right)^2 \\
    &\quad+
    3
    \sum_{J:|J|\le p-1}
    \left(h_J^{(3)}\right)^2 .
\end{align*}
Using \eqref{eq:key_bounds_1}, \eqref{eq:key_bounds_2}, and
\eqref{eq:less_p_eps}, we therefore obtain
\begin{align*}
    \norm{h_{\leq p-1}-r_{\leq p-1}}_2^2
    &=
    O_{d,\P}\!\left(d^{-2\delta+\epsilon}\right)
    \norm{r_{\leq p-1}}_2^2
    +
    O_{d,\P}\!\left(d^{-p-\delta+\epsilon}\right)
    \norm{f^*}_{L_2}^2 \\
    &\quad+
    O_{d,\P}\!\left(d^{-1-\delta+\epsilon}\right)
    \left(
    \norm{f^*}_{L_2}^2
    +
    \sigma_\vepsilon^2
    \right).
\end{align*}
Since \(p\ge 1\), the term \(d^{-p-\delta+\epsilon}\norm{f^*}_{L_2}^2\) is
absorbed by the \(d^{-1-\delta+\epsilon}\norm{f^*}_{L_2}^2\) term. Hence
\begin{align*}
    \norm{h_{\leq p-1}-r_{\leq p-1}}_2^2
    &=
    O_{d,\P}\!\left(d^{-2\delta+\epsilon}\right)
    \norm{r_{\leq p-1}}_2^2
    +
    O_{d,\P}\!\left(d^{-1-\delta+\epsilon}\right)
    \left(
    \norm{f^*}_{L_2}^2
    +
    \sigma_\vepsilon^2
    \right).
\end{align*}
This proves the desired estimate.

\subsection{Proof of Lemma~\ref{lemma:agop_p_plus_1}}\label{proof:lemma:agop_p_plus_1}

Fix \(J\in\S_p\). We begin by substituting the decomposition of \(y\) into
the definition of \(h_J\). Namely, we use
\[
y
=
\Phi_{\S_{p+1}}\mathsf c_{\S_{p+1}}
+
\Phi_{\Q}\mathsf c_{\Q}
+
\vepsilon .
\]
Since
\[
h_J
=
\beta^\top \Phi_{\S_{p+1}}E_{\S_{p+1},J},
\]
the substitution gives
\begin{align}
\label{eq:p_plus_1_decomp}
h_J
&=
\mathsf c_{\S_{p+1}}^\top
\Phi_{\S_{p+1}}^\top K_\lambda^{-1}
\Phi_{\S_{p+1}}E_{\S_{p+1},J} \notag\\
&\quad
+
\mathsf c_{\Q}^\top
\Phi_{\Q}^\top K_\lambda^{-1}
\Phi_{\S_{p+1}}E_{\S_{p+1},J}
+
\vepsilon^\top K_\lambda^{-1}
\Phi_{\S_{p+1}}E_{\S_{p+1},J}.
\end{align}
To simplify notation, set
\[
e_J:=E_{\S_{p+1},J}.
\]

We first estimate the noise contribution. Taking expectation over
\(\vepsilon\), conditionally on the design, gives
\[
\E_{\vepsilon}\left[
\left(
\vepsilon^\top K_\lambda^{-1}\Phi_{\S_{p+1}}e_J
\right)^2
\right]
=
\sigma_\vepsilon^2
\left\|
K_\lambda^{-1}\Phi_{\S_{p+1}}e_J
\right\|_2^2.
\]
Using the estimates
\(\|K_\lambda^{-1}\|_{\op}=O_{d,\P}(1)\) from
\eqref{eq:k_lambda_inv} and
\(\|\Phi_{\S_{p+1}}e_J\|_2^2
=
O_{d,\P}(d^{p+\delta+\epsilon})\|e_J\|_2^2\) from
\eqref{prop:bound_y}, we obtain
\[
\E_{\vepsilon}\left[
\left(
\vepsilon^\top K_\lambda^{-1}\Phi_{\S_{p+1}}e_J
\right)^2
\right]
=
O_{d,\P}\left(d^{-p-2+\delta+\epsilon}\right)
\sigma_\vepsilon^2.
\]
After summing over \(J\in\S_p\) and applying Markov's inequality, the preceding
bound yields
\begin{align}
\label{eq:eq_p_eps}
\sum_{J\in\S_p}
\left(
\vepsilon^\top K_\lambda^{-1}\Phi_{\S_{p+1}}e_J
\right)^2
=
O_{d,\P}\left(d^{-2+\delta+\epsilon}\right)
\sigma_\varepsilon^2 .
\end{align}

We next estimate the first deterministic term in
\eqref{eq:p_plus_1_decomp}. The proof of the following estimate is given in
Appendix~\ref{proof:prop:term_p_plus_one}.

\begin{proposition}
\label{prop:term_p_plus_one}
For every \(\epsilon>0\), the following estimate holds:
\begin{align}
\label{eq:p_1_term1}
&\sum_{J\in\S_p}
\left(
\frac{1}{n}
\mathsf c_{\S_{p+1}}^\top
\Phi_{\S_{p+1}}^\top K_\lambda^{-1}
\Phi_{\S_{p+1}}e_J
-
\frac{1}{\rho+\lambda}
\mathsf c_{\S_{p+1}}^\top e_J
\right)^2
\notag\\
&\qquad
=
O_{d,\P}\left(d^{-2\delta+\epsilon}\right)
\sum_{J\in\S_p}
\left(
\mathsf c_{\S_{p+1}}^\top e_J
\right)^2
+
O_d\left(d^{-2p-2-\delta+\epsilon}\right)
\left\|\mathsf c_{\S_{p+1}}\right\|_2^2 .
\end{align}
\end{proposition}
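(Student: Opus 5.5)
The plan is to reduce the left-hand side to a single vector estimate and then expand $K_\lambda^{-1}$ through the refined kernel decomposition. Write $A:=\frac1n\Phi_{\S_{p+1}}^\top K_\lambda^{-1}\Phi_{\S_{p+1}}$ and let $E_p:=[e_J]_{J\in\S_p}=E_{\S_{p+1},\S_p}$. The sum in \eqref{eq:p_1_term1} then equals
\[
\big\|E_p^\top\big(A-(\rho+\lambda)^{-1}I\big)\mathsf c_{\S_{p+1}}\big\|_2^2 .
\]
From the explicit form of $E$ (entries $\tfrac1d D'_{J,J}u_i$, with $D'_{J,J}\asymp d^{-p}$), I expect $\|E_p\|_{\op}=O_d(d^{-p-1})$. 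Indeed, $d\,D'^{-1}E_p$ is the matrix of multiplication by the unit linear form $\sum_i u_ix_i$ from degree $p$ to degree $p+1$, which is an $L_2$-contraction up to constants. So it suffices to show
\[
\big\|\big(A-(\rho+\lambda)^{-1}I\big)\mathsf c\big\|_2^2=O_{d,\P}(d^{-2\delta+\epsilon})\,\|E_p^\top\mathsf c\|^2\text{-type terms}+O_{d,\P}(d^{-\delta+\epsilon})\|\mathsf c\|_2^2 .
\]
I will keep the part of the error aligned with $E_p^\top\mathsf c$, which gives the first term of \eqref{eq:p_1_term1}, separate from the generic part, which gives the second term.

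Next I would apply \eqref{eq:K_refined_decomp}, $K_\lambda=\Phi_{\le p}D\Phi_{\le p}^\top+(\rho+\lambda)H$, together with Sherman--Morrison--Woodbury. This gives
\[
K_\lambda^{-1}=(\rho+\lambda)^{-1}\Big[H^{-1}-\tfrac1n H^{-1}\Phi_{\le p}G^{-1}\Phi_{\le p}^\top H^{-1}\Big],
\]
with $G$ as in \eqref{eq:smw_<_p}, and $\|H^{-1}\|_{\op},\|G^{-1}\|_{\op}=O_{d,\P}(1)$ by Claim~\ref{cl:h_g_norm}. The first piece splits as $\frac1n\Phi_{\S_{p+1}}^\top\Phi_{\S_{p+1}}\mathsf c$ plus the $H^{-1}-I$ correction. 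For the leading piece I would not use an operator-norm bound; Theorem~\ref{lemma:phi_id_2} fails here because $|\S_{p+1}|\gg n$. Instead I would work with the fixed vector $\mathsf c$: $\frac1n\Phi_{\S_{p+1}}^\top\Phi_{\S_{p+1}}\mathsf c=\frac1n\sum_i\phi_{\S_{p+1}}(x^{(i)})\,f_{p+1}(x^{(i)})$ has mean $\mathsf c$. Pairing with the fixed matrix $E_p$ and computing second moments via hypercontractivity \eqref{eqn:hypercontrac}, the fluctuation of $E_p^\top(\cdot)$ is $O(n^{-1/2})$ relative to $\|E_p\|_{\op}\|\mathsf c\|$ up to the dimension factor $|\S_p|$. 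Since $|\S_p|/n\asymp d^{-\delta}$, this yields the $d^{-2p-2-\delta+\epsilon}\|\mathsf c\|^2$ term after Markov's inequality.

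The Woodbury cross term involves $\frac1n\Phi_{\le p}^\top H^{-1}\Phi_{\S_{p+1}}\mathsf c$. Degrees $\le p$ and $p+1$ are orthogonal, so with $H\approx I$ this vector has mean zero and squared norm of order $\frac{|\S_{\le p}|}{n}\|\mathsf c\|^2\asymp d^{-\delta}\|\mathsf c\|^2$. This is the same fixed-vector second-moment computation, and it again contributes to the second term.

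The main obstacle is the $H^{-1}-I$ correction, where $\Delta_1$ contains $\theta_k\,\offd(\Phi_{\S_k}\Phi_{\S_k}^\top)$ for $k\ge p+1$. Its operator norm $d^{(\delta-1)/2}$ alone is too weak, because $\|\Phi_{\S_{p+1}}\mathsf c\|^2\approx n\|\mathsf c\|^2$ while the target is a $d^{-\delta}$ gain. For this step I would expand $H^{-1}=I-\Delta_1+O(\|\Delta_1\|^2)$ and treat the $k=p+1$ summand exactly. The contribution $\frac{\theta_{p+1}}{n}\Phi_{\S_{p+1}}^\top\offd(\Phi_{\S_{p+1}}\Phi_{\S_{p+1}}^\top)\Phi_{\S_{p+1}}\mathsf c$ has expectation of order $\theta_{p+1}n\,\mathsf c\asymp d^{\delta-1}\mathsf c$, parallel to $\mathsf c$. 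I would check whether this is absorbed into the constant; it should be, since $C_0=(\rho+\lambda)^{-1}g^{(p+1)}(0)$ arises precisely from the $\theta_{p+1}$ scale. I would bound its fluctuation by second moments. The $k\ge p+2$ summands and the quadratic remainder are smaller by powers of $d^{-1}$ and $\|\Delta_1\|_{\op}^2=O(d^{\delta-1+2\epsilon})$, and I expect them to fit in the stated error after multiplying by $\|E_p\|_{\op}^2$.

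Collecting the pieces with $(a+b+c)^2\le3(a^2+b^2+c^2)$ should give \eqref{eq:p_1_term1}.
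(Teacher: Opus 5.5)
\textbf{The step you flag as the main obstacle is a genuine gap, and it cannot be closed the way you propose.} Your reformulation as $\|E_p^\top(A-(\rho+\lambda)^{-1}I)\mathsf c_{\S_{p+1}}\|_2^2$, the bound $\|E_p\|_{\op}=O_d(d^{-p-1})$, and the second-moment treatment of the leading term (with $E_p$ kept inside) are fine.

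Write $\Phi=\Phi_{\S_{p+1}}$, $\mathsf c=\mathsf c_{\S_{p+1}}$ and $\mathsf k(x,y)=\phi_{\S_{p+1}}(x)^\top\phi_{\S_{p+1}}(y)$. Since $\E_y[\mathsf k(x,y)\,e_J^\top\phi_{\S_{p+1}}(y)]=e_J^\top\phi_{\S_{p+1}}(x)$ and $\E_y[\mathsf k(x,y)^2]=|\S_{p+1}|$, one has exactly
\[
\E\Big[\tfrac1n\mathsf c^\top\Phi^\top\offd(\Phi\Phi^\top)\Phi e_J\Big]=(n-1)\,\mathsf c^\top e_J,
\qquad
\E\Big[\big(\offd(\Phi\Phi^\top)^2\big)_{ii}\,\Big|\,x^{(i)}\Big]=(n-1)\,|\S_{p+1}|.
\]
So the linear term of $H^{-1}$ shifts $\tfrac1n\mathsf c^\top\Phi^\top K_\lambda^{-1}\Phi e_J$, in mean, by $-(\rho+\lambda)^{-2}\theta_{p+1}(n-1)\,\mathsf c^\top e_J\approx-(\rho+\lambda)^{-1}C_0d^{\delta-1}\mathsf c^\top e_J$.

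Contrary to your expectation, the quadratic term is \emph{not} lower order: its diagonal lives exactly at the scale $\|\Delta_1\|_{\op}^2\asymp d^{\delta-1}$. It gives back the fraction $\theta_{p+1}|\S_{p+1}|/(\rho+\lambda)$ of that shift. The higher Neumann terms, the $k\ge p+2$ blocks and the Woodbury part are only relatively $O(d^{2\delta-2}+d^{-\delta})$. The net bias is $-(\rho+\lambda)^{-2}\bar\rho\,C_0d^{\delta-1}\mathsf c^\top e_J$, where $\bar\rho:=\rho+\lambda-\theta_{p+1}|\S_{p+1}|=\lambda+g(1)-g_{p+1}(1)+o_d(1)$; this is nonzero in general (e.g.\ $g=\exp$).

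This bias cannot be ``absorbed into the constant.'' The statement fixes the constant to $(\rho+\lambda)^{-1}$. The constant $C_0$ only enters later, in Lemma~\ref{lemma:agop_p_plus_1}, through $nD'_{J,J}\approx g^{(p+1)}(0)d^{\delta-1}$, i.e.\ through the size of $e_J$. The bias contributes $\asymp d^{2\delta-2}\sum_J(\mathsf c^\top e_J)^2$:
\begin{itemize}
\item this exceeds $d^{-2\delta+\epsilon}\sum_J(\mathsf c^\top e_J)^2$ once $\delta>1/2$;
\item since $\sum_J(\mathsf c^\top e_J)^2\asymp d^{-2p-2}(\Gamma_{p+1})_{t,t}$ can be of order $\|E_p\|_{\op}^2\|\mathsf c\|_2^2$ (active direction $u^{[t]}$, nonzero degree-$(p+1)$ part, $g^{(p+1)}(0)\neq 0$), it also exceeds $d^{-2p-2-\delta+\epsilon}\|\mathsf c\|_2^2$ once $\delta>2/3$.
\end{itemize}
The fluctuations are only of that second order. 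So for $\delta>2/3$ the estimate as stated fails. What your argument can deliver is the version with first error term $O_{d,\P}(d^{-2\delta+\epsilon}+d^{2\delta-2+\epsilon})\sum_J(\mathsf c^\top e_J)^2$.

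The paper's proof does not get past this point either. It places $\tfrac1n\mathsf c^\top\Phi^\top\Delta_H\Phi e_J$ in $\gD_J^{\mathrm{corr}}$, but it bounds $(\gD_J^{\mathrm{corr}})^2$ only through the words in $\mathcal M_p$ of Proposition~\ref{prop:control_m_p}. Each of those words contains a low-degree middle factor $W\Xi W^\top$, so the standalone $\Delta_H$ word is never estimated. Nor can Corollary~\ref{cor:off_diag} (here $\mathfrak B=\emptyset$) be applied to $V^\top\offd(VV^\top)V$. By the exact mean above, $\E[b_1^\top V^\top\offd(VV^\top)Vb_2]=\sqrt{\kappa}\,\frac{n-1}{|\S_{p+1}|}\,b_1^\top b_2$. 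For $b_1=b_2$ this exceeds the claimed $\sqrt\kappa\,O(n^{-1/2})\|b_1\|_2\|b_2\|_2$ scale by a factor of order $d^{(p+3\delta-2)/2}$.

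The corrected bound is harmless for the main results. In Lemma~\ref{lemma:agop_p_plus_1} it adds $O(d^{4\delta-4+\epsilon})\|r_{\S_p}\|_2^2$. That perturbs the $C_0^2d^{2\delta-2}\Gamma_{p+1}$ term only at relative order $d^{\delta-1}$, and that term is absorbed into $d^{-1+\delta}$ in the proof of Theorem~\ref{thm:main_agop_nodeloc} anyway.

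Two smaller points.
\begin{itemize}
\item Your displayed ``it suffices'' reduction, which pulls $\|E_p\|_{\op}$ out, is false as written, because $\E\|(\tfrac1n\Phi^\top\Phi-I)\mathsf c\|_2^2=\frac{|\S_{p+1}|-1}{n}\|\mathsf c\|_2^2\asymp d^{1-\delta}\|\mathsf c\|_2^2$. $E_p$ must stay inside every second-moment computation.
\item For the Woodbury term this works via $\Phi e_J=\frac{D'_{J,J}}{d}\big(\Diag(Xu^{[t]})\Phi_{\S_p}\big)_{:,J}$ up to a degree-$(p-1)$ correction, which gives $\|\tfrac1nE_p^\top\Phi^\top\Phi_{\le p}\|_{\op}=O_{d,\P}(d^{-p-1}\log d)$. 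But your ``$H\approx I$'' inside $\tfrac1n\Phi_{\le p}^\top H^{-1}\Phi\mathsf c$ then needs its own second-moment argument for $\delta>1/2$. The operator-norm bound on the $H^{-1}-I$ part gives $d^{\delta-1}\|\mathsf c\|_2^2$ instead of $d^{-\delta}\|\mathsf c\|_2^2$.
\end{itemize}
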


We then estimate the second deterministic term in
\eqref{eq:p_plus_1_decomp}. The proof of the next estimate is given in
Appendix~\ref{proof:prop:term_p_plus_one_q}.

\begin{proposition}
\label{prop:term_p_plus_one_q}
For every \(\epsilon>0\), the following estimate holds:
\begin{align}
\label{eq:p_1_term2}
\sum_{J\in\S_p}
\left(
\frac{1}{n}
\mathsf c_{\Q}^\top
\Phi_{\Q}^\top K_\lambda^{-1}
\Phi_{\S_{p+1}}e_J
\right)^2
=
O_d\left(d^{-2p-2-\delta+\epsilon}\right)
\left\|\mathsf c_{\Q}\right\|_2^2 .
\end{align}
\end{proposition}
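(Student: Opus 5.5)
We need to bound $\sum_{J\in\S_p}\bigl(\tfrac1n \mathsf c_{\Q}^\top\Phi_{\Q}^\top K_\lambda^{-1}\Phi_{\S_{p+1}}e_J\bigr)^2$, where $\Q$ collects the Walsh degrees other than $p+1$. The plan is to treat the low part $\Q_{\le p}$ and the high part $\Q_{>p+1}$ separately, writing $\mathsf c_\Q^\top\Phi_\Q^\top=\mathsf c_{\le p}^\top\Phi_{\le p}^\top+\mathsf c_{>p+1}^\top\Phi_{>p+1}^\top$.

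\textbf{Low part.} Use the decomposition \eqref{eq:K_refined_decomp}, $K_\lambda=\Phi_{\le p}D\Phi_{\le p}^\top+(\rho+\lambda)H$. Sherman--Morrison--Woodbury then gives
\[
\Phi_{\le p}^\top K_\lambda^{-1}=(\rho+\lambda)^{-1}\bigl(I-\tfrac1n\Phi_{\le p}^\top H^{-1}\Phi_{\le p}G^{-1}\bigr)\Phi_{\le p}^\top H^{-1}\cdot(\text{bounded}),
\]
and more precisely $\Phi_{\le p}^\top K_\lambda^{-1}=\Lambda G^{-1}\Phi_{\le p}^\top H^{-1}/n$ up to normalization. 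The prefactor $\Lambda=(nD)^{-1}$ has operator norm $O(d^{p}/n)=O(d^{-\delta})$, and $G^{-1}=O_{d,\P}(1)$ by Claim~\ref{cl:h_g_norm}. The remaining factor is the cross term $\tfrac1n\Phi_{\le p}^\top H^{-1}\Phi_{\S_{p+1}}$. Replacing $H^{-1}$ by $I$ at cost $\snorm{\Delta_1}=O(d^{(\delta-1)/2+\epsilon})$ leaves the empirical cross-covariance $\tfrac1n\Phi_{\le p}^\top\Phi_{\S_{p+1}}$. Its expectation is $0$, so it should have operator norm $O_{d,\P}(d^{-\delta'/2})$; this follows from Theorem~\ref{lemma:phi_id_2} applied to $\Phi_{\le p}$, together with Lemma~\ref{lem:norm_control} for $\Phi_{\S_{p+1}}$ (which has $\snorm{\Phi_{\S_{p+1}}}=O(\sqrt{d^{p+1}})$).

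\textbf{High part.} Here the rows have small mutual correlation. Use $\snorm{K_\lambda^{-1}}=O_{d,\P}(1)$ and bound the quantity as
\[
\tfrac1{n^2}\,\|\Phi_{>p+1}\mathsf c_{>p+1}\|_2^2\,\snorm{K_\lambda^{-1}}^2\,\bigl\|\Phi_{\S_{p+1}}E_{\S_{p+1},\S_p}\bigr\|_{\op}^2 .
\]
This crude bound loses too much, so I would instead exploit orthogonality in expectation. Compute the second moment over $X$ after replacing $K_\lambda^{-1}$ by $(\rho+\lambda)^{-1}H^{-1}$ on the complement of the low-degree range, and use that $\mathbb E[\Phi_{>p+1}^\top\Phi_{\S_{p+1}}]=0$, so that the $\Phi_{>p+1}^\top\Phi_{\S_{p+1}}/n$ entries contribute variance $1/n$ per entry.

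\textbf{Scaling and the main difficulty.} The columns of $e_J$ carry the factor $D'_{J,J}/d\asymp d^{-p-1}$, with $u$ unit and at most $d$ nonzeros. Summing over $J$ gives $\sum_J\|e_J\|^2\lesssim d^{-2p-2}|\S_p|\cdot O(1)$ per direction, and combining this with the $1/n$ variance yields the stated $d^{-2p-2-\delta+\epsilon}\|\mathsf c_\Q\|^2$. The main obstacle is obtaining the extra $d^{-\delta}$ factor uniformly over $u$ without the union bound over $J$ inflating the rate; I would handle this with Markov's inequality on the summed second moment, as in \eqref{eq:eq_p_eps}.
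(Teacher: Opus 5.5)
There is a genuine gap, and it starts in the step your low-degree estimate rests on. Your Woodbury reduction of the low part to $\mathsf c_{\le p}^\top\Lambda G^{-1}\cdot\frac1n\Phi_{\le p}^\top H^{-1}\Phi_{\S_{p+1}}e_J$ is correct. The next step is not: the claim that $\frac1n\Phi_{\le p}^\top\Phi_{\S_{p+1}}$ has operator norm $O_{d,\P}(d^{-\delta'/2})$ because it has mean zero is false. Theorem~\ref{lemma:phi_id_2} only covers families of size $\ll n$, whereas $|\S_{p+1}|\asymp d^{p+1}\gg n$. In this regime Theorem~\ref{lemma:gram_matrix} gives $\frac1n\Phi_{\S_{p+1}}\Phi_{\S_{p+1}}^\top\approx\frac{|\S_{p+1}|}{n}I_n$, so
\[
\tfrac{1}{n^2}\Phi_{\le p}^\top\Phi_{\S_{p+1}}\Phi_{\S_{p+1}}^\top\Phi_{\le p}\approx\tfrac{|\S_{p+1}|}{n}\cdot\tfrac1n\Phi_{\le p}^\top\Phi_{\le p}\approx\tfrac{|\S_{p+1}|}{n}\,I,\qquad \tfrac{|\S_{p+1}|}{n}\asymp d^{1-\delta}.
\]
Thus the cross-covariance has operator norm $\asymp d^{(1-\delta)/2}\to\infty$, which is also all that $\snorm{\Phi_{\le p}}\asymp\sqrt n$ and $\snorm{\Phi_{\S_{p+1}}}\asymp d^{(p+1)/2}$ give. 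For the same reason, replacing $H^{-1}$ by $I$ is controlled only by $\snorm{\Delta_1}\snorm{\Phi_{\le p}}\snorm{\Phi_{\S_{p+1}}}/n=O_{d,\P}(d^{\epsilon})$, not by $\snorm{\Delta_1}$. With the correct norms, even using $\snorm{E_{\S_{p+1},\S_p}}=O_d(d^{-p-1})$ in place of $\sum_J\|e_J\|_2^2$, operator-norm bounds give only $d^{-2\delta}\cdot d^{1-\delta}\cdot d^{-2p-2}\|\mathsf c_{\le p}\|_2^2$, which reaches the target only when $\delta\ge 1/2$.

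The high-degree part has the same defect. You drop the Sherman--Morrison--Woodbury correction $-\frac1n(\rho+\lambda)^{-1}H^{-1}\Phi_{\le p}G^{-1}\Phi_{\le p}^\top H^{-1}$ without justification. Your ``variance $1/n$ per entry'' computation applies to $\frac1n\Phi_{>p+1}^\top\Phi_{\S_{p+1}}$, not to the sandwich with the data-dependent $H^{-1}$. If you instead control $H^{-1}-I$ by its operator norm there, you get $d^{-2p-2+2\epsilon}\|\mathsf c_{>p+1}\|_2^2$ up to logarithms, which is a factor $d^{\delta}$ too large.

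The missing idea is that the $1/n$ variance must be extracted from every piece of $K_\lambda^{-1}$ in second moment, by testing against the fixed vectors $e_J$, never through operator norms of the random Gram blocks. The paper does this as follows.
\begin{itemize}
\item Write $K_\lambda^{-1}$ as in \eqref{eq:decomp_k_inv}.
\item Expand $H^{-1}=I+\Delta_H+\Delta_1$ and $G^{-1}=I+\Delta_G+\Delta_2$ (Proposition~\ref{prop:h_g_inv}) into finitely many words of normalized Fourier--Walsh blocks. The remainders are $O_{d,\P}(n^{-1/2})$, small enough for crude norm bounds.
\item Apply Lemma~\ref{lemma:feature_product_2} to each word. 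The left endpoint family lies in $\Q$ and the right one is $\S_{p+1}$, with $\Q\cap\S_{p+1}=\varnothing$. So the $(b_1^\top b_2)^2$ term vanishes, and each word contributes $O_d(n^{-1})\|\mathsf c_\Q\|_2^2\|e_J\|_2^2$ once the deterministic coefficients are absorbed.
\end{itemize}
This treats all degrees in $\Q$ at once, so no low/high split is needed. Your final counting is then exactly the paper's: $\|e_J\|_2^2=O_d(d^{-2p-2})$, $|\S_p|\asymp d^p$, $n^{-1}=d^{-p-\delta}$, followed by Markov's inequality over the sum in $J$.
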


We now combine the three estimates. First, applying
\((a+b+c)^2\le 3(a^2+b^2+c^2)\) to the decomposition
\eqref{eq:p_plus_1_decomp} gives
\begin{align}
\label{eq:p_1_term3_first}
&\sum_{J\in\S_p}
\left(
h_J
-
\frac{n}{\rho+\lambda}
\mathsf c_{\S_{p+1}}^\top e_J
\right)^2
\notag\\
&\qquad
\le
3\sum_{J\in\S_p}
\left(
\mathsf c_{\Q}^\top
\Phi_{\Q}^\top K_\lambda^{-1}
\Phi_{\S_{p+1}}e_J
\right)^2
+
3\sum_{J\in\S_p}
\left(
\mathsf c_{\S_{p+1}}^\top
\Phi_{\S_{p+1}}^\top K_\lambda^{-1}
\Phi_{\S_{p+1}}e_J
-
\frac{n}{\rho+\lambda}
\mathsf c_{\S_{p+1}}^\top e_J
\right)^2
\notag\\
&\qquad\quad
+
3\sum_{J\in\S_p}
\left(
\vepsilon^\top K_\lambda^{-1}
\Phi_{\S_{p+1}}e_J
\right)^2 .
\end{align}
Multiplying the estimates in Propositions~\ref{prop:term_p_plus_one} and
\ref{prop:term_p_plus_one_q} by \(n^2\), and then using
\eqref{eq:eq_p_eps}, gives
\begin{align}
\label{eq:p_1_term3}
&\sum_{J\in\S_p}
\left(
h_J
-
\frac{n}{\rho+\lambda}
\mathsf c_{\S_{p+1}}^\top e_J
\right)^2
\notag\\
&\qquad
=
O_{d,\P}\left(d^{-2\delta+\epsilon}\right)
\sum_{J\in\S_p}
\left(
n\,\mathsf c_{\S_{p+1}}^\top e_J
\right)^2
+
O_d\left(d^{-2+\delta+\epsilon}\right)
\left(
\|f^*\|_{L_2}^2+\sigma_\vepsilon^2
\right).
\end{align}

It remains to rewrite the leading deterministic term in the desired
normalization. For every \(J\in\S_p\), the diagonal expansion gives
\[
D_{J,J}
=
g^{(p+1)}(0)d^{-p-1}
+
O_d(d^{-p-2}).
\]
Therefore, multiplying by \(n\) and using the scaling of \(n\), we have
\[
nD_{J,J}
=
g^{(p+1)}(0)d^{-1+\delta}
+
O_d(d^{-2+\delta}).
\]
Consequently, the following identity holds:
\[
n\,\mathsf c_{\S_{p+1}}^\top e_J
=
nD_{J,J}
\left(
D_{J,J}^{-1}
\mathsf c_{\S_{p+1}}^\top e_J
\right).
\]
Substituting the preceding estimate for \(nD_{J,J}\), we obtain
\[
n\,\mathsf c_{\S_{p+1}}^\top e_J
=
\left(
g^{(p+1)}(0)d^{-1+\delta}
+
O_d(d^{-2+\delta})
\right)
\left(
\mathsf c_{\S_{p+1}}^\top
D_{\S_{p+1}}^{-1}e_J
\right).
\]

We now substitute this expression into \eqref{eq:p_1_term3}. The result is
\begin{align}
\label{eq:p_1_term4}
&\sum_{J\in\S_p}
\left(
h_J
-
\frac{g^{(p+1)}(0)}{\rho+\lambda}
d^{-1+\delta}
\mathsf c_{\S_{p+1}}^\top
D_{\S_{p+1}}^{-1}e_J
\right)^2
\notag\\
&\qquad
=
O_{d,\P}\left(d^{-2+\epsilon}\right)
\sum_{J\in\S_p}
\left(
\mathsf c_{\S_{p+1}}^\top
D_{\S_{p+1}}^{-1}e_J
\right)^2
+
O_d\left(d^{-2+\delta+\epsilon}\right)
\left(
\|f^*\|_{L_2}^2+\sigma_\vepsilon^2
\right).
\end{align}

Finally, define
\(
C_0:=(\rho+\lambda)^{-1}g^{(p+1)}(0).
\)
Also recall the notation
\(
r_J:=
\mathsf c_{\S_{p+1}}^\top D_{\S_{p+1}}^{-1}e_J.
\)
With these definitions, \eqref{eq:p_1_term4} becomes
\[
\sum_{J\in\S_p}
\left(
h_J
-
C_0 d^{-1+\delta}r_J
\right)^2
=
O_{d,\P}\left(d^{-2+\epsilon}\right)
\sum_{J\in\S_p}r_J^2
+
O_d\left(d^{-2+\delta+\epsilon}\right)
\left(
\|f^*\|_{L_2}^2+\sigma_\vepsilon^2
\right).
\]
This is the desired estimate.

\subsection{Proof of Proposition~\ref{prop:r_prod}}\label{proof:r_prod}

We set
\[
P:=\mathcal H(f_U^*),
\qquad
P_q:=[P]_{\deg q}.
\]
W also write \(P_q=\mathcal H(f_U^*)_q\)

By definition, we have
\begin{align*}
    \sum_{J:|J|=q}\sum_{i\in[d]\setminus J} u_i \mathsf{c}_{J \sqcup \{i\}} x^{J} = u\tran \nabla P_{q+1}.
\end{align*}
With this expression, we can deduce
\begin{align}\label{eq:u_p_term1}
    \E\brac{(u^{[s]})\tran \nabla P_{q+1} (\nabla P_{q+1})\tran u^{[t]}} =   \sum_{J:|J|=q} \round{\sum_{i\in[d]\setminus J} u_i^{[s]} \mathsf{c}_{J \sqcup \{i\}}} \round{\sum_{i\in[d]\setminus J} u^{[t]}_i \mathsf{c}_{J \sqcup \{i\}}}.
\end{align}
For any $J \in \S_q$, we can write $r_J$ as
\begin{align*}
    r_{J} &= E_{:,J}\tran D_{\leq p+1} \mathsf{c}_{\leq p+1}\\
    &=   \sum_{i\in [d]\setminus J} u_i \mathsf{c}_{J\sqcup\{i\}} +  \frac{1}{d}\sum_{i\in J} D'_{J,J}D_{J\setminus\{i\},J\setminus\{i\}}\inv u_i \mathsf{c}_{J\setminus\{i\}},
\end{align*}
which implies
\begin{align}\label{eq:u_p_term2}
    \abs{r_J - \sum_{i\in [d]\setminus J} u_i \mathsf{c}_{J\sqcup\{i\}} } = \abs{\frac{1}{d}\sum_{i\in J} D'_{J,J}D_{J\setminus\{i\},J\setminus\{i\}}\inv u_i \mathsf{c}_{J\setminus\{i\}}} = {O}_d(d^{-2}) |\sum_{i\in J} u_i \mathsf{c}_{J \setminus \{i\}}|.
\end{align}
Using basic inequality for any $a,b,c,d\in \R^n$
\begin{align}\label{eq:basic_ineq}
     |\inner{a,b} - \inner{c,d}| \leq \norm{d}_2\norm{a-c}_2 + \norm{c}_2\norm{b-d}_2 + \norm{a-c}_2 \norm{b-d}_2,
\end{align}   
Eq.~\eqref{eq:u_p_term1} and \eqref{eq:u_p_term2} together gives
\begin{align}
\abs{\inner{r^{[s]}_{\S_q},r^{[t]}_{\S_q}} - \round{\Gamma_{q+1}}_{s,t}}&=\abs{\sum_{J:|J|=q} r_J^{[s]}r_J^{[t]} -  \E\brac{(u^{[s]})\tran \nabla P_{q+1} (\nabla P_{q+1})\tran u^{[t]}}} \notag\\
    &\leq {O}_d(d^{-2})\sqrt{\sum_{J:|J|=q} \round{\sum_{i\in[d]\setminus J} u_i^{[s]} \mathsf{c}_{J \sqcup \{i\}}}^2} \sqrt{\sum_{J:|J|=q}\round{\sum_{i\in J} u_i^{[s]} \mathsf{c}_{J \setminus \{i\}}}^2} \notag\\
    &~~~+{O}_d(d^{-2})\sqrt{\sum_{J:|J|=q} \round{\sum_{i\in[d]\setminus J} u_i^{[t]} \mathsf{c}_{J \sqcup \{i\}}}^2} \sqrt{\sum_{J:|J|=q}\round{\sum_{i\in J} u_i^{[t]} \mathsf{c}_{J \setminus \{i\}}}^2}\notag\\
    &~~~+{O}_d(d^{-4})\sqrt{\sum_{J:|J|=q}\round{\sum_{i\in J} u_i^{[s]} \mathsf{c}_{J \setminus \{i\}}}^2}\sqrt{\sum_{J:|J|=q}\round{\sum_{i\in J} u_i^{[t]} \mathsf{c}_{J \setminus \{i\}}}^2}.\label{eq:r_to_gamma}
\end{align}
By Cauchy-Schwarz inequality, we have
\begin{align}
    \sqrt{\sum_{J:|J|=q}\round{\sum_{i\in J} u_i \mathsf{c}_{J \setminus \{i\}}}^2}\leq   \sqrt{\sum_{J:|J|=q} \sum_{i\in J} \abs{\mathsf{c}_{J \setminus \{i\}}}^2}\leq \norm{f^*}_{L_2}.
\end{align}
Plugging this back into equation~\eqref{eq:r_to_gamma} yields
\begin{align*}
    \abs{\inner{r^{[s]}_{\S_q},r^{[t]}_{\S_q}} - \round{\Gamma_{q+1}}_{s,t}}&=\abs{\sum_{J:|J|=q} r_J^{[s]}r_J^{[t]} -  \E\brac{(u^{[s]})\tran \nabla P_{q+1} (\nabla P_{q+1})\tran u^{[t]}}}\\ &\leq {O}_d(d^{-2})\round{\sqrt{(\Gamma_{q+1})_{s,s}} + \sqrt{(\Gamma_{q+1})_{t,t}}} \norm{f^*}_{L_2} + O_{d}(d^{-4}) \norm{f^*}_{L_2}^2,
\end{align*}
which completes the proof.

\subsection{Proof of Proposition~\ref{prop:h_prod}}\label{proof:h_prod}
We first record two residual-size estimates which follow from the diagonal case of Proposition~\ref{prop:r_prod}. Indeed, by applying Proposition~\ref{prop:r_prod} with \(s=t\), and then summing over the orthogonal components, we obtain
\begin{align}
&\norm{r_{\le p-1}^{[s]}}_2^2
+
\norm{r_{\le p-1}^{[t]}}_2^2 \notag\\
&=
B_{\le p}(s,t)
+
O_d(d^{-2})A_{\le p}(s,t)\norm{f^*}_{L_2}
+
O_d(d^{-4})\norm{f^*}_{L_2}^2 .
\label{eq:res_size_sq_corr}
\end{align}
Moreover, taking square roots componentwise in the same diagonal estimate gives
\begin{align}
\norm{r_{\le p-1}^{[s]}}_2
+
\norm{r_{\le p-1}^{[t]}}_2
=
O_d\round{
A_{\le p}(s,t)+d^{-2}\norm{f^*}_{L_2}
}.
\label{eq:res_size_corr}
\end{align}
Similarly, for the top component, the same argument gives
\begin{align}
\norm{r_{\S_p}^{[s]}}_2
+
\norm{r_{\S_p}^{[t]}}_2
=
O_d\round{
A_{p+1}(s,t)+d^{-2}\norm{f^*}_{L_2}
}.
\label{eq:res_size_top_corr}
\end{align}

We now prove \eqref{eq:h_prod_1}. First, for \(u\in\{s,t\}\), the approximation estimate \eqref{eq:agop_leq_p} gives
\[
\norm{h_{\leq p-1}^{[u]}-r_{\leq p-1}^{[u]}}_2
=
O_{d,\P}\round{d^{-\delta+\frac{\epsilon}{2}}}
\norm{r_{\leq p-1}^{[u]}}_2
+
O_{d,\P}\round{d^{-\frac{1+\delta}{2}+\frac{\epsilon}{2}}}
\round{\norm{f^*}_{L_2}^2+\sigma_\vepsilon^2}^{1/2}.
\]
Therefore, applying \eqref{eq:basic_ineq} with
\[
a=h_{\leq p-1}^{[s]},
\qquad
b=h_{\leq p-1}^{[t]},
\qquad
c=r_{\leq p-1}^{[s]},
\qquad
d=r_{\leq p-1}^{[t]},
\]
we obtain
\[
\begin{aligned}
&\abs{
\inner{h_{\leq p-1}^{[s]},h_{\leq p-1}^{[t]}}
-
\inner{r_{\leq p-1}^{[s]},r_{\leq p-1}^{[t]}}
} \\
&\le
\norm{r_{\leq p-1}^{[t]}}_2
\norm{h_{\leq p-1}^{[s]}-r_{\leq p-1}^{[s]}}_2
+
\norm{r_{\leq p-1}^{[s]}}_2
\norm{h_{\leq p-1}^{[t]}-r_{\leq p-1}^{[t]}}_2 \\
&\qquad+
\norm{h_{\leq p-1}^{[s]}-r_{\leq p-1}^{[s]}}_2
\norm{h_{\leq p-1}^{[t]}-r_{\leq p-1}^{[t]}}_2 .
\end{aligned}
\]
Consequently, substituting the preceding approximation estimate and using AM-GM yields
\[
\begin{aligned}
&\abs{
\inner{h_{\leq p-1}^{[s]},h_{\leq p-1}^{[t]}}
-
\inner{r_{\leq p-1}^{[s]},r_{\leq p-1}^{[t]}}
} \\
&\le
O_{d,\P}\round{d^{-\delta+\frac{\epsilon}{2}}}
\round{
\norm{r_{\leq p-1}^{[s]}}_2^2
+
\norm{r_{\leq p-1}^{[t]}}_2^2
} \\
&\qquad+
O_{d,\P}\round{d^{-\frac{1+\delta}{2}+\frac{\epsilon}{2}}}
\round{
\norm{r_{\leq p-1}^{[s]}}_2
+
\norm{r_{\leq p-1}^{[t]}}_2
}
\round{\norm{f^*}_{L_2}^2+\sigma_\vepsilon^2}^{1/2} \\
&\qquad+
O_{d,\P}\round{d^{-1-\delta+\epsilon}}
\round{\norm{f^*}_{L_2}^2+\sigma_\vepsilon^2}.
\end{aligned}
\]
Next, using \eqref{eq:res_size_sq_corr} and \eqref{eq:res_size_corr}, and using
\[
\norm{f^*}_{L_2}
\le
\round{\norm{f^*}_{L_2}^2+\sigma_\vepsilon^2}^{1/2},
\]
we get
\[
\begin{aligned}
&\abs{
\inner{h_{\leq p-1}^{[s]},h_{\leq p-1}^{[t]}}
-
\inner{r_{\leq p-1}^{[s]},r_{\leq p-1}^{[t]}}
} \\
&=
O_{d,\P}\round{d^{-\delta+\frac{\epsilon}{2}}}
B_{\le p}(s,t)\\
&\qquad+
O_{d,\P}\round{d^{-\frac{1+\delta}{2}+\frac{\epsilon}{2}}}
A_{\le p}(s,t)
\round{\norm{f^*}_{L_2}^2+\sigma_\vepsilon^2}^{1/2}\\
&\qquad+
O_{d,\P}\round{d^{-1-\delta+\epsilon}}
\round{\norm{f^*}_{L_2}^2+\sigma_\vepsilon^2}.
\end{aligned}
\]
Here the first term is kept as \(B_{\le p}(s,t)\), rather than being replaced by
\(A_{\le p}(s,t)(\norm{f^*}_{L_2}^2+\sigma_\vepsilon^2)\), because
\(A_{\le p}(s,t)\) already has the scale of \(\norm{f^*}_{L_2}\).

On the other hand, by the orthogonality of the components and Proposition~\ref{prop:r_prod}, we also have
\[
\begin{aligned}
&\abs{
\inner{r_{\leq p-1}^{[s]},r_{\leq p-1}^{[t]}}
-
\sum_{0\leq q\leq p}\round{\Gamma_q}_{s,t}
} \\
&\le
\sum_{0\le q\le p}
\abs{
\inner{r_{\S_{q-1}}^{[s]},r_{\S_{q-1}}^{[t]}}
-
\round{\Gamma_q}_{s,t}
} \\
&=
O_d(d^{-2})A_{\le p}(s,t)\norm{f^*}_{L_2}
+
O_d(d^{-4})\norm{f^*}_{L_2}^2 .
\end{aligned}
\]
Since \(0<\delta<1\), the last display may be weakened as
\[
\begin{aligned}
&\abs{
\inner{r_{\leq p-1}^{[s]},r_{\leq p-1}^{[t]}}
-
\sum_{0\leq q\leq p}\round{\Gamma_q}_{s,t}
} \\
&=
O_d\round{d^{-\frac{1+\delta}{2}+\frac{\epsilon}{2}}}
A_{\le p}(s,t)
\round{\norm{f^*}_{L_2}^2+\sigma_\vepsilon^2}^{1/2}\\
&\qquad+
O_d\round{d^{-1-\delta+\epsilon}}
\round{\norm{f^*}_{L_2}^2+\sigma_\vepsilon^2}.
\end{aligned}
\]
Finally, combining the previous two estimates by the triangle inequality gives \eqref{eq:h_prod_1}.

We next prove \eqref{eq:h_prod_2}. First, for \(u\in\{s,t\}\), the \(\S_p\)-component approximation gives
\[
\norm{
h_{\S_p}^{[u]}
-
C_0d^{\delta-1}r_{\S_p}^{[u]}
}_2
=
O_{d,\P}\round{d^{\frac{\delta-2}{2}+\frac{\epsilon}{2}}}
\round{\norm{f^*}_{L_2}^2+\sigma_\vepsilon^2}^{1/2}.
\]
Therefore, applying \eqref{eq:basic_ineq} with
\[
a=h_{\S_p}^{[s]},
\qquad
b=h_{\S_p}^{[t]},
\qquad
c=C_0d^{\delta-1}r_{\S_p}^{[s]},
\qquad
d=C_0d^{\delta-1}r_{\S_p}^{[t]},
\]
we obtain
\[
\begin{aligned}
&\abs{
\inner{h_{\S_p}^{[s]},h_{\S_p}^{[t]}}
-
C_0^2d^{2\delta-2}
\inner{r_{\S_p}^{[s]},r_{\S_p}^{[t]}}
} \\
&\le
C_0d^{\delta-1}
\norm{r_{\S_p}^{[t]}}_2
\norm{
h_{\S_p}^{[s]}-C_0d^{\delta-1}r_{\S_p}^{[s]}
}_2 \\
&\qquad+
C_0d^{\delta-1}
\norm{r_{\S_p}^{[s]}}_2
\norm{
h_{\S_p}^{[t]}-C_0d^{\delta-1}r_{\S_p}^{[t]}
}_2 \\
&\qquad+
\norm{
h_{\S_p}^{[s]}-C_0d^{\delta-1}r_{\S_p}^{[s]}
}_2
\norm{
h_{\S_p}^{[t]}-C_0d^{\delta-1}r_{\S_p}^{[t]}
}_2 .
\end{aligned}
\]
Consequently, substituting the preceding \(\S_p\)-component approximation gives
\[
\begin{aligned}
&\abs{
\inner{h_{\S_p}^{[s]},h_{\S_p}^{[t]}}
-
C_0^2d^{2\delta-2}
\inner{r_{\S_p}^{[s]},r_{\S_p}^{[t]}}
} \\
&\le
O_{d,\P}\round{
d^{\delta-1}
d^{\frac{\delta-2}{2}+\frac{\epsilon}{2}}
}
\round{
\norm{r_{\S_p}^{[s]}}_2
+
\norm{r_{\S_p}^{[t]}}_2
}
\round{\norm{f^*}_{L_2}^2+\sigma_\vepsilon^2}^{1/2}\\
&\qquad+
O_{d,\P}\round{d^{-2+\delta+\epsilon}}
\round{\norm{f^*}_{L_2}^2+\sigma_\vepsilon^2}.
\end{aligned}
\]
Next, using \eqref{eq:res_size_top_corr}, we get
\[
\begin{aligned}
&\abs{
\inner{h_{\S_p}^{[s]},h_{\S_p}^{[t]}}
-
C_0^2d^{2\delta-2}
\inner{r_{\S_p}^{[s]},r_{\S_p}^{[t]}}
} \\
&=
O_{d,\P}\round{d^{\frac{-4+3\delta}{2}+\frac{\epsilon}{2}}}
A_{p+1}(s,t)
\round{\norm{f^*}_{L_2}^2+\sigma_\vepsilon^2}^{1/2}\\
&\qquad+
O_{d,\P}\round{d^{-2+\delta+\epsilon}}
\round{\norm{f^*}_{L_2}^2+\sigma_\vepsilon^2}.
\end{aligned}
\]
It remains to replace
\(\inner{r_{\S_p}^{[s]},r_{\S_p}^{[t]}}\)
by \(\round{\Gamma_{p+1}}_{s,t}\). By Proposition~\ref{prop:r_prod} with \(q=p\), we have
\[
\begin{aligned}
&C_0^2d^{2\delta-2}
\abs{
\inner{r_{\S_p}^{[s]},r_{\S_p}^{[t]}}
-
\round{\Gamma_{p+1}}_{s,t}
} \\
&\le
C_0^2d^{2\delta-2}
\round{
O_d(d^{-2})A_{p+1}(s,t)\norm{f^*}_{L_2}
+
O_d(d^{-4})\norm{f^*}_{L_2}^2
}.
\end{aligned}
\]
Since \(0<\delta<1\), the preceding display implies
\[
\begin{aligned}
&C_0^2d^{2\delta-2}
\abs{
\inner{r_{\S_p}^{[s]},r_{\S_p}^{[t]}}
-
\round{\Gamma_{p+1}}_{s,t}
} \\
&=
O_d\round{d^{\frac{-4+3\delta}{2}}}
A_{p+1}(s,t)
\round{\norm{f^*}_{L_2}^2+\sigma_\vepsilon^2}^{1/2}\\
&\qquad+
O_d\round{d^{-2+\delta}}
\round{\norm{f^*}_{L_2}^2+\sigma_\vepsilon^2}.
\end{aligned}
\]
Finally, combining the last two estimates by the triangle inequality proves \eqref{eq:h_prod_2}, and the proof is complete.

\subsection{Proof of Claim~\ref{cl:h_g_norm}}\label{proof:cl:h_g_norm}
\begin{proof}The following estimate for any $\Delta$ with $\snorm{\Delta} = o_{d,\P}(1)$:
\begin{align}\label{eq:i_plus_delta_inv}
    \snorm{(I+\Delta)\inv - I} = O_{d,\P}(\snorm{\Delta}),
\end{align}
which can be deduced using Neumann series.

    Based on the definition of $D$, we immediately have $\snorm{(nD)\inv} = O_{d}(d^{-\delta})$. Notice that $ \snorm{\frac{\Phi_{\leq p}\tran  \Phi_{\leq p}}{n} - I_n} = O_{d,\P}(d^{-\delta/2+\epsilon})$ (Theorem~\ref{lemma:phi_id_2}) and 
\begin{align}\label{eq:e_inv_minus_i}
    \snorm{H\inv - I_n} = O_d(\snorm{\Delta_1}) = O_{d,\P}(d^{(\delta-1)/2+\epsilon}),
\end{align}
which follows from Eq.~\eqref{eq:i_plus_delta_inv}.
Thus we deduce
\begin{align}\label{eq:psi_e_psi_minus_i}
 \snorm{\frac{\Phi_{\leq p}\tran H\inv \Phi_{\leq p}}{n} -I} &= \snorm{\frac{\Phi_{\leq p}\tran  \Phi_{\leq p}}{n} +\frac{\Phi_{\leq p}\tran  (H\inv - I) \Phi_{\leq p}}{n} - I}\notag\\
 &\leq \snorm{\frac{\Phi_{\leq p}\tran  \Phi_{\leq p}}{n} - I} + \snorm{\frac{\Phi_{\leq p}}{\sqrt{n}}}^2\snorm{H\inv - I}\notag\\
 &= O_{d,\P}(d^{-\delta/2+\epsilon} + d^{(\delta-1)/2+\epsilon}).
\end{align}
With this bound, we can deduce
\begin{align*}
    \snorm{G-I} =  O_d(d^{-\delta}) +O_{d,\P}(d^{-\delta/2+\epsilon} + d^{(\delta-1)/2+\epsilon})= O_{d,\P}(d^{-\delta/2+\epsilon} + d^{(\delta-1)/2+\epsilon}).
\end{align*}
Invoking Eq.~\eqref{eq:i_plus_delta_inv} again we obtain
\begin{align}\label{eq:g_inv_minus_i}
     \snorm{G\inv-I} =  O_{d,\P}(d^{-\delta/2+\epsilon} + d^{(\delta-1)/2+\epsilon}).
\end{align}
\end{proof}

\subsection{Proof of Proposition~\ref{cl:key_feature_product}}\label{proof:key_feature_product}

First, we express $H\inv$ and $G\inv$ in \eqref{eq:smw_<_p}  using feature matrices. The proof appears in Appendix~\ref{proof:prop:h_g_inv}.
\begin{proposition}\label{prop:h_g_inv}
The following equalities hold:
\begin{align}\label{eq:h_inv}
  H\inv  = 
I_n + \underbrace{\sum_{q=1}^{c_H} \round{ \sum_{j=p+1}^{2p+1 } - \frac{c_{j}}{d^j}\offd\round{\Phi_{\S_{j}} \Phi_{\S_{j}}\tran}}^{q}}_{:=\Delta_H} + \Delta_1,
\end{align}
and
\begin{align}\label{eq:g_inv}
G^{-1} &=I_{|L(p)|}+ \underbrace{\sum_{r=1}^{c_G} \Biggl[ -(\rho+\lambda)\Lambda +  \tfrac{1}{n}\offd\round{\Phi_{\leq p}\tran \Phi_{\leq p}}+ \frac{\Phi_{\leq p}\tran}{\sqrt{n}} \Delta_H\frac{\Phi_{\leq p}}{\sqrt{n}} \Biggr]^{r}}_{:=\Delta_G}+ \Delta_2,
\end{align}
where $c_j, c_G, c_H>0$ are constants and matrices $\Delta_1,\Delta_2$ satisfy $$\snorm{\Delta_1}, \snorm{\Delta_2} = O_{d,\P}(n^{-\tfrac{1}{2}}).$$
\end{proposition}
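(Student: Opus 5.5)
The approach is to treat both identities as truncated Neumann expansions. The series are cut at a fixed finite order, and everything beyond that order is pushed into a remainder whose operator norm is $O_{d,\P}(n^{-1/2})$. For $H^{-1}$, I start from \eqref{eq:H}: $H=I_n+\Delta_1$ with $\Delta_1=(\rho+\lambda)^{-1}\bigl[\sum_{k=p+1}^{2p+1}\theta_k\,\offd(\Phi_{\S_k}\Phi_{\S_k}^\top)+\Delta\bigr]$. I split this as $\Delta_1=Z+(\rho+\lambda)^{-1}\Delta$, where
\[
Z:=\sum_{j=p+1}^{2p+1}(\rho+\lambda)^{-1}\theta_j\,\offd(\Phi_{\S_j}\Phi_{\S_j}^\top).
\]
By \eqref{eq:theta_asymp}, $\theta_j=g^{(j)}(0)d^{-j}+O_d(d^{-j-1})$. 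I set $c_j:=(\rho+\lambda)^{-1}g^{(j)}(0)$ and put the $O_d(d^{-j-1})$ corrections into the remainder. For that I need operator-norm bounds on $\offd(\Phi_{\S_j}\Phi_{\S_j}^\top)$. Since $|\S_j|\asymp d^j$ with $j\ge p+1$, Theorem~\ref{lemma:gram_matrix} gives $\snorm{\offd(\Phi_{\S_j}\Phi_{\S_j}^\top)}/|\S_j|=O_{d,\P}(d^{-(j-p-\delta)/2+\epsilon})$, because the diagonal equals $|\S_j|$ exactly on the hypercube. So each such term scaled by $d^{-j}$ is $O_{d,\P}(d^{-(1-\delta)/2+\epsilon})$, and the $d^{-j-1}$ corrections are smaller than $n^{-1/2}$ whenever $j\ge p+1$. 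Also $\snorm{\Delta}=O_{d,\P}(n^{-1/2})$ by \eqref{eq:Delta_bound_agop}.

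Next I expand $(I+Z+\Delta')^{-1}=\sum_{q\ge0}(-Z-\Delta')^q$. This is legitimate because $\snorm{Z}=O_{d,\P}(d^{-(1-\delta)/2+\epsilon})=o(1)$. Every term containing at least one factor $\Delta'$ is bounded by $O_{d,\P}(n^{-1/2})$. The tail $q>c_H$ is $O(\snorm{Z}^{c_H+1})$. I choose $c_H$ with $(c_H+1)(1-\delta)/2>(p+\delta)/2$, which makes the tail $O_{d,\P}(n^{-1/2})$. What remains is $\sum_{q=1}^{c_H}(-Z)^q$, which is $\Delta_H$ (the sign is absorbed into the $c_j$ as written). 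The remainder is called $\Delta_1$ in the statement, overloading the notation.

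For $G^{-1}$, I write $G=I+W$ with
\[
W=(\rho+\lambda)\Lambda+\Bigl(\tfrac1n\Phi_{\le p}^\top\Phi_{\le p}-I\Bigr)+\tfrac1n\Phi_{\le p}^\top(H^{-1}-I)\Phi_{\le p}.
\]
On the hypercube, $\diag(\Phi_{\le p}^\top\Phi_{\le p})=nI$ exactly, since $\phi_S^2=1$. Hence the middle term equals $\tfrac1n\offd(\Phi_{\le p}^\top\Phi_{\le p})$. Substituting the $H^{-1}$ expansion gives $W=W_0+\tfrac1n\Phi_{\le p}^\top\Delta_1\Phi_{\le p}$, where $W_0$ is the bracket in \eqref{eq:g_inv}, up to the sign convention on $\Lambda$. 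The extra piece is $O_{d,\P}(n^{-1/2})$ by Lemma~\ref{lem:norm_control}, which gives $\snorm{\Phi_{\le p}}^2=O_{d,\P}(n)$. Claim~\ref{cl:h_g_norm} shows $\snorm{W_0}=O_{d,\P}(d^{-\delta/2+\epsilon}+d^{-(1-\delta)/2+\epsilon})$. I again expand the Neumann series, truncate at order $c_G$ with $c_G\cdot\min(\delta/2,(1-\delta)/2)>(p+\delta)/2$, and collect the mixed terms and the tail into $\Delta_2$.

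The main obstacle is the truncation order. It has to be a constant, and the geometric decay rate $\min(\delta,1-\delta)/2$ must beat the target $n^{-1/2}=d^{-(p+\delta)/2}$. This works because $\delta\in(0,1)$ is fixed, but $c_H$ and $c_G$ then blow up as $\delta\to0$ or $\delta\to1$. I also need the $\epsilon$-losses to be uniform over the finitely many terms, which is fine since the number of terms is bounded.
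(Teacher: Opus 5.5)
Your overall route is the paper's: a truncated Neumann series for $H^{-1}$, then substitution into $G$, then a second truncated Neumann series. The substitution uses $\tfrac1n\Phi_{\le p}^\top\Phi_{\le p}=I+\tfrac1n\offd(\Phi_{\le p}^\top\Phi_{\le p})$ and $\snorm{\Phi_{\le p}}^2=O_{d,\P}(n)$, and both truncation orders are chosen so the geometric tail beats $n^{-1/2}$.

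The step that fails is the choice $c_j:=(\rho+\lambda)^{-1}g^{(j)}(0)$ with the $O_d(d^{-j-1})$ part of $\theta_j$ pushed into the remainder. Your assertion that these corrections are $O_{d,\P}(n^{-1/2})$ for all $j\ge p+1$ is false. Take $j=p+1$:
\begin{itemize}
\item By the construction of $\theta_k$ in Lemma~\ref{lemma:kernel_to_mono_general} and by Lemma~\ref{lemma:each_mono_cube}, $\theta_{p+1}-g^{(p+1)}(0)d^{-p-1}$ is of exact order $d^{-p-2}$ whenever $g^{(p+3)}(0)>0$, e.g.\ for the exponential kernel. All contributions are nonnegative, so nothing cancels.
\item On the other hand, $\snorm{A}\ge\norm{A}_F/\sqrt n$ and $\E\norm{\offd(\Phi_{\S_{p+1}}\Phi_{\S_{p+1}}^\top)}_F^2=n(n-1)|\S_{p+1}|$. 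Hence, with high probability, $\snorm{\offd(\Phi_{\S_{p+1}}\Phi_{\S_{p+1}}^\top)}\gtrsim\sqrt{n\,d^{p+1}}$, so the bound you quote from Theorem~\ref{lemma:gram_matrix} is sharp up to $d^{\epsilon}$.
\item The correction therefore has operator norm of order $d^{-p-2}\sqrt{n\,d^{p+1}}=d^{-(3-\delta)/2}$, whereas $n^{-1/2}=d^{-(p+\delta)/2}$.
\item The ratio $d^{(p+2\delta-3)/2}$ diverges for every $p\ge3$, and also for $p=2$ with $\delta>1/2$.
\end{itemize}
So your remainder is not $O_{d,\P}(n^{-1/2})$. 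This is not cosmetic: in the proof of Proposition~\ref{cl:key_feature_product} the remainder enters as $O(n^{-1})$ per index $J$ and is summed over the $\asymp d^{p-1}$ sets with $|J|\le p-1$.

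The repair is exactly what the paper does: set $c_j:=(\rho+\lambda)^{-1}d^{j}\theta_j$. These scalars are deterministic, $O_d(1)$, and converge to your $(\rho+\lambda)^{-1}g^{(j)}(0)$. Your choice was not $d$-free anyway, since $\rho=\rho_{p,2p+1}$ depends on $d$. With this choice, $H-I_n$ equals exactly $\sum_{j}c_jd^{-j}\offd(\Phi_{\S_j}\Phi_{\S_j}^\top)+(\rho+\lambda)^{-1}\Delta$. The only unstructured perturbation is then $(\rho+\lambda)^{-1}\Delta$ with $\snorm{\Delta}=O_{d,\P}(n^{-1/2})$, and the rest of your argument goes through unchanged. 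The word ``constants'' in the statement must be read as bounded deterministic scalars. That is all that is used later, e.g.\ $c'_j=-c_j|\S_j|/d^j=O_d(1)$ in the proof of Proposition~\ref{prop:control_error_delta}.

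A smaller point concerns signs. The second Neumann series is in powers of $-W_0=-(\rho+\lambda)\Lambda-\tfrac1n\offd(\Phi_{\le p}^\top\Phi_{\le p})-\tfrac1n\Phi_{\le p}^\top\Delta_H\Phi_{\le p}$. So the mismatch with the bracket displayed in \eqref{eq:g_inv} lies in the signs of the last two terms, not in the $\Lambda$ term. This is a typo in the statement; the paper's own derivation carries minus signs there. It is harmless, because later arguments only use the word structure of $\Delta_G$ and its pure-$\Lambda$ words.
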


To simplify the notation, we write
\begin{align}
    A_H:=\frac1n\Phi_{\le p}^\top\Delta_H\Phi_{\le p},
\qquad
A_G:=\frac1n\Phi_{\le p}^\top\Phi_{\le p}\Delta_G,
\qquad
A_{HG}:=\frac1n\Phi_{\le p}^\top\Delta_H\Phi_{\le p}\Delta_G
\end{align}
Plugging the expressions of $H\inv =I + \Delta_H+\Delta_1$ and $G\inv =I + \Delta_G+\Delta_2$ into $\Phi_{\leq p} \tran H\inv \Phi_{\leq p} G\inv$ and expanding the powers, we see that it can be written as
\begin{align}\label{eq:finit_summation}
\tfrac{1}{n}\Phi_{\leq p}\tran H\inv \Phi_{\leq p}G\inv \notag  
  &=\frac{1}{{n}} \Phi_{\leq p}\tran \Phi_{\leq p} +A_H + A_G + A_{HG}\notag\\
  &~~~+\underbrace{\tfrac{1}{n}(\Phi_{\leq p})\tran H\inv(\Phi_{\leq p}) \Delta_2+ \tfrac{1}{n}(\Phi_{\leq p})\tran \Delta_1(\Phi_{\leq p}) G\inv}_{:=\Delta_3}.
\end{align}
We can immediately deduce that $\Delta_3$ satisfies 
\begin{align}\label{eq:bound_Delta_3}
    \snorm{\Delta_3} = O_{d,\P}(1)(\snorm{\Delta_2}+\snorm{\Delta_4}) =  O_{d,\P}(n^{-\tfrac{1}{2}}),
\end{align}
where we use the estimate of $\snorm{H\inv} = O_{d,\P}(1)$ \eqref{eq:e_inv_minus_i}, $\snorm{G\inv} = O_{d,\P}(1)$ \eqref{eq:g_inv_minus_i} and $\snorm{\Phi_{\leq p}/\sqrt{n}} = O_{d,\P}(1)$ (Lemma~\ref{lem:norm_control})

The following proposition estimates $A_H$, $A_G$ and $A_{HG}$. The proof appears in Appendix~\ref{proof:prop:control_error_delta}.

\begin{proposition}\label{prop:control_error_delta}
Let 
\[
P_G(\Lambda):=\sum_{r=1}^{c_G} \bigl(-(\rho+\lambda)\Lambda\bigr)^r.
\]
For all 
\begin{align*}
   N \in \{A_H,\ A_G-P_G(\Lambda),\ A_{HG}\},
\end{align*}
the following equalities hold:
\begin{align}\label{eq:control_error_delta}
    \E\brac{(\mathsf{c}_{\leq p}\tran N \hat{e}_J )^2} &= O_d(d^{-2\delta})(\mathsf{c}_{\leq p}\tran \hat{e}_J)^2 +O_d(d^{-4-2\delta}) \cdot  \round{\sum_{i\in J} \mathsf{c}_{J\setminus \{i\}}^2} +  O_{d,\P}(n\inv) \norm{\mathsf{c}_{\leq p}}_2^2.
\end{align}
\end{proposition}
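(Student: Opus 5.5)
The proof will go term by term for $N\in\{A_H,\ A_G-P_G(\Lambda),\ A_{HG}\}$, taking the expectation over the design $X$ with $u$, $J$ and the coefficients $\mathsf c$ fixed. Each of these matrices is a finite polynomial, with $d$-independent degree, in three kinds of pieces: the rescaled off-diagonal Gram blocks $d^{-j}\offd(\Phi_{\S_j}\Phi_{\S_j}^\top)$ for $p<j\le 2p+1$, the normalized empirical covariance $\tfrac1n\offd(\Phi_{\le p}^\top\Phi_{\le p})$, and the deterministic diagonal $\Lambda=(nD)^{-1}$. Expanding $(\mathsf c_{\le p}^\top N\hat e_J)^2$ therefore gives a finite sum of products of Walsh characters $\phi_S(x^{(k)})$ over index chains $k_1\neq k_2\neq\cdots$. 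We compute the expectation by the moment method on the hypercube, using $\E[\phi_S\phi_{S'}]=\mathbf 1_{S=S'}$ and independence across samples. Wherever a norm is needed we use the operator-norm controls $\snorm{\Lambda}=O_d(d^{-\delta})$, Theorem~\ref{lemma:phi_id_2}, Theorem~\ref{lemma:gram_matrix} and Lemma~\ref{lem:norm_control}.

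The next step is to separate the pure-$\Lambda$ words from the rest. Since $\Lambda$ is diagonal, the words of $\Delta_G$ made only of $-(\rho+\lambda)\Lambda$ form exactly $P_G(\Lambda)$, and these are subtracted. Every remaining word contains at least one random off-diagonal factor. For such a word, the expectation of the square splits into two kinds of contributions. Fully paired configurations, in which every character appears twice, give $\mathsf c$ contracted with $\hat e_J$ through one factor of $\Lambda$ or through a mean-zero term whose second moment is $O(1/n)$; these produce $O_d(d^{-2\delta})(\mathsf c_{\le p}^\top\hat e_J)^2+O_{d,\P}(n^{-1})\norm{\mathsf c_{\le p}}_2^2$. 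Configurations in which products of characters collapse to lower degree (for instance $\phi_{J\sqcup\{i\}}\phi_{\{i\}}=\phi_J$) pick up the lower-degree coefficients. Using the explicit form \eqref{eq:D_inv_E} of $\hat e_J$, whose collapsed entries are $O(d^{-2})u_i$ on $J\setminus\{i\}$, these collapsed contributions reduce to $\sum_{i\in J}\mathsf c_{J\setminus\{i\}}^2$ with the prefactor $d^{-4}\cdot d^{-2\delta}$, the second factor coming from one extra $\Lambda$ or an extra $1/n$ normalization.

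For $A_H$ and $A_{HG}$ we also use that $\Delta_H$ has operator norm $O_{d,\P}(d^{(\delta-1)/2+\epsilon})$ and is itself a polynomial in off-diagonal high-degree Gram matrices. Conjugating it by $\Phi_{\le p}/\sqrt n$ averages over the $n$ samples, and the cross-degree orthogonality $\E[\phi_S\phi_{S'}]=0$ for $|S|\le p<|S'|$ kills the leading term. What survives is of order $1/n$, which supplies the $O_{d,\P}(n^{-1})\norm{\mathsf c_{\le p}}_2^2$ term.

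The main obstacle is the combinatorial bookkeeping in the moment expansion. We must show that, after the pure-$\Lambda$ words are removed, no configuration yields a term of order $(\mathsf c^\top\hat e_J)^2$ with a prefactor worse than $d^{-2\delta}$, and that the collision (non-distinct-index) terms are summable. To handle this, we would first classify the words by the number of random factors and the pattern of sample-index coincidences, and bound each class using hypercontractivity (Assumption~\ref{ass:hypercontr}) together with the counts $|\S_j|\asymp d^j$ and $n=d^{p+\delta}$. If a word resists this direct count, we fall back to a crude bound: $|\mathsf c^\top N\hat e_J|\le\snorm{N}\norm{\mathsf c}\norm{\hat e_J}$, with $\snorm{N}=O(n^{-1/2})$, which is enough for the $n^{-1}$ term.
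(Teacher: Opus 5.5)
Your architecture is the paper's: with $W=n^{-1/2}\Phi_{\le p}$, expand $N$ into finitely many words in $B_0=-(\rho+\lambda)\Lambda$, $B_1=\offd(W^\top W)$ and $B_2=W^\top\Delta_HW$. The pure-$B_0$ words are exactly $P_G(\Lambda)$, and every remaining word carries an off-diagonal factor and is controlled by hypercube second moments. The gap is in how the leading aligned contributions are turned into the right-hand side.

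You contract $\mathsf c_{\le p}$ with $\hat e_J$ ``through one factor of $\Lambda$'' and read off $O_d(d^{-2\delta})(\mathsf c_{\le p}^\top\hat e_J)^2$. But $\Lambda=(nD)^{-1}$ is not a scalar: it acts on the degree-$q$ block as $\beta_q\asymp d^q/n$. By \eqref{eq:D_inv_E}, $\hat e_J$ has entries $u_i$ on $\S_{|J|+1}$ and entries $O_d(d^{-2})u_i$ on $\S_{|J|-1}$. Write $\mathsf c_{\le p}^\top\hat e_J=\gI+\gJ$ for the two block contributions. Then $\mathsf c_{\le p}^\top\Lambda\hat e_J=\beta_{|J|+1}\gI+\beta_{|J|-1}\gJ$, which is nonzero whenever $\gI+\gJ=0\neq\gJ$, so it is not controlled by $(\mathsf c_{\le p}^\top\hat e_J)^2$.

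This block-dependence of the leading term, not a ``character collapse'', is the source of $\sum_{i\in J}\mathsf c_{J\setminus\{i\}}^2$. The collapse $x_i\,x^{J\sqcup\{i\}}=x^J$ you cite comes from multiplying by $u^\top x$, and it was already absorbed into the deterministic vector $\hat e_J$ when $E$ was formed in \eqref{eq:agop_decomp}. No further such multiplication occurs in $\mathsf c_{\le p}^\top N\hat e_J$.

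The missing step has three parts:
\begin{itemize}
\item the exact-degree splitting $W_q=n^{-1/2}\Phi_{\S_q}$, $\Lambda=\sum_q\beta_qP_q$, which Lemma~\ref{lemma:feature_product_2} needs anyway since its families must be equal or disjoint;
\item for each word with $s$ copies of $B_0$, the blockwise bound $\sum_{t\le s}O_d(d^{-2\delta})\gB_t+O_d(n^{-1})\norm{\mathsf c_{\le p}}_2^2\norm{\hat e_J}_2^2$, where $\gB_t=\sum_q\beta_q^{2t}\bigl((P_q\mathsf c_{\le p})^\top P_q\hat e_J\bigr)^2$;
\item the recombination $\gI^2+\gJ^2\le 2(\gI+\gJ)^2+3\gJ^2$ with $\gJ^2=O_d(d^{-4})\sum_{i\in J}\mathsf c_{J\setminus\{i\}}^2$, as in Claim~\ref{cl:b_t}.
\end{itemize}

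Two further steps fail as written. First, the fallback $\snorm{N}=O(n^{-1/2})$ is false already for the single word $B_1$. On the hypercube $\E[B_1^2]=\frac{|L(p)|-1}{n}I$, so $\E\norm{B_1v}_2^2\asymp d^{-\delta}$ for every unit $v$, far above $n^{-1}=d^{-p-\delta}$. Operator norms reach $n^{-1/2}$ only for sufficiently long words, which is how $c_H,c_G$ were chosen. For the short words that carry the main terms, the $O_d(n^{-1})\norm{\mathsf c_{\le p}}_2^2$ term comes only from exact cancellation in the moment computation.

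Second, the same identity shows that the $d^{-\delta}$ in front of the aligned term is not always ``one extra $\Lambda$''. The $\Lambda$-free word $B_1^2$ already has aligned coefficient $(|L(p)|-1)/n$, produced by summing over an internal low-degree family of size $\ll n$. Conversely, for words whose internal families all have size at least $n$, such as $W^\top\offd(\Phi_{\S_j}\Phi_{\S_j}^\top)W$ with $j>p$, the aligned term must vanish. You name this dichotomy as the main obstacle but give no mechanism for it. The paper gets it from the partition-collapse second-moment estimate for chains of Fourier--Walsh matrices (Lemma~\ref{lemma:feature_product_2}), where every non-fully-collapsed configuration costs $n^{-1}$. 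This is combined with inclusion--exclusion over the off-diagonal replacements (Corollary~\ref{cor:off_diag}) and the endpoint case analysis of Proposition~\ref{cl:less_p}.
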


Then, we plug the decomposition \eqref{eq:finit_summation} into the expression of $h_J^{(1)}$ and using Cauchy-Schwartz inequality we obtain
\begin{align}\label{eq:finit_summation_2}
      &~~~{\round{h_J^{(1)}-\inner{\mathsf{c}_{\leq p}, \round{I+P_G(\Lambda)} \hat{e}_J}  }^2}\notag \\
      &\leq 5 {\round{ \mathsf{c}_{\leq p}\tran \round{\tfrac{1}{n}\Phi_{\leq p}\tran \Phi_{\leq p} + P_G(\Lambda)}\hat{e}_J-\inner{\mathsf{c}_{\leq p}, \round{I+P_G(\Lambda)} \hat{e}_J}}^2} + 5{\round{ \mathsf{c}_{\leq p} \tran A_H \hat{e}_J}^2} \notag\\
      &~~~+5{\round{ \mathsf{c}_{\leq p} \tran A_G \hat{e}_J}^2} + 5{\round{ \mathsf{c}_{\leq p} \tran \round{A_{HG} - P_G(\Lambda)} (\hat{e}_J)}^2} + 5\round{ \mathsf{c}_{\leq p} \tran\Delta_3 (\hat{e}_J)}^2.
\end{align}
Using equation~\eqref{eq:bound_Delta_3}, the last  term $\round{ \mathsf{c}_{\leq p} \tran\Delta_3 (\hat{e}_J)}^2$ can be bounded by 
\begin{align}\label{eq:bound_Delta_4}
  \snorm{\Delta_3 }^2\cdot  \norm{\mathsf{c}_{\leq p}}_2^2 \cdot \norm{\hat{e}_J}^2_2 = O_{d,\P}(n\inv)\norm{\mathsf{c}_{\leq p}}_2^2.
\end{align}
Taking the expectation of the first term and expanding the square gives
\begin{align}
    &~~~\E\brac{\round{ \mathsf{c}_{\leq p}\tran \round{\tfrac{1}{n}\Phi_{\leq p}\tran \Phi_{\leq p} + P_G(\Lambda)}\hat{e}_J-\inner{\mathsf{c}_{\leq p}, \round{I+P_G(\Lambda)} \hat{e}_J}}^2}\notag\\
    &=\E\brac{\round{\mathsf{c}_{\leq p}\tran \round{\tfrac{1}{ n} \Phi_{\leq p}\tran \Phi_{\leq p}+P_G(\Lambda)} \hat{e}_J}^2 }- \round{\mathsf{c}_{\leq p}\tran (I+P_G(\Lambda)) \hat{e}_J}^2\notag\\
    &\overset{(a)}{=} \round{\mathsf{c}_{\leq p}\tran (I+P_G(\Lambda)) \hat{e}_J}^2 + O_d(\tfrac{1}{n}) \norm{\mathsf{c}_{\leq p}}_2^2 \norm{(I+P_G(\Lambda))\hat{e}_J}_2^2- \round{\mathsf{c}_{\leq p}\tran (I+P_G(\Lambda)) \hat{e}_J}^2\notag\\
    &= O_d(\tfrac{1}{n}) \norm{\mathsf{c}_{\leq p}}_2^2,\label{eq:control_error_delta_2}
\end{align}
where $(a)$ follows from Lemma~\ref{lemma:feature_product_2}.

For the second, third and fourth terms in equation~\eqref{eq:finit_summation_2}, we take the expectation and the bound~\eqref{eq:control_error_delta} immediately applies.

Applying Proposition~\ref{prop:uniform_bound} shows the bound~\eqref{eq:control_error_delta} and \eqref{eq:control_error_delta_2} hold uniformly for all $u^{[1]},\ldots u^{[d]}$, with an extra multiplicative factor $d^\epsilon$. Summing over $J$ with $|J|\leq p-1$ and applying Markov's inequality, we have with probability $1-1/\log(d)$, the following holds
\begin{align*}
&~~~~\sum_{J:|J|\leq p-1}\round{\frac{1}{{n}} \Phi_{\leq p}\tran \Phi_{\leq p} +A_H + A_G + A_{HG}-\inner{\mathsf{c}_{\leq p}, (I+P_G(\Lambda))\hat{e}_J}}^2\\
&=  O_{d,\P}(d^{-1-\delta+\epsilon})\log(d)\norm{\mathsf{c}_{\leq p}}_2^2.
\end{align*}
Combining with equation~\eqref{eq:bound_Delta_4}, we conclude
\begin{align*}
\sum_{J:|J|\leq p-1}\round{ h_J^{(1)}-\inner{\mathsf{c}_{\leq p}, (I+P_G(\Lambda))\hat{e}_J}}^2 =  O_d(d^{-1-\delta+\epsilon})\norm{\mathsf{c}_{\leq p}}_2^2 =  O_{d,\P}(d^{-1-\delta+\epsilon})\norm{f^*}_{L_2}^2,
\end{align*}
where we absorb the logarithmic actor into $d^\epsilon$.

Lastly, using the Cauchy-Schwarz inequality, we obtain
\begin{align}\label{eq:less_p_last}
\sum_{J:|J|\leq p-1}\round{ h_J^{(1)}-\inner{\mathsf{c}_{\leq p}, \hat{e}_J}}^2 = 2 \sum_{J:|J|\leq p-1}\round{\mathsf{c}_{\leq p}\tran P_G(\Lambda) \hat{e}_J}^2  + O_{d,\P}(d^{-1-\delta+\epsilon})\norm{f^*}_{L_2}^2.
\end{align}
The following claim shows that $\round{\mathsf{c}_{\leq p}\tran P_G(\Lambda) \hat{e}_J}^2$ is of a smaller order then combining it with the equation~\eqref{eq:less_p_last} completes the proof. The proof of the claim appears in Appendix~\ref{proof:small_cpe}.
\begin{claim}\label{cl:small_cpe}
    The following estimate holds:
    \begin{align}
        \round{\mathsf{c}_{\leq p}\tran P_G(\Lambda) \hat{e}_J}^2 = O_d(d^{-2\delta})  \round{\mathsf{c}_{\leq p} \hat{e}_J}^2 + {O}_d(d^{-8-2\delta})\round{\sum_{i\in J} \mathsf{c}_{J\setminus \{i\}}^2}.
    \end{align}
\end{claim}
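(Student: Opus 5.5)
The plan is to make everything explicit, since every object in the claim is deterministic and diagonal or sparse. Because $D$ is, by the symmetry refinement \eqref{eq:x_power_refined}, a scalar multiple of the identity on each degree block $\S_k$, so is $\Lambda=(nD)^{-1}$. Hence $P_G(\Lambda)=\sum_{r=1}^{c_G}(-(\rho+\lambda)\Lambda)^r$ acts on $\S_k$ as a scalar $\pi_k I_{|\S_k|}$. From $D_{\S_k}=g^{(k)}(0)d^{-k}(1+O_d(d^{-1}))$, $g^{(k)}(0)>0$ for $k\le p$, and $n=d^{p+\delta}$, we get $\Lambda_{\S_k}\asymp d^{k-p-\delta}$. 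Therefore
\[
|\pi_k|=O_d\big(d^{k-p-\delta}\big),\qquad 0\le k\le p ,
\]
since the $r=1$ term dominates the finite power sum.

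Next I will read off the support of $\hat e_J$ from \eqref{eq:D_inv_E}. For $|J|=j\le p-1$, the vector $\hat e_J$ has two pieces. The first is the entries $u_i$ at $S=J\sqcup\{i\}\in\S_{j+1}$. The second is the collapse entries $\kappa_j d^{-2}u_i$ at $S=J\setminus\{i\}\in\S_{j-1}$, where $\kappa_j:=g^{(j+1)}(0)/g^{(j-1)}(0)=O_d(1)$. Define
\[
a_J:=\sum_{i\notin J}u_i\mathsf c_{J\sqcup\{i\}},\qquad b_J:=\kappa_j d^{-2}\sum_{i\in J}u_i\mathsf c_{J\setminus\{i\}} .
\]
Then, exactly,
\[
\mathsf c_{\le p}^\top\hat e_J=a_J+b_J,\qquad
\mathsf c_{\le p}^\top P_G(\Lambda)\hat e_J=\pi_{j+1}a_J+\pi_{j-1}b_J .
\]
The second identity is the whole reduction: the claim becomes a statement about two scalars multiplying two numbers $a_J,b_J$.

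I will then regroup as
\[
\pi_{j+1}a_J+\pi_{j-1}b_J=\pi_{j+1}(a_J+b_J)+(\pi_{j-1}-\pi_{j+1})b_J ,
\]
and use $(x+y)^2\le 2x^2+2y^2$. The first piece is immediate. Since $j+1\le p$, we have $\pi_{j+1}^2=O_d(d^{-2\delta})$, which gives the leading term $O_d(d^{-2\delta})(\mathsf c_{\le p}^\top\hat e_J)^2$ stated in the claim.

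For the collapse remainder, Cauchy--Schwarz with $|J|\le p$ and $|u_i|\le1$ gives
\[
b_J^2\le C_p d^{-4}\sum_{i\in J}\mathsf c_{J\setminus\{i\}}^2 .
\]
The part of the remainder carrying $\pi_{j-1}$ has $|\pi_{j-1}|=O_d(d^{j-1-p-\delta})\le O_d(d^{-2-\delta})$, because $j\le p-1$. Thus $\pi_{j-1}^2b_J^2=O_d(d^{-8-2\delta})\sum_{i\in J}\mathsf c_{J\setminus\{i\}}^2$, which is exactly the stated second term.

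The main obstacle is the cross piece $\pi_{j+1}b_J$. The crude bound $|\pi_{j+1}|\le d^{j+1-p-\delta}$ gives only $d^{-6-2\delta}$ when $j\le p-2$, and only $d^{-4-2\delta}$ when $j=p-1$. That is too weak for the stated $d^{-8-2\delta}$ coefficient.

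My first attempt is to avoid this regrouping altogether. Instead, I would show that $\pi_{j+1}b_J$ is genuinely absorbed into the leading term, as follows.
\begin{itemize}
\item Write $\pi_{j+1}a_J=\pi_{j+1}(\mathsf c_{\le p}^\top\hat e_J)-\pi_{j+1}b_J$, and check whether the combination $\pi_{j-1}-\pi_{j+1}$ arises in the paper with a cancellation.
\item Such a cancellation could come from computing $\Lambda$ with the exact $D_{\S_k}$ including its $O(d^{-k-1})$ corrections from Lemma~\ref{lemma:kernel_to_mono_general}.
\item If no cancellation occurs, I will bound $\pi_{j+1}^2b_J^2$ by $O_d(d^{-2\delta})\,b_J^2$. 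I will then verify that this contribution is at most the leading term $O_d(d^{-2\delta})(\mathsf c_{\le p}^\top\hat e_J)^2$, using $b_J=(\mathsf c_{\le p}^\top\hat e_J)-a_J$ together with the fact that only the sum over $J$ is used downstream in \eqref{eq:less_p_last}.
\end{itemize}
This last step is the one I expect to require care.
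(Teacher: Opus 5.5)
The gap is the cross piece $\pi_{j+1}b_J$ that you flag. It cannot be brought under $d^{-8-2\delta}$, because the claim as stated is false. Everything before it is right: $\mathsf c_{\le p}^\top\hat e_J=a_J+b_J$, $\mathsf c_{\le p}^\top P_G(\Lambda)\hat e_J=\pi_{j+1}a_J+\pi_{j-1}b_J$, and your bounds on $\pi_{j+1}(a_J+b_J)$ and $\pi_{j-1}b_J$.

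Neither of your fixes works. There is no cancellation in $\pi_{j-1}-\pi_{j+1}$. Since $\rho+\lambda>0$ and the $r=1$ term dominates, $|\pi_k|\asymp d^{k-p-\delta}$, so $|\pi_{j+1}|/|\pi_{j-1}|\asymp d^{2}$. Relative $O_d(d^{-1})$ corrections to $D$ cannot change that. Nor can $\pi_{j+1}^2b_J^2$ be absorbed into $d^{-2\delta}(a_J+b_J)^2$, because $b_J$ is not controlled by $a_J+b_J$. For $p\ge2$ and $j=p-1$:
\begin{itemize}
\item pick $k\in J$ with $u_k\neq0$;
\item set $\mathsf c_{J\setminus\{k\}}=1$ and all other $\mathsf c_{J\setminus\{i\}}=0$;
\item choose a single coefficient $\mathsf c_{J\sqcup\{i_0\}}$ so that $a_J=-b_J$.
\end{itemize}
The first term on the right then vanishes. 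The left side equals $(\pi_{j-1}-\pi_{j+1})^2b_J^2\asymp d^{-4-2\delta}u_k^2$, far above $O_d(d^{-8-2\delta})$ whenever $u_k^2\gg d^{-4}$.

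This is realizable by a genuine target. Take $p=2$, $h(z)=1+z^2$, $J=\{k\}$, an inactive direction $u\perp v$ with $u_k\approx1$, and $v_k^2$ tuned to the collapse coefficient $\approx\kappa_1d^{-2}$ of $\hat e_J$; this makes $a_J+b_J=0$. Appealing to the sum over $J$ would change the statement rather than prove it.

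The paper's proof has the same hole, hidden in its Young step. It quotes $ax^2+by^2\le 2a(x+y)^2+(2a+b)y^2$ with $a\asymp d^{-2r(p+\delta-|J|-1)}$ and $b\asymp d^{-2r(p+\delta-|J|+1)}$. It then keeps only $b$ as the coefficient of $\mathcal J^2$, instead of $2a+b\asymp a$. The dropped $2a\,\mathcal J^2$ is exactly your cross term.

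What your decomposition does prove, in two lines, is
\[
\round{\mathsf c_{\le p}^\top P_G(\Lambda)\hat e_J}^2\le O_d(d^{-2\delta})\round{\mathsf c_{\le p}^\top\hat e_J}^2+O_d(d^{-4-2\delta})\sum_{i\in J}\mathsf c_{J\setminus\{i\}}^2 .
\]
State this weaker bound and check it downstream; it suffices. Summing over $|J|\le p-1$ gives $\sum_J\sum_{i\in J}\mathsf c_{J\setminus\{i\}}^2\le d\,\norm{\mathsf c_{\le p}}_2^2$. So the extra contribution to \eqref{eq:less_p_last} is $O_d(d^{-3-2\delta})\norm{f^*}_{L_2}^2$. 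This is well within the $O_{d,\P}(d^{-1-\delta+\epsilon})\norm{f^*}_{L_2}^2$ error already present there and in Lemma~\ref{lemma:agop_leq_p}. Note that Proposition~\ref{prop:control_error_delta} already carries the same $d^{-4-2\delta}$ coefficient.
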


\subsection{Proof of Proposition~\ref{cl:key_feature_product_2}}\label{proof:key_feature_product_2}
The proof follows analogously to the proof of Proposition~\ref{cl:key_feature_product}. For 
\begin{align*}
    h_J^{(2)} = \frac{1}{n} \mathsf{c}_{>p} \tran \Phi_{> p}\tran H\inv \Phi_{\leq p} G\inv \hat{e}_J,
\end{align*}
plugging the expression of $H\inv$~\eqref{eq:h_inv} and $G\inv$~\eqref{eq:g_inv} gives
\begin{align*}
    \eta_J = \frac{1}{n} \mathsf{c}_{>p} \tran \Phi_{> p}\tran (I+\Delta_H+\Delta_1) \Phi_{\leq p} (I+\Delta_G+\Delta_2) \hat{e}_J
\end{align*}
As in the proof of Proposition~\ref{cl:key_feature_product} , the terms involving the remainder matrices \(\Delta_1,\Delta_2\) are
easier and are absorbed into the same bound, so it suffices to treat the principal part
\[
X_J
:=
\frac1n\,\mathsf{c}_{>p}\tran \Phi_{>p}^\top (I+\Delta_H)\Phi_{\le p}(I+\Delta_G)\hat{e}_J.
\]
We decompose both endpoint spaces by exact degree. On the low-degree side, we use the
same notation as in the proof of Proposition~\ref{prop:control_error_delta}:
\[
W_q:=n^{-1/2}\Phi_{\mathcal S_q},
\qquad
W=[W_0,\dots,W_p],
\qquad
\Lambda=\sum_{q=0}^p \beta_q P_q,
\quad
\beta_q:=\frac{d^q}{n}.
\]
On the high-degree side, write
\[
\Phi_{>p}=[\Phi_{\S_{p+1}},\dots,\Phi_{\S_{\ell}}],
\qquad
V_j:=|\S_j|^{-1/2}\Phi_{\S_j}
\quad (p+1\le j\le \ell),
\]
and decompose
\[
\mathsf{c}_{>p}=(\mathsf{c}_{\S_{p+1}},\dots,\mathsf{c}_{\S_{\ell}}).
\]
Exactly as in the proof of Proposition~\ref{prop:control_error_delta}, every summand in \(\Delta_H\) is a finite product of matrices
\(\offd(V_jV_j^\top)\) with deterministic coefficient \(O_d(1)\), and every word in
\(\Delta_G\) is obtained by expanding powers of
\[
B_0:=-(\rho+\lambda)\Lambda,
\qquad
B_1:=\offd(W^\top W),
\qquad
B_2:=W^\top\Delta_H W.
\]
Hence, after expanding \(I+\Delta_H\) and \(I+\Delta_G\), every summand contributing to
\(X_J\) is of the form
\[
\frac1n\,\mathsf{c}_{\S_j}^\top \Phi_{\S_j}^\top H_0\,\Phi_{\le p}\,\Xi\,\hat{e}_J,
\]
where \(p+1\le j\le \ell\), \(H_0\) is either \(I\) or one of the words appearing in
\(\Delta_H\), and \(\Xi\) is either \(I\) or one of the words appearing in \(\Delta_G\).

We now expand the low-degree factor \(\Phi_{\le p}\Xi\) exactly as in the proof for the
low-degree endpoint words, by inserting the exact-degree projectors \(P_q\) around every
copy of \(\Lambda\) and using the decompositions of \(B_1\) and \(B_2\).
This yields a finite representation
\begin{align}\label{eq:X_J}
 X_J
=
\sum_{\eta\in\Omega}
\alpha_\eta\,
\bigl((a_{\S_{j_\eta}})^\top M_\eta (P_{q_\eta}\hat{e}_J)\bigr),   
\end{align}
where \(|\Omega|=O_{p,c_H,c_G}(1)\), each \(M_\eta\) is a normalized chain whose families
belong to $\S_0,\ldots,\S_\ell$, 
and whose endpoint families are $
\S_{j_\eta}
$ and $
\mathcal \S_{q_\eta}$.
In particular, the endpoint families are always different, since \(j_\eta>p\) whereas
\(q_\eta\le p\).

Moreover, each coefficient \(\alpha_\eta\) is deterministic and satisfies
\begin{align}\label{eq:alpha_eta}
  |\alpha_\eta| = O_d\round{\sqrt{\frac{|\S_{j_\eta}|}{n}}}.  
\end{align}
Indeed, the outer factor \(n^{-1}\Phi_{\S_j}^\top \Phi_{\mathcal \S_q}\) contributes
\[
\frac1n\Phi_{\S_j}^\top\Phi_{\mathcal \S_q}
=
\sqrt{\frac{|\S_j|}{n}}\;V_j^\top W_q,
\]
while all remaining coefficients coming from \(\Delta_H\) are \(O_d(1)\), and every
additional factor coming from \(\Lambda\) is one of the \(\beta_q\le 1\).

Fix one \(\eta\in\Omega\). Since the endpoint families of \(M_\eta\) are different,
Lemma~\ref{lemma:feature_product_2} gives
\begin{align}\label{eq:each_M_eta}
\E\!\left[
\bigl(\mathsf{c}_{\S_{j_\eta}}^\top M_\eta (P_{q_\eta}\hat{e}_J)\bigr)^2
\right]
\le
O_d\!\left(
(nw^{(1)}w^{(m+1)})^2 n^{-1}
\|a_{\S_{j_\eta}}\|_2^2\|P_{q_\eta}\hat{e}_J\|_2^2
\right).
\end{align}
Note that the weights satisfy
\[
w^{(1)}=|\S_{j_\eta}|^{-1/2},
\qquad
w^{(m+1)}=n^{-1/2},
\]
therefore we have
\[
(nw^{(1)}w^{(m+1)})^2
=
\frac{n}{|\S_{j_\eta}|}.
\]
Substituting this into \ref{eq:each_M_eta} yields
\[
\E\!\left[
\bigl( \mathsf{c}_{\S_{j_\eta}}^\top M_\eta (P_{q_\eta}\hat{e}_J)\bigr)^2
\right]
\le
O_d\!\left(
\frac1{|\S_{j_\eta}|}
\|\mathsf{c}_{\S_{j_\eta}}\|_2^2\|P_{q_\eta}\hat{e}_J\|_2^2
\right).
\]
Combining this with \eqref{eq:alpha_eta}, we obtain
\begin{align}\label{eq:each_M_eta_2}
\E\!\left[
\bigl(\alpha_\eta (a_{\S_{j_\eta}})^\top M_\eta (P_{q_\eta}\hat{e}_J)\bigr)^2
\right]
\le
O_d\!\left(
n^{-1}\|a_{\S_{j_\eta}}\|_2^2\|P_{q_\eta}\hat{e}_J\|_2^2
\right).
\end{align}
Since \(|\Omega|=O_{p,c_H,c_G}(1)\), Cauchy--Schwarz in \((23)\) gives
\[
\E[X_J^2]
\le
O_{p,c_H,c_G}(1)
\sum_{\eta\in\Omega}
\E\!\left[
\bigl(\alpha_\eta (\mathsf{c}_{\S_{j_\eta}})^\top M_\eta (P_{q_\eta}\hat{e}_J)\bigr)^2
\right].
\]
Using \eqref{eq:each_M_eta_2} and the orthogonality of the exact-degree blocks,
\[
\sum_{j=p+1}^{\ell}\|\mathsf{c}_{\S_j}\|_2^2=\|\mathsf{c}_{>p}\|_2^2,
\qquad
\sum_{q=0}^p \|P_q\hat{e}_J\|_2^2=\|\hat{e}_J\|_2^2,
\]
we conclude that
\[
\E[X_J^2]
\le
O_d\!\left(n^{-1}\|\mathsf{c}_{>p}\|_2^2\|\hat{e}_J\|_2^2\right) = O_d\!\left(n^{-1}\|\mathsf{c}_{>p}\|_2^2\right).
\]
Finally, the same bound holds for the terms containing \(\Delta_1\) or \(\Delta_2\):
these are simpler, since the extra factor \(\Delta_1\) or \(\Delta_2\) is controlled by
its spectral norm, exactly as in the estimates~\eqref{eq:bound_Delta_3}.  Then using hypercontractivity and Markov's inequality we conclude the proof.

\subsection{Proof of Proposition~\ref{prop:term_p_plus_one}}\label{proof:prop:term_p_plus_one}

Note that Sherman-Morrison-Woodbury formula for $K_\lambda\inv$ gives
\begin{align}\label{eq:decomp_k_inv}
    K_\lambda\inv = (\rho+\lambda)\inv H\inv - \tfrac{1}{n}(\rho+\lambda)\inv H\inv \Phi_{\leq p} G\inv \Phi_{\leq p}\tran H\inv.
\end{align}
Then correspondingly, we have the decomposition
\begin{align*}
    &~~~\tfrac{1}{n}\mathsf{c}_{\S_{p+1}}\tran \Phi_{\S_{p+1}}\tran K_{\lambda}\inv \Phi_{\S_{p+1}} e_J\\
    &= \tfrac{1}{(\rho+\lambda)} \underbrace{\round{ \tfrac{1}{n}\mathsf{c}_{\S_{p+1}}\tran \Phi_{\S_{p+1}}\tran H\inv \Phi_{\S_{p+1}} e_J - \tfrac{1}{n^2}\mathsf{c}_{\S_{p+1}}\tran \Phi_{\S_{p+1}}\tran  H\inv \Phi_{\leq p} G\inv \Phi_{\leq p}\tran H\inv\Phi_{\S_{p+1}} e_J}}_{:=\gD_J}.
\end{align*}

We now expand $\gD$ as a whole. Since both summands in $\gD$ involve the decomposition of $H\inv$, it is more convenient to classify terms only after expanding the entire expression.

For brevity, write
\[
\Phi := \Phi_{\S_{p+1}},
\qquad
\mathsf{c} := \mathsf{c}_{\S_{p+1}}.
\]
Using
\[
H\inv = I_n + \Delta_H + \Delta_1,
\qquad
G\inv = I + \Delta_G + \Delta_2,
\]
we obtain
\begin{align*}
\gD_J
&=
\frac{1}{n}\mathsf{c}\tran \Phi\tran (I_n+\Delta_H+\Delta_1)\Phi e_J \\
&\quad
-
\frac{1}{n^2}
\mathsf{c}\tran \Phi\tran
(I_n+\Delta_H+\Delta_1)\Phi_{\leq p}(I+\Delta_G+\Delta_2)\Phi_{\leq p}\tran
(I_n+\Delta_H+\Delta_1)\Phi e_J.
\end{align*}
We decompose the full expansion of $\gD_J$ into three disjoint classes:
\[
\gD_J = \gD_J^{\mathrm{lead}} + \gD_J^{\mathrm{corr}} + \gD_J^{\mathrm{rem}}.
\]
The leading part collects the terms with no $\Delta$-factor:
\[
\gD_J^{\mathrm{lead}}
:=
\frac{1}{n}\mathsf{c}\tran \Phi\tran \Phi e_J
-
\frac{1}{n}\mathsf{c}\tran \Phi\tran \Pi_p \Phi e_J,
\qquad
\Pi_p := \tfrac{1}{n}\Phi_{\leq p}\Phi_{\leq p}\tran.
\]
The structured correction part collects the terms that contain at least one factor $\Delta_H$ or $\Delta_G$, but no factor $\Delta_1$ or $\Delta_2$. Define
\[
\Pi_{p,G} := \tfrac{1}{n}\Phi_{\leq p}\Delta_G\Phi_{\leq p}\tran
\]
and
\[
\mathcal M_p
:=
\Big\{
\Delta_H\Pi_p,\ \Pi_p\Delta_H,\ \Delta_H\Pi_p\Delta_H,\ 
\Delta_H\Pi_{p,G},\ \Pi_{p,G}\Delta_H,\ \Delta_H\Pi_{p,G}\Delta_H,\ \Pi_{p,G}
\Big\}.
\]
Then we have  the expression
\[
\gD_J^{\mathrm{corr}}
:=
\frac{1}{n}\mathsf{c}\tran \Phi\tran \Delta_H \Phi e_j
-
\frac{1}{n}\mathsf{c}\tran \Phi\tran \Big(\sum_{M\in\mathcal M_p} M\Big)\Phi e_J.
\]

Finally, we define
\[
\gD_J^{\mathrm{rem}}
:=
\gD_J - \gD_J^{\mathrm{lead}} - \gD_J^{\mathrm{corr}}.
\]
By construction, every term in $\gD_J^{\mathrm{rem}}$ contains at least one factor $\Delta_1$ or $\Delta_2$.

Thus, the three classes are:
\begin{enumerate}
    \item $\gD_J^{\mathrm{rem}}$: terms containing $\Delta_1$ or $\Delta_2$;
    \item $\gD_J^{\mathrm{corr}}$: terms containing $\Delta_H$ or $\Delta_G$, but not $\Delta_1$ or $\Delta_2$;
    \item $\gD_J^{\mathrm{lead}}$: terms containing no $\Delta$-factor.
\end{enumerate}

We treat these three classes separately.

\paragraph{Analysis of $\gD_J^{\mathrm{rem}}$.}
An analogous analysis to equation~\eqref{eq:bound_Delta_3} immediately gives 
\begin{align}
    &~~~|\gD_J^{\mathrm{rem}}|^2 \notag\\
    &\leq \norm{\tfrac{1}{\sqrt{n}}\Phi \mathbf{c}}_2^2\norm{\tfrac{1}{\sqrt{n}}\Phi e_J}_2^2\snorm{\Delta_1}^2 \notag \\
    &~~~~+ \norm{\tfrac{1}{\sqrt{n}}\Phi \mathbf{c}}_2^2(2\snorm{H\inv}^2 \snorm{G\inv}^2  + \snorm{H\inv}^4)\snorm{\tfrac{1}{\sqrt{n}}\Phi_{\leq p}}^2\norm{\tfrac{1}{\sqrt{n}}\Phi e_J}_2^2\notag  \\
    &= O_{d,\P}(d^{-p-\delta} \log(d))\norm{\mathsf{c}}_2^2\norm{\tfrac{1}{\sqrt{n}}\Phi e_J}_2^2.
\end{align}
where we use the estimate of $\snorm{H\inv} = O_{d,\P}(1)$ (Eq.~\eqref{eq:e_inv_minus_i}), $\snorm{G\inv} = O_{d,\P}(1)$ (Eq.~\eqref{eq:g_inv_minus_i}), $\snorm{\tfrac{1}{\sqrt{n}}\Phi_{\leq p}} = O_{d,\P}(1)$ (Lemma~\ref{lem:norm_control}) and $\norm{\tfrac{1}{\sqrt{n}}\Phi \mathsf{c}}_2 = O_{d,\P}(\log(d))\norm{\mathsf{c}}_2$ (Proposition~\ref{prop:bound_y}).

Then we take the expectation on both sides and obtain
\begin{align}\label{eq:D_rem}
      \E\brac{|\gD_J^{\mathrm{rem}}|^2}     &= O_{d,\P}(d^{-p-\delta} \log^2(d))\norm{\mathsf{c}}_2^2 \norm{e_J}^2.
\end{align}

\paragraph{Analysis of $\gD_J^{\mathrm{lead}}$.}
Applying Lemma~\ref{lemma:feature_product_2} immediately gives
\begin{align*}
    \E\brac{\round{\frac{1}{n}\mathsf{c}\tran \Phi\tran \Phi e)J}^2} = (\mathsf{c}\tran e_J)^2 + O_d(n\inv) \norm{\mathsf{c}}_2^2 \norm{e_J}_2^2,
\end{align*}
and
\begin{align*}
    \E\brac{\round{\frac{1}{n}\mathsf{c}\tran \Phi\tran \Pi_p \Phi e_J}^2} =O_d(d^{-2\delta}) (\mathsf{c}\tran e_J)^2 + O_d(n\inv) \norm{\mathsf{c}}_2^2 \norm{e_J}_2^2.
\end{align*}
Therefore, we can deduce
\begin{align}\label{eq:D_lead}
    \E\brac{(\gD_J^{\mathrm{lead}} -\mathsf{c}\tran e_J)^2} &\leq 2 \E\brac{\round{\frac{1}{n}a\tran \Phi\tran \Phi e_J - (\mathsf{c}\tran e_J)}^2} + 2 \E\brac{\round{\frac{1}{n}\mathsf{c}\tran \Phi\tran \Pi_p \Phi e_J}^2}\notag\\
    &= O_d(d^{-2\delta}) (\mathsf{c}\tran e_J)^2 + O_d(n\inv) \norm{\mathsf{c}}_2^2 \norm{e_J}_2^2.
\end{align}

\paragraph{Analysis of $\gD_J^{\mathrm{corr}}$.}
We use the following proposition to bound each summand in $\gD_J^{\mathrm{corr}}$. The proof appears in Appendix~\ref{proof:control_m_p}.
\begin{proposition}\label{prop:control_m_p}
Define
\[
\Pi_p := \tfrac{1}{n}\Phi_{\leq p} \Phi_{\leq p}\tran,
\qquad
\Pi_{p,G} := \tfrac{1}{n}\Phi_{\leq p}\,\Delta_G\,\Phi_{\leq p}\tran,
\]
and the finite collection
\[
\mathcal M_p
:=\Big\{
\Delta_H\Pi_p,\ \Pi_p\Delta_H,\ \Delta_H\Pi_p\Delta_H,\ 
\Delta_H\Pi_{p,G},\ \Pi_{p,G}\Delta_H,\ \Delta_H\Pi_{p,G}\Delta_H,\ \Pi_{p,G} \Big\}.
\]
Then, for every $M\in\mathcal M_p$, the following holds 
\begin{align}
   \E\brac{\left(\frac{1}{n^2}\,\mathsf{c} \tran \Phi^\top M \Phi\,e_J\right)^2}
=O_d(d^{-2\delta}) (\mathsf{c}\tran e_J)^2 + O_d(n\inv) \norm{\mathsf{c}}_2^2 \norm{e_J}_2^2. 
\end{align}
\end{proposition}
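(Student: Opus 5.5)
The plan is to treat each $M\in\mathcal M_p$ by the same expansion-into-words strategy used in the proofs of Propositions~\ref{prop:control_error_delta} and \ref{cl:key_feature_product_2}. First, expand $\Delta_H$ (by \eqref{eq:h_inv}) into a finite sum of products of $\offd(V_jV_j^\top)$, $V_j=|\S_j|^{-1/2}\Phi_{\S_j}$, $p+1\le j\le 2p+1$, with deterministic coefficients of size $O_d(1)$ after normalization; each such factor carries the prefactor $c_j|\S_j|/d^j=O_d(1)$. Next, expand $\Delta_G$ via \eqref{eq:g_inv} into words in $B_0=-(\rho+\lambda)\Lambda$, $B_1=\offd(W^\top W)$, $B_2=W^\top\Delta_HW$, with $W=n^{-1/2}\Phi_{\le p}$, inserting exact-degree projectors $P_q$ around each $\Lambda$. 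Writing $\frac1n\Phi^\top$ and $\Phi e_J$ as $\sqrt{|\S_{p+1}|/n}\,V_{p+1}^\top$ and $\sqrt{|\S_{p+1}|}\,V_{p+1}e_J$, every summand of $\frac{1}{n^2}\mathsf c^\top\Phi^\top M\Phi e_J$ becomes $\alpha_\eta\,\mathsf c^\top \mathcal W_\eta e_J$, where $\mathcal W_\eta$ is a normalized chain whose endpoint families are both $\S_{p+1}$. The number of words is $O_{p,c_H,c_G}(1)$, so by Cauchy--Schwarz it suffices to bound the second moment of each word.

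The key step is the second-moment estimate for each chain, which is exactly what Lemma~\ref{lemma:feature_product_2} (used above for the lead term) is built to supply. Each word in $\mathcal M_p$ contains at least one low-degree passage through $\Pi_p$ or $\Pi_{p,G}$, and so must pass through a family $\S_q$ with $q\le p$ from the endpoint family $\S_{p+1}$. For the expectation of the square there are two kinds of contributions. The first is the \emph{diagonal/pairing} contribution, in which the chain collapses to a multiple of $\mathsf c^\top e_J$. The second is the fluctuation contribution, of order $n^{-1}\|\mathsf c\|_2^2\|e_J\|_2^2$.

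For the pairing contribution we need a factor $d^{-\delta}$. It comes because contracting $V_{p+1}$ against $W_q$ through a low-degree projection costs $|\S_{p+1}|/n$ relative to the lead term, or $\beta_q=d^q/n\le d^{-\delta}$ in the $\Lambda$ case. Alternatively, the $\offd$ structure of $\Delta_H$ kills the exact diagonal pairing, so it contributes only at fluctuation order. Tracking these weights via $(nw^{(1)}w^{(m+1)})^2$ as in \eqref{eq:each_M_eta} gives each word the bound $O_d(d^{-2\delta})(\mathsf c^\top e_J)^2+O_d(n^{-1})\|\mathsf c\|^2\|e_J\|^2$.

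I expect the main obstacle to be verifying the pairing bound for words that start and end at the same family $\S_{p+1}$. Here Lemma~\ref{lemma:feature_product_2} does not give pure fluctuation, unlike the distinct-endpoint case in Proposition~\ref{cl:key_feature_product_2}. We must carefully check that the leading deterministic term of $\frac1n\Phi^\top\Pi_p\Phi$ and its $\Delta_H$/$\Delta_G$ variants is suppressed by $|\S_{p+1}|$-versus-$n$ ratios, or by the off-diagonal projection. My approach is to first handle the simplest word $\Delta_H\Pi_p$ explicitly: $\offd(V_jV_j^\top)$ has zero diagonal, so the only surviving pairings force an index through $\S_q$, $q\le p$, giving the $d^{-\delta}$ gain. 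I would then argue by induction on word length that each additional factor preserves this gain, since all additional factors are $O_d(1)$ in the normalized chain weights.
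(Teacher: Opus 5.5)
\paragraph{Same architecture as the paper, but the source of $d^{-2\delta}$ is misidentified.}
Your architecture is the paper's. You expand $\Delta_H$ and $\Delta_G$ into $O_{p,c_H,c_G}(1)$ normalized chains $\gamma\,\mathsf c^\top V^\top H_0W\Xi W^\top H_1Ve_J$, with $V=|\S_{p+1}|^{-1/2}\Phi_{\S_{p+1}}$ and $\gamma=|\S_{p+1}|/n$, and $\S_{p+1}$ at both ends. You bound each second moment with Lemma~\ref{lemma:feature_product_2} and recombine by Cauchy--Schwarz. You also correctly isolate that every word passes through some $\S_q$ with $q\le p$. The gap is in how that fact produces $d^{-2\delta}$, which is the whole content of the proposition.

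\begin{itemize}
\item The ratio you name, $|\S_{p+1}|/n\asymp d^{1-\delta}$, diverges, so it is not a cost.
\item The off-diagonal structure of $\Delta_H$ is not what suppresses the pairing term. The exact cancellation $\sum_a\sigma_a=0$ in Corollary~\ref{cor:off_diag} operates only when $\mathfrak B=\emptyset$, i.e.\ when no internal family other than the endpoint one is $o(n)$. That never happens for these words.
\item Your induction on word length is either superfluous or invalid. If ``each additional factor is $O_d(1)$'' means operator-norm control, it destroys the structure of the bound: already $\snorm{\gamma V^\top WW^\top V}$ is of order $\gamma$, which gives $\gamma^2\|\mathsf c\|_2^2\|e_J\|_2^2$ instead of $n^{-1}\|\mathsf c\|_2^2\|e_J\|_2^2$. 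The second moment has to be computed for the entire chain by the collapse expansion.
\end{itemize}

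\paragraph{The missing observation.}
Every word contains the middle factor $W\Xi W^\top$. Its exact-degree expansion therefore contains an internal family $\S_q$ with $|\S_q|\lesssim d^p=d^{-\delta}n$, so $\S_q\in\mathfrak B$. Hence every chain, of any bounded length, lies in the complementary regime of Lemma~\ref{lemma:feature_product_2}. When $\offd$ replacements are present, it lies in the second regime of Corollary~\ref{cor:off_diag} with $\rho\le|\S_q|/n\lesssim d^{-\delta}$.

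\begin{itemize}
\item The pairing term carries $\kappa\cdot O_d(|\S_q|^2/n^2)$ with $\kappa=(n/|\S_{p+1}|)^2=\gamma^{-2}$. This exactly cancels the prefactor $\gamma^2$ and leaves $O_d(d^{-2\delta})(\mathsf c^\top e_J)^2$.
\item The residual terms give $\gamma^2\kappa\cdot O_d(n^{-1})\|\mathsf c\|_2^2\|e_J\|_2^2=O_d(n^{-1})\|\mathsf c\|_2^2\|e_J\|_2^2$.
\item The only words without an $\offd$ factor are the pure-$B_0$ words of $\Pi_{p,G}$. They can be handled the same way through Lemma~\ref{lemma:feature_product_2}. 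Alternatively, as the paper does, use only the crude bound $\gamma^{-2}\bigl((\mathsf c^\top e_J)^2+n^{-1}\|\mathsf c\|_2^2\|e_J\|_2^2\bigr)$ from that lemma and take the $d^{-2\delta}$ from the coefficients $\beta_q^{2r}$, where $\beta_q=d^q/n\le d^{-\delta}$ and $r\ge 1$.
\end{itemize}
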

Using Cauchy Schwartz inequality we obtain 
\begin{align}\label{eq:D_corr}
     (\gD_J^{\mathrm{corr}})^2 \leq |\gM_p| \sum_{M\in \gM_p} \E\brac{\left(\frac{1}{n^2}\,\mathsf{c} \tran \Phi^\top M \Phi\,e_J\right)^2} =O_d(d^{-2\delta}) (\mathsf{c}\tran e_J)^2 + O_d(n\inv) \norm{\mathsf{c}}_2^2 \norm{e_J}_2^2.
\end{align}
Lastly, combining equation~\eqref{eq:D_lead}, \eqref{eq:D_rem} and \eqref{eq:D_corr} and using Cauchy-Schwartz inequality again gives
\begin{align}
   \E\brac{(\gD - \mathsf{c}\tran e_J)^2} &\leq 3\E\brac{(\gD_J^{\mathrm{lead}} -\mathsf{c}\tran e_J)^2} + 3  \E\brac{(\gD_J^{\mathrm{rem}})^2} +  3\E\brac{(\gD_J^{\mathrm{corr}})^2} \notag\\
   &= O_d(d^{-2\delta }) (\mathsf{c}\tran e_J)^2 + O_{d,\P}(d^{-p-\delta})\log(d) \norm{\mathsf{c}}_2^2 \norm{e_J}_2^2.
\end{align}
Using hypercontractivity, we have the following holds uniformly for all $\{u^{[j]}\}_{j=1}^d$
\begin{align}
   \E\brac{(\gD - \mathsf{c}\tran e_J)^2} = O_d(d^{-2\delta+\epsilon }) (\mathsf{c}\tran e_J)^2 + O_{d,\P}(d^{-p-\delta+\epsilon})\norm{\mathsf{c}}_2^2 \norm{e_J}_2^2.
\end{align}
With $\norm{e_J}_2 = O_d(d^{-p-1})$, taking the summation over $J\in\S_p$ and using Markov's inequality completes the proof.

\subsection{Proof of Proposition~\ref{prop:term_p_plus_one_q}}\label{proof:prop:term_p_plus_one_q}

Using the decomposition of \(K_\lambda^{-1}\) in \eqref{eq:decomp_k_inv}, we obtain
\begin{align}\label{eq:term-p-plus-one-q-split}
&~~~~\frac1n \mathsf{c}_{\Q}^\top \Phi_{\Q}^\top K_\lambda^{-1}\Phi_{\S_{p+1}} e_J \notag\\
&=
\frac{1}{(\rho+\lambda)n}\,
\mathsf{c}_{\Q}^\top \Phi_{\Q}^\top H^{-1}\Phi_{\S_{p+1}} e_J \notag\\
&~~~~-
\frac{1}{(\rho+\lambda)n^2}\,
\mathsf{c}_{\Q}^\top \Phi_{\Q}^\top H^{-1}\Phi_{\le p}G^{-1}\Phi_{\le p}^\top H^{-1}\Phi_{\S_{p+1}} e_J.
\end{align}
It therefore suffices to bound the two terms on the right-hand side.

We expand \(H^{-1}\) and \(G^{-1}\) using \eqref{eq:h_inv} and \eqref{eq:g_inv}, and decompose
\(\Phi_{\le p}\) into exact-degree blocks exactly as in the proof of
Proposition~\ref{prop:term_p_plus_one}. In particular, every occurrence of an
off-diagonal factor is expanded as
\[
\offd(B)=B-\Diag(B).
\]
Thus each of the two terms in \eqref{eq:term-p-plus-one-q-split} becomes a finite sum
of expressions of the form
\[
c_\eta\, \mathsf{c}_{\Q}^\top M_\eta e_J,
\]
where \(c_\eta\) is deterministic, the number of summands is \(O_{p,c_H,c_G}(1)\), and
\(M_\eta\) is a matrix chain of the form covered by
Lemma~\ref{lemma:feature_product_2}. The left endpoint family of \(M_\eta\) is always
\(\Q\), while the right endpoint family is always \(\S_{p+1}\).

Now the key simplification is that
\[
\Q\cap \S_{p+1}=\varnothing.
\]
Hence the endpoint families in every such chain are distinct, so the leading term in
Lemma~\ref{lemma:feature_product_2} vanishes automatically. Therefore, for every \(\eta\),
\[
\E\!\left[\bigl(\mathsf{c}_{\Q}^\top M_\eta e_J\bigr)^2\right]
=
O_d\!\left((n w_\eta^{(1)}w_\eta^{(m_\eta+1)})^2\,n^{-1}\,
\|\mathsf{c}_{\Q}\|_2^2\|e_J\|_2^2\right).
\]
The deterministic coefficients \(c_\eta\) are estimated exactly as in the proof of
Proposition~\ref{prop:term_p_plus_one}; in particular,
\[
|c_\eta|^2 (n w_\eta^{(1)}w_\eta^{(m_\eta+1)})^2 n^{-1}
=
O_d(n\inv).
\]
Since there are only \(O_{p,c_H,c_G}(1)\) summands, we obtain
\begin{equation}\label{eq:term-p-plus-one-q-H-bound}
\E\brac{\left(
\frac1n \mathsf{c}_{\Q}^\top \Phi_{\Q}^\top H^{-1}\Phi_{\S_{p+1}} e_J
\right)^2}
=
O_{d,\P}\!\left(d^{-p-\delta+\epsilon}\,
\|\mathsf{c}_{\Q}\|_2^2\|e_J\|_2^2\right),
\end{equation}
and similarly
\begin{equation}\label{eq:term-p-plus-one-q-HGH-bound}
\E\brac{\left(
\frac1{n^2} \mathsf{c}_{\Q}^\top \Phi_{\Q}^\top H^{-1}\Phi_{\le p}G^{-1}\Phi_{\le p}^\top H^{-1}\Phi_{\S_{p+1}} e_J
\right)^2}
=
O_{d,\P}\!\left(d^{-p-\delta+\epsilon}\,
\|\mathsf{c}_{\Q}\|_2^2\|e_J\|_2^2\right).
\end{equation}
Substituting \eqref{eq:term-p-plus-one-q-H-bound} and
\eqref{eq:term-p-plus-one-q-HGH-bound} into \eqref{eq:term-p-plus-one-q-split}
and using Cauchy-Schwartz inequality gives
\begin{align}
   \E\brac{\round{\frac1n \mathsf{c}_{\Q}^\top \Phi_{\Q}^\top K_\lambda^{-1}\Phi_{\S_{p+1}} e_J}^2} =O_{d,\P}\!\left(d^{-p-\delta+\epsilon}\,
\|\mathsf{c}_{\Q}\|_2^2\|e_J\|_2^2\right).
\end{align}
With $\norm{e_J}_2 = O_d(d^{-p-1})$, then taking the summation over $J\in\S_p$ and applying Markov's inequality completes the proof.

\subsection{Proof of Proposition~\ref{prop:h_g_inv}}\label{proof:prop:h_g_inv}

Note that for any $\Delta = o_{d,\P}(1)$, we can express $(I+\Delta)\inv$ using the Neumann series:
\begin{align*}
    (I + \Delta)\inv = I + \sum_{k=1}^\infty (-\Delta)^k.
\end{align*}

We compute the inverse of $H$ \eqref{eq:H} using Neumann series yielding
\begin{align}\label{eq:raw_h_inv}
    H\inv &= I_n + \sum_{q=1}^\infty \round{\sum_{j=p+1}^{2p+1}-\underbrace{(\rho+\lambda)\inv d^j \theta_j }_{:=c_{j}}d^{-j}{ \offd(\Phi_{\S_{j}}\Phi_{\S_{j}}\tran)} - (\rho+\lambda)\inv\Delta}^{q}.
\end{align}
Note that the triangle inequality on Eq.~\eqref{eq:kernel_to_mono} and \eqref{eq:kernel_to_mono_general} shows for $j\geq p+1$ we have 
\begin{align}\label{eq:off_p+1}
    d^{-j}{ \offd(\Phi_{\S_{j}}\Phi_{\S_{j}}\tran)} = O_{d,\P}(d^{(p+\delta-j)/2+\epsilon}) = o_{d,\P}(1),
\end{align}
where $\offd(\cdot)$ denotes the off-diagonal component of a matrix.
Next, we show that we can truncate higher-order terms in $H\inv$ at constant order to achieve a desired error on the order of $O_{d,\P}(n^{-\tfrac{1}{2}})$.

We set a constant $c_H := \ceil{\frac{p-1+2\delta+2\epsilon}{1-\delta-2\epsilon}}$.  
 Using the triangle inequality and the formula for the summation of a geometric series, we can derive
\begin{align*}
    \snorm{\sum_{q=c_H+1}^\infty\round{\sum_{j=p+1}^{2p+1}-c_{j}d^{-j}{ \offd(\Phi_{\S_{j}}\Phi_{\S_{j}}\tran)} + (\rho+\lambda)\inv\Delta}^{q}} &\leq \sum_{q=c_H+1}^\infty (O_{d,\mathbb{P}}(d^{(\delta-1)/2+\epsilon}))^q\\
&= O_{d,\P}(d^{\round{(\delta-1)/2+\epsilon}(c_H+1)})\\
&=O_{d,\mathbb{P}}(n^{-\tfrac{1}{2}}).
\end{align*}
We expand the binomial for each $q$ and bound the terms that contain $\Delta$, which yields
\begin{align}\label{eq:h_inv_1}
    H\inv &= I_n+\sum_{q=1}^{c_H}\round{\sum_{j=p+1}^{2p+1}-c_{j}d^{-j}{ \offd(\Phi_{\S_{j}}\Phi_{\S_{j}}\tran)} + (\rho+\lambda)\inv\Delta}^{q}\notag\\
    &=I_n+\sum_{q=1}^{c_H}\round{\sum_{j=p+1}^{2p+1}-c_{j}d^{-j}{ \offd(\Phi_{\S_{j}}\Phi_{\S_{j}}\tran)}}^{q}+\Delta_1,
\end{align}
for some $\Delta_1$ satisfying
$\snorm{\Delta_1} = O_{d,\P}(n^{-\tfrac{1}{2}})$. We denote $$\Delta_H:=\sum_{q=1}^{c_H}\round{\sum_{j=p+1}^{2p+1}-c_{j}d^{-j}{ \offd(\Phi_{\S_{j}}\Phi_{\S_{j}}\tran)}}^{q},$$
which satisfies $ \snorm{\Delta_H} = O_{d,\P}(d^{-\tfrac{1-\delta}{2}+\epsilon})$ with a quick calculation using Eq.~\eqref{eq:off_p+1}.

We now plug the  expression \eqref{eq:h_inv_1} for $H\inv$ into $G$ in Eq.~\eqref{eq:smw_<_p}. We further substitute $\frac{\Phi_{\leq p}\tran \Phi_{\leq p}}{n}$ for $ \offd\round{\frac{\Phi_{\leq p}\tran \Phi_{\leq p}}{n}} + I_{L(p)}$ and we obtain
\begin{align*}
    G = (\rho+\lambda)\Lambda  + I + \offd\round{\frac{\Phi_{\leq p}\tran \Phi_{\leq p}}{n}}  + \frac{\Phi_{\leq p}\tran}{\sqrt{n}}\Delta_H\frac{\Phi_{\leq p}}{\sqrt{n}} + \Delta_2,
\end{align*}
with $\snorm{\Delta_2} \leq \snorm{\Phi_{\leq p} / \sqrt{n}}^2 \snorm{\Delta_1} = O_{d,\P}(n^{-\tfrac{1}{2}})$.

Expressing $G\inv$ using Neumann series again gives us
\begin{align*}
    G\inv = I +\sum_{r=1}^\infty \round{ -(\rho+\lambda)\Lambda -\offd\round{\frac{\Phi_{\leq p}\tran \Phi_{\leq p}}{n}}  - \frac{\Phi_{\leq p}\tran}{\sqrt{n}}\Delta_H\frac{\Phi_{\leq p}}{\sqrt{n} } + \Delta_2}^{r},
\end{align*}
Similarly, we can truncate the infinite summation while maintaining the same error order. Note that the spectral norm of each item in the parentheses is of the order $o_{d,\P}(1)$, which can be deduced using equation~\eqref{eq:off_p+1} and the following estimates:
\begin{align*}
  &\snorm{\Lambda} = O_d(d^{-\delta}), \qquad \snorm{\offd\round{\frac{\Phi_{\leq p}\tran \Phi_{\leq p}}{n}}} = O_{d,\P}(d^{-\tfrac{\delta}{2} + \epsilon}),\\
  &\snorm{\frac{\Phi_{\leq p}\tran}{\sqrt{n}}} = O_{d,\P}(1), \qquad \snorm{\Delta_2} = O_{d,\P}(n^{-1/2}),
\end{align*}
where particularly the second estimate can be  deduced from Lemma~\ref{lemma:gram_matrix} since $\diag\round{\frac{\Phi_{\leq p}\tran \Phi_{\leq p}}{n}} = I_n$.

 Therefore, an analogous argument for $H\inv$ shows that we can find a constant $c_G = \max(c_H, \ceil{\frac{p+\epsilon}{\delta-2\epsilon}})$ and a matrix $\Delta_3$ that satisfies $\snorm{\Delta_3} = O_{d,\P}(n^{-\tfrac{1}{2}})$ such that the following holds
\begin{align}\label{eq:g_inv_1}
        G\inv = I +\sum_{r=1}^{c_G} \round{ -(\rho+\lambda)\Lambda\inv -\offd\round{\frac{\Phi_{\leq p}\tran \Phi_{\leq p}}{n}}  - \frac{\Phi_{\leq p}\tran}{\sqrt{n}}\Delta_H \frac{\Phi_{\leq p}}{\sqrt{n} }}^{r} +\Delta_3.
\end{align}
Then we conclude the proof for Eq.~\eqref{eq:g_inv}.

\subsection{Proof of Proposition~\ref{prop:control_error_delta}}\label{proof:prop:control_error_delta}
To simplify the notation, write
\[
W:=n^{-1/2}\Phi_{\le p},
\qquad
V_j:=|\S_j|^{-1/2}\Phi_{\S_j}\quad (p+1\le j\le 2p+1).
\]
Then we have
\[
\frac1n\Phi_{\le p}^\top\Phi_{\le p}=W^\top W
=I+\offd(W^\top W),
\]
and for \(p+1\le j\le 2p+1\), 
\[
-\frac{c_j}{d^j}\offd(\Phi_{\S_j}\Phi_{\S_j}^\top)
=
c'_j\,\offd(V_jV_j^\top),
\qquad
c'_j:=-c_j\,\frac{|\S_j|}{d^j}=O_d(1).
\]
Hence every summand in \(\Delta_H\) is a finite product of matrices
\(\offd(V_jV_j^\top)\) with deterministic coefficient of the order \(O_d(1)\).

We also write
\[
R_G:=A_G-P_G(\Lambda),
\]
and denote
\[
B_0:=-(\rho+\lambda)\Lambda,
\qquad
B_1:=\offd(W^\top W),
\qquad
B_2:=W^\top\Delta_H W.
\]
Then we have
\[
\Delta_G=\sum_{r=1}^{c_G}(B_0+B_1+B_2)^r,
\]
and consequently
\[
A_H=B_2,
\qquad
A_G=(I+B_1)\Delta_G,
\qquad
A_{HG}=B_2\Delta_G.
\]
We now decompose the low-degree space according to exact degree.
For \(0\le q\le p\), let
\[
W_q:=n^{-1/2}\Phi_{\S_q},
\]
and let \(P_q:\R^{L(p)}\to\R^{L(p)}\) be the orthogonal projector onto the
coordinates indexed by \(\S_q\). Then
\[
W=[W_0,\dots,W_p],
\qquad
W_q=WP_q,
\qquad
I=\sum_{q=0}^p P_q,
\qquad
\Lambda=\sum_{q=0}^p \beta_q P_q,
\quad
\beta_q:=\frac{d^q}{n}.
\]
We have the decomposition
\begin{align}\label{eq:B1}
 B_1
=\offd(W^\top W)
=
\sum_{q=0}^p \offd(W_q^\top W_q)
+
\sum_{\substack{0\le q,q'\le p\\ q\neq q'}} W_q^\top W_{q'}.   
\end{align}
Next, write
\[
\Delta_H=\sum_{\omega\in\Omega_H} c_\omega H_\omega,
\qquad
H_\omega:=\prod_{\ell=1}^{m_\omega}
\offd\!\bigl(V_{j_{\omega,\ell}}V_{j_{\omega,\ell}}^\top\bigr),
\]
where \(\Omega_H\) is a finite index set and each \(c_\omega=O_d(1)\).
Then we have
\begin{align}\label{eq:B2}
    B_2
=
\sum_{\omega\in\Omega_H}\sum_{q,q'=0}^p
c_\omega\,W_q^\top H_\omega W_{q'}.
\end{align}
Thus every word contributing to \(A_H\), \(R_G\), or \(A_{HG}\) can be expanded
using~\eqref{eq:B1}, \eqref{eq:B2} and
\begin{align}\label{eq:B0}
    B_0=-(\rho+\lambda)\sum_{q=0}^p \beta_q P_q.
\end{align}
Since \(p,c_H,c_G\) are fixed, this produces only \(O_{p,c_H,c_G}(1)\) summands.

We also define the blockwise quantities
\begin{align}\label{eq:Bt}
 \mathcal B_t(b_1,b_2)
:=
\sum_{q=0}^p \beta_q^{2t}
\bigl((P_qb_1)^\top(P_qb_2)\bigr)^2,
\qquad t\ge 0.   
\end{align}

Next we present the key claim. The proof appears in Appendix~\ref{proof:cl_less_p}.
\begin{proposition}\label{cl:less_p}
Let \(T\) be one expanded word contributing to \(A_H\), \(R_G\), or \(A_{HG}\).
Assume that \(T\) contains exactly \(s\) copies of \(B_0\), and that \(T\) contains
at least one off-diagonal factor, namely either a factor \(\offd(W_q^\top W_q)\)
or one of the factors inside some \(H_\omega\). Then
\begin{align}\label{eq:b_t_p}
    \E\!\left[(b_1^\top T b_2)^2\right]
\le
\sum_{t=0}^s O_d(d^{-2\delta})\,\mathcal B_t(b_1,b_2)
+
O_d\!\left(n^{-1}\|b_1\|_2^2\|b_2\|_2^2\right).
\end{align}
\end{proposition}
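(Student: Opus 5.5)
The plan is to reduce everything to the second-moment bound for products of feature matrices, Lemma~\ref{lemma:feature_product_2} (invoked earlier in the paper for normalized chains with prescribed endpoint families). First I would normalize the word $T$. Every $B_0$ factor is the deterministic block-diagonal matrix $-(\rho+\lambda)\sum_q\beta_qP_q$. Between consecutive random factors I therefore insert the exact-degree projectors $P_q$. This splits $T$ into $O_{p,c_H,c_G}(1)$ sub-words, and each sub-word has the form $\prod(\text{scalar }\beta_q\text{'s})\times M$. Here $M$ is a normalized chain built from the blocks $W_q$, $\offd(W_q^\top W_q)$ and $\offd(V_jV_j^\top)$, and its endpoint families are fixed exact degrees $q_1,q_2\le p$. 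The scalar prefactor has size at most $\prod\beta_{q}$ over the copies of $B_0$. Since $\beta_q=d^q/n\le d^{-\delta}$, at most $s$ factors $\beta$ appear, and when two of them are separated by no random factor they carry the same degree.

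Next I would bound each sub-word with Lemma~\ref{lemma:feature_product_2}. That lemma states that $\E[(b_1^\top M b_2)^2]$ equals a leading term plus $O((nw^{(1)}w^{(m+1)})^2n^{-1}\|b_1\|^2\|b_2\|^2)$. The leading term is present only when the endpoint families coincide and the chain "collapses" to the identity on that family. Because $T$ contains at least one off-diagonal factor, the expected product of the off-diagonal chain does not reduce to the identity. The contribution then survives only through diagonal pairings, and each such pairing loses at least a factor $d^{-\delta}$, coming either from the $\offd(W_q^\top W_q)$ fluctuation of size $d^{-\delta/2}$ squared or from $\offd(V_jV_j^\top)$ carrying $|\S_j|/n$-type ratios. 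The surviving leading term is thus $O(d^{-2\delta})\beta_q^{2t}((P_qb_1)^\top P_qb_2)^2$, where $t\le s$ counts the $B_0$ copies on the collapsed block. Summed over $q$, this is exactly $\mathcal B_t(b_1,b_2)$. The weights in $W_q$ and $V_j$ give $nw^{(1)}w^{(m+1)}=O(1)$ for low-degree endpoints, so the remainder is $O(n^{-1}\|b_1\|^2\|b_2\|^2)$, after using $\beta_q\le1$ and $c_\omega=O_d(1)$ for the $\Delta_H$ coefficients.

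Finally I would assemble the pieces. Cauchy--Schwarz over the finitely many sub-words gives \eqref{eq:b_t_p}, with $t$ ranging over $0,\dots,s$ according to how many $B_0$ factors land on the collapsed diagonal block.

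The main obstacle is the middle step. One must verify that a word containing an off-diagonal factor truly has leading term suppressed by $d^{-2\delta}$ rather than $O(1)$. This requires checking, inside the combinatorial expansion behind Lemma~\ref{lemma:feature_product_2}, that every pairing which returns to the diagonal passes through an $\offd$ block and hence pays an extra $|\S_q|/n\le d^{-\delta}$ per occurrence. I would first try to verify this for a single $\offd(W_q^\top W_q)$ factor, where the expectation is computed by direct second-moment expansion. I would then argue by induction on the chain length, treating the $H_\omega$ factors (with $j>p$, where $|\S_j|/n\ge d^{1-\delta}$ but the normalization $V_j$ absorbs it) analogously.
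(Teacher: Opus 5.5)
\textbf{There is a genuine gap: the key step is missing.} Your skeleton matches the paper's. You expand $T$ into exact-degree blocks with deterministic prefactors made of at most $s$ powers of $\beta$, discard sub-words with different endpoint degrees via the first case of Lemma~\ref{lemma:feature_product_2}, and close with Cauchy--Schwarz. The gap is the step you flag as the main obstacle, and it carries all the content: showing that an endpoint-matched sub-word has leading coefficient $O_d(d^{-2\delta})$ rather than $O(1)$. Your sketch does not close it, for three reasons:
\begin{itemize}
\item Lemma~\ref{lemma:feature_product_2} is stated for plain chains only, so it does not apply directly once $\offd$ replacements are present.
\item An operator-norm reading of your heuristic (a fluctuation of size $d^{-\delta/2}$) gives an error of order $d^{-\delta}\|b_1\|_2^2\|b_2\|_2^2$. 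That is far above the allowed $n^{-1}\|b_1\|_2^2\|b_2\|_2^2$.
\item The induction on chain length is not specified and is not straightforward, because the second moment couples indices across both copies of the chain.
\end{itemize}

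The missing idea is the inclusion--exclusion argument of Corollary~\ref{cor:off_diag}. Because $w^{(j)}=n^{-1/2}\wedge|\S_j|^{-1/2}$, one has exactly $\Diag(A_jA_j^\top)=I_n$ when $|\S_j|\ge n$, and $\Diag(A_j^\top A_{j+1})=I$ when $A_j=A_{j+1}$ and $|\S_j|\le n$. Hence each $\offd$ replacement equals the full product minus the identity. Expanding gives $b_1^\top M_{\mathcal V,\mathcal E}b_2=\sum_a\sigma_aZ_a$, a sum over plain reduced chains with the same endpoint families and the same $\kappa$. When no distinct internal family of size below $n$ occurs, Lemma~\ref{lemma:feature_product_2} gives every $Z_a$ the same leading term $\kappa(b_1^\top b_2)^2$. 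These cancel because $\sum_a\sigma_a=0$.

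\textbf{The suppression mechanism is misattributed.} You tie the suppression to the off-diagonal factor. But once $B_1=\offd(W^\top W)$ is split into blocks, the cross blocks $W_q^\top W_{q'}$ with $q\neq q'$ carry no $\offd$ replacement. For example, the word $B_1^2$ in $R_G$ produces the endpoint-matched sub-word $(P_qb_1)^\top W_q^\top W_{q'}W_{q'}^\top W_q(P_qb_2)$. Its mean is exactly $\frac{|\S_{q'}|}{n}(P_qb_1)^\top(P_qb_2)$. Here the smallness comes from passing through a distinct internal low-degree family. That is the complementary case of Lemma~\ref{lemma:feature_product_2}, with factor $\min_k|\S_k|^2/n^2\lesssim d^{2(q'-p-\delta)}$. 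This is why the paper splits the endpoint-matched case in two:
\begin{itemize}
\item ``only $\S_q$ occurs among the low-degree families'', handled by Corollary~\ref{cor:off_diag};
\item ``some $\S_{q'}\neq\S_q$ occurs'', handled by the complementary case of the lemma.
\end{itemize}
Relatedly, for $\offd(V_jV_j^\top)$ with $j>p$, the relevant loss is the sample-collapse factor $n/|\S_j|\lesssim d^{\delta-1}$, not a ratio $|\S_j|/n$.
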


We now return to \(A_H\), \(R_G\), and \(A_{HG}\).

The pure \(\Lambda\)-words occur only when we take the identity part of \(W^\top W\)
and every factor in \(\Delta_G\) is \(B_0\); collecting these terms gives precisely
\[
P_G(\Lambda)=\sum_{r=1}^{c_G}\bigl(-(\rho+\lambda)\Lambda\bigr)^r.
\]
Hence, by definition, every word contributing to \(R_G\) contains at least one
off-diagonal factor. The same is clearly true for every word in \(A_H=B_2\) and
in \(A_{HG}=B_2\Delta_G\).

Applying the claim word-by-word, and using that each word contains at most \(c_G\)
copies of \(B_0\), we obtain, for each \(X\in\{A_H,R_G,A_{HG}\}\),
\begin{align}\label{eq:b_X_b}  
\E\!\left[(b_1^\top X b_2)^2\right]
\le
\sum_{t=0}^{c_G} O_d(d^{-2\delta})\,\mathcal B_t(b_1,b_2)
+
O_d\!\left(n^{-1}\|b_1\|_2^2\|b_2\|_2^2\right).
\end{align}

The final piece of the proof is the upper bound for $\sum_{t=0}^s \mathcal B_t(b_1,b_2)$. Combining the bound below with Eq.~\eqref{eq:b_X_b} completes the proof.
\begin{claim}\label{cl:b_t}
    The following estimate holds:
    \begin{align}
        \sum_{t=0}^{c_G}\gB_t(\mathsf{c}_{\leq p},\hat{e}_J) = O_d(1) \round{\mathsf{c}_{\leq p} \hat{e}_J}^2 +  O_d(d^{-4}) \cdot  \round{\sum_{i\in J} \mathsf{c}_{J\setminus \{i\}}^2}.
    \end{align}
\end{claim}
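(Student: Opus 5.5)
The plan is to exploit the very sparse block structure of $\hat e_J$. Both sides of the claim then reduce to two scalars: a "raising" inner product and a "lowering" inner product.

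First, I would read off the support of $\hat e_J=D_{\le p}^{-1}E_{:L(p),J}$ from \eqref{eq:D_inv_E}, for fixed $J$ with $|J|=j\le p-1$. Its only nonzero coordinates are of two kinds:
\begin{itemize}
\item $S=J\sqcup\{i\}$, which lie in $\S_{j+1}$ and carry the entry $u_i$;
\item $S=J\setminus\{i\}$, which lie in $\S_{j-1}$ and carry the entry $\frac{1}{d^2}(g^{(j-1)}(0))^{-1}g^{(j+1)}(0)\,u_i=O_d(d^{-2})\,u_i$.
\end{itemize}
The constants here are fixed because $g^{(k)}(0)>0$ for $k\le p$. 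Hence $P_q\hat e_J=0$ unless $q\in\{j-1,j+1\}$, and $j+1\le p$, so both blocks are indeed inside $L(p)$. Writing
\[
a:=\sum_{i\notin J}u_i\mathsf c_{J\sqcup\{i\}},\qquad b:=\frac{g^{(j+1)}(0)}{d^2g^{(j-1)}(0)}\sum_{i\in J}u_i\mathsf c_{J\setminus\{i\}},
\]
we get $(P_{j+1}\mathsf c_{\le p})^\top(P_{j+1}\hat e_J)=a$, $(P_{j-1}\mathsf c_{\le p})^\top(P_{j-1}\hat e_J)=b$, and $\mathsf c_{\le p}^\top\hat e_J=a+b$.

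Second, I substitute these into \eqref{eq:Bt}, giving $\gB_t(\mathsf c_{\le p},\hat e_J)=\beta_{j+1}^{2t}a^2+\beta_{j-1}^{2t}b^2$. Since $\beta_q=d^q/n\le d^{p}/n=d^{-\delta}\le1$ for $q\le p$, each $\gB_t\le a^2+b^2$. The sum over $t=0,\dots,c_G$ has $c_G+1=O(1)$ terms, so it is at most $(c_G+1)(a^2+b^2)$.

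Third, I bound $a^2$ and $b^2$. The inequality $a^2\le2(a+b)^2+2b^2=2(\mathsf c_{\le p}^\top\hat e_J)^2+2b^2$ turns the raising term into the main term of the claim. For $b$, Cauchy--Schwarz with $|u_i|\le1$ and $|J|\le p$ gives
\[
b^2\le O_d(d^{-4})\,|J|\sum_{i\in J}u_i^2\mathsf c_{J\setminus\{i\}}^2=O_d(d^{-4})\sum_{i\in J}\mathsf c_{J\setminus\{i\}}^2 .
\]
Combining the three steps yields the claim.

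The only step needing care is the first: confirming from \eqref{eq:D_inv_E} that the lowering coefficient genuinely carries the factor $d^{-2}$ uniformly in $J$ and $i$, and that no other degree blocks of $\hat e_J$ survive. Once that support structure is in hand, the rest is elementary.
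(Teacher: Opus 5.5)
Your proposal is correct and follows essentially the same route as the paper. The paper also uses the two-block support of $\hat e_J$ from \eqref{eq:D_inv_E} to reduce $\sum_t\gB_t$ to $\gI^2+\gJ^2$ (your $a^2+b^2$), then applies $\gI^2+\gJ^2\le 2(\gI+\gJ)^2+3\gJ^2$ and Cauchy--Schwarz to get $\gJ^2=O_d(d^{-4})\sum_{i\in J}\mathsf c_{J\setminus\{i\}}^2$. The only cosmetic difference is that the paper absorbs the $t\ge 1$ terms as a $(1+o_d(1))$ factor, whereas you simply bound $\beta_q\le 1$ and pay a factor $c_G+1$.
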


\subsection{Proof of Claim~\ref{cl:small_cpe}}\label{proof:small_cpe}
Expanding $P_G(\Lambda)$ gives
    \begin{align}
         \mathsf{c}_{\leq p}\tran P_G(\Lambda) \hat{e}_J &= \mathsf{c}_{\leq p}\tran\round{\sum_{r=1}^{c_G} \bigl(-(\rho+\lambda) \Lambda\bigr)^r}\hat{e}_J \notag\\
         &= \sum_{r=1}^{c_G} (-(\rho+\lambda))^r \mathsf{c}_{\leq p}\tran \Lambda^r \hat{e}_J.
    \end{align}
Talking the square and applying Cauchy–Schwarz inequality yields
\begin{align*}
    \round{    \mathsf{c}_{\leq p}\tran P_G(\Lambda) \hat{e}_J}^2 \leq c_G\sum_{r=1}^{c_G} (\rho+\lambda)^{2r} \round{\mathsf{c}_{\leq p}\tran \Lambda^r \hat{e}_J}^2.
\end{align*}
For each $r = 1,\ldots,c_G$,  we have
\begin{align*}
   \round{\mathsf{c}_{\leq p}\tran \Lambda^r \hat{e}_J}^2 &=  \round{\mathsf{c}_{\leq p}\tran (nD)^{-r} D\inv E_{:L(p), J}}^2\\
   &=   \round{  \sum_{i\in [d] \setminus J}(nD_{J\sqcup \{i\}})^{-r}\mathsf{c}_{J\sqcup \{i\}} u_i+{ (nD_{J\setminus \{i\}})^{-r} d^{-1}D'_{J,J} D_{J\setminus\{i\}}^{-1} \sum_{i\in S}\mathsf{c}_{J\setminus \{i\}} u_i}}^2\\
   &\leq O_d(d^{-2r\round{p+\delta-|J|-1}}) \underbrace{\round{ \sum_{i\in [d] \setminus J}\mathsf{c}_{J\sqcup \{i\}} u_i}^2}_{:=\gI^2} \\
   &~~~~+ O_d(d^{-2r\round{p+\delta-|J|+1}}) \underbrace{\round{ d^{-1}D'_{J,J} D_{J\setminus\{i\}}^{-1}\sum_{i\in J}\mathsf{c}_{J\setminus \{i\}} u_i}^2}_{:=\gJ^2},
\end{align*}
where we use the basic inequality $(a+b)^2 \leq 2(a^2 + b^2)$ in the last inequality.

Now using Young's inequality $ax^2 +by^2 \leq 2a(x+y)^2 + (2a+b)y^2$, we have
\begin{align*}
    &~~~O_d(d^{-2r\round{p+\delta-|J|-1}}) \gI^2 + O_d(d^{-2r\round{p+\delta-|J|+1}}) \gJ^2 \\
    &\leq  O_d(d^{-2r\round{p+\delta-|J|-1}}) (\gI + \gJ)^2 + O_d(d^{-2r\round{p+\delta-|J|+1}}) \gJ^2\\
    &= O_d(d^{-2r\round{p+\delta-|J|-1}})  \round{\mathsf{c}_{\leq p} \hat{e}_J}^2 + O_d(d^{-2r\round{p+\delta-|J|+1}})  O_d(d^{-4}) \cdot  \round{\sum_{i\in J} \mathsf{c}_{J\setminus \{i\}}^2},
\end{align*}
where we apply the bound for $\gJ^2$ from equation~\eqref{eq:j_square} in the last equality.

When $r=1$ and $|J| = p-1$, we have the smallest upper bound which is
\begin{align}
 O_d(d^{-2\delta})  \round{\mathsf{c}_{\leq p} \hat{e}_J}^2 + {O}_d(d^{-8-2\delta})\round{\sum_{i\in J} \mathsf{c}_{J\setminus \{i\}}^2}.
\end{align}
Then we complete the proof.

\subsection{Proof of Proposition~\ref{prop:control_m_p}}\label{proof:control_m_p}

Set
\[
V:=V_{p+1}=|\S_{p+1}|^{-1/2}\Phi_{\S_{p+1}},
\qquad
\gamma:=\frac{|\S_{p+1}|}{n},
\]
and define
\[
Q_M:=\frac1n\,a^\top \Phi_{\S_{p+1}}^\top M \Phi_{\S_{p+1}}\,e,
\qquad M\in\mathcal N_p.
\]
Since
\[
\Phi_{\S_{p+1}}=|\S_{p+1}|^{1/2}V,
\qquad
\Pi_p=\frac1n\Phi_{\le p}\Phi_{\le p}^\top=WW^\top,
\qquad
\Pi_{p,G}=\frac1n\Phi_{\le p}\Delta_G\Phi_{\le p}^\top=W\Delta_GW^\top,
\]
every expanded word contributing to \(Q_M\) has the form
\begin{equation}\label{eq:Mp-word-form}
\gamma\, a^\top V^\top H_0\,W\,\Xi\,W^\top H_1\,V\,e,
\end{equation}
where \(H_0,H_1\) are either \(I\) or one of the words appearing in the expansion
of \(\Delta_H\), and \(\Xi\) is either \(I\) or one of the words appearing in the
expansion of \(\Delta_G\).

As in the low-degree endpoint proof, we decompose the low-degree space by exact degree:
for \(0\le q\le p\),
\[
W_q:=WP_q=n^{-1/2}\Phi_{\mathcal S_q},
\qquad
\Lambda=\sum_{q=0}^p \beta_qP_q,
\qquad
\beta_q:=\frac{d^q}{n}.
\]
Expanding every occurrence of \(B_0,B_1,B_2\), and then resolving every occurrence of
\(W\) and \(W^\top\) into the exact-degree blocks \(W_q,W_q^\top\), we obtain a finite
decomposition
\begin{equation}\label{eq:Mp-decomposition}
Q_M=\sum_{\eta\in\Omega}\gamma\,\alpha_\eta\, a^\top M_\eta e,
\end{equation}
where \(|\Omega|=O_{p,c_H,c_G}(1)\), and for each \(\eta\in\Omega\):

\begin{enumerate}
\item \(\alpha_\eta\) is the product of all deterministic scalar coefficients coming
from the chosen words in \(\Delta_H\) and \(\Delta_G\), together with the factors
\(\beta_q\) contributed by the selected copies of \(B_0\). In particular,
\[
|\alpha_\eta|=O_d(1).
\]

\item \(M_\eta\) is the corresponding normalized matrix chain obtained after removing
those scalar coefficients. Concretely, \(M_\eta\) is a product of matrices chosen from
\[
V^\top,\quad V,\quad W_q,\quad W_q^\top,\quad
\offd(V_jV_j^\top),\quad \offd(W_q^\top W_q),\quad W_q^\top W_{q'},
\]
with endpoint family \(\S_{p+1}\) on both sides.
\end{enumerate}

We claim that every summand in \eqref{eq:Mp-decomposition} satisfies
\begin{equation}\label{eq:Mp-summand-bound}
\E\!\left[
\bigl(\gamma\,\alpha_\eta\, a^\top M_\eta e\bigr)^2
\right]
\le
o_d(1)\,(a^\top e)^2
+
O_d\!\left(n^{-1}\|a\|_2^2\|e\|_2^2\right).
\end{equation}

There are two cases.

\medskip
\noindent\emph{Case 1: \(M_\eta\) contains at least one off-diagonal replacement.}
Let
\[
\mathfrak S^\eta_1,\dots,\mathfrak S^\eta_{m_\eta+1}
\]
be the sequence of families appearing in the chain \(M_\eta\), and let
\(\mathcal V_\eta,\mathcal E_\eta\) be the corresponding sets of sample-space and
feature-space off-diagonal replacements.
Then \(M_\eta\) satisfies the hypotheses of Corollary~\ref{cor:off_diag}.

The crucial point is that \(M_\eta\) always contains at least one internal low-degree
family \(\mathcal S_q\): indeed, every word in \eqref{eq:Mp-word-form} contains the
middle factor \(W\,\Xi\,W^\top\), and after exact-degree expansion this contributes
at least one block \(W_q\) or \(W_q^\top\). Therefore
\[
\kappa_{M_\eta}
:=
\min_{2\le k\le m_\eta}\frac{|\mathfrak S^\eta_k|}{n}
\le
\frac{|\mathcal S_q|}{n}
=
O_d(d^{-2\delta}),
\]
since \(p\) is fixed and \(|\mathcal S_q|\asymp d^q=O_d(d^{-\delta})\) for all \(q\le p\).

Applying Corollary~\ref{cor:off_diag} to \(M_\eta\), we obtain
\begin{equation}\label{eq:Mp-case1-cor}
\E\!\left[(a^\top M_\eta e)^2\right]
\le
O_d\!\bigl(\kappa_{M_\eta}|\mathcal E_\eta|\bigr)\,
(n w^{(1)}w^{(m_\eta+1)})^2 (a^\top e)^2
+
O_d\!\left((n w^{(1)}w^{(m_\eta+1)})^2 n^{-1}\|a\|_2^2\|e\|_2^2\right).
\end{equation}
Here the endpoint family is \(\S_{p+1}\) on both sides, so
\[
w^{(1)}=w^{(m_\eta+1)}=|\S_{p+1}|^{-1/2},
\qquad
(n w^{(1)}w^{(m_\eta+1)})^2
=
\left(\frac{n}{|\S_{p+1}|}\right)^2
=
\gamma^{-2}.
\]
Also \(|\mathcal E_\eta|=O_{p,c_H,c_G}(1)\), since the total number of factors is
uniformly bounded. Hence \eqref{eq:Mp-case1-cor} simplifies to
\[
\E\!\left[(a^\top M_\eta e)^2\right]
\le
O_d(d^{-2\delta})\,\gamma^{-2}(a^\top e)^2
+
O_d\!\left(\gamma^{-2}n^{-1}\|a\|_2^2\|e\|_2^2\right).
\]
Multiplying by \(\gamma^2|\alpha_\eta|^2\), and using \(|\alpha_\eta|=O_d(1)\),
yields \eqref{eq:Mp-summand-bound}.

\medskip
\noindent\emph{Case 2: \(M_\eta\) contains no off-diagonal replacement.}
This can happen only when \(M=\Pi_{p,G}\), \(H_0=H_1=I\), and the chosen word in
\(\Delta_G\) is a pure \(B_0\)-word. Indeed, every occurrence of \(\Delta_H\) contributes
an off-diagonal factor, and every occurrence of \(B_1\) does as well. Thus
\[
\Xi=B_0^r
\qquad\text{for some }1\le r\le c_G,
\]
and \eqref{eq:Mp-decomposition} reduces to
\begin{equation}\label{eq:Mp-pure-B0}
Q_M
=
\gamma\sum_{q=0}^p \alpha_q\, a^\top V^\top W_qW_q^\top V e,
\qquad
\alpha_q=O_d(\beta_q^r).
\end{equation}
For each fixed \(q\), the chain \(V^\top W_qW_q^\top V\) has endpoint family
\(\S_{p+1}\) on both sides and only one low-degree family \(\mathcal D_q\). Therefore
Lemma~\ref{lemma:feature_product_2} gives
\[
\E\!\left[
\bigl(a^\top V^\top W_qW_q^\top V e\bigr)^2
\right]
\le
C_d\,\gamma^{-2}
\Bigl((a^\top e)^2+n^{-1}\|a\|_2^2\|e\|_2^2\Bigr).
\]
Using Cauchy--Schwarz in \eqref{eq:Mp-pure-B0}, we obtain
\[
\E[Q_M^2]
\le
C_p\sum_{q=0}^p |\alpha_q|^2\,\gamma^2
\E\!\left[
\bigl(a^\top V^\top W_qW_q^\top V e\bigr)^2
\right].
\]
Hence we have
\[
\E[Q_M^2]
\le
C_d\left(\sum_{q=0}^p |\alpha_q|^2\right)
\Bigl((a^\top e)^2+n^{-1}\|a\|_2^2\|e\|_2^2\Bigr).
\]
Since \(|\alpha_q|=O_d(\beta_q^r)\), \(r\ge 1\), and
\[
\max_{0\le q\le p}\beta_q=\frac{d^p}{n}=O_d(d^{-\delta})
\]
we have
\[
\sum_{q=0}^p |\alpha_q|^2=O_d(d^{-2\delta}),
\]
so \eqref{eq:Mp-summand-bound} follows in this case as well.

Combining the two cases proves \eqref{eq:Mp-summand-bound}. Finally, since
\(|\Omega|=O_{p,c_H,c_G}(1)\), Cauchy--Schwarz in \eqref{eq:Mp-decomposition} yields,
for every \(M\in\mathcal N_p\),
\[
\E\!\left[
\left(
\frac1n\,a^\top \Phi_{\S_{p+1}}^\top M \Phi_{\S_{p+1}}\,e
\right)^2
\right]
\le
O_d(d^{-2\delta})\,(a^\top e)^2
+
O_d\!\left(n^{-1}\|a\|_2^2\|e\|_2^2\right).
\]
This proves Proposition~\ref{prop:control_m_p}.

\subsection{Proof of Proposition~\ref{cl:less_p}}\label{proof:cl_less_p}
Expanding \(T\) using \eqref{eq:B1}, \eqref{eq:B2}, and the block decomposition of \(B_0\) \eqref{eq:B0},
we may write
\begin{align}\label{eq:b_T_b_inter}
    b_1^\top T b_2
=
\sum_{\eta\in\Omega(T)}
\alpha_\eta\,
\bigl((P_{q_-^\eta}b_1)^\top M_\eta (P_{q_+^\eta}b_2)\bigr),
\end{align}

where \(\Omega(T)\) is a finite set of cardinality \(O_{p,c_H,c_G}(1)\),
each coefficient \(\alpha_\eta\) is deterministic and has the form
\begin{align}\label{eq:alpha_eta_2}
    \alpha_\eta
=
O_d\!\Bigl(\prod_{\ell=1}^s \beta_{u_{\eta,\ell}}\Bigr),
\qquad
u_{\eta,\ell}\in\{0,\dots,p\},
\end{align}
and each \(M_\eta\) is a normalized matrix chain whose families belong to $\mathcal S_0,\ldots,\S_{2p+1}$ 
with the same off-diagonal replacements as in the original word \(T\).
In particular, Lemma~\ref{lemma:feature_product_2} and
Corollary~\ref{cor:off_diag} apply to each \(M_\eta\).

Fix one \(\eta\in\Omega(T)\).

If \(q_-^\eta\neq q_+^\eta\), then the endpoint families differ, and
Lemma~\ref{lemma:feature_product_2} gives
\begin{align}\label{eq:M_eta_noneq}
 \E\!\left[
\bigl((P_{q_-^\eta}b_1)^\top M_\eta (P_{q_+^\eta}b_2)\bigr)^2
\right]
=
O_d\!\left(
n^{-1}\|P_{q_-^\eta}b_1\|_2^2\|P_{q_+^\eta}b_2\|_2^2
\right).   
\end{align}

Now suppose \(q_-^\eta=q_+^\eta=q\).

If every low-degree family occurring in \(M_\eta\) is equal to \(\mathcal S_q\),
then \(M_\eta\) has the same low-degree family at both endpoints and, by assumption,
contains at least one off-diagonal replacement. Hence
Corollary~\ref{cor:off_diag} applies and yields
\[
\E\!\left[
\bigl((P_qb_1)^\top M_\eta (P_qb_2)\bigr)^2
\right]
=
O_d(d^{2(q-p-\delta)})\,\bigl((P_qb_1)^\top(P_qb_2)\bigr)^2
+
O_d\!\left(
n^{-1}\|P_qb_1\|_2^2\|P_qb_2\|_2^2
\right).
\]
If \(M_\eta\) contains another low-degree family \(\mathcal S_{q'}\) with \(q'\neq q\),
then \(M_\eta\) contains at least two distinct families of size \(o(n)\), namely
\(\mathcal S_q\) and \(\mathcal S_{q'}\). Therefore we are in the second case of
Lemma~\ref{lemma:feature_product_2}, and
\[
\E\!\left[
\bigl((P_qb_1)^\top M_\eta (P_qb_2)\bigr)^2
\right]
=
O_d(d^{2(q' \wedge q) - 2p -2\delta})\,\bigl((P_qb_1)^\top(P_qb_2)\bigr)^2
+
O_d\!\left(
n^{-1}\|P_qb_1\|_2^2\|P_qb_2\|_2^2
\right).
\]
Thus, whenever the endpoints agree, we have the uniform bound
\begin{align}\label{eq:M_eta_eq}
 \E\!\left[
\bigl((P_qb_1)^\top M_\eta (P_qb_2)\bigr)^2
\right]
\le
O_d(d^{2(q - p - \delta)})\,\bigl((P_qb_1)^\top(P_qb_2)\bigr)^2
+
O_d\!\left(
n^{-1}\|P_qb_1\|_2^2\|P_qb_2\|_2^2
\right).   
\end{align}
It remains to account for the coefficient \(\alpha_\eta\).
If the endpoints agree and are equal to \(q\), let \(t_\eta(q)\in\{0,\dots,s\}\)
denote the number of copies of \(B_0\) that act on the degree-\(q\) block in the
summand indexed by \(\eta\). Then, by Eq.~\eqref{eq:alpha_eta_2}, we have
\begin{align}\label{eq:alpha_eta_est}
 |\alpha_\eta|^2
\lesssim_d
\beta_q^{2t_\eta(q)}\,\beta_p^{\,2(s-t_\eta(q))}.  
\end{align}
Since \(\beta_p=d^p/n=o_d(1)\), the factor \(\beta_p^{2(s-t_\eta(q))}\) is
\(o_d(1)\) whenever \(t_\eta(q)<s\), while if \(t_\eta(q)=s\) it is simply \(1\).
Combining \eqref{eq:M_eta_eq} with \eqref{eq:alpha_eta_est}, and using \eqref{eq:M_eta_noneq} when the endpoints differ,
we obtain
\[
\E\!\left[
\bigl(\alpha_\eta (P_{q_-^\eta}b_1)^\top M_\eta (P_{q_+^\eta}b_2)\bigr)^2
\right]
\le
\sum_{t=0}^s
O_d(d^{-2\delta})\sum_{q=0}^p
\beta_q^{2t}\bigl((P_qb_1)^\top(P_qb_2)\bigr)^2
+
O_d\!\left(n^{-1}\|b_1\|_2^2\|b_2\|_2^2\right).
\]
This is exactly
\[
\E\!\left[
\bigl(\alpha_\eta (P_{q_-^\eta}b_1)^\top M_\eta (P_{q_+^\eta}b_2)\bigr)^2
\right]
\le
\sum_{t=0}^s O_d(d^{-2\delta})\,\mathcal B_t(b_1,b_2)
+
O_d\!\left(n^{-1}\|b_1\|_2^2\|b_2\|_2^2\right).
\]
Since \(|\Omega(T)|=O_{p,c_H,c_G}(1)\), Cauchy--Schwarz in \eqref{eq:b_T_b_inter} proves \eqref{eq:b_t_p}.

\subsection{Proof of Claim~\ref{cl:b_t}}\label{proof:cl_b_t}
Since $(nD)_{\S_q}^{-1}$ is a diagonal matrix with each entry of the order $O_d(d^{q-p-\delta}) =o_d(1)$ for $q \leq p$, expanding $\gB_t$ gives
    \begin{align}\label{eq:bt_est}
        \sum_{t=0}^{c_G}\gB_t(\mathsf{c}_{\leq p},\hat{e}_J) &= \sum_{t=0}^{c_G}\sum_{q=0}^p \inner{\mathsf{c}_{\S_q},(nD_{\S_q})^{-2t}D_{\S_q}\inv E_{\S_q,J}}^2\notag\\
        &=(1+o_d(1))  \sum_{q=0}^p \inner{\mathsf{c}_{\S_q},D_{\S_q}\inv E_{\S_q,J}}^2.
    \end{align}
Plugging the expression of $D_{\S_q}\inv E_{\S_q,J}$ namely \eqref{eq:D_inv_E} into the above equation gives
\begin{align*}
   \sum_{q=0}^p \inner{\mathsf{c}_{\S_q},D_{\S_q}\inv E_{\S_q,J}}^2 =  \underbrace{\round{ \sum_{i\in [d] \setminus J}\mathsf{c}_{J\sqcup \{i\}} u_i}^2}_{:=\gI^2} +\underbrace{\round{ d^{-1}D'_{J,J} D_{J\setminus\{i\}}^{-1} \sum_{i\in J}\mathsf{c}_{J\setminus \{i\}} u_i}^2}_{:=\gJ^2}.
\end{align*}
By Cauchy-Schwarz, we obtain
\begin{align}\label{eq:j_square}
   \gJ^2&\leq d^{-2}\cdot (D'_{J,J} D_{J\setminus\{i\}}^{-1})^2 \cdot (\sum_{i\in J} u_i^2) \round{\sum_{i\in J} \mathsf{c}_{J\setminus \{i\}}^2}  \\
   &= O_d(d^{-4}) \cdot  \round{\sum_{i\in J} \mathsf{c}_{J\setminus \{i\}}^2}.
\end{align}
thus by Young's inequality we have
\begin{align*}
    \gI^2 + \gJ^2 \leq 2(\gI+\gJ)^2 + 3 \J^2 = 2\round{\mathsf{c}_{\leq p} \hat{e}_J}^2 +  O_d(d^{-4}) \cdot  \round{\sum_{i\in J} \mathsf{c}_{J\setminus \{i\}}^2}.
\end{align*}
Combining it with Eq.~\eqref{eq:bt_est} completes the proof.

\section{Product of random Fourier-Walsh matrices}

In this section, we state and prove a key lemma for our main results. The proof is adapted from~\cite{zhu2025spectral}.

\begin{lemma}\label{lemma:feature_product_2}
Fix families \(\S_1,\ldots,\S_{m+1}\subseteq 2^{[d]}\), and let
\[
M:=A_1^\top A_2A_2^\top \cdots A_mA_m^\top A_{m+1},
\qquad
A_i:=w^{(i)}X_{\S_i},
\]
where
\[
w^{(i)}=n^{-1/2}\wedge |\S_i|^{-1/2}.
\]
Assume the following:
\begin{enumerate}
    \item[\rm (i)] every \(S\in \S_i\) satisfies \(|S|\le p_i\);
    \item[\rm (ii)] if \(\S_i\cap \S_j\neq\varnothing\), then \(\S_i=\S_j\).
\end{enumerate}
Let \(b_1\in\R^{|\S_1|}\), \(b_2\in\R^{|\S_{m+1}|}\), and set
\[
\kappa:=(n\,w^{(1)}w^{(m+1)})^2.
\]

If \(\S_1\neq \S_{m+1}\), then the following estimate for second-moment holds:
\[
\E\!\left[\bigl(b_1^\top M b_2\bigr)^2\right]
=
\kappa\cdot O_d\!\left(n^{-1}\|b_1\|_2^2\|b_2\|_2^2\right).
\]

Assume now that \(\S_1=\S_{m+1}\). If every distinct internal family
\(\S_k\neq \S_1\), \(2\le k\le m\), satisfies \(n=O(|\S_k|)\), then the
second-moment expansion
\begin{align}\label{eq:second_moment_1}
\E\!\left[\bigl(b_1^\top M b_2\bigr)^2\right]
=
\kappa\left[
(b_1^\top b_2)^2
+
O_d\!\left(n^{-1}\|b_1\|_2^2\|b_2\|_2^2\right)
\right]
\end{align}
holds, and the estimate for first-moment expansion also holds:
\begin{align}\label{eq:first_moment_1}
\E\!\left[b_1^\top M b_2\right]
=
\sqrt{\kappa}\left[
b_1^\top b_2
+
O_d\!\left(n^{-\tfrac{1}{2}}\|b_1\|_2\|b_2\|_2\right)
\right].
\end{align}
In the complementary case, which necessarily has \(m\ge 2\), the following estimates hold:
\begin{align}\label{eq:second_moment_2}
\E\!\left[\bigl(b_1^\top M b_2\bigr)^2\right]
=
\kappa\left[
O_d\!\left(\min_{2\le k\le m}\frac{|\S_k|^2}{n^2}\right)(b_1^\top b_2)^2
+
O_d\!\left(n^{-1}\|b_1\|_2^2\|b_2\|_2^2\right)
\right]
\end{align}
and
\begin{align}\label{eq:first_moment_2}
\E\!\left[b_1^\top M b_2\right]
=
\sqrt{\kappa}\left[
O_d\!\left(\min_{2\le k\le m}\frac{|\S_k|}{n}\right)
(b_1^\top b_2)
+
O_d\!\left(n^{-\tfrac{1}{2}}\|b_1\|_2\|b_2\|_2\right)
\right].
\end{align}
\end{lemma}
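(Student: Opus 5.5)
The plan is to expand $b_1^\top M b_2$ fully as a multi-index sum over samples and Walsh characters, and then compute moments by a combinatorial (graph) count that uses only the orthogonality of Walsh characters on the hypercube. Writing $A_i=w^{(i)}X_{\S_i}$ with entries $x^{(a)}_{S}$, we obtain
\[
b_1^\top M b_2=\Bigl(\prod_i w^{(i)}\Bigr)\sum_{a_1,\dots,a_m\in[n]}\ \sum_{S_1\in\S_1,\dots,S_{m+1}\in\S_{m+1}}(b_1)_{S_1}(b_2)_{S_{m+1}}\prod_{k=1}^{m}x^{(a_k)}_{S_k}x^{(a_k)}_{S_{k+1}}.
\]
Each sample index $a_k$ links consecutive families $S_k,S_{k+1}$. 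The expectation of a product of characters on a single sample is $\mathbf 1\{\text{symmetric difference of the sets on that sample is empty}\}$. By assumption (ii) the families are either equal or disjoint, so exact cancellation forces identification of sets within equal families.

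For the first moment, if all $a_k$ are distinct, each sample carries exactly two characters. These cancel only if $S_k=S_{k+1}$ as sets, which requires equal families, so the whole chain collapses to $S_1=\dots=S_{m+1}$. This is possible only if all families coincide with $\S_1$, and it gives $(b_1^\top b_2)\,n^m\prod_i w^{(i)}$ times combinatorial factors. Tracking the weights $w^{(i)}=n^{-1/2}\wedge|\S_i|^{-1/2}$ shows this term equals $\sqrt\kappa\, b_1^\top b_2$ when the internal families have size $\gtrsim n$. If instead some internal family $\S_k\neq\S_1$ is small, the collapse must route through a coincidence of samples, and this costs a factor $|\S_k|/n$; that is the source of \eqref{eq:first_moment_2}.

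Terms with coincident sample indices are counted by partitions of $\{a_1,\dots,a_m\}$. Each merge loses a factor $n$ and gains at most the size of the freed set sums, with the normalizations chosen so that the net effect is $O(n^{-1/2})$ relative to $\sqrt\kappa$. Applying Cauchy--Schwarz on the remaining free set indices, together with the bounded degrees $p_i$ (so that each character is determined by at most $C$ coordinate choices), yields the error $\|b_1\|\|b_2\|$.

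For the second moment we duplicate the chain with samples $a_k,a'_k$ and sets $S_k,S'_k$, and classify by the pattern of sample coincidences within and across the two copies. The dominant configuration keeps all $2m$ samples distinct and each copy collapsing on its own, which gives the square of the first-moment main term. Any cross-identification between the two copies creates a \emph{loop}. When the endpoint families differ, every nonzero term must contain such a loop, and that loop costs $n^{-1}$, giving the first claim. In the equal-endpoint case the loop terms contribute $\kappa\,O(n^{-1})\|b_1\|^2\|b_2\|^2$. The step I expect to be the main obstacle is the uniform bookkeeping of these pairings: one must show that every non-leading pairing pattern loses at least a factor $n^{-1}$, or else $\min_k|\S_k|^2/n^2$ when a small internal family forces a collision, while the weight of the free sums stays bounded by $\|b_1\|^2\|b_2\|^2$. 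I would first handle this by induction on $m$, peeling off one factor $A_kA_k^\top$ at a time. Following the approach of \cite{zhu2025spectral}, I would bound $\|A_kA_k^\top\|$ by Lemma~\ref{lem:norm_control} and Theorems~\ref{lemma:phi_id_2}--\ref{lemma:gram_matrix}, and control the remaining summed case counts with hypercontractivity \eqref{eqn:hypercontrac}.
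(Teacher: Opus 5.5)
\textbf{The proposal has a genuine gap: it misidentifies the leading configuration.} Your expansion over sample indices and their coincidence partitions is the framework the paper uses. However, your accounting of which partitions are leading is wrong whenever a family has size at least $n$, and the lemma is largely about such families: its first regime requires every distinct internal family to satisfy $n=O(|\S_k|)$.

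First, the prefactor is $w^{(1)}w^{(m+1)}\prod_{k=2}^{m}(w^{(k)})^2$, not $\prod_i w^{(i)}$, because each internal $A_k$ appears twice. With the correct weights, an adjacent coincidence $a_{k-1}=a_k$ replaces the Gram factor by its diagonal entry $(A_kA_k^\top)_{a,a}=(w^{(k)})^2|\S_k|=1\wedge|\S_k|/n$. This costs nothing when $|\S_k|\ge n$. By contrast, keeping the samples distinct and summing out a free sample label costs $(n+O(1))(w^{(k)})^2=(1+O(n^{-1}))(1\wedge n/|\S_k|)$, which is a loss for large families.

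So two of your claims are backwards: that ``each merge loses a factor $n$'' with net effect $O(n^{-1/2})$ relative to $\sqrt\kappa$, and that the all-distinct configuration yields $\sqrt\kappa\,b_1^\top b_2$ when internal families have size $\gtrsim n$. Two concrete instances show this.
\begin{itemize}
\item If every family equals $\S_1$, $|\S_1|\gg n$ and $m\ge 2$, the all-distinct term is $\sqrt\kappa\,(n/|\S_1|)^{m-1}(1+O(n^{-1}))\,b_1^\top b_2$. The term $\sqrt\kappa\,b_1^\top b_2$ comes instead from $a_1=\cdots=a_m$.
\item Take $m=2$, $\S_1=\S_3\neq\S_2$ and $|\S_1|\le n\le|\S_2|$, which is the first regime with $\kappa=1$. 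As you note, the all-distinct term vanishes identically. The entire answer $\E[b_1^\top Mb_2]=b_1^\top b_2$ comes from $a_1=a_2$, which your scheme discards as an error.
\end{itemize}
The same flaw carries over to your second-moment claim that the dominant configuration keeps all $2m$ samples distinct.

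The missing idea is the paper's two-type collapse bookkeeping, iterated until neither move applies:
\begin{itemize}
\item a \emph{feature collapse}: adjacent equal positions give the scalar $1\wedge|\S_u|/n$;
\item a \emph{sample collapse}: a singleton block gives the scalar $(n+O(1))w^{(u)}w^{(v)}\mathbf 1_{\S_u=\S_v}$.
\end{itemize}
The main term is then the fully collapsed, loss-free configuration, which uses feature collapses on large families and sample collapses on small ones. Every lossy step costs $\min\{|\T|/n,\,n/|\T|\}$. The factor $|\S_k|^2/n^2$ in the complementary regime arises because a small internal family $\T\neq\S_1$ can never be eliminated by a sample collapse against the endpoint family (the indicator $\mathbf 1_{\S_u=\S_v}$ vanishes). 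This forces at least one feature collapse on $\T$ in each copy of the chain. Non-fully-collapsed residuals have no singleton blocks, so $q\ge 2r$, and the count gives $n^{r-q}\le n^{-1}$.

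Your fallback of peeling off $A_kA_k^\top$ using Lemma~\ref{lem:norm_control} and Theorems~\ref{lemma:phi_id_2}--\ref{lemma:gram_matrix} cannot substitute for this, for two reasons. Those are high-probability operator-norm statements with errors of order $d^{-\delta'/2}$ or $d^{-(1-\delta)/2}$, not second-moment identities with $O(n^{-1})$ relative error. And an operator-norm bound $\|A_kA_k^\top\|_{\op}=O(1)$ for a small family can never produce the factor $|\S_k|/n$ in \eqref{eq:first_moment_2}, which is a diagonal (trace-type) effect rather than a norm effect.
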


\begin{proof}[Proof of Lemma~\ref{lemma:feature_product_2}]
We first prove the second-moment estimates. The first-moment estimates follow
from the same collapse procedure with one copy of the chain instead of two; the
necessary modifications are given at the end of the proof.

For readability, introduce the random variable
\[
\mathbf S
:=
\round{
b_1^\top A_1^\top (A_2A_2^\top)\cdots
(A_mA_m^\top)A_{m+1}b_2
}^2 .
\]
Expanding the square gives the identity
\[
\E[\mathbf S]
=
\sum_{i_1,\ldots,i_{2m}\in[n]}
\E\Biggl[
\begin{aligned}[t]
&(A_1b_1)_{i_1}
(A_2A_2^\top)_{i_1,i_2}
\cdots
(A_mA_m^\top)_{i_{m-1},i_m}
(A_{m+1}b_2)_{i_m} \\
&\quad{}\times
(A_1b_1)_{i_{m+1}}
(A_2A_2^\top)_{i_{m+1},i_{m+2}}
\cdots
(A_mA_m^\top)_{i_{2m-1},i_{2m}}
(A_{m+1}b_2)_{i_{2m}}
\end{aligned}
\Biggr].
\]

For each partition \(\pi\in\Pi_{2m}\), define the exact equality class
\[
\mathcal I_\pi
:=
\{(i_1,\ldots,i_{2m})\in[n]^{2m}:
i_s=i_t \text{ if and only if } s\sim_\pi t\}.
\]
The resulting partition decomposition is
\[
\E[\mathbf S]
=
\sum_{\pi\in\Pi_{2m}}\E[\mathbf S_\pi],
\]
where the contribution associated with \(\pi\) is defined by
\[
\E[\mathbf S_\pi]
:=
\sum_{(i_1,\ldots,i_{2m})\in\mathcal I_\pi}
\E\Biggl[
\begin{aligned}[t]
&(A_1b_1)_{i_1}
(A_2A_2^\top)_{i_1,i_2}
\cdots
(A_mA_m^\top)_{i_{m-1},i_m}
(A_{m+1}b_2)_{i_m} \\
&\quad{}\times
(A_1b_1)_{i_{m+1}}
(A_2A_2^\top)_{i_{m+1},i_{m+2}}
\cdots
(A_mA_m^\top)_{i_{2m-1},i_{2m}}
(A_{m+1}b_2)_{i_{2m}}
\end{aligned}
\Biggr].
\]
We now fix \(\pi\in\Pi_{2m}\) and estimate \(\E[\mathbf S_\pi]\).

At each stage of the reduction, the surviving sample positions are relabeled
from left to right as \(i_1,\ldots,i_q\). The current product consists of
factors of the form
\[
(A_1b_1)_{i_r},\qquad
(A_uA_u^\top)_{i_r,i_{r+1}},\qquad
(A_{m+1}b_2)_{i_r},
\]
with family labels inherited from the unreduced product. We repeatedly use the
following two reductions.

First, suppose that the current partition forces two adjacent sample positions
to be equal, say \(i_r=i_{r+1}\), and that the factor between them is
\((A_uA_u^\top)_{i_r,i_{r+1}}\). The diagonal identity gives the scalar value
\[
(A_uA_u^\top)_{i_r,i_r}
=
(w^{(u)})^2|\S_u|
=
1\wedge\frac{|\S_u|}{n}.
\]
Thus the Gram factor is replaced by a scalar, which we denote by
\[
C_F(u):=(w^{(u)})^2|\S_u|
=
1\wedge\frac{|\S_u|}{n}.
\]
One of the two equal sample variables is then deleted. We call this operation a
\emph{feature collapse}.

Second, suppose that \(\{r\}\) is a singleton block of the current partition.
For fixed values of the other current block labels, the label \(i_r\) has
\(n+O_m(1)\) admissible choices. Define the endpoint pairing by
\[
\langle b_1,b_2\rangle_\cap
:=
\sum_{S\in\S_1\cap\S_{m+1}}(b_1)_S(b_2)_S .
\]
Also set
\[
C_S(u,v)
:=
(n+O_m(1))w^{(u)}w^{(v)}\mathbbm 1_{\S_u=\S_v}.
\]
Independence across samples and orthogonality of distinct feature families give
the following local replacements:
\[
\begin{aligned}
(A_uA_u^\top)_{i_{r-1},i_r}
(A_vA_v^\top)_{i_r,i_{r+1}}
&\rightsquigarrow
C_S(u,v)(A_uA_u^\top)_{i_{r-1},i_{r+1}},\\
(A_1b_1)_{i_r}
(A_vA_v^\top)_{i_r,i_{r+1}}
&\rightsquigarrow
C_S(1,v)(A_1b_1)_{i_{r+1}},\\
(A_uA_u^\top)_{i_{r-1},i_r}
(A_{m+1}b_2)_{i_r}
&\rightsquigarrow
C_S(u,m+1)(A_{m+1}b_2)_{i_{r-1}},\\
(A_1b_1)_{i_r}
(A_{m+1}b_2)_{i_r}
&\rightsquigarrow
C_S(1,m+1)\langle b_1,b_2\rangle_\cap .
\end{aligned}
\]
Here \(\rightsquigarrow\) means that the contribution is unchanged after
summing over the singleton sample label and deleting that variable. In the
first line, the family label on the remaining Gram factor is immaterial because
the factor \(\mathbbm 1_{\S_u=\S_v}\) forces \(\S_u=\S_v\). We call these
operations \emph{sample collapses}.

We iterate the two reductions until neither applies. Let \(\pi^\sharp\) be the
final partition, and let \(c_\pi\) be the product of all scalar coefficients
generated during the reduction. With the convention that any endpoint pairing
\(\langle b_1,b_2\rangle_\cap\) produced before the procedure stops is kept as
an external scalar factor, the collapse procedure gives the relation
\begin{equation}
\label{eq:feature_product_reduced_contribution}
\E[\mathbf S_\pi]
=
\begin{cases}
c_\pi\,\langle b_1,b_2\rangle_\cap^2,
& |\pi^\sharp|=0,\\[2mm]
c_\pi\,\E[\mathbf S_{\pi^\sharp}],
& |\pi^\sharp|\ge 1.
\end{cases}
\end{equation}
All feature-collapse and sample-collapse coefficients have absolute value at
most one, so the coefficient \(c_\pi\) satisfies
\[
0\le c_\pi\le 1.
\]

We now record the coefficient estimate for fully collapsed partitions. The proof appears in Appendix~\ref{proof:c_pi_estimate}.

\begin{proposition}[Fully collapsed coefficient]\label{prop:c_pi_estimate}
Assume \(\S_1=\S_{m+1}\). Fix \(\pi\in\Pi_{2m}\) such that
\(|\pi^\sharp|=0\) and \(c_\pi>0\). For a distinct family \(\T\), write
\[
w_\T:=n^{-1/2}\wedge |\T|^{-1/2}.
\]
Let \(N_F(\T)\) be the number of feature collapses performed on \(\T\), and let
\(N_S(\T)\) be the number of nonterminal sample collapses performed on \(\T\),
excluding the two terminal endpoint collapses that produce the two factors
\(\langle b_1,b_2\rangle_\cap\). Then the coefficient admits the factorization
\[
\kappa^{-1}c_\pi
=
\bigl(1+O_m(n^{-1})\bigr)
\prod_{\T}
\round{1\wedge\frac{|\T|}{n}}^{N_F(\T)}
\round{1\wedge\frac{n}{|\T|}}^{N_S(\T)}.
\]
In particular, if no collapse produces a loss, then the coefficient satisfies
\[
c_\pi
=
\kappa\bigl(1+O_m(n^{-1})\bigr).
\]
Moreover, if at least one collapse on a family \(\T\) produces a loss, then
\[
c_\pi
=
\kappa\cdot
O\!\left(
\min\left\{\frac{|\T|}{n},\frac{n}{|\T|}\right\}
\right).
\]
Finally, if \(\T\neq\S_1\) is a distinct internal family, say
\(\T=\S_k\) for some \(2\le k\le m\), and \(|\T|<n\), then every fully
collapsed second-moment contribution satisfies
\[
c_\pi
=
\kappa\cdot O\!\left(\frac{|\T|^2}{n^2}\right).
\]
\end{proposition}

We next bound the residual terms, namely the terms for which the collapse
procedure stops before the entire product disappears. The proof appears in Appendix~\ref{proof:pi_sharp_small}.

\begin{proposition}\label{prop:pi_sharp_small}
Fix \(\pi\in\Pi_{2m}\), and suppose that the recursive collapse procedure
stops at a final partition \(\pi^\sharp\) with \(|\pi^\sharp|\ge 1\). Then the following estimate for  residual holds:
\[
\bigl|\E[\mathbf S_{\pi^\sharp}]\bigr|
=
O_d(n^{-1})\|b_1\|_2^2\|b_2\|_2^2
\]
\end{proposition}

We now assemble the second-moment estimates. The bound
\(\kappa^{-1}=O_d(1)\) follows from
\(|\S_1|,|\S_{m+1}|\le 2^d\). Proposition~\ref{prop:pi_sharp_small}, together
with \(0\le c_\pi\le 1\), implies that every non-fully-collapsed partition
contributes a term satisfying
\begin{equation}
\label{eq:feature_product_nonfull_second_bound}
c_\pi\,\E[\mathbf S_{\pi^\sharp}]
=
\kappa\cdot
O_d\!\left(
n^{-1}\|b_1\|_2^2\|b_2\|_2^2
\right).
\end{equation}

If \(\S_1\neq\S_{m+1}\), then the endpoint pairing vanishes:
\[
\langle b_1,b_2\rangle_\cap=0.
\]
Thus every fully collapsed contribution vanishes. Since the number of
partitions of \([2m]\) depends only on \(m\), the non-fully-collapsed bound in
\eqref{eq:feature_product_nonfull_second_bound} gives
\[
\E\!\left[
\bigl(b_1^\top M b_2\bigr)^2
\right]
=
\kappa\cdot
O_d\!\left(
n^{-1}\|b_1\|_2^2\|b_2\|_2^2
\right).
\]

Assume now that \(\S_1=\S_{m+1}\) holds. The endpoint pairing is then
\[
\langle b_1,b_2\rangle_\cap
=
b_1^\top b_2.
\]
Suppose first that every distinct internal family \(\S_k\neq\S_1\),
\(2\le k\le m\), satisfies \(n=O(|\S_k|)\). The fully collapsed contribution in
which all collapses are loss-free has coefficient
\[
\kappa\bigl(1+O_m(n^{-1})\bigr).
\]
Every other fully collapsed contribution contains at least one lossy collapse.
If the loss occurs on a family \(\T\) with \(|\T|<n\), then the loss
\(|\T|/n\) is \(O_d(n^{-1})\). If the loss occurs on a distinct internal family
\(\T\neq\S_1\) with \(|\T|>n\), then the loss is \(n/|\T|\); the assumption
\(n=O(|\T|)\), together with \(|\T|\le 2^d\), implies that this loss is also
\(O_d(n^{-1})\). Hence all fully collapsed contributions except the loss-free
one are absorbed into the \(O_d(n^{-1})\) error. The fully collapsed terms
therefore satisfy
\[
\sum_{\substack{\pi\in\Pi_{2m}\\ |\pi^\sharp|=0}}
\E[\mathbf S_\pi]
=
\kappa\left[
(b_1^\top b_2)^2
+
O_d\!\left(
n^{-1}\|b_1\|_2^2\|b_2\|_2^2
\right)
\right].
\]
Combining this estimate with
\eqref{eq:feature_product_nonfull_second_bound} gives
\[
\E\!\left[
\bigl(b_1^\top M b_2\bigr)^2
\right]
=
\kappa\left[
(b_1^\top b_2)^2
+
O_d\!\left(
n^{-1}\|b_1\|_2^2\|b_2\|_2^2
\right)
\right],
\]
which proves \eqref{eq:second_moment_1}.

It remains to consider the complementary regime. In this case, there is a
distinct internal family \(\T=\S_k\neq\S_1\), \(2\le k\le m\), with
\(|\T|<n\) along the relevant range. Define
\[
\rho_2
:=
\min_{\substack{2\le k\le m\\ \S_k\neq\S_1,\ |\S_k|<n}}
\frac{|\S_k|^2}{n^2}.
\]
By Proposition~\ref{prop:c_pi_estimate}, every fully collapsed second-moment
contribution must incur two feature-collapse losses from such an internal
family. This gives the bound
\[
\sum_{\substack{\pi\in\Pi_{2m}\\ |\pi^\sharp|=0}}
\E[\mathbf S_\pi]
=
\kappa\cdot
O_d(\rho_2)(b_1^\top b_2)^2.
\]
Since all families are nonempty and have size at most \(2^d\), replacing
\(\rho_2\) by \(\min_{2\le k\le m}|\S_k|^2/n^2\) changes only the
\(O_d(\cdot)\) constant. After adding the non-fully-collapsed contribution from
\eqref{eq:feature_product_nonfull_second_bound}, we obtain
\[
\E\!\left[
\bigl(b_1^\top M b_2\bigr)^2
\right]
=
\kappa\left[
O_d\!\left(
\min_{2\le k\le m}\frac{|\S_k|^2}{n^2}
\right)(b_1^\top b_2)^2
+
O_d\!\left(
n^{-1}\|b_1\|_2^2\|b_2\|_2^2
\right)
\right],
\]
which proves \eqref{eq:second_moment_2}.

We finally prove the first-moment estimates. Set
\[
\mathbf T:=b_1^\top M b_2.
\]
The partition decomposition is now taken over \([m]\), because there is only
one copy of the chain. The same feature and sample collapses apply. A fully
collapsed first-moment term has one terminal endpoint collapse instead of two,
so the terminal coefficient has the form
\[
(n+O_m(1))w^{(1)}w^{(m+1)}
=
\sqrt{\kappa}\bigl(1+O_m(n^{-1})\bigr).
\]
The analogue of Proposition~\ref{prop:pi_sharp_small}, proved by the same
incidence-graph argument with one chain instead of two, gives the stronger
residual estimate
\[
\bigl|\E[\mathbf T_{\pi^\sharp}]\bigr|
=
O_d(n^{-1})\|b_1\|_2\|b_2\|_2
\]
for every non-fully-collapsed first-moment residual. Since
\(\kappa^{-1/2}=O_d(1)\), the weaker estimate needed below also holds:
\begin{equation}
\label{eq:feature_product_nonfull_first_bound}
\bigl|\E[\mathbf T_{\pi^\sharp}]\bigr|
=
\sqrt{\kappa}\,
O_d(n^{-1/2})\|b_1\|_2\|b_2\|_2.
\end{equation}

Assume first that every distinct internal family
\(\S_k\neq\S_1\), \(2\le k\le m\), satisfies \(n=O(|\S_k|)\). The fully
collapsed loss-free contribution has value
\[
\sqrt{\kappa}\,
\bigl(1+O_m(n^{-1})\bigr)b_1^\top b_2.
\]
As in the second-moment argument, all other fully collapsed contributions are
absorbed into the \(O_d(n^{-1})\) error. The Cauchy--Schwarz inequality gives
the auxiliary bound
\[
n^{-1}|b_1^\top b_2|
\le
n^{-1/2}\|b_1\|_2\|b_2\|_2.
\]
Combining the fully collapsed estimate with
\eqref{eq:feature_product_nonfull_first_bound} gives
\[
\E\!\left[
b_1^\top M b_2
\right]
=
\sqrt{\kappa}\left[
b_1^\top b_2
+
O_d\!\left(
n^{-1/2}\|b_1\|_2\|b_2\|_2
\right)
\right],
\]
which proves \eqref{eq:first_moment_1}.

In the complementary regime, define
\[
\rho_1
:=
\min_{\substack{2\le k\le m\\ \S_k\neq\S_1,\ |\S_k|<n}}
\frac{|\S_k|}{n}.
\]
A fully collapsed first-moment contribution must incur one feature-collapse
loss from a distinct internal family with size smaller than \(n\). This gives
the fully collapsed bound
\[
\sum_{\substack{\pi\in\Pi_m\\ |\pi^\sharp|=0}}
\E[\mathbf T_\pi]
=
\sqrt{\kappa}\,
O_d(\rho_1)(b_1^\top b_2).
\]
Again, because all family sizes lie between \(1\) and \(2^d\), replacing
\(\rho_1\) by \(\min_{2\le k\le m}|\S_k|/n\) changes only the
\(O_d(\cdot)\) constant. Adding the non-fully-collapsed first-moment
contribution from \eqref{eq:feature_product_nonfull_first_bound} yields
\[
\E\!\left[
b_1^\top M b_2
\right]
=
\sqrt{\kappa}\left[
O_d\!\left(
\min_{2\le k\le m}\frac{|\S_k|}{n}
\right)(b_1^\top b_2)
+
O_d\!\left(
n^{-1/2}\|b_1\|_2\|b_2\|_2
\right)
\right],
\]
which proves \eqref{eq:first_moment_2}.
\end{proof}

\begin{corollary}\label{cor:off_diag}
Fix families \(\S_1,\ldots,\S_{m+1}\subseteq 2^{[d]}\), and define
\[
M:=A_1^\top A_2A_2^\top\cdots A_mA_m^\top A_{m+1},
\qquad
A_i:=w^{(i)}X_{\S_i},
\]
where the weights are given by
\[
w^{(i)}:=n^{-1/2}\wedge |\S_i|^{-1/2}.
\]
Assume that the families \(\S_1,\ldots,\S_{m+1}\) satisfy the hypotheses of
Lemma~\ref{lemma:feature_product_2}. Assume moreover that the endpoint
families agree, namely,
\[
\S_1=\S_{m+1}.
\]
Let \(b_1,b_2\in\R^{|\S_1|}\) and let \(\mathcal V,\mathcal E\subseteq\{2,\ldots,m\}\) be such that
\[
\mathcal V\cup\mathcal E\neq\emptyset,
\qquad
\mathcal V\cap\bigl(\mathcal E\cup(\mathcal E+1)\bigr)=\emptyset,
\qquad
\mathcal E+1:=\{j+1:j\in\mathcal E\}.
\]
Assume also that the adjacent matrices agree along \(\mathcal E\), that is,
\[
A_j=A_{j+1}\qquad\text{for every }j\in\mathcal E.
\]
Finally, assume the size conditions
\[
n\le |\S_j|\quad\text{for every }j\in\mathcal V,
\qquad
|\S_j|\le n\quad\text{for every }j\in\mathcal E.
\]

Define \(M_{\mathcal V,\mathcal E}\) to be the matrix obtained from \(M\) by
the following replacements. For each \(j\in\mathcal V\), replace
\[
A_jA_j^\top
\quad\text{by}\quad
\offd(A_jA_j^\top).
\]
Also, for each \(j\in\mathcal E\), replace
\[
A_j^\top A_{j+1}
\quad\text{by}\quad
\offd(A_j^\top A_{j+1}).
\]
Set
\[
\kappa:=(n\,w^{(1)}w^{(m+1)})^2.
\]

Let \(\mathfrak B\) denote the set of distinct internal families
\(\T\neq\S_1\) for which the first-regime condition \(n=O(|\T|)\) fails.
Equivalently, \(\mathfrak B\) is the set of distinct internal families that
force the ``otherwise'' case of Lemma~\ref{lemma:feature_product_2}. Define
\[
\rho
:=
\begin{cases}
0, & \mathfrak B=\emptyset,\\[1mm]
\displaystyle \min_{\T\in\mathfrak B}\frac{|\T|}{n},
& \mathfrak B\neq\emptyset.
\end{cases}
\]
Then
\[
\E\!\left[
\bigl(b_1^\top M_{\mathcal V,\mathcal E}b_2\bigr)^2
\right]
=
\kappa\cdot O_d(\rho^2)(b_1^\top b_2)^2
+
\kappa\cdot O_d\!\left(
n^{-1}\|b_1\|_2^2\|b_2\|_2^2
\right).
\]
\end{corollary}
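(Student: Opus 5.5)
We prove Corollary~\ref{cor:off_diag} by rerunning the partition-and-collapse argument of Lemma~\ref{lemma:feature_product_2} on the modified chain. The main step is to check that an off-diagonal replacement can only remove or shrink collapse terms, never create new ones. Writing $\offd(B)=B-\Diag(B)$, expanding each replaced factor, and invoking Lemma~\ref{lemma:feature_product_2} term by term does not work directly: the diagonal pieces are not chains of the required form, and the subtraction is meant to cancel the leading term, so the argument has to be run inside the collapse procedure.

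\textbf{Step 1: expansion.} Expand $\E[(b_1^\top M_{\mathcal V,\mathcal E}b_2)^2]$ over sample indices $i_1,\dots,i_{2m}$, exactly as in the proof of Lemma~\ref{lemma:feature_product_2}. For $j\in\mathcal V$, the replacement $\offd(A_jA_j^\top)$ inserts the constraint $i_{j-1}\neq i_j$ in both copies of the chain. In partition language, the partitions $\pi\in\Pi_{2m}$ that would merge these positions are deleted. So every surviving $\pi$ forbids the feature collapse on the family $\S_j$ at that slot.

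For $j\in\mathcal E$, the replacement $\offd(A_j^\top A_{j+1})$ acts in feature space. It removes the diagonal feature pairing at that junction, which is exactly the term $S=S'$ that a sample collapse through position $i_j$ would produce via the local rule $C_S(j,j+1)$. After this modification, the sample collapse at that junction contributes only pairs $S\neq S'$. Such pairs have zero expectation unless they are matched elsewhere in the chain, so they either vanish or are pushed into residual (non-fully-collapsed) terms. The disjointness condition $\mathcal V\cap(\mathcal E\cup(\mathcal E+1))=\emptyset$ ensures the two kinds of constraint never act on the same factor, so the reduction rules stay well defined.

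\textbf{Step 2: residual terms.} The residual terms are bounded verbatim by Proposition~\ref{prop:pi_sharp_small}. Its incidence-graph argument only uses that each factor is a Fourier–Walsh Gram or cross product with entries of bounded moments. Restricting to $S\neq S'$ or to $i\neq i'$ merely deletes summands, because the bound is an absolute-value bound over monomial matchings. This yields $\kappa\,O_d(n^{-1})\|b_1\|_2^2\|b_2\|_2^2$.

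\textbf{Step 3: fully collapsed terms.} This is the main obstacle. We need to show that every fully collapsed contribution, which is proportional to $(b_1^\top b_2)^2$, carries a factor $O_d(\rho^2)$; when $\mathfrak B=\emptyset$ this means the contribution is absorbed into the $n^{-1}$ error.

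In Lemma~\ref{lemma:feature_product_2}, the unique loss-free full collapse has coefficient $\kappa(1+O(n^{-1}))$. It is obtained by collapsing each internal Gram factor either by a feature collapse (when $|\S_j|\le n$) or by a sample collapse (when $|\S_j|\ge n$).

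The size conditions are chosen to rule out this loss-free route:
\begin{itemize}
\item For $j\in\mathcal V$ we have $|\S_j|\ge n$. The loss-free route would need a feature collapse there, which costs $|\S_j|/n\ge1$ but is now forbidden. The alternative is a sample collapse, which is lossy by the factor $n/|\S_j|$ and forces a loss elsewhere.
\item For $j\in\mathcal E$ we have $|\S_j|\le n$. The loss-free route would be a sample collapse with matching features, which has been removed.
\end{itemize}
Since $\mathcal V\cup\mathcal E\neq\emptyset$, every fully collapsed path incurs at least one lossy collapse in each copy of the chain. We then apply the factorization in Proposition~\ref{prop:c_pi_estimate}. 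A loss on a family with $|\T|$ comparable to, or exceeding, $n$ times a power of $d$ gives $O_d(n^{-1})$. A loss on a family in $\mathfrak B$ gives at most $|\T|/n$ per copy, hence $\rho^2$ over the two copies. Any residual ambiguity between families that are neither clearly large nor clearly small affects only the $O_d(\cdot)$ constants, as in the final paragraph of the proof of Lemma~\ref{lemma:feature_product_2}.

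\textbf{Step 4: assembly.} Summing over the $O_m(1)$ partitions gives the bound stated in Corollary~\ref{cor:off_diag}.
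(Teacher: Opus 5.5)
Your argument takes a different route from the paper's, and the reason you give for abandoning the direct expansion is mistaken. The size conditions exist precisely to make the removed diagonals exact identities. For $j\in\mathcal V$ one has $w^{(j)}=|\S_j|^{-1/2}$, so $\Diag(A_jA_j^\top)=(w^{(j)})^2|\S_j|\,I_n=I_n$. For $j\in\mathcal E$ one has $A_j=A_{j+1}$ and $w^{(j)}=n^{-1/2}$, so $\Diag(A_j^\top A_{j+1})=I$. Each replacement is therefore ``chain minus the same chain with one index deleted'', and inclusion--exclusion gives $M_{\mathcal V,\mathcal E}=\sum_{J\subseteq\mathcal V,\,K\subseteq\mathcal E}(-1)^{|J|+|K|}M_{J,K}$. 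Each $M_{J,K}$ is a shortened chain of exactly the form in Lemma~\ref{lemma:feature_product_2}, with the same endpoint families and the same $\kappa$. No family of $\mathfrak B$ is deleted: indices in $\mathcal V$ carry families with $|\S_j|\ge n$, and deleting $j+1$ for $j\in\mathcal E$ only shortens a run of equal matrices. This is the paper's proof. If $\mathfrak B=\emptyset$, the polarized form of the Lemma gives every $\E[Z_aZ_{a'}]$ the common leading term $\kappa(b_1^\top b_2)^2$, which is killed by $\bigl(\sum_a\sigma_a\bigr)^2=0$ because $\mathcal V\cup\mathcal E\neq\emptyset$. If $\mathfrak B\neq\emptyset$, Cauchy--Schwarz over the at most $2^m$ terms plus the second case of the Lemma suffices. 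No collapse route is re-examined; all loss accounting stays inside the Lemma.

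Your substitute for this, Step~3, is where the proof breaks. Forbidding the loss-free route leaves the lossy routes in place. At $j\in\mathcal V$ the factor must be removed by a sample collapse with a same-family neighbour, costing $(n+O(1))/|\S_j|$. At $j\in\mathcal E$ the feature collapse survives the feature-space $\offd$, with coefficient $(|\S_j|-1)/n$. The hypotheses only bound these costs by $1$; at $|\S_j|=n$ nothing is lost, so ``every fully collapsed path incurs a lossy collapse'' is false. A family with $|\T|\ge nd^{a}$ gives a loss of $d^{-a}$, not $O_d(n^{-1})$. Your sentence describing the loss-free route also swaps the two cases: feature collapses are free when $|\S_j|\ge n$, sample collapses when $|\S_j|\le n$.

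Concretely, take $m=2$, $\S_1=\S_2=\S_3$ with $|\S_1|\le n$, $\mathcal V=\emptyset$, $\mathcal E=\{2\}$. Then $\kappa=1$, $\mathfrak B=\emptyset$, and $M_{\mathcal V,\mathcal E}=P(P-I)$ with $P=A_1^\top A_1$. A direct computation gives $\E[M_{\mathcal V,\mathcal E}]=\frac{|\S_1|-1}{n}I$, hence $\E[(b_1^\top M_{\mathcal V,\mathcal E}b_2)^2]\ge\bigl(\frac{|\S_1|-1}{n}\bigr)^2(b_1^\top b_2)^2$. This is not $O_d(n^{-1})\|b_1\|_2^2\|b_2\|_2^2$ once $|\S_1|\gg\sqrt n$. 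So the $\rho=0$ bound cannot come out of any honest loss count. The paper obtains it only by importing the first-regime expansion of Lemma~\ref{lemma:feature_product_2}, whose treatment of lossy collapses is equally optimistic. What your route actually yields is a loss factor in front of $\kappa(b_1^\top b_2)^2$, which is how the corollary is used later (cf.\ the factor $d^{2(q-p-\delta)}$ in the proof of Proposition~\ref{cl:less_p}).

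Finally, your $\rho^2$ step is backwards. A loss on some $\T\in\mathfrak B$ is $|\T|/n\ge\rho$, not at most $\rho$. The factor $\rho^2$ comes instead from the fact, used in the proof of Proposition~\ref{prop:c_pi_estimate}, that every small distinct internal family, in particular the minimizer, must be feature-collapsed at least once in each copy. The off-diagonal replacements do not change this.
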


\begin{proof}[Proof of Corollary~\ref{cor:off_diag}]
We first record the diagonal identities used in the expansion. If
\(j\in\mathcal V\), then \(n\le |\S_j|\), and therefore
\(w^{(j)}=|\S_j|^{-1/2}\). Hence
\[
\Diag(A_jA_j^\top)
=
(w^{(j)})^2|\S_j|\,I_n
=
I_n.
\]
Similarly, if \(j\in\mathcal E\), then \(A_j=A_{j+1}\) and
\(|\S_j|\le n\). Thus \(w^{(j)}=w^{(j+1)}=n^{-1/2}\), and hence
\[
\Diag(A_j^\top A_{j+1})
=
n\,w^{(j)}w^{(j+1)}\,I_{|\S_j|}
=
I_{|\S_j|}.
\]
Consequently, for \(j\in\mathcal V\), we have
\[
\offd(A_jA_j^\top)
=
A_jA_j^\top-I_n.
\]
Likewise, for \(j\in\mathcal E\), we have
\[
\offd(A_j^\top A_{j+1})
=
A_j^\top A_{j+1}-I_{|\S_j|}.
\]

We now expand the off-diagonal constraints by inclusion--exclusion. For
\(J\subseteq\mathcal V\) and \(K\subseteq\mathcal E\), define the remaining
index set by
\[
\Gamma_{J,K}
:=
[m+1]\setminus\bigl(J\cup(K+1)\bigr)
=
\{i_1<\cdots<i_r\}.
\]
On this reduced index set, define the corresponding chain by
\[
M_{J,K}
:=
(A_{i_1}^\top A_{i_2})
(A_{i_2}^\top A_{i_3})
\cdots
(A_{i_{r-1}}^\top A_{i_r}).
\]
The condition
\[
\mathcal V\cap\bigl(\mathcal E\cup(\mathcal E+1)\bigr)=\emptyset
\]
ensures that the deletions coming from \(J\) and \(K\) do not interfere with
one another. Therefore, inclusion--exclusion gives
\begin{equation}
\label{eq:cor-offdiag-inclusion-exclusion}
M_{\mathcal V,\mathcal E}
=
\sum_{J\subseteq\mathcal V}
\sum_{K\subseteq\mathcal E}
(-1)^{|J|+|K|}M_{J,K}.
\end{equation}

We next verify that each reduced chain \(M_{J,K}\) is still covered by
Lemma~\ref{lemma:feature_product_2}. Deleting an index from \(\mathcal V\)
removes only a family \(\S_j\) with \(n\le |\S_j|\), and therefore it cannot
remove any family in \(\mathfrak B\). Deleting \(j+1\) with \(j\in\mathcal E\)
only shortens a block of equal adjacent matrices, since \(A_j=A_{j+1}\). Thus
no distinct bad internal family is removed entirely.

The endpoint families are also unchanged. The left endpoint is never deleted.
If the right endpoint is deleted, then \(K\) contains a terminal interval
\(\{\ell,\ell+1,\ldots,m\}\) for some \(\ell\). In that case, the new right
endpoint is \(\ell\), and the equalities along this terminal block imply
\[
A_\ell=A_{\ell+1}=\cdots=A_{m+1}.
\]
Thus the right endpoint family and the right endpoint weight remain those of
\(\S_{m+1}\). Hence every reduced chain has the same endpoint families and the
same value of \(\kappa\).

For notational convenience, write the index set of reduced chains as
\[
\mathcal A
:=
\{(J,K):J\subseteq\mathcal V,\ K\subseteq\mathcal E\}.
\]
For \(a=(J,K)\in\mathcal A\), define
\[
\sigma_a:=(-1)^{|J|+|K|},
\qquad
Z_a:=b_1^\top M_{J,K}b_2.
\]
Finally, denote
\[
\beta:=b_1^\top b_2,
\qquad
N:=\|b_1\|_2\|b_2\|_2.
\]
With this notation, equation~\eqref{eq:cor-offdiag-inclusion-exclusion}
becomes
\begin{equation}
\label{eq:cor-offdiag-bilinear-expansion}
b_1^\top M_{\mathcal V,\mathcal E}b_2
=
\sum_{a\in\mathcal A}\sigma_a Z_a.
\end{equation}

We now distinguish the two regimes of Lemma~\ref{lemma:feature_product_2}.

\medskip
\noindent
\emph{First regime: \(\mathfrak B=\emptyset\).}
In this case, every reduced chain lies in the first endpoint-matched regime of
Lemma~\ref{lemma:feature_product_2}. We use the following mixed second-moment
estimate: for all \(a,a'\in\mathcal A\),
\begin{equation}
\label{eq:cor-offdiag-mixed-second-moment}
\E[Z_aZ_{a'}]
=
\kappa\left[
\beta^2
+
O_d\!\left(n^{-1}N^2\right)
\right].
\end{equation}
This is the polarized form of the second-moment estimate in
Lemma~\ref{lemma:feature_product_2}. Equivalently, it follows by applying the
same collapse argument to the product \(Z_aZ_{a'}\). The fully collapsed
loss-free contribution gives the common leading term \(\kappa\beta^2\), while
all lossy or non-fully-collapsed contributions are absorbed into the
\(O_d(n^{-1}N^2)\) error.

Since \(\mathcal V\cup\mathcal E\neq\emptyset\), the alternating coefficients
satisfy
\begin{equation}
\label{eq:cor-offdiag-alternating-cancellation}
\sum_{a\in\mathcal A}\sigma_a
=
\left(\sum_{J\subseteq\mathcal V}(-1)^{|J|}\right)
\left(\sum_{K\subseteq\mathcal E}(-1)^{|K|}\right)
=
0.
\end{equation}
Using \eqref{eq:cor-offdiag-bilinear-expansion},
\eqref{eq:cor-offdiag-mixed-second-moment}, and
\eqref{eq:cor-offdiag-alternating-cancellation}, we obtain
\[
\begin{aligned}
\E\!\left[
\bigl(b_1^\top M_{\mathcal V,\mathcal E}b_2\bigr)^2
\right]
&=
\sum_{a,a'\in\mathcal A}\sigma_a\sigma_{a'}\E[Z_aZ_{a'}] \\
&=
\kappa\beta^2
\left(\sum_{a\in\mathcal A}\sigma_a\right)^2
+
\kappa\cdot O_d\!\left(n^{-1}N^2\right) \\
&=
\kappa\cdot O_d\!\left(n^{-1}N^2\right).
\end{aligned}
\]
Since \(\rho=0\) in the present regime, this gives the desired bound.

\medskip
\noindent
\emph{Second regime: \(\mathfrak B\neq\emptyset\).}
In this case, every reduced chain remains in the second regime of
Lemma~\ref{lemma:feature_product_2}. Indeed, as observed above, the deletions
do not remove any family in \(\mathfrak B\). Therefore,
Lemma~\ref{lemma:feature_product_2} gives, for every \(a\in\mathcal A\),
\begin{equation}
\label{eq:cor-offdiag-bad-second-moment}
\E[Z_a^2]
=
\kappa\left[
O_d(\rho^2)\beta^2
+
O_d\!\left(n^{-1}N^2\right)
\right].
\end{equation}
Since \(|\mathcal A|\le 2^m\), Cauchy--Schwarz applied to the finite sum in
\eqref{eq:cor-offdiag-bilinear-expansion} gives
\[
\E\!\left[
\bigl(b_1^\top M_{\mathcal V,\mathcal E}b_2\bigr)^2
\right]
\le
|\mathcal A|
\sum_{a\in\mathcal A}\E[Z_a^2].
\]
Combining this estimate with \eqref{eq:cor-offdiag-bad-second-moment}, we
obtain
\[
\E\!\left[
\bigl(b_1^\top M_{\mathcal V,\mathcal E}b_2\bigr)^2
\right]
=
\kappa\cdot O_d(\rho^2)\beta^2
+
\kappa\cdot O_d\!\left(n^{-1}N^2\right).
\]
Finally, substituting back
\(
\beta=b_1^\top b_2\) and 
\(N=\|b_1\|_2\|b_2\|_2,
\)
yields
\[
\E\!\left[
\bigl(b_1^\top M_{\mathcal V,\mathcal E}b_2\bigr)^2
\right]
=
\kappa\cdot O_d(\rho^2)(b_1^\top b_2)^2
+
\kappa\cdot O_d\!\left(
n^{-1}\|b_1\|_2^2\|b_2\|_2^2
\right).
\]
This completes the proof.
\end{proof}

\subsection{Proof of Proposition~\ref{prop:c_pi_estimate}}
\label{proof:c_pi_estimate}

A feature collapse on \(\T\) has contribution
\[
w_\T^2|\T|
=
1\wedge\frac{|\T|}{n}.
\]
Similarly, the contribution of a nonterminal sample collapse on \(\T\) is
\[
(n+O_m(1))w_\T^2
=
\bigl(1+O_m(n^{-1})\bigr)
\left(1\wedge\frac{n}{|\T|}\right).
\]
The two terminal endpoint collapses have combined contribution
\[
\bigl((n+O_m(1))w^{(1)}w^{(m+1)}\bigr)^2
=
\kappa\bigl(1+O_m(n^{-1})\bigr).
\]
Multiplying all collapse contributions gives the factorization
\begin{equation}
\label{eq:c_pi_factorization}
\kappa^{-1}c_\pi
=
\bigl(1+O_m(n^{-1})\bigr)
\prod_{\T}
\round{1\wedge\frac{|\T|}{n}}^{N_F(\T)}
\round{1\wedge\frac{n}{|\T|}}^{N_S(\T)}.
\end{equation}

Separating the feature families according to whether \(|\T|<n\) or
\(|\T|>n\), and noting that families with \(|\T|=n\) contribute a factor equal
to one, gives the equivalent representation
\begin{equation}
\label{eq:c_pi_factorization_split}
\kappa^{-1}c_\pi
=
\bigl(1+O_m(n^{-1})\bigr)
\prod_{|\T|<n}
\round{\frac{|\T|}{n}}^{N_F(\T)}
\prod_{|\T|>n}
\round{\frac{n}{|\T|}}^{N_S(\T)}.
\end{equation}
The first two consequences follow immediately from
\eqref{eq:c_pi_factorization_split}.

It remains to prove the last claim. Let \(\T\neq\S_1\) be an internal feature
family with \(|\T|<n\). In each copy of the chain, sample collapses can merge
adjacent \(\T\)-factors, but they cannot remove the final surviving
\(\T\)-Gram factor, because the endpoints have family \(\S_1\neq\T\).
Consequently, each copy must contain at least one feature collapse on \(\T\).
Since there are two copies of the chain, the bound
\[
N_F(\T)\ge 2
\]
holds. Applying this bound to \eqref{eq:c_pi_factorization_split} gives
\[
c_\pi
=
\kappa\cdot O\!\left(\frac{|\T|^2}{n^2}\right).
\]
This proves the proposition.

\subsection{Proof of Proposition~\ref{prop:pi_sharp_small}}
\label{proof:pi_sharp_small}

Let \(q\) be the number of surviving sample positions after the collapse
procedure stops. Set
\[
r:=|\pi^\sharp|,
\]
so that \(r\) is the number of blocks in the final partition. Since no further
sample collapse is possible, the partition \(\pi^\sharp\) has no singleton
block. This observation gives the counting inequality
\begin{equation}
\label{eq:pi_sharp_q_r_count}
q\ge 2r.
\end{equation}
Moreover, since \(|\pi^\sharp|\ge 1\), the integer \(r\) satisfies \(r\ge 1\).

During the previous reductions, one entire copy of the chain may already have
collapsed and produced a scalar endpoint factor
\(\langle b_1,b_2\rangle_\cap\). Let \(\Theta\) denote the product of any such
external endpoint factors, with the convention that \(\Theta=1\) if no such
factor is present. We keep \(\Theta\) outside the graph notation below.

Let \(G^\sharp\) be the incidence multigraph whose vertices are the blocks of
\(\pi^\sharp\). Each surviving factor \((A_uA_u^\top)_{i_s,i_t}\) gives an
internal edge \(e\) joining the two blocks containing \(s\) and \(t\). Each
surviving endpoint factor \((A_1b_1)_{i_s}\) or
\((A_{m+1}b_2)_{i_s}\) gives a half-edge \(h\) attached to the block containing
\(s\). Since no further feature collapse is possible, no internal edge is a
loop.

Write \(E=E(G^\sharp)\) and \(H=H(G^\sharp)\). For each internal edge
\(e\in E\), let \(\F_e=\S_{u_e}\) be the corresponding feature family. The
edge factor associated with \(e\) admits the representation
\[
(A_{u_e}A_{u_e}^\top)_{ij}
=
\omega_e\sum_{S_e\in\F_e}x_i^{S_e}x_j^{S_e},
\qquad
\omega_e:=(w^{(u_e)})^2\le n^{-1}.
\]
Similarly, for each half-edge \(h\in H\), let \(\F_h\) be either \(\S_1\) or
\(\S_{m+1}\), and let \(c_h\) be the corresponding coefficient vector, either
\(b_1\) or \(b_2\). The endpoint factor attached to \(h\) has the form
\[
g_h(i)
=
\eta_h\sum_{S_h\in\F_h}c_h(S_h)x_i^{S_h},
\qquad
\eta_h\le n^{-1/2}.
\]

For each admissible assignment of distinct sample labels to the \(r\) blocks,
independence across the distinct block labels factors the expectation over the
vertices of \(G^\sharp\). Since the number of such assignments is at most
\(n^r\), taking absolute values gives the reduction
\begin{equation}
\label{eq:pi_sharp_initial_reduction}
\bigl|\E[\mathbf S_{\pi^\sharp}]\bigr|
\le
|\Theta|\,n^r
\Bigl(\prod_{e\in E}\omega_e\Bigr)
\Bigl(\prod_{h\in H}\eta_h\Bigr)
\Gamma,
\end{equation}
where the remaining feature sum \(\Gamma\) is defined by
\[
\Gamma
:=
\sum_{(S_f)_{f\in E\cup H}}
\Bigl(\prod_{h\in H}|c_h(S_h)|\Bigr)
\prod_{T\in\pi^\sharp}
\mu_T\bigl((S_f)_{f\in\partial T}\bigr).
\]
Here, for each block \(T\in\pi^\sharp\), the local Walsh moment is defined as
\[
\mu_T\bigl((S_f)_{f\in\partial T}\bigr)
:=
\E\Bigl[\prod_{f\in\partial T}x^{S_f}\Bigr].
\]
Since each \(\mu_T\) is a Fourier--Walsh moment, it satisfies the pointwise
bound
\[
0\le
\mu_T\bigl((S_f)_{f\in\partial T}\bigr)
\le 1.
\]

We next bound \(\Gamma\). Since all feature families are subcollections of
\(2^{[d]}\), and since the number of surviving edges and half-edges depends
only on \(m\), the internal feature sums contribute only an \(O_d(1)\) factor.
This gives the estimate
\[
\Gamma
\le
O_d(1)
\prod_{h\in H}\sum_{S_h\in\F_h}|c_h(S_h)|.
\]
For each endpoint feature family, Cauchy--Schwarz gives the bound
\[
\sum_{S_h\in\F_h}|c_h(S_h)|
\le
|\F_h|^{1/2}\|c_h\|_2
\le
O_d(\|c_h\|_2).
\]
Combining the preceding two estimates yields
\begin{equation}
\label{eq:pi_sharp_gamma_bound}
\Gamma
\le
O_d\left(\prod_{h\in H}\|c_h\|_2\right).
\end{equation}

It remains to account for the endpoint coefficients. In the original
second-moment expansion, there are exactly two \(b_1\)-endpoint factors and two
\(b_2\)-endpoint factors. Each such factor either remains as a half-edge or has
already been absorbed into the external scalar \(\Theta\). This accounting gives
the endpoint bound
\[
|\Theta|
\prod_{h\in H}\|c_h\|_2
\le
\|b_1\|_2^2\|b_2\|_2^2.
\]
Together with \eqref{eq:pi_sharp_gamma_bound}, this bound implies
\begin{equation}
\label{eq:pi_sharp_endpoint_bound}
|\Theta|\,\Gamma
\le
O_d\bigl(\|b_1\|_2^2\|b_2\|_2^2\bigr).
\end{equation}

We now count the powers of \(n\). Each surviving sample position appears in
exactly two surviving factors. The corresponding degree count is the identity
\[
2|E|+|H|=2q.
\]
Since \(\omega_e\le n^{-1}\) for every internal edge and
\(\eta_h\le n^{-1/2}\) for every half-edge, the product of weights satisfies
\begin{equation}
\label{eq:pi_sharp_weight_bound}
\Bigl(\prod_{e\in E}\omega_e\Bigr)
\Bigl(\prod_{h\in H}\eta_h\Bigr)
\le
n^{-|E|-|H|/2}
=
n^{-q}.
\end{equation}
Substituting \eqref{eq:pi_sharp_endpoint_bound} and
\eqref{eq:pi_sharp_weight_bound} into
\eqref{eq:pi_sharp_initial_reduction} gives
\[
\bigl|\E[\mathbf S_{\pi^\sharp}]\bigr|
\le
n^{r-q}\,
O_d\bigl(\|b_1\|_2^2\|b_2\|_2^2\bigr).
\]
The counting inequality \eqref{eq:pi_sharp_q_r_count}, together with
\(r\ge 1\), implies
\[
r-q\le -r\le -1.
\]
The preceding estimate therefore gives
\[
\bigl|\E[\mathbf S_{\pi^\sharp}]\bigr|
=
O_d(n^{-1})\|b_1\|_2^2\|b_2\|_2^2.
\]
This proves the proposition.

\section{Technical lemmas and propositions}

\begin{proposition}\label{prop:uniform_bound}
Let \(X\) be a random element of \(\H^{n\times d}\), and let
\(f_1,\dots,f_d:\H^{n\times d}\to\R\).
Assume hypercontractivity: for every \(p\ge 2\) and every \(i\in[d]\),
\[
\|f_i(X)\|_{L_p}\le C_{p,\ell}\,\|f_i(X)\|_{L_2}.
\]
Then, for every \(\epsilon>0\), there exists a constant
\(C_{\epsilon,\ell}\), depending only on \(\epsilon\) and \(\ell\), such that
\[
\Big\|\max_{i\in[d]} f_i(X)\Big\|_{L_2}^2
\le
C_{\epsilon,\ell}\, d^{\epsilon}\,
\max_{i\in[d]}\|f_i(X)\|_{L_2}^{2}.
\]
More explicitly, one may take \(C_{\epsilon,\ell}=C_{2q,\ell}^{\,2}\) for
any choice of \(q>1\) satisfying \(1/q\le \epsilon\).
\end{proposition}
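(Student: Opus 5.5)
The natural route is a moment bound plus a union bound, with the number of moments chosen so that the union costs only $d^{\epsilon}$. Fix $q>1$ with $1/q\le\epsilon$. The quantity $\max_i f_i(X)$ may be negative, so I work with $|\max_i f_i(X)|\le \max_i |f_i(X)|$; it then suffices to bound $\|\max_i|f_i(X)|\|_{L_2}^2$.

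The first step is a monotonicity-of-norms argument on a probability space:
\[
\Big\|\max_i |f_i|\Big\|_{L_2}\le \Big\|\max_i|f_i|\Big\|_{L_{2q}}.
\]
Next I replace the maximum by a sum inside the $2q$-th moment:
\[
\E\Big[\max_i |f_i|^{2q}\Big]\le \sum_{i=1}^d \E|f_i|^{2q}\le d\,\max_i \|f_i\|_{L_{2q}}^{2q}.
\]
Taking $2q$-th roots gives $\|\max_i|f_i|\|_{L_{2q}}\le d^{1/(2q)}\max_i\|f_i\|_{L_{2q}}$.

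Now I invoke the hypercontractivity assumption with $p=2q\ge 2$. This gives $\|f_i\|_{L_{2q}}\le C_{2q,\ell}\|f_i\|_{L_2}$ for every $i$. Combining the steps and squaring,
\[
\Big\|\max_i f_i(X)\Big\|_{L_2}^2\le C_{2q,\ell}^2\, d^{1/q}\max_i\|f_i(X)\|_{L_2}^2\le C_{2q,\ell}^2\, d^{\epsilon}\max_i\|f_i(X)\|_{L_2}^2,
\]
using $d\ge1$ and $1/q\le\epsilon$. This is exactly the stated constant $C_{\epsilon,\ell}=C_{2q,\ell}^2$.

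There is no real obstacle here. The only points needing care are the sign issue, handled by passing to absolute values, and checking that $2q\ge2$ so the hypercontractivity hypothesis applies. In the application, the $f_i$ are low-degree polynomials in the hypercube data, so that hypothesis holds by the hypercontractivity inequality for the uniform measure on $\H^{n\times d}$.
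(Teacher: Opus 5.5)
Your proof is correct and follows essentially the same route as the paper: control the sign by passing to $\max_i |f_i|$ (the paper's $(\max_i a_i)^2\le\max_i a_i^2$), raise to the $2q$-th moment by monotonicity of norms, bound the maximum by the sum at a cost of $d^{1/q}$, and apply hypercontractivity at exponent $2q\ge 2$. This gives the same constant $C_{2q,\ell}^2$ as the paper.
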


\begin{proof}[Proof of Proposition~\ref{prop:uniform_bound}]
Fix \(\epsilon>0\), and choose \(q>1\) such that \(1/q\le \epsilon\);
for instance, one may take \(q=\max\{2,1/\epsilon\}\). Define
\[
M(X):=\max_{i\in[d]} f_i(X).
\]
For any real numbers \(a_1,\dots,a_d\), the inequality
\((\max_i a_i)^2\le \max_i a_i^2\) holds. Applying this pointwise gives the
initial reduction
\begin{equation}
\label{eq:uniform_reduce_squares}
\|M\|_{L_2}^2
=
\E\big[M(X)^2\big]
\le
\E\Big[\max_{i\in[d]} f_i(X)^2\Big].
\end{equation}

Set
\[
Y(X):=\max_{i\in[d]} f_i(X)^2 .
\]
The random variable \(Y\) is nonnegative. By Lyapunov's inequality, or
equivalently the monotonicity of \(L_p\)-norms on a probability space, the
\(L_1\)-norm of \(Y\) is bounded by its \(L_q\)-norm. Thus the right-hand side
of \eqref{eq:uniform_reduce_squares} satisfies
\[
\E\Big[\max_{i\in[d]} f_i(X)^2\Big]
=
\|Y\|_{L_1}
\le
\|Y\|_{L_q}
=
\big(\E[Y(X)^q]\big)^{1/q}.
\]
Using the definition of \(Y\), this bound can be written as
\[
\big(\E[Y(X)^q]\big)^{1/q}
=
\Big(\E\Big[\max_{i\in[d]} |f_i(X)|^{2q}\Big]\Big)^{1/q}.
\]

The elementary inequality \(\max_i b_i\le \sum_{i=1}^d b_i\), valid for
nonnegative numbers \(b_i\), gives the moment bound
\begin{align}
\label{eq:uniform_moment_bound}
\Big(\E\Big[\max_{i\in[d]} |f_i(X)|^{2q}\Big]\Big)^{1/q}
&\le
\Big(\E\Big[\sum_{i=1}^d |f_i(X)|^{2q}\Big]\Big)^{1/q} \notag\\
&=
\Big(\sum_{i=1}^d \E\big[|f_i(X)|^{2q}\big]\Big)^{1/q} \notag\\
&\le
d^{1/q}\,
\max_{i\in[d]}
\Big(\E\big[|f_i(X)|^{2q}\big]\Big)^{1/q}.
\end{align}

For each \(i\in[d]\), the assumed hypercontractive estimate, applied with
exponent \(2q\), gives
\[
\|f_i(X)\|_{L_{2q}}
\le
C_{2q,\ell}\,\|f_i(X)\|_{L_2}.
\]
Equivalently, after squaring both sides, this estimate becomes
\begin{equation}
\label{eq:uniform_hypercontractive_square}
\Big(\E\big[|f_i(X)|^{2q}\big]\Big)^{1/q}
=
\|f_i(X)\|_{L_{2q}}^{2}
\le
C_{2q,\ell}^{\,2}\,\|f_i(X)\|_{L_2}^{2}.
\end{equation}

Combining \eqref{eq:uniform_reduce_squares},
\eqref{eq:uniform_moment_bound}, and
\eqref{eq:uniform_hypercontractive_square} gives the estimate
\[
\Big\|\max_{i\in[d]} f_i(X)\Big\|_{L_2}^2
\le
d^{1/q}\, C_{2q,\ell}^{\,2}\,
\max_{i\in[d]} \|f_i(X)\|_{L_2}^{2}.
\]
Since \(1/q\le \epsilon\) and \(d\ge 1\), the factor \(d^{1/q}\) is bounded
above by \(d^\epsilon\). Therefore, with
\(C_{\epsilon,\ell}:=C_{2q,\ell}^{\,2}\), the desired estimate follows.
\end{proof}

\begin{proposition}[Weighted sums of Fourier--Walsh polynomials]
\label{prop:fourier_basis_product_card}
Let \(1\le m\le q\), let \(p=(p_1,\dots,p_q)\in\mathbb N^q\), and fix
\(S^*\subset[d]\) with \(|S^*|=p^*\).
For each \(i\in[q]\), let \(\mathcal S_i\) be a collection of subsets of
\([d]\) such that \(|S|\le p_i\) for every \(S\in\mathcal S_i\).
Then, for every choice of vectors
\(a^{[i]}\in\mathbb R^{|\mathcal S_i|}\), \(i=1,\dots,m\), one has
\begin{equation}
\label{eq:weighted_walsh_goal}
\left|
\sum_{S_1\in\mathcal S_1,\dots,S_q\in\mathcal S_q}
a^{[1]}_{S_1}\cdots a^{[m]}_{S_m}\,
\mathbb E\!\left[x^{S_1}\cdots x^{S_q}x^{S^*}\right]
\right|
\le
C_{q,p}\,
d^{\sum_{j=m+1}^q p_j/2}
\prod_{i=1}^m \|a^{[i]}\|_2 .
\end{equation}
\end{proposition}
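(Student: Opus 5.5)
We use the fact that a Walsh moment is either $0$ or $1$: $\mathbb E[x^{S_1}\cdots x^{S_q}x^{S^*}]=1$ if the symmetric difference $S_1\triangle\cdots\triangle S_q\triangle S^*$ is empty, and $0$ otherwise. So the left-hand side of \eqref{eq:weighted_walsh_goal} is a sum over admissible tuples, namely those $(S_1,\dots,S_q)$ whose symmetric difference equals $S^*$. We bound it by absolute values and reduce to a weighted combinatorial count.

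\textbf{Step 1: summing out the unweighted sets.} Fix the weighted sets $S_1,\dots,S_m$ and count the admissible completions $(S_{m+1},\dots,S_q)$. Each $S_j$ has at most $p_j$ elements. Every element of $S_j$ must either be cancelled by another set in the tuple or lie in $S^*$. We would classify each coordinate of each unweighted set by which other slot (or $S^*$) it pairs with. Under the constraint, the number of completions is bounded by $C_{q,p}\,d^{\sum_{j>m}p_j/2}$.

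\textbf{Step 2: reduction to a Cauchy--Schwarz pairing among weighted sets.} Keep the pairing-pattern viewpoint: an admissible configuration is described by a bounded-complexity pattern recording, for each element occurrence, which slot it matches. There are $C_{q,p}$ patterns. For a fixed pattern, the weighted sets are glued to each other through shared coordinates, to $S^*$, and to the unweighted sets. The pattern-specific sum factorizes into a tensor contraction. Each weighted vector $a^{[i]}$ is indexed by a set $S_i$, and we split $S_i=\bigcup_k B_{ik}$ into blocks according to which partner slot each coordinate goes to.

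We then bound the contraction by iterated Cauchy--Schwarz. Exactly as in the proof of $\sum K_{\sigma,S}^2\le C_\ell$ and of Proposition~\ref{prop:H-orth}, each $a^{[i]}$ is viewed as a matrix between its blocks. The operator norm of such a matrix is bounded by its Frobenius norm $\|a^{[i]}\|_2$, up to the combinatorial factor from ordering within $S_i$.

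Coordinates shared only between unweighted sets give free sums. These are bounded by $d$ per shared coordinate, and the number of such coordinates is at most $\frac12\sum_{j>m}p_j$, since each is covered twice. Coordinates shared between a weighted and an unweighted set are summed with the unweighted side contributing no norm. We bound these sums by Cauchy--Schwarz, paying $d^{1/2}$ per coordinate of the unweighted set, which fits the budget $d^{p_j/2}$ per unweighted set.

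\textbf{Step 3: the main obstacle.} The difficulty is making the $d^{1/2}$-per-coordinate accounting rigorous when a weighted set's coordinates are split among several partners, some weighted and some not. A naive bound would give $|\mathcal S_i|^{1/2}$ instead of $\|a^{[i]}\|_2$. To handle this, we would order the contraction as a graph on slots: first contract weighted--weighted edges with the Frobenius/operator bound. For the remaining edges, we apply Cauchy--Schwarz over the unweighted indices, distributing $d^{|B|/2}$ to each side of an edge incident to an unweighted slot. Fixed coordinates lying in $S^*$ contribute only $O(1)$ choices, since $p^*$ is fixed. Summing over the $C_{q,p}$ patterns then yields \eqref{eq:weighted_walsh_goal}.
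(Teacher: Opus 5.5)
Your route is genuinely different from the paper's. The paper never counts configurations. It observes that the left-hand side is exactly $\mathbb E\bigl[x^{S^*}\prod_{i\le m}f_i\prod_{j>m}g_j\bigr]$, where $f_i=\sum_{S\in\mathcal S_i}a^{[i]}_Sx^S$ and $g_j=\sum_{S\in\mathcal S_j}x^S$. Using $|x^{S^*}|=1$, Cauchy--Schwarz and H\"older bound this by $\prod_{i\le m}\|f_i\|_{L_{2q}}\prod_{j>m}\|g_j\|_{L_{2q}}$. Bonami hypercontractivity plus Parseval then gives $\|f_i\|_{L_{2q}}\le(2q-1)^{p_i/2}\|a^{[i]}\|_2$ and $\|g_j\|_{L_{2q}}\le(2q-1)^{p_j/2}|\mathcal S_j|^{1/2}\le C_{q,p}d^{p_j/2}$. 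That is precisely the accounting you try to enforce by hand ($\|a^{[i]}\|_2$ per weighted slot, $d^{1/2}$ per coordinate of an unweighted slot). It comes out in a few lines, with no pattern bookkeeping and no dependence on how the sets overlap. Note also that the model you cite, $\sum_S K_{\sigma,S}^2\le C_\ell$, is itself proved by this analytic Khintchine--H\"older device, not by combinatorial Cauchy--Schwarz.

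Your route can be completed, but as written the decisive Step~3 is a heuristic rather than an argument. Step~1 is not needed, since on its own it only yields $\prod_i\|a^{[i]}\|_1$. One clean way to finish is as follows.

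\begin{itemize}
\item Each Walsh moment is $0$ or $1$, so you may replace $a^{[i]}$ by $|a^{[i]}|$.
\item Fix the sizes $|S_r|=k_r\le p_r$, an ordering of each set, and a perfect matching of all element occurrences in $S_1,\dots,S_q,S^*$. Occurrences of one coordinate lie in distinct sets, so no slot is matched to itself.
\item You may assume $p^*\le\sum_r p_r$, since otherwise the sum is empty. Hence there are $C_{q,p}$ patterns, independent of $p^*$.
\item Dropping the requirement that different matched pairs carry different coordinates only enlarges the nonnegative sum.
\end{itemize}

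The enlarged sum is a closed tensor network in which every index joins two distinct tensors:
\begin{itemize}
\item the ordered-tuple extension of $|a^{[i]}|$, with Frobenius norm at most $\sqrt{p_i!}\,\|a^{[i]}\|_2$;
\item the indicator of $\mathcal S_j$ on $[d]^{k_j}$, with Frobenius norm at most $d^{p_j/2}$;
\item a delta tensor for $S^*$, with norm $1$.
\end{itemize}
Merge the tensors one at a time. Each merge is a reshaped matrix product obeying $\|A^\top B\|_F\le\|A\|_F\|B\|_F$, so the network is bounded by the product of the Frobenius norms. This is a special case of Finner's inequality with all exponents $2$.

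This lemma is what should replace your ordering ``contract weighted--weighted edges by operator norms, then distribute $d^{|B|/2}$ to each side.'' That ordering is ambiguous once a weighted set has three or more blocks. Read literally, it also overcharges weighted--unweighted blocks, which cost $d^{|B|/2}$ in total, all on the unweighted side. With the lemma your proof is correct, but it is substantially heavier than the paper's argument and buys little beyond avoiding hypercontractivity.
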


\begin{proof}[Proof of Proposition~\ref{prop:fourier_basis_product_card}]
Let \(\mu\) denote the underlying probability measure, and write
\(\|\cdot\|_{L_r}\) for the corresponding \(L_r(\mu)\)-norm.
For each \(i\in[m]\), define the weighted Walsh polynomial \(f_i\) by
\[
f_i(x):=\sum_{S\in\mathcal S_i} a^{[i]}_S\,x^S .
\]
For each \(j=m+1,\dots,q\), define the unweighted Walsh polynomial \(g_j\) by
\[
g_j(x):=\sum_{S\in\mathcal S_j} x^S .
\]
Expanding these definitions shows that the expression inside the absolute
value in \eqref{eq:weighted_walsh_goal} is the quantity
\begin{equation}
\label{eq:def_L}
L:=
\mathbb E_\mu\!\left[
x^{S^*}\,
\prod_{i=1}^m f_i(x)\,
\prod_{j=m+1}^q g_j(x)
\right].
\end{equation}

The identity \(|x^{S^*}|=1\) holds pointwise, so multiplication by
\(x^{S^*}\) preserves every \(L_r(\mu)\)-norm. An application of
Cauchy--Schwarz gives the estimate
\[
|L|
\le
\left\|
\prod_{i=1}^m f_i
\prod_{j=m+1}^q g_j
\right\|_{L_2}.
\]
To estimate this \(L_2\)-norm, set
\[
h_r :=
\begin{cases}
f_r, & 1\le r\le m,\\
g_r, & m+1\le r\le q .
\end{cases}
\]
Hölder's inequality, applied with exponent \(q\) to the \(q\) factors
\(|h_r|^2\), yields the bound
\[
\left\|
\prod_{r=1}^q h_r
\right\|_{L_2}
=
\left(
\mathbb E_\mu \prod_{r=1}^q |h_r|^2
\right)^{1/2}
\le
\prod_{r=1}^q \|h_r\|_{L_{2q}} .
\]
Combining the preceding two estimates gives the key reduction
\begin{equation}
\label{eq:basic_product_bound}
|L|
\le
\prod_{i=1}^m \|f_i\|_{L_{2q}}
\prod_{j=m+1}^q \|g_j\|_{L_{2q}} .
\end{equation}

It remains to bound the factors on the right-hand side of
\eqref{eq:basic_product_bound}. Since \(f_i\) is a Walsh polynomial of degree
at most \(p_i\), the Bonami hypercontractive inequality gives the estimate
\[
\|f_i\|_{L_{2q}}
\le
(2q-1)^{p_i/2}\|f_i\|_{L_2}.
\]
The orthonormality of the Fourier--Walsh basis gives the identity
\[
\|f_i\|_{L_2}^2
=
\sum_{S\in\mathcal S_i} |a^{[i]}_S|^2
=
\|a^{[i]}\|_2^2 .
\]
Combining these two estimates yields
\begin{equation}
\label{eq:fi_l2q_bound}
\|f_i\|_{L_{2q}}
\le
(2q-1)^{p_i/2}\|a^{[i]}\|_2 .
\end{equation}

The same argument applies to the unweighted factors. Since \(g_j\) is a
Walsh polynomial of degree at most \(p_j\), the Bonami hypercontractive
inequality gives the estimate
\[
\|g_j\|_{L_{2q}}
\le
(2q-1)^{p_j/2}\|g_j\|_{L_2}.
\]
The orthonormality of the Fourier--Walsh basis gives the identity
\[
\|g_j\|_{L_2}^2
=
\sum_{S\in\mathcal S_j} 1
=
|\mathcal S_j| .
\]
Since every set in \(\mathcal S_j\) has cardinality at most \(p_j\), the
cardinality of \(\mathcal S_j\) satisfies the bound
\[
|\mathcal S_j|
\le
\sum_{r=0}^{p_j}\binom{d}{r}
\le
C_{p_j} d^{p_j}.
\]
Combining the preceding three estimates gives
\begin{equation}
\label{eq:gj_l2q_bound}
\|g_j\|_{L_{2q}}
\le
C_{q,p} d^{p_j/2}.
\end{equation}

Substituting \eqref{eq:fi_l2q_bound} and \eqref{eq:gj_l2q_bound} into
\eqref{eq:basic_product_bound} yields the final estimate
\[
|L|
\le
C_{q,p}
\left(\prod_{i=1}^m \|a^{[i]}\|_2\right)
d^{\sum_{j=m+1}^q p_j/2}.
\]
By the representation of \(L\) in \eqref{eq:def_L}, this estimate is exactly
the desired bound \eqref{eq:weighted_walsh_goal}.
\end{proof}

\end{document}